\title{\papertitle\footnote{\faHandPointRight[regular] This paper is subsumed by paper "Hyperbolic Embeddings of Supervised Models", by Richard Nock, Ehsan Amid, Frank Nielsen, Alexander Soen and Manfred K. Warmuth, appearing at NeurIPS'24.}}
\author{Richard Nock$^\dagger$ \quad Ehsan Amid$^\ddagger$ \quad Frank Nielsen$^\circ$\\
  Alexander Soen$^\ast$\quad Manfred K. Warmuth$^\dagger$\\\\
  $^\dagger$Google Research, $^\ddagger$Google DeepMind, $^\circ$Sony CS Labs, Inc.,\\ $^\ast$The Australian National University $\&$ RIKEN AIP\\\\
{\normalsize \texttt{$\{$richardnock,eamid,manfred$\}$@google.com}}\\{\normalsize \texttt{frank.nielsen@acm.org,alexander.soen@anu.edu.au}}
}
\begin{document}

\date{}

\maketitle

\begin{abstract}

Most mathematical distortions used in ML are fundamentally integral in nature: $f$-divergences, Bregman divergences, (regularized) optimal transport distances, integral probability metrics, geodesic distances, etc. 
In this paper, we unveil a grounded theory and tools which can help improve these distortions to better cope with ML requirements. 
We start with a generalization of Riemann integration that also encapsulates functions that are not strictly additive but are, more generally, $t$-additive, as in nonextensive statistical mechanics. Notably, this recovers Volterra's product integral as a special case. We then generalize the Fundamental Theorem of calculus using an extension of the (Euclidean) derivative. This, along with a series of more specific Theorems, serves as a basis for results showing how one can specifically design, alter, or change fundamental properties of distortion measures in a simple way, with a special emphasis on geometric- and ML-related properties that are the metricity, hyperbolicity, and encoding. We show how to apply it to a problem that has recently gained traction in ML: hyperbolic embeddings with a ``cheap" and accurate encoding along the hyperbolic vs Euclidean scale. We unveil a new application for which the Poincar\'e disk model has very appealing features, and our theory comes in handy: \textit{model} embeddings for boosted combinations of decision trees, trained using the log-loss (trees) and logistic loss (combinations).

\end{abstract}

\section{Introduction}
\label{sec:intro}

The design, use, and refinement of mathematical objects comes with the special purpose in machine learning to tie them to solving data-related problems. Consider one of the most ubiquitous of those: integration. Integration is the cornerstone of most distortions used in ML, from $f$-divergences~\cite{fdiv-AliSilvey-1966,Csiszar-1967} to Bregman divergences~\cite{Bregman-1967}, (regularized) optimal transport~\cite{cuturi2013sinkhorn}, integral probability metrics~\cite{muller1997integral,birrell2022f} and geodesic distances~\cite{Rao-1945}, etc.\footnote{This comes from the definition for all these distortion measures, except for the Bregman divergence, for which a lesser known property states that a Bregman divergence is a path integral in disguise, see \textit{e.g.} \cite[Slide 99]{wOLA}.}. Integration is analytically additive, a trivial consequence of its definition as in \textit{e.g.} Riemann integral for scalar functions. Some divergences are also additive as a consequence of the \textit{properties} of the systems they study. For example, the KL divergence, which is both an $f$-divergence and a Bregman divergence~\cite{amari2009alpha}, is additive over the direct product of measurable spaces \cite[Section G]{vehRD}. Sometimes, however, additivity does not -- and cannot -- hold because of properties of the underlying system, as \textit{e.g.} in nonextensive statistical mechanics \cite{TsallisBook-2022}. A more general property than additivity, called $t$-additivity, then holds. For example, Tsallis divergence (an $f$-divergence~\cite{fdiv-AliSilvey-1966}) is $t$-additive on the simplex \cite{nnORAT}, the tempered relative entropy (a Bregman divergence) is $t$-additive on the co-simplex \cite{anwCA,nawBW}, etc. Pure mathematics has also early embraced non-additive integrations: Italian mathematician Vito Volterra (1860 - 1940) designed a product integral that can be motivated by the study of a particular ordinary differential equation \cite{dfPI,vSFD}, which later found a variety of applications in statistics \cite{gjAS,sPII}.

\textbf{Our first contribution} starts with a Theorem that brings together Riemann, Tsallis, and Volterra: we show that any $t$-additive (scalar) function behaves in the limit like a non-linear function of Riemann integral. The result, which is interesting in itself, roughly states if we let $S^{(t)}_n$ denote the $n$-sum of a $t$-additive function, that ($S^{(t)}_{\infty} \defeq \lim_{n\rightarrow +\infty} S^{(t)}_n$)
\begin{eqnarray}
S^{(t)}_{\infty} & = & \log_t \exp S^{(1)}_{\infty}, \forall t\in \mathbb{R}, \quad \mbox{ with }\log_t(z) \defeq \left\{
\begin{array}{ll}
  \frac{z^{1-t}-1}{1-t}, & t\neq 1,\\
\log z & t=1.
\end{array}
\right., \label{eq-link-t-1}
\end{eqnarray}
and the particular case $S^{(1)}_{\infty}$ ($t=1$) is Riemann's integral (conditions for integration are the same for all $t$s and thus match Riemann's). Case $t=0$ is Volterra's integral up to additive constant 1. We then embed this Theorem in a generalization of Leibniz-Newton's fundamental theorem of calculus, which involves a generation of the classical derivative. We believe this tool can be of broad appeal to ML. We make this more concrete in the context of distortion measures.

\textbf{Our second contribution} shows how this simple additional parameter $t$ can be useful to design, alter, or change the properties of distortions according to key notions, including hyperbolicity, metricity, encoding, convexity, etc. To appreciate how our first two contributions naturally extend to the many properties and use of classical integration, we provide (in an Appendix, Sections \ref{proof_add-res}, \ref{proof_add-statinf}) a range of extensions orbiting in the context of our paper, ranging from basic ones like Chasles relationship, additivity, monotonicity, to intermediate ones like the mean-value Theorem and more specific ones like the hyperbolic Pythagorean Theorem, the data processing inequality and the nature of population minimizers. Some of these results are used to address a very specific ML problem that has recently gained traction.

\textbf{Our third contribution} shows how $t$ can be used in the context of hyperbolic geometry to tackle the problem of numerical accuracy in hyperbolic embeddings, with a special focus on the Poincar\'e disk model, for which this problem is crucial \cite{sdgrRT}. Specifically, we show how tuning $t$ can allow to improve encoding requirements while keeping hyperbolicity under control, \textit{without} altering geodesics. Our main application is one we believe has never been investigated despite its attractivity for ML, in particular regarding hyperbolic embeddings.

\textbf{Our main application} is the problem of \textbf{model} embedding in hyperbolic geometry, and in particular embedding sets of boosted decision trees (DTs), to accurately capture both the symbolic (tree representation, branching logical rules, etc.) and numeric part of DT classification, the latter covering both the \textit{local} level of each DT node (classification \textit{and} confidence) and the \textit{global} level of the DT's leverage in a boosted ensemble (leveraging coefficient). This application comes with two side contributions of independent interest: (i) we show a link between training with the log-loss (posterior estimation) and logistic losses (real-valued classification) and hyperbolic distance computation in Poincar\'e disk which makes this particular model of hyperbolic geometry very appealing for our purpose; (ii) we show how to overcome a technical difficulty for a ``clean" embedding of DTs, via a new class of tree-shaped models we nickname \textit{monotonic decision trees}, used to model monotonic classification paths in DTs -- itself providing an explainable alternative to DTs via monoticity~\cite{sbTI}. Experiments complete this application.

To ease reading, all proofs of results appearing in the main body are given in the Appendix.

\section{Related work}
\label{sec:related}

The understanding and formalization of ML has grown from numerous mathematical fields, in particular over the past decade. Mathematical objects are built from a subjective truth: first principles \cite{bLTF}. Refinements can be necessary to properly ``fit" into ML shoes, such as to take into consideration an accurate encoding of such objects. Doing otherwise can lead to catastrophic approximations, as witnessed, among others, in hyperbolic geometry \cite{mwwyTN}. Sometimes also, basic properties need to be altered as a consequence of the system studied. For example, decomposing a system as the product of two independent systems decomposes its Shannon entropy as a sum of the two systems' entropies. In nonextensive statistical mechanics -- whose theory has found many applications in ML -- \cite{TsallisBook-2022}, this cannot hold: the decomposition satisfies a more general notion known as $t$-additivity. What happens if we carry this decomposition \textit{ad infinitum}? To our knowledge, this fundamental question does not yet have an answer. Sticking to a simple mathematical perspective, this can be understood by replacing additivity by $t$-additivity in \textit{e.g.} Riemann integration. Solving it in such a context ideally imposes generalizing the Fundamental Theorem of calculus, \textit{i.e.} eliciting the corresponding generalization of a derivative.

While this question is already interesting from a pure mathematical standpoint, putting it in ML-context brings a quite fascinating perspective linked to its origins. Nonextensive statistical mechanics allow to substantially expand the realm of Boltzmann-Gibbs theory \cite[Chapter 1]{TsallisBook-2022}. This theory emphasizes a key thermodynamic measure, entropy \cite{jGVB}, whose links with information theory were later highlighted in Shannon's own work \cite{sAMT}, with the considerable footprint on ML that we know for this latter one. On top of it, integration is the basic tool allowing to compute such quantities in the limit, beyond the realm of distortion measures summarized in the introduction.

Concerning our specific application, models of hyperbolic geometry have been mainly useful to embed hierarchies \cite{gbhHE,nkPE,yltdSC,yzyckHR}, with a sustained emphasis on coding size and numerical accuracy \cite{mwwyTN,sdgrRT,ydNA}. In unsupervised learning and clustering, some applications have sought a simple representation of data on the form of a tree or via hyperbolic projections \cite{cgnrHH,cgcrFT,llszLD,sgTI}. Approaches dealing with supervised learning assume the \textit{data} lies in an hyperbolic space: the output visualized is therefore an embedding of the data itself, with additional details linked to the classification method, either support vector machines \cite{cdpbLM}, neural nets, logistic regression \cite{gbhHN} or (ensembles of) decision trees \cite{ctkpFH,dmsmHR}. We insist on the fact that those latter methods do not represent the \textit{models} in the hyperbolic space, even when those models are indeed tree-shaped, which makes hyperbolic geometry an especially appealing framework. This question of embedding classifiers is potentially important to improve the state of the art visualization: in the case of decision trees, popular packages stick to a topological layer (the tree graph) to which various additional information about classification are superimposed but without link to the ``embedding"\footnote{\url{https://github.com/parrt/dtreeviz}.}.

\section{T-calculus and a fundamental theorem}
\label{sec:t-riem}

At the core of our generalization is the replacement of the addition by the tempered addition~\cite{nlwGA}, $z \oplus_t z' \defeq z + z' + (1-t)zz'$. The additional term is a scaled saddle point curve $zz'$, which is positive when the sign of $z$
and $z'$ agree and negative otherwise.
We let $[n] \defeq \{1, 2, ..., n\}$ for $n \in \mathbb{N}_{>0}$. Hereafter, $f$ is a function defined on an interval $[a,b]$. We define a generalization of Riemann integration to the tempered algebra (see~\cite{amid2023tempered} for a preliminary development) and for this objective, given an interval $[a,b]$ and a division $\Delta$ of this interval using $n+1$ reals $x_0 \defeq a <x_1 < ... < x_{n-1} < x_n \defeq  b$, we define the Riemann $t$-sum of $f$ over $[a,b]$ using $\Delta$, for a set $\{\xi_i \in [x_{i-1}, x_i]\}_{i\in [n]}$,
\begin{eqnarray}
S^{(t)}_\Delta(f) & \defeq & (\mbox{\Large $\oplus$}_t)_{i\in [n]} (x_i - x_{i-1}) \cdot f(\xi_i),\quad \left(a \oplus_t b \defeq a + b + (1-t)ab\right).\label{deftRiem}
\end{eqnarray}
($S^{(1)}_\Delta(f)$ is the classical Riemann summation) Let $s(\Delta) \defeq \max_i |\mathbb{I}_i|$ denote the step of division $\Delta$. The conditions for $t$-Riemann integration are the same as for $t=1$.
\begin{definition}\label{tint}
  Fix $t\in \mathbb{R}$. A function $f$ is $t$-(Riemann) integrable over $[a,b]$ iff there exists $L\in \mathbb{R}$ such that
\begin{eqnarray}
  \forall \epsilon > 0, \exists \delta > 0 : \forall \mbox{ division } \Delta \mbox{ with }  s(\Delta) < \delta, \nonumber\\
  \left|S^{(t)}_\Delta(f) - L\right| < \varepsilon. \label{eq-tRiem}
  \end{eqnarray}
  When this happens, we note
  \begin{equation}
\label{eq:t-int-def}
    \riemannint{t}{a}{b} f(x) \mathrm{d}_t x = L.
  \end{equation}
\end{definition}
The case $t=1$, Riemann integration, is denoted using classical notations. We now prove our first main results for this Section, namely the link between $t$-Riemann integration and Riemann integration.
\begin{theorem}\label{thtRIEMANN}
  Any function is either $t$-Riemann integrable for all $t\in \mathbb{R}$ simultaneously, or for none. In the former case, we have the relationship
  \begin{eqnarray}
\riemannint{t}{a}{b} f(u) \mathrm{d}_t u & = & \liftt_t \left( \int_a^b f(u) \mathrm{d} u \right), \forall t \in \mathbb{R} ,\label{eq-t-riemann} \quad\mbox{ with } \liftt_t(z) \defeq \log_t \exp z. \label{eq-liftt}
        \end{eqnarray}
  \end{theorem}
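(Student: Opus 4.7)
The plan rests on a single algebraic observation: the tempered addition $\oplus_t$ is converted to ordinary multiplication by the inverse of $\log_t$, namely $\exp_t(z)\defeq (1+(1-t)z)^{1/(1-t)}$ (and $\exp_1\defeq\exp$). Expanding $(1+(1-t)u)(1+(1-t)v)=1+(1-t)(u\oplus_t v)$ shows at once that
\[
\exp_t(u\oplus_t v) \;=\; \exp_t(u)\cdot\exp_t(v)
\]
wherever both sides are defined, i.e.\ wherever $1+(1-t)u,\,1+(1-t)v>0$. Iterating on the $t$-Riemann sum in \eqref{deftRiem}, with $u_i\defeq (x_i-x_{i-1})f(\xi_i)$, then yields
\[
\exp_t\bigl(S^{(t)}_\Delta(f)\bigr) \;=\; \prod_{i=1}^{n}\exp_t(u_i),\qquad
\ln\exp_t\bigl(S^{(t)}_\Delta(f)\bigr)\;=\;\sum_{i=1}^{n}\ln\exp_t(u_i).
\]

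I would then compare this last sum with the classical Riemann sum $S^{(1)}_\Delta(f)=\sum_i u_i$ via the Taylor expansion $\ln\exp_t(u)=\frac{\ln(1+(1-t)u)}{1-t}=u-\frac{1-t}{2}u^2+O(u^3)$ as $u\to 0$. The $t$-integrability hypothesis forces $f$ to be bounded on $[a,b]$ by some $M$ (otherwise a single term in $S^{(t)}_\Delta(f)$ can be driven to any value, contradicting the Cauchy-type condition in \Cref{tint}), so $|u_i|\le M\cdot s(\Delta)$, and for $s(\Delta)$ small enough every $u_i$ lies in the admissible range of $\exp_t$. A quadratic remainder bound then gives
\[
\Bigl|\sum_i\ln\exp_t(u_i)-\sum_i u_i\Bigr|
\;\le\; C\sum_i u_i^2
\;\le\; CM^2(b-a)\cdot s(\Delta)\;\xrightarrow[s(\Delta)\to 0]{}\;0,
\]
so $\sum_i\ln\exp_t(u_i)$ converges as $s(\Delta)\to 0$ if and only if $S^{(1)}_\Delta(f)$ does, with the same limit.

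Applying the continuous map $\log_t\exp$ and using $S^{(t)}_\Delta(f)=\log_t\exp\bigl(\sum_i\ln\exp_t(u_i)\bigr)$ then delivers both halves of the theorem simultaneously: $t$-Riemann integrability is equivalent across all $t\in\mathbb{R}$, and in the convergent case
\[
\riemannint{t}{a}{b}f(u)\,\mathrm{d}_t u
\;=\; \log_t\exp\Bigl(\int_a^b f(u)\,\mathrm{d}u\Bigr)
\;=\; \liftt_t\Bigl(\int_a^b f(u)\,\mathrm{d}u\Bigr),
\]
with $t=1$ immediate since $\oplus_1=+$ and $\liftt_1=\mathrm{id}$. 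The main technical nuisance I anticipate is the admissibility window for $\exp_t$ when $t\neq 1$: the homomorphism identity requires $1+(1-t)\cdot$ to stay positive at every intermediate $\oplus_t$-partial sum, not merely on the individual $u_i$. This is genuinely only true for partitions below an explicit threshold set by $M$ and $t$, but it holds vacuously in the limit $s(\Delta)\to 0$, so it affects only the bookkeeping in the rigorous write-up, not the conclusion.
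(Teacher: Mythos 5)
Your proposal is correct in substance and reaches the result by a genuinely different route from the paper, even though both start from the same pivotal rewriting: the paper expands the $t$-sum over subsets to obtain $1+(1-t)S^{(t)}_{\Delta}(f)=\prod_i\bigl(1+(1-t)u_i\bigr)$ (its Eq.~\eqref{simplSdelta}), which is exactly your $\exp_t$-homomorphism in disguise. Where you diverge is the key estimate. The paper compares the product $\prod_i(1+q_i)$ with $\bigl(1+\tfrac1n\sum_i q_i\bigr)^n$ via a Bregman-divergence/Jensen-gap bound (Lemma~\ref{lemRAT1}, with the explicit window $v<8/10$), then invokes $(1+a/n)^n\to e^a$ and a sandwich argument; crucially it must also restrict to divisions for which $n\cdot\max_i|q_i|$ stays bounded (the constant $K$ in the proof). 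You instead take logarithms and use the elementary bound $|\ln(1+x)-x|\le x^2$ to show $\bigl|\tfrac{1}{1-t}\sum_i\ln(1+(1-t)u_i)-S^{(1)}_{\Delta}(f)\bigr|=O\!\left(\sum_i u_i^2\right)=O(s(\Delta))$, uniformly over \emph{all} tagged divisions of small mesh, and then transport the limit through the continuous, strictly increasing bijection $\liftt_t$ onto its range. This buys three things: it dispenses with the auxiliary lemma and the $(1+\tfrac1n\sum q_i)^n$ comparison; the error bound needs no condition of the form ``$nv$ bounded,'' so arbitrary (non-regular) divisions are handled directly, which matches the quantifier over all divisions in Definition~\ref{tint}; and, because $\liftt_t$ is invertible with continuous inverse (and $|S^{(1)}_\Delta|\le M(b-a)$ keeps limits away from the boundary of the range), the ``integrable for all $t$ or for none'' equivalence comes out symmetrically, whereas the paper argues only the direction starting from $t=1$ and asserts the rest. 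The one soft spot you should tighten in a full write-up is the claim that $t$-integrability forces boundedness: driving a single tag to infinity fails to move $S^{(t)}_\Delta$ only when the product of the remaining factors vanishes, so you need a short remark that the division or the other tags can be perturbed (within mesh $\delta$) to make all factors nonzero; this is routine, and the paper's own proof is no more detailed on that converse direction. Your caution about intermediate $\oplus_t$-partial sums is actually unnecessary, since $1+(1-t)S^{(t)}_\Delta=\prod_i(1+(1-t)u_i)$ is a polynomial identity; positivity is only needed where you take logarithms, i.e.\ factorwise, which your small-mesh regime already guarantees.
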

  (Proof in \supplement, Section \ref{proof_thtRIEMANN}) Interestingly, for the case $t=0$, we get $1 + \riemannint{0}{a}{b} f(u) \mathrm{d}_t u = \exp \int_a^b f(u) \mathrm{d} u$, which is Volterra's integral \cite[Theorem 5.5.11]{sPII}. Classical Riemann integration and derivation are fundamental inverse operations. Classical derivation is sometimes called ``Euclidean derivative" \cite{bmTH} by opposition to other forms of derivative, such as hyperbolic derivatives: the former uses the Euclidean difference quotient, the latter the hyperbolic difference quotient. We now elicit the notion of derivative which is the ``inverse" of $t$-Riemann integration. Unsurprisingly, it generalizes Euclidean derivative. The Theorem stands as a generalization of the classical fundamental Theorem of calculus. 
  \begin{theorem}\label{thtRIEM-DER}
Let
  \begin{equation}
    \label{eq:t-derivative}
    \mathrm{D}_t f(z) \defeq \lim_{\delta \rightarrow 0} \frac{f(z + \delta) \ominus_t f(z)}{\delta} ,\quad \left(a \ominus_t b \defeq \frac{a-b}{1+(1-t)b}\right).
\end{equation}
Suppose $f$ $t$-Riemann integrable. Then function
  \begin{eqnarray}
    \begin{array}{rcl}
      F : [a,b] & \rightarrow & \mathbb{R}\\
      z & \mapsto & \riemannint{t}{a}{z} f(u) \mathrm{d}_t u
      \end{array}\label{defFT}
  \end{eqnarray}
  is such that $\mathrm{D}_t F = f$. We call $F$ a $t$-primitive of $f$ (which zeroes when $z=a$) and $\mathrm{D}_t F$ the $t$-derivative of $F$.
\end{theorem}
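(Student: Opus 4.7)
The plan is to leverage Theorem \ref{thtRIEMANN} to reduce the claim to the classical fundamental theorem of calculus, after verifying that $\liftt_t$ is a homomorphism from the ordinary additive structure on $\mathbb{R}$ to the tempered one.

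First, I would write $F(z) = \liftt_t(G(z))$, where $G(z) \defeq \int_a^z f(u)\mathrm{d} u$ is the ordinary Riemann primitive of $f$; this is exactly the content of Theorem \ref{thtRIEMANN} applied on $[a,z]$ (note $f$ is $t$-Riemann integrable for all $t$ simultaneously by the same theorem). By the classical fundamental theorem of calculus, $G$ is differentiable with $G'(z)=f(z)$.

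Second, I would establish the key algebraic fact that $\liftt_t(u+v) = \liftt_t(u) \oplus_t \liftt_t(v)$ for all $u,v\in\mathbb{R}$. This follows from the well-known identity $\log_t(xy) = \log_t(x) \oplus_t \log_t(y)$ (which I would verify for $t\neq 1$ by direct expansion of $\log_t x = (x^{1-t}-1)/(1-t)$ into $\oplus_t$, and which is trivial for $t=1$) applied to $x=\exp u$, $y=\exp v$, using $\exp(u+v)=\exp u \cdot \exp v$. Since $\ominus_t$ is by construction the inverse of $\oplus_t$ (i.e.\ $c=a\oplus_t b$ iff $a=c\ominus_t b$, which one checks from the definitions: $c-b = a(1+(1-t)b)$), writing $G(z+\delta)=G(z)+h$ with $h\defeq G(z+\delta)-G(z)$ gives
\begin{equation*}
F(z+\delta)\ominus_t F(z) \;=\; \liftt_t\bigl(G(z)+h\bigr)\ominus_t \liftt_t(G(z)) \;=\; \liftt_t(h).
\end{equation*}

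Third, I would take the limit. For $t\neq 1$, $\liftt_t(h)=(\exp((1-t)h)-1)/(1-t)$, hence $\liftt_t(h)/h \to 1$ as $h\to 0$ (and trivially for $t=1$, since $\liftt_1$ is the identity). Continuity of $G$ at $z$ gives $h\to 0$ as $\delta\to 0$, while $h/\delta \to G'(z)=f(z)$. Therefore
\begin{equation*}
\mathrm{D}_t F(z) \;=\; \lim_{\delta\to 0}\frac{\liftt_t(h)}{\delta} \;=\; \lim_{\delta\to 0}\frac{\liftt_t(h)}{h}\cdot\frac{h}{\delta} \;=\; 1\cdot f(z) \;=\; f(z),
\end{equation*}
handling the (harmless) case $h=0$ separately by noting $\liftt_t(0)=0$, so the quotient is $0$ there and the limit is unaffected.

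The only genuine obstacle is the homomorphism identity $\liftt_t(u+v)=\liftt_t(u)\oplus_t \liftt_t(v)$ and the compatibility of $\ominus_t$ as its inverse; everything else is a standard reduction to the classical fundamental theorem of calculus via a first-order expansion of $\liftt_t$ at $0$. Regularity issues (e.g.\ differentiability of $G$) are handled by the classical theory since $f$ is Riemann integrable.
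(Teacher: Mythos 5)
Your proof is correct and follows essentially the same route as the paper's: reduce via Theorem \ref{thtRIEMANN} to $F = \liftt_t\circ G$ with $G$ the classical primitive, use the identity $\log_t a \ominus_t \log_t b = \log_t(a/b)$ (your homomorphism/inverse observation) to collapse the tempered difference to $\liftt_t\bigl(G(z+\delta)-G(z)\bigr)$, and conclude from the first-order behaviour $\liftt_t(h) = h + o(h)$ together with the classical fundamental theorem of calculus. Your factorization $\liftt_t(h)/h \cdot h/\delta$ (with the $h=0$ caveat) is just a slightly more explicit rendering of the paper's step $\log_t\exp(w) = w + o(w)$, so there is no substantive difference.
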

The function $\liftt_{t}$ (Figure \ref{fig:logtexp}) is key to many of our results; it has quite remarkable properties summarized below which follow from Theorem \ref{thtRIEMANN}, $\mathrm{D}_t$ \eqref{eq:t-derivative} and the properties of $\log_t$.
\begin{lemma}\label{lem-prop-tlift}
  $\liftt_{t}$ satisfies the following properties:
  \begin{itemize}
  \item [1.] $\liftt_{t}(z)$ is strictly increasing for any $t\in \mathbb{R}$, strictly concave for any $t>1$, strictly convex for any $t<1$ and such that $\mathrm{sign}(\liftt_{t}(z)) = \sign(z), \forall t \in \mathbb{R}$; 
  \item [2.] $\mathrm{D}_t\liftt_t(z)  = 1$;
  \item [3.] ($t$-integral mean-value) Suppose $f$ Riemann integrable over $[a,b]$. Then there exists $c \in (a,b)$ such that
  \vspace{-0.1cm}
    \begin{eqnarray*}
\hspace{-0.7cm} (b-a) \cdot \liftt_{t'} \circ f(c) = \hspace{-0.4cm} \riemannint{t}{a}{b} f(u)\mathrm{d}_tu \:\: (t' \defeq 1-(1-t)(b-a)).
    \end{eqnarray*}
    \end{itemize}
  \end{lemma}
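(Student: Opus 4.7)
All three items flow from the closed form
\[
\liftt_t(z) \;=\; \log_t \exp z \;=\; \frac{e^{(1-t)z}-1}{1-t} \quad (t\neq 1), \qquad \liftt_1(z)=z,
\]
obtained by unfolding the definition of $\log_t$. The plan is to first use this formula to read off the analytic shape of $\liftt_t$ (item 1), then use the Taylor expansion of a differentiable function to compute $\mathrm{D}_t$ on general inputs before specializing to $\liftt_t$ (item 2), and finally combine Theorem \ref{thtRIEMANN} with the classical integral mean-value theorem plus a one-line algebraic identity (item 3).

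\textbf{Items 1 and 2.} Differentiating the closed form above gives $\liftt_t'(z)=e^{(1-t)z}>0$ for all $t,z\in\mathbb{R}$, so $\liftt_t$ is strictly increasing, and $\liftt_t''(z)=(1-t)e^{(1-t)z}$, whose sign matches that of $1-t$, yielding strict convexity for $t<1$ and strict concavity for $t>1$. Since $\liftt_t(0)=0$ and $\liftt_t$ is strictly increasing, the sign statement is immediate. For item 2 I will first establish the general identity
\[
\mathrm{D}_t f(z) \;=\; \frac{f'(z)}{1+(1-t)f(z)}
\]
for any (classically) differentiable $f$: expanding $f(z+\delta)=f(z)+\delta f'(z)+o(\delta)$ inside $\ominus_t$, the denominator $1+(1-t)f(z)$ is constant in $\delta$, and dividing by $\delta$ yields the formula. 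Applying it to $f=\liftt_t$, the key observation is that $1+(1-t)\liftt_t(z) = 1+(1-t)\cdot\frac{e^{(1-t)z}-1}{1-t} = e^{(1-t)z}$, which equals $\liftt_t'(z)$, so numerator and denominator cancel and $\mathrm{D}_t\liftt_t(z)=1$. The $t=1$ case reduces to $\mathrm{D}_1 z = 1$.

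\textbf{Item 3 ($t$-integral mean-value).} By Theorem \ref{thtRIEMANN}, $\riemannint{t}{a}{b}f(u)\mathrm{d}_t u = \liftt_t\!\bigl(\int_a^b f(u)\,\mathrm{d}u\bigr)$. The classical integral mean-value theorem supplies $c\in(a,b)$ with $\int_a^b f(u)\,\mathrm{d}u = (b-a)f(c)$, so the right-hand side becomes $\liftt_t\bigl((b-a)f(c)\bigr)$. The remaining task is the algebraic identity
\[
\liftt_t\bigl((b-a)f(c)\bigr) \;=\; (b-a)\,\liftt_{t'}\bigl(f(c)\bigr), \qquad t'\;=\;1-(1-t)(b-a),
\]
which I will verify by inserting the closed form: the left side equals $\frac{e^{(1-t)(b-a)f(c)}-1}{1-t}$, while the right side equals $(b-a)\cdot\frac{e^{(1-t')f(c)}-1}{1-t'}$, and the substitution $1-t'=(1-t)(b-a)$ makes the two coincide (the $t=1$ case degenerates to the classical statement). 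The main (minor) subtlety is that the classical integral mean-value step nominally requires continuity of $f$; since the invoked identity on $\int f$ is the only use of continuity and the conclusion is elsewhere purely algebraic, this is a routine strengthening rather than a genuine obstacle. No step is deep; the only non-obvious move is recognizing that the tempered-argument rescaling $(b-a)f(c)\mapsto \liftt_{t'}$ is exactly compensated by the $(b-a)$ factor multiplying $(1-t)$ in $t'$.
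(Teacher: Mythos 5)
Your proposal is correct and follows essentially the route the paper intends for this lemma (it states the properties "follow from Theorem \ref{thtRIEMANN}, $\mathrm{D}_t$ and the properties of $\log_t$" without a separate proof): the closed form $\liftt_t(z)=\frac{e^{(1-t)z}-1}{1-t}$ for item 1, the quotient formula $\mathrm{D}_t f = f'/(1+(1-t)f)$ — which is exactly the paper's own \eqref{eq:t-derivative-simple} — together with $1+(1-t)\liftt_t(z)=e^{(1-t)z}$ for item 2, and Theorem \ref{thtRIEMANN} combined with the classical integral mean-value theorem and the rescaling identity $\log_t(a^{\lambda})=\lambda\log_{t'}(a)$, $t'=1-(1-t)\lambda$ (used elsewhere in the paper for dilativity and Lemma \ref{lem-inf-prop}), for item 3. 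Your continuity caveat in item 3 is well taken, but it is a feature of the statement as written (the paper likewise assumes only Riemann integrability), not a gap in your argument.
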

  Looking at [2.] and [3.], one can remark that replacing $\liftt_t$ by the identity function $z\mapsto z$ brings two well known properties of classical derivation and integration. In that sense, $\liftt_t$ is somehow the ``identity function" of tempered calculus.
\begin{figure}
  \centering 
\includegraphics[trim=0bp 0bp 0bp 0bp,clip,width=0.6\columnwidth]{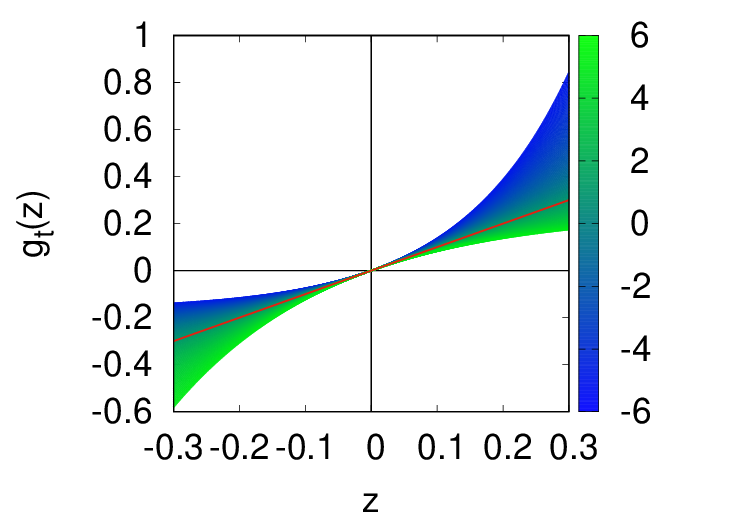} 
    \caption{Plot of $\liftt_t(z)$ \eqref{eq-liftt} for different values of $t$ (color map on the right bar), showing where it is {\color{blue} convex} / {\color{darkgreen} concave}. The $t=1$ case ($\liftt_1(z) = z$) is emphasized in {\color{red} red}.}
    \label{fig:logtexp}
  \end{figure}
Many more results are provided in \supplement~(Section \ref{proof_add-res}) showing how Theorems \ref{thtRIEMANN} and \ref{thtRIEM-DER} naturally ``percolate" through many properties known for classical integration and the fundamental Theorem of calculus. 

\section{Sets endowed with a distortion and their t-self}
\label{sec:t-hyp}

The basic element of this Section is a set $\mathcal{X}$ endowed with some function $d : \mathcal{X} \times \mathcal{X} \rightarrow [0,+\infty]$ that serves as a comparison measure between its elements.
\begin{definition}
Let $\mathcal{X}$ be endowed with function $d : \mathcal{X} \times \mathcal{X} \rightarrow [0,+\infty]$. The t-self of $\mathcal{X}$ is the set (implicitly) endowed with  $d^{(t)} \defeq \liftt_t \circ d$.
\end{definition}
When $d$ is derived from an integral, we have the link between $d$ and $d^{(t)}$ of Theorem \ref{thtRIEMANN}: the purpose of this Section is to demonstrate how, by just sliding $t$, for classical distortion measures new properties can appear and existing ones can change. Due to the sheer number of examples that can be found in ML and the focus of our paper, we concentrate here on geometric examples (so it is useful to think that $d$ is a geodesic length) and defer to Section \ref{proof_add-statinf} in \supplement~additional examples where statistical information replaces geometric information. Let us first start with an important result, trivial because $\liftt_t$ is strictly increasing for any $t\in \mathbb{R}$: if one curve has smaller length than another one according to $d^{(t)}$, it remains smaller if we change $t$ for any $t' \neq t$. Hence,
\begin{lemma}
Geodesics are invariant to the change of $t$: they are the same in the t-self of $\mathcal{X}$ and in $\mathcal{X}$.
  \end{lemma}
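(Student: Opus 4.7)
The plan is to reduce the claim to the strict monotonicity of $\liftt_t$ established in Lemma \ref{lem-prop-tlift}[1.]. Fix two points $x, y \in \mathcal{X}$ and consider any two curves $\gamma_1, \gamma_2$ joining them. I first translate between $d$-length and $d^{(t)}$-length of a curve: since this section deals with geodesic distortions (integrals of infinitesimal distances, per the setup just above the lemma), Theorem \ref{thtRIEMANN} tells us that the $t$-self length of $\gamma_i$ is exactly $\liftt_t$ applied to its classical length. In particular, the transformation $d \mapsto d^{(t)}$ acts by composition with $\liftt_t$ on curve lengths as well as on pointwise distances.

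Next, I invoke the fact that $\liftt_t$ is strictly increasing on $\mathbb{R}$ for every $t \in \mathbb{R}$. Strict monotonicity immediately yields that $d(\gamma_1) \leq d(\gamma_2)$ holds if and only if $d^{(t)}(\gamma_1) \leq d^{(t)}(\gamma_2)$; the ordering of curves from $x$ to $y$ by length is therefore preserved under passage to the $t$-self. Consequently, the set of length-minimizing curves, by definition the geodesics between $x$ and $y$, is the same in $\mathcal{X}$ and in its $t$-self, for every $t \in \mathbb{R}$.

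The only real subtlety, and therefore the main thing to be careful about, is to make sure that the pointwise identity $d^{(t)} = \liftt_t \circ d$ lifts to the corresponding identity on curve \emph{lengths}, rather than producing some more complicated functional transformation. This is exactly what Theorem \ref{thtRIEMANN} supplies, applied to the infinitesimal distortion integrand along the curve; no further calculation is needed and the monotonicity argument then closes the proof.
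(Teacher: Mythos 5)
Your proposal is correct and follows essentially the same route as the paper: the paper likewise obtains the lemma as an immediate consequence of the strict monotonicity of $\liftt_t$ (Lemma \ref{lem-prop-tlift}, point [1.]), noting that if one curve is shorter than another under $d^{(t)}$ it stays shorter for any other $t$, and it also invokes Theorem \ref{thtRIEMANN} for the identification of $t$-self lengths with $\liftt_t$ of classical lengths, exactly as you do.
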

  To be very specific, invariance is defined using \textit{pregeodesics} on the t-self because \eqref{eq-liftt} allows to define a geodesic length parameterized by a diffeomorphism on arc length. In terms of visualization, there is no difference, which is why we stick to the term "geodesic" to avoid ladening our narrative. In the case of geometric information, classical properties for $d$ are \textbf{(R)}eflexivity ($d(\ve{x},\ve{x}) = 0$), \textbf{(S)}ymmetry ($d(\ve{x},\ve{y}) = d(\ve{y},\ve{x})$), the \textbf{(I)}dentity of indiscernibles ($d(\ve{x},\ve{y}) = 0 \Rightarrow \ve{x} = \ve{y}$) and of course the \textbf{(T)}riangle inequality ($d(\ve{x}, \ve{z}) \leq d(\ve{x}, \ve{y}) + d(\ve{y}, \ve{z})$), for any $\ve{x}, \ve{y}, \ve{z} \in \mathcal{X}$. Another property that is important for our purpose, that we state keeping aside some technical details that can be found in \cite{bACOG}.
  \begin{definition}\label{def-hyperbolicity}
  \textbf{(H)} $d$ is $\tau$-hyperbolic for some $\tau\geq 0$ iff for any three geodesic curves $\upgamma^1, \upgamma^2, \upgamma^3$ linking three points, there exists $\ve{x} \in \mathcal{X}$ such that $\max_i d(\ve{x}, \upgamma^i) \leq \tau$, where $d(\ve{x}, \upgamma) \defeq \inf_j d(\ve{x}, \upgamma_j)$.
\end{definition}
Let us first show how we can create hyperbolicity from $d$ being solely non-negative.
\begin{lemma}\label{lem-create-hyperbolicity}
  For any $\tau > 0$, if we let $t = 1 + (1/\tau)$, then the t-self of $\mathcal{X}$ satisfies \textbf{(H)} for hyperbolic constant $\tau$.
\end{lemma}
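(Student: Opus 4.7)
The plan is to unpack the definition of $\liftt_t$ at the specific value $t = 1 + 1/\tau$ and observe that it renders the induced distortion globally bounded by $\tau$, which makes the hyperbolicity condition \textbf{(H)} trivial.

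First I would substitute $t = 1 + 1/\tau$ (so that $1 - t = -1/\tau$) into the definition $\liftt_t(z) = \log_t \exp z = \frac{\exp((1-t)z) - 1}{1-t}$, obtaining the explicit closed form
\begin{eqnarray*}
\liftt_{1+1/\tau}(z) & = & \tau\bigl(1 - \exp(-z/\tau)\bigr).
\end{eqnarray*}
Since $d$ takes values in $[0, +\infty]$, the argument $z = d(\ve{x}, \ve{y})$ is non-negative, so $\exp(-z/\tau) \in (0, 1]$ and therefore $d^{(t)}(\ve{x}, \ve{y}) = \liftt_t \circ d(\ve{x}, \ve{y}) \in [0, \tau)$ for every pair $\ve{x}, \ve{y}$ (with the value $\tau$ attained only in the limit $d \to +\infty$).

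Next I would invoke the preceding lemma: geodesics of the $t$-self coincide with geodesics of $\mathcal{X}$, so the three curves $\upgamma^1, \upgamma^2, \upgamma^3$ in \textbf{(H)} for $d^{(t)}$ are exactly the three curves one would use for $d$. For \emph{any} point $\ve{x} \in \mathcal{X}$ (for instance, any point on $\upgamma^1$), the infimum $d^{(t)}(\ve{x}, \upgamma^i) = \inf_j d^{(t)}(\ve{x}, \upgamma^i_j)$ is bounded above by $\tau$ simply because every value of $d^{(t)}$ is bounded above by $\tau$; thus $\max_i d^{(t)}(\ve{x}, \upgamma^i) \leq \tau$, which is condition \textbf{(H)} with hyperbolic constant $\tau$.

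There is no real obstacle here: the only non-routine step is recognizing that \textbf{(H)} reduces to a global diameter bound in the $t$-self, at which point the strict monotonicity and asymptote of $\liftt_{1+1/\tau}$ do all the work. I would keep the exposition short and simply highlight the explicit formula for $\liftt_{1+1/\tau}$ and its upper bound $\tau$, noting as a remark that the argument in fact gives the strict inequality $\max_i d^{(t)}(\ve{x}, \upgamma^i) < \tau$ whenever the original distances are finite.
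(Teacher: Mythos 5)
Your proof is correct and is essentially the paper's own argument: the paper also observes that for nonnegative $d$ and $t = 1 + 1/\tau$ one has $\liftt_t(z) = \frac{1-\exp(-(t-1)z)}{t-1} = \tau\bigl(1-\exp(-z/\tau)\bigr) \leq \tau$ for all $z \geq 0$, so every $d^{(t)}$-value is bounded by $\tau$ and \textbf{(H)} holds trivially for any choice of $\ve{x}$. Your added remarks (geodesic invariance, strictness when distances are finite) are harmless but not needed for the conclusion.
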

The proof consists in remarking that since $d$ is non negative, $d^{(t)}$ satisfies \textbf{(H)} for hyperbolic constant $\tau$ iff
\begin{eqnarray*}
\frac{1-\exp(-(t-1) z)}{t-1} & \leq & \tau,
  \end{eqnarray*}
for any $z\geq 0$, which is clearly achieved for $t = 1 + (1/\tau)$. We can also tune hyperbolicity, which is a trivial consequence of Lemma \ref{lem-prop-tlift} (point [1.]).
\begin{lemma}
If $d$ is $\tau$-hyperbolic then $d^{(t)}$ is $\liftt_t(\tau)$-hyperbolic, for any $t \in \mathbb{R}$.
\end{lemma}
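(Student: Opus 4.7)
The plan is to reduce the claim directly to the strict monotonicity of $\liftt_t$ (Lemma \ref{lem-prop-tlift}, point [1.]), exploiting the fact that the preceding lemma tells us the family of geodesics is unchanged when we pass from $\mathcal{X}$ to its $t$-self. So I will work with a single family of geodesics and compare the two hyperbolicity witnesses pointwise.

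First I would fix three geodesic curves $\upgamma^1, \upgamma^2, \upgamma^3$ linking three arbitrary points in the $t$-self of $\mathcal{X}$. By the geodesic invariance lemma, these are also geodesic curves in $(\mathcal{X},d)$. Because $d$ is $\tau$-hyperbolic, there exists $\ve{x} \in \mathcal{X}$ with $\max_i d(\ve{x}, \upgamma^i) \leq \tau$. I now claim the same $\ve{x}$ witnesses the desired $\liftt_t(\tau)$-hyperbolicity of $d^{(t)}$.

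The key step is the commutation of $\liftt_t$ with the $\inf$ in $d(\ve{x}, \upgamma) = \inf_j d(\ve{x}, \upgamma_j)$ and with the outer $\max$ over $i$. Both follow from the strict monotonicity of $\liftt_t$: for any monotonically increasing function $\varphi$ and any subset $A \subseteq \mathbb{R}$ on which the relevant extrema are attained (or approached), $\varphi(\inf A) = \inf \varphi(A)$ and $\varphi(\max A) = \max \varphi(A)$. Applying this twice yields
\begin{eqnarray*}
\max_i d^{(t)}(\ve{x}, \upgamma^i) \; = \; \max_i \inf_j \liftt_t(d(\ve{x}, \upgamma^i_j)) \; = \; \liftt_t\!\left( \max_i \inf_j d(\ve{x}, \upgamma^i_j) \right) \; = \; \liftt_t\!\left( \max_i d(\ve{x}, \upgamma^i) \right) .
\end{eqnarray*}
The right-hand side is at most $\liftt_t(\tau)$, again by monotonicity, which is exactly the condition for $d^{(t)}$ to be $\liftt_t(\tau)$-hyperbolic.

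There is essentially no obstacle beyond being careful about the order of $\inf$ and $\max$ and making sure monotonicity applies to both. A minor technicality is that $\liftt_t$ must be defined and monotone on the entire relevant range of $d$; since $d$ takes values in $[0,+\infty]$ and $\liftt_t$ is strictly increasing on $\mathbb{R}$ for every $t \in \mathbb{R}$ (with the usual limit at $+\infty$), this poses no problem. No explicit expression for $\liftt_t$ is needed, which is why this proof holds uniformly over all $t$.
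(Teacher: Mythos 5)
Your argument is correct and is essentially the paper's own: the paper dismisses this lemma as a "trivial consequence" of the strict monotonicity of $\liftt_t$ (point [1.] of Lemma \ref{lem-prop-tlift}) together with the invariance of geodesics under the change of $t$, which is exactly the reduction you carry out (your parenthetical appeal to continuity of $\liftt_t$ to commute it with the infimum is the right way to make the monotonicity step rigorous). Nothing further is needed.
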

We now investigate two models of hyperbolic geometry, the Lorentz model and Poincar\'e disk model, with a special emphasis on the latter one because of our application.

\paragraph{Lorentz model} We consider the Lorentz model for a simple illustration in which one creates approximate metricity in the t-self. This $d$-dimensional manifold is embedded in $\mathbb{R}^{d,1}$ via the hyperboloid with constant $-c<0$ curvature and defined by $\mathbb{H}_c \defeq \{\ve{x}\in \mathbb{R}^{d,1}: x_0 > 0 \wedge \ve{x} \circ \ve{x} = -1/c\}$, with $\ve{x} \circ \ve{y} \defeq -x_0y_0 + \sum_{i=1}^d x_iy_i$ the Lorentzian inner product \cite[Chapter 3]{rFOH}. In ML, two distortions are considered  on $\mathbb{H}_c$ \cite{llszLD}, one of which is the Lorentzian ``distance" $d_L$:
    \begin{eqnarray*}
      d_L(\ve{x}, \ve{y}) & \defeq & -\frac{2}{c} - 2 \cdot \ve{x} \circ \ve{y}.
    \end{eqnarray*}
    $d_L$ is notoriously not a distance because it does not satisfy the triangle inequality, yet we show that we can pick $t$ and the curvature in such a way that the t-self is arbitrarily close to a metric space.
    \begin{lemma}\label{lem-d-l-metric}
      For any $\delta > 0$, choose parameter $t$ and curvature $c$ as:
      \begin{eqnarray*}
t \defeq 1 + \frac{1}{\delta} & ; & c = \frac{2}{\delta}.
      \end{eqnarray*}
      Then the t-self of $\mathbb{H}_c$ is approximately metric: $d_L^{(t)}$ satisfies \textbf{(R)}, \textbf{(S)}, \textbf{(I)}, and the $\delta$-triangle inequality,
      \begin{eqnarray}
d_L^{(t)}(\ve{x}, \ve{z}) \leq d_L^{(t)}(\ve{x}, \ve{y}) + d_L^{(t)}(\ve{y}, \ve{z}) + \delta, \forall \ve{x}, \ve{y}, \ve{z} \in \mathbb{H}_c. \label{deltaT}
        \end{eqnarray}
      \end{lemma}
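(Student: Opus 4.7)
The plan is to verify the four claims in turn, using that $\liftt_t$ is strictly monotone with $\liftt_t(0)=0$ to transport (R), (S), (I) from $d_L$ to $d_L^{(t)}$, and then exploiting a boundedness consequence of the particular choice of $t$ to establish the $\delta$-triangle inequality.

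First I would unpack the three ``easy'' axioms. Reflexivity follows since $\ve{x}\circ\ve{x}=-1/c$ on $\mathbb{H}_c$, so $d_L(\ve{x},\ve{x})=0$, and $\liftt_t(0)=\log_t \exp 0 = \log_t 1 = 0$. Symmetry is immediate because the Lorentzian inner product is symmetric, so $d_L$ is, and the post-composition with $\liftt_t$ preserves this. For the identity of indiscernibles I would invoke the reverse Cauchy--Schwarz inequality for timelike future-directed vectors: for $\ve{x},\ve{y}\in\mathbb{H}_c$, one has $\ve{x}\circ\ve{y}\leq -1/c$ with equality iff $\ve{x}=\ve{y}$. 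Hence $d_L(\ve{x},\ve{y})\geq 0$ on $\mathbb{H}_c$, with vanishing iff $\ve{x}=\ve{y}$. Since $\liftt_t$ is strictly increasing (Lemma \ref{lem-prop-tlift}, [1.]), $d_L^{(t)}(\ve{x},\ve{y})=0$ iff $d_L(\ve{x},\ve{y})=0$ iff $\ve{x}=\ve{y}$.

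For the $\delta$-triangle inequality I would make $\liftt_t$ explicit. With $t=1+1/\delta$ so that $1-t=-1/\delta$,
\begin{equation*}
\log_t z = \frac{z^{1-t}-1}{1-t} = \delta\bigl(1-z^{-1/\delta}\bigr),
\end{equation*}
hence $\liftt_t(u) = \delta\bigl(1-\exp(-u/\delta)\bigr)$. Since $d_L\geq 0$ on $\mathbb{H}_c$ (previous step), this yields
\begin{equation*}
d_L^{(t)}(\ve{x},\ve{y}) \;=\; \delta\bigl(1-\exp(-d_L(\ve{x},\ve{y})/\delta)\bigr) \;\in\; [0,\delta).
\end{equation*}

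The punchline is then essentially automatic: the LHS of \eqref{deltaT} is strictly less than $\delta$, while the RHS is at least $\delta$ (the first two terms are non-negative), so the inequality holds. The main ``obstacle'' is really just bookkeeping: checking that $d_L\geq 0$ on the hyperboloid (invoking the reverse Cauchy--Schwarz, which is where the hypothesis $\ve{x}\circ\ve{x}=-1/c$ bites), and verifying that the prescription $c=2/\delta$ is consistent with $d_L$ being non-negative (it affects only the additive constant $-2/c$ in $d_L$, matched to the hyperboloid constraint through reflexivity). Notice that the curvature choice $c=2/\delta$ is not strictly needed for the $\delta$-triangle inequality itself—any $c>0$ would work—but it ties the scale of the ambient hyperboloid to the approximation parameter, so that the limit $\delta\to 0$ is simultaneously taken in both $t$ and $c$.
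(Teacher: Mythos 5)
Your proof is correct, and it is genuinely different from (and shorter than) the paper's. The paper does not pass through the closed form of $\liftt_t$ at the prescribed $t$: it writes out the $\delta$-triangle inequality for a general $t>1$ in terms of Lorentzian inner products, uses $\ve{x}\circ\ve{y}\leq -1/c$ to reduce it to the sufficient condition $\exp\left(2(t-1)\,\ve{y}\circ\ve{z}\right) \leq (t-1)\,\delta\,\exp\left(-2(t-1)/c\right)$, and then chooses $t=1+c/2$ and $c=2/\delta$ so that this condition holds (the choice of $c$ is where the coupling $t=1+1/\delta$, $c=2/\delta$ comes from). You instead observe that $t=1+1/\delta$ gives $\liftt_t(u)=\delta\bigl(1-\exp(-u/\delta)\bigr)$, so that $d_L^{(t)}$ is uniformly bounded in $[0,\delta)$ on the hyperboloid, which makes \eqref{deltaT} immediate since its right-hand side is at least $\delta$; this is the same saturation mechanism the paper uses to prove Lemma \ref{lem-create-hyperbolicity}, transplanted here. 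What your route buys: it is self-contained, it makes the roles of the hypotheses transparent (the reverse Cauchy--Schwarz inequality on $\mathbb{H}_c$ gives $d_L\geq 0$ and the equality case gives \textbf{(I)}, which the paper dismisses as obvious), and it correctly shows that the curvature choice $c=2/\delta$ is not needed for \eqref{deltaT} — any $c>0$ works, so the stated choice is covered a fortiori. What the paper's route retains is the structure of the inequality itself (how the exponentials of the three inner products compare), which explains how the particular coupling of $t$ and $c$ emerges as a sufficient condition rather than being posited; your argument, by bounding the left-hand side alone, does not exploit the triangle configuration at all, which is precisely why it is shorter.
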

      (Proof in \supplement, Section \ref{proof_lem-d-l-metric})
      \paragraph{Poincar\'e disk model} We now switch to Poincar\'e disk model, $\mathbb{B}_1$ (negative curvature, -1). It is well-known that this model has the advantage to have a small hyperbolic constant $\tau$ (Definition \ref{def-hyperbolicity}) but requires very high precision for a numerical encoding to prevent fatal error, in particular, near the border (which will be a critical region for our application) \cite{mwwyTN,sdgrRT}. Our objective is to show show how one can balance hyperbolicity and encoding by just tuning $t$ in the t-self \(\mathbb{B}^{(t)}_1\). 
      The following defines the critical region where high numeric error can occur~\cite{mwwyTN,sdgrRT}.
  \begin{definition}\label{def-close-enc}
A point $\ve{x}$ in the Poincar\'e disk is said $k$-close to the boundary if $\|\ve{x}\| = 1 - 10^{-k}$. It is said encodable iff $\|\ve{x}\|<1$ in machine encoding (it is not ``pushed to the boundary''). 
\end{definition}
Machine encoding constrains the maximum possible $k$: in the double precision floating-point representation (Float64), $k\approx 16$ \cite{mwwyTN}. The question is then what is the corresponding maximal distance $d_*$ from the origin $\ve{0}$ that this brings, because numerical error prevents us from encoding points between this zone and the boundary of the disk. 
In the case of Poincar\'e disk, this distance is a small affine order in $k$ \cite{mwwyTN}:
  \begin{eqnarray}
d_* & \leq & \log(2) + \log(10)\cdot k + O(10^{-k}),\label{cont-d-poinc}
  \end{eqnarray}
  which means in practice that only a ball of radius $d^* \approx 38$ around the origin can be accurately represented. This is in deep contrast with the Euclidean representation, where $d_* = \Omega(2^k)$. 
  The following Lemma explains how to get a middle ground, whereby careful selection of \( t \) results in a \( t \)-self which provides a new $d_*$ anywhere in the ``hyperbolic vs Euclidean" scale, while keeping a guaranteed (finite) hyperbolic constant.
  \begin{lemma}\label{lem-t-self-poincare}
    Pick any increasing function $g(k)\geq 0$. For the choice $t = 1 - f(k)$ where $f(k)\in \mathbb{R}$ is any function satisfying
    \begin{eqnarray}
\frac{\log\left(1+f(k)g(k)\right)}{f(k)} & \leq & \log(10) \cdot k,\label{eq-cond-f-g-dist}
    \end{eqnarray}
    we get in lieu of \eqref{cont-d-poinc} the maximal $d_*^{(t)}$ satisfying
    \begin{eqnarray}
d_*^{(t)} & \geq & g(k),\label{cont-d-poinc-t}
    \end{eqnarray}
    and the new hyperbolic constant $\tau_t$ of the t-self satisfies
    \begin{eqnarray}
\tau_t & = & \frac{\exp \left(f(k) \tau\right) - 1}{f(k)}. \label{eq-cond-f-g-tau}
    \end{eqnarray}
  \end{lemma}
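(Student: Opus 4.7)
The plan is to unfold $\liftt_t$ for the specific choice $t = 1-f(k)$ and then reduce each of the two claims to a short algebraic manipulation. Since $1-t=f(k)$, substituting into $\log_t\exp$ yields
$$\liftt_t(z) \;=\; \frac{(\exp z)^{f(k)}-1}{f(k)} \;=\; \frac{\exp(f(k)\,z)-1}{f(k)},$$
which is strictly increasing in $z$ by Lemma \ref{lem-prop-tlift} (point [1.]). Every subsequent step consists of either composing or inverting this explicit formula.

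For \eqref{cont-d-poinc-t}, I would first sharpen \eqref{cont-d-poinc} into a lower bound. Using the closed-form Poincar\'e distance from the origin, $d(\ve{0},\ve{x}) = \log\tfrac{1+\|\ve{x}\|}{1-\|\ve{x}\|}$, and plugging in $\|\ve{x}\|=1-10^{-k}$, one gets $d_* = \log 2 + k\log 10 - O(10^{-k})$, so in particular $d_* \geq \log(10)\cdot k$ for $k\geq 1$. Combined with the hypothesis \eqref{eq-cond-f-g-dist}, this gives
$$\frac{\log(1+f(k)g(k))}{f(k)} \;\leq\; \log(10)\cdot k \;\leq\; d_*.$$
Multiplying by $f(k)$, exponentiating, subtracting $1$ and dividing by $f(k)$ (all order-preserving, since this is exactly $\liftt_t$) transforms the outer inequality into $g(k) \leq \liftt_t(d_*) = d_*^{(t)}$, which is \eqref{cont-d-poinc-t}. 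Claim \eqref{eq-cond-f-g-tau} is then an immediate application of the Lemma already proved above, namely that a $\tau$-hyperbolic $d$ yields a $\liftt_t(\tau)$-hyperbolic $d^{(t)}$; substituting our explicit $\liftt_t$ reads off $\tau_t = (\exp(f(k)\tau)-1)/f(k)$.

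The only place where care is needed is the direction of the inequality: \eqref{cont-d-poinc} is written as an upper bound on $d_*$, whereas to propagate through the increasing function $\liftt_t$ and obtain a lower bound on $d_*^{(t)}$ we need the matching lower bound $d_* \geq \log(10)\cdot k$. The Poincar\'e closed form supplies this for free, so the apparent mismatch between \eqref{eq-cond-f-g-dist} and \eqref{cont-d-poinc} disappears. Once that orientation is fixed, there is no remaining difficulty and both claims fall out in a few lines; no new structural lemma is required beyond Lemma \ref{lem-prop-tlift} and the hyperbolicity transfer lemma.
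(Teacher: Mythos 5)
Your proposal is correct and follows essentially the same route as the paper's proof: both reduce the claim to the exact distance $\log(2\cdot 10^k-1)=k\log 10+\log(2-10^{-k})\geq k\log 10$ of a $k$-close point, use the identity $\liftt_t\bigl(\log(1+f(k)g(k))/f(k)\bigr)=g(k)$ together with monotonicity of $\liftt_t$ (which, as you note, also covers $f(k)<0$, where the paper instead adds a short remark when unfolding the inequality by hand), and obtain \eqref{eq-cond-f-g-tau} by plugging $t=1-f(k)$ into the hyperbolicity-transfer lemma $\tau_t=\liftt_t(\tau)$.
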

  (Proof in \supplement, Section \ref{proof_lem-t-self-poincare}) Since both $\log(1+x) = x + o(x)$ and $\exp(x)-1 = x + o(x)$ in a neighborhood of $0$, we check that \eqref{eq-cond-f-g-dist} and \eqref{eq-cond-f-g-tau} get back to properties of the Poincar\'e model as $f(k) \rightarrow 0$ \cite{mwwyTN}. Depending on the sign of $f(k)$, the triangle inequality \textbf{(T)} can still hold or be replaced by a weaker version that parallels \eqref{deltaT}. We indeed have:
    \begin{eqnarray*}
d_{\mathbb{B}_1}^{(t)}(\ve{x}, \ve{z}) \leq d_{\mathbb{B}_1}^{(t)}(\ve{x}, \ve{y}) + d_{\mathbb{B}_1}^{(t)}(\ve{y}, \ve{z}) + \max\{0,f(k)\} \cdot d_{\mathbb{B}_1}^{(t)}(\ve{x}, \ve{y}) \cdot d_{\mathbb{B}_1}^{(t)}(\ve{y}, \ve{z}),
    \end{eqnarray*}
    and $d_{\mathbb{B}_1}^{(t)}$ has a particularly simple expression, shown here for the distance between the origin and some $\ve{z} \in \mathbb{B}_1$ with Euclidean norm $\|\ve{z}\|$:
    \begin{eqnarray}
d_{\mathbb{B}_1}(\ve{z}, \ve{0}) = \log \left(\frac{1+r}{1-r}\right) & \Rightarrow & d_{\mathbb{B}_1}^{(t)}(\ve{z}, \ve{0}) = \log_t \left(\frac{1+r}{1-r}\right), \quad r \defeq \|\ve{z}\|. \label{hypdistPD}
    \end{eqnarray}
    We then have two cases. We can easily approach back the Euclidean $d_*$ by picking $f(k) > 0$: choosing $f(k) = 1$ gets there and we still keep a finite hyperbolic constant, albeit exponential in the former one. If however we want to improve further the hyperbolic constant, we can pick some admissible $f(k) < 0$, but then \eqref{eq-cond-f-g-dist} will constrain $g(k)$ to a very small value.

    Let us now dig into another convenient property of the t-self regarding the mapping of the region close to the border $\partial \mathbb{B}_1$. Indeed, in our case (nodes of a decision tree), the better the models (ML-wise), the higher portion of the model (nodes) is embedded close to $\partial \mathbb{B}_1$. Given the risk of numerical instability in this region, we ideally want a t-self where (i) distortions are \textit{fair} to the embedding in the Poincar\'e disk model, \textit{i.e.} if some $\ve{z} \in \mathbb{B}_1$ (\textit{e.g.} the coordinate of a node) gets mapped to $\ve{z}^{(t)} \in \mathbb{B}^{(t)}_1$ (the t-self), then
     \begin{eqnarray}
d_{\mathbb{B}_1}^{(t)}(\ve{z}^{(t)}, \ve{0}) = d_{\mathbb{B}_1}(\ve{z}, \ve{0}); \label{eqIDDDT}
     \end{eqnarray}
     we also want (ii) that $\ve{z}^{(t)}$ be mapped sufficiently within $\mathbb{B}^{(t)}_1$ when $\ve{z}$ is close to $\partial \mathbb{B}_1$, and finally (iii) because the risk of numerical instabilities is minimal near the center of $\mathbb{B}_1$, we want this region to be mapped with minimal or null non-linear distortion in the t-self $\mathbb{B}^{(t)}_1$. Conditions (ii) and (iii) impose a non-linear embedding 
     \begin{eqnarray}
     \embedt_t :\mathbb{B}_1 \rightarrow \mathbb{B}^{(t)}_1. \label{defembedpt}
\end{eqnarray}
     It turns out that the design of $\embedt_t$ is trivial: it suffices to just change the (Euclidean) norm $r^{(t)} \defeq \|\ve{z}^{(t)}\|$ as a function of the norm $r \defeq \|\ve{z}\|$ using \eqref{eqIDDDT}. Figure \ref{fig:rtorprime} displays the corresponding relationship between $r$ and $r^{(t)}$. While it is clear for (iii) above that there is indeed minimal non-linear distortion near the center ($r=0$) for \textit{any} $t\in [0,1]$, condition (ii) is also conveniently satisfied: even for $t<1$ very close to 1 and $\ve{z}$ close to $\partial \mathbb{B}_1$ ($r$ close to 1), the mapping can send $\ve{z}^{(t)}$ substantially ``back in" the t-self $\mathbb{B}^{(t)}_1$ (for $t=0.7$ and $r = 1 - 10^{-4}$ -- \textit{i.e.} $k=4$ in Definition \ref{def-close-enc} --, we get $r^{(t)} \approx 0.96$).

     To start connecting this with our forthcoming application, Table \ref{tab:isolines-t-self} provides a visual assessment of what the t-self achieves in the disks themselves, from which we can spot clearly the achievement of both (ii) and (iii). We also remark that for the parameterization that matters to our application (probabilities), isolines corresponding to regularly spaced probabilities are also equidistant from each other in $\mathbb{B}_1$, which is highly convenient for visualization.

\begin{figure}
  \centering 
\includegraphics[trim=0bp 0bp 0bp 0bp,clip,width=0.4\columnwidth]{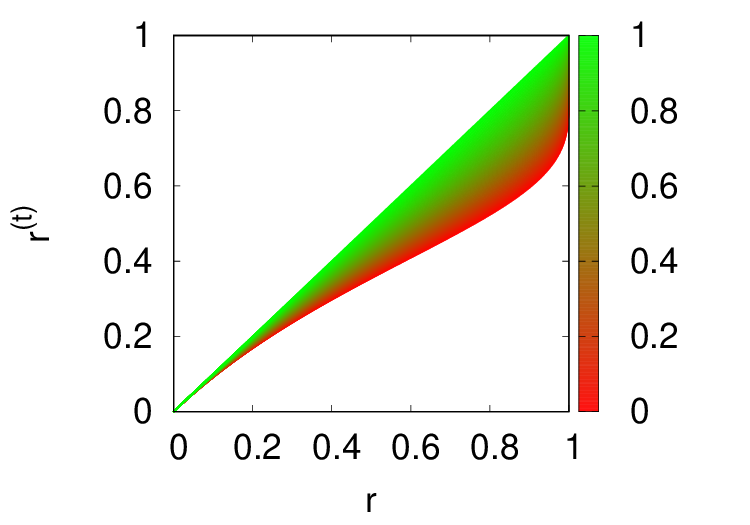} 
    \caption{Suppose $r \defeq \|\ve{z}\|$ is the norm a point $\ve{z}$ in Poincar\'e disk $\mathbb{B}_1$. Fix $t\in [0,1]$ (color bar). The plot gives the norm $r^{(t)}$ of a point $\ve{z}^{(t)}$ in the t-self such that $d_{\mathbb{B}_1}^{(t)}(\ve{z}^{(t)}, \ve{0}) = d_{\mathbb{B}_1}(\ve{z}, \ve{0})$.}
    \label{fig:rtorprime}
  \end{figure}
  
  \begin{table}
  \centering
  \resizebox{\textwidth}{!}{\begin{tabular}{cccc}\Xhline{2pt}
                             \includegraphics[trim=0bp 0bp 0bp 0bp,clip,width=0.3\columnwidth]{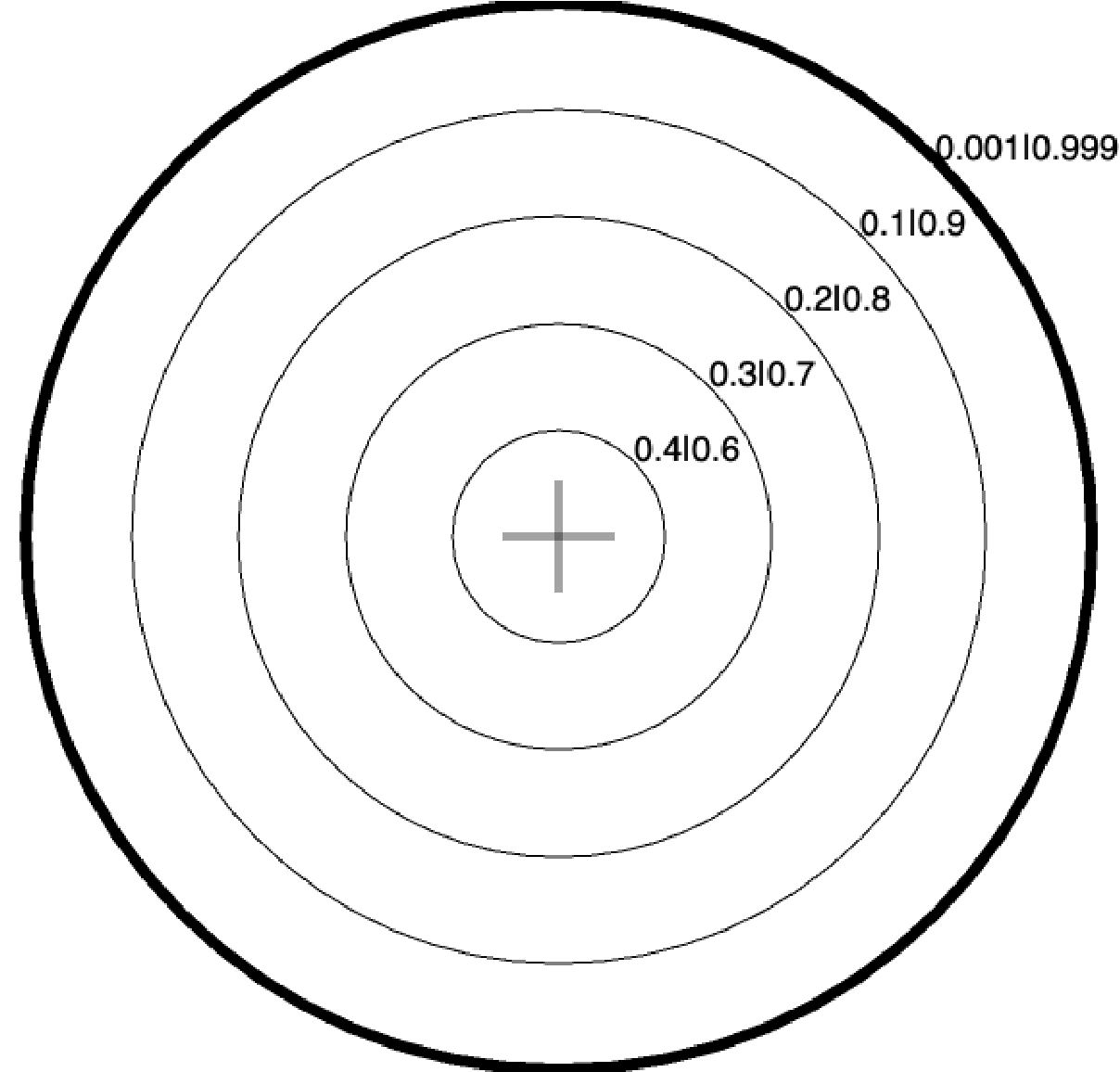} & \includegraphics[trim=0bp 0bp 0bp 0bp,clip,width=0.3\columnwidth]{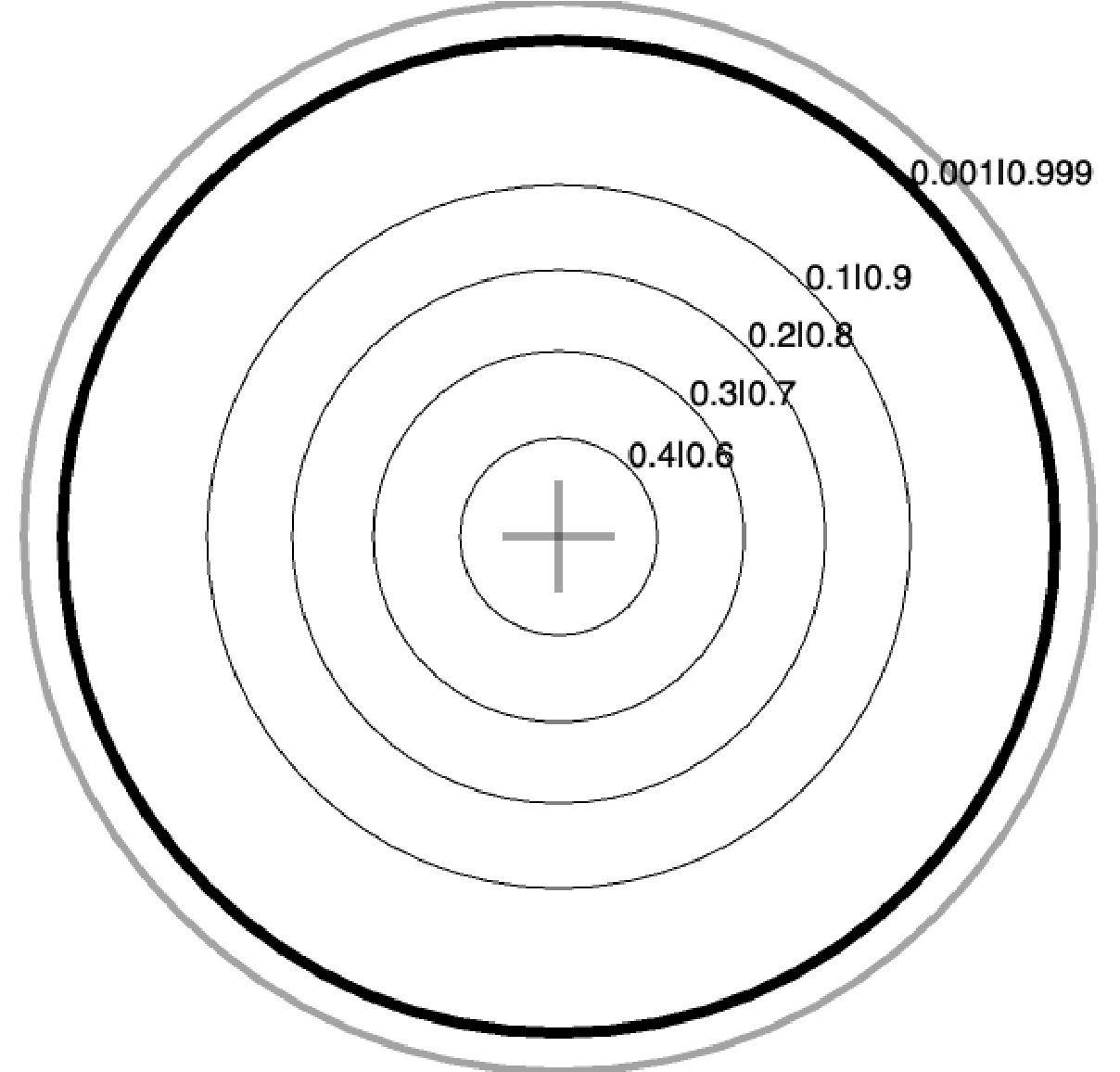} & \includegraphics[trim=0bp 0bp 0bp 0bp,clip,width=0.3\columnwidth]{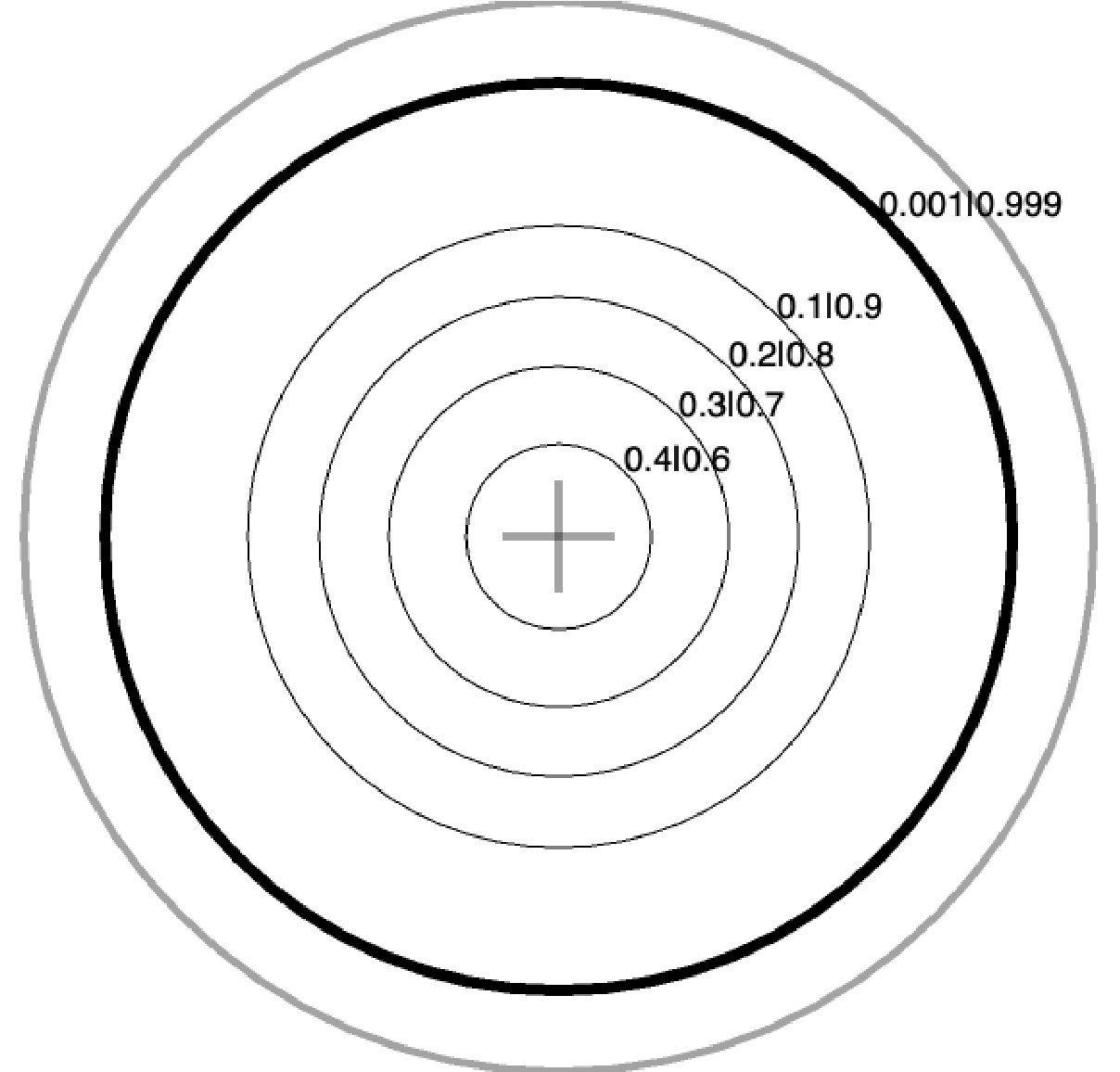}  &  \includegraphics[trim=0bp 0bp 0bp 0bp,clip,width=0.3\columnwidth]{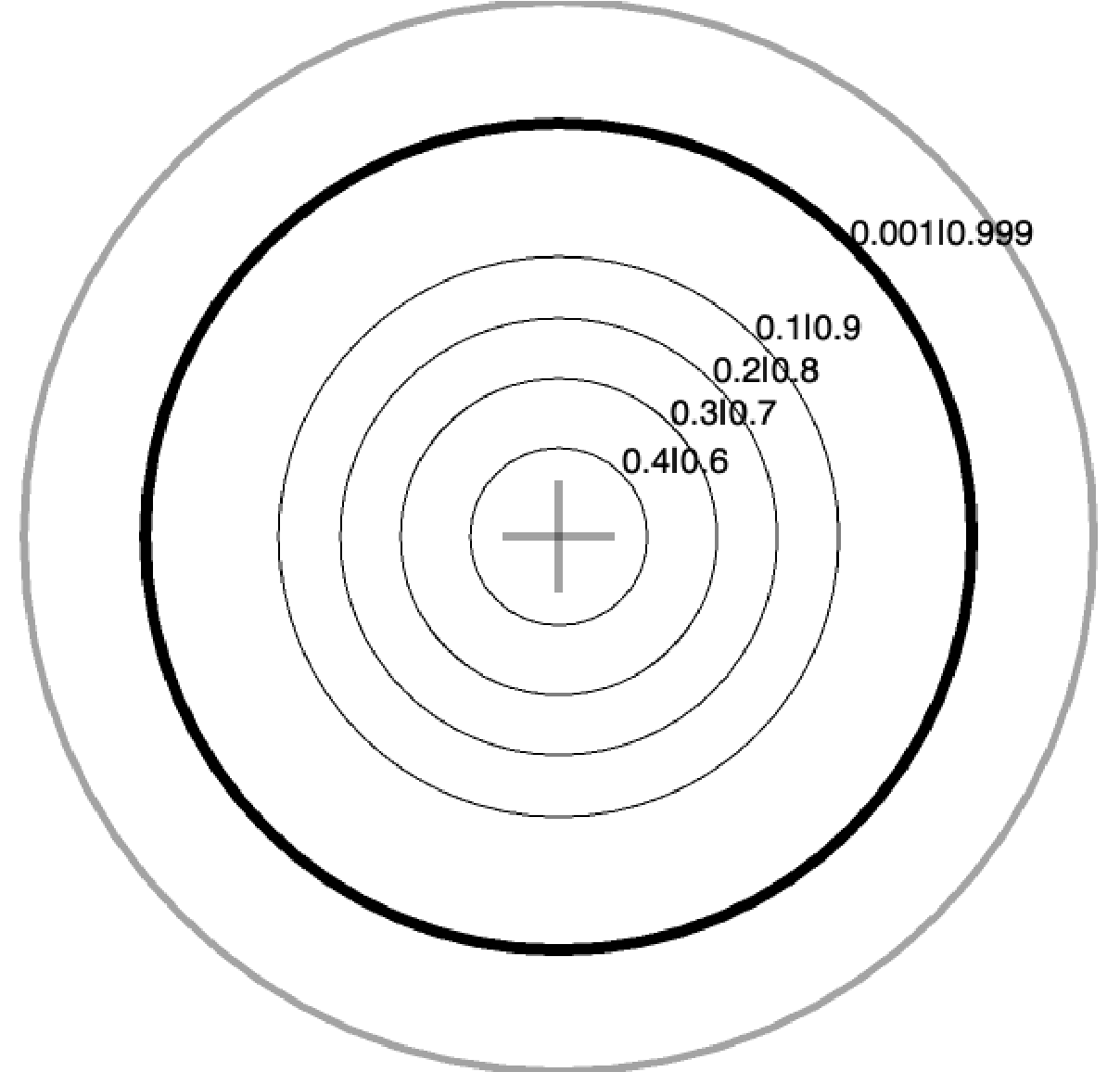} \\
                              $t=1.0$ ($\mathbb{B}_1$) & $t=0.6$ & $t=0.3$ & $t=0.0$\\  \Xhline{2pt} 
  \end{tabular}}
\caption{Starting from $t=1$ (Poincar\'e disk $\mathbb{B}_1$, left), we compute the t-self $\mathbb{B}^{(t)}_1$ in which we progressively decrease $t$ to $t=0$ (right). We display the same set of isolines among plots when invariance \eqref{eqIDDDT} holds. Isolines parameterized by a probability $p \in [0,1]$ which gives the norm $r \defeq |2p - 1|$ in $\mathbb{B}_1$. Remark \textit{e.g.} that if we regularly space $p \in \{0.6, 0.7, 0.8, 0.9, 1.0\}$, the corresponding isolines in $\mathbb{B}_1$ are also regularly spaced (this holds symmetrically for $p<1/2$). More, equidistance approximately remains even for $t<1$ for many isolines (up to isoline $p \in \{0.1,0.9\}$) while the distortion we need clearly happens near the border: outermost isoline $p \in \{0.001,0.009\}$ (plotted with \textbf{bigger} width) is smoothly and substantially ``moved" within the t-self as $t$ decreases, guaranteeing good readability and coding convenience for the most accurate model parts of our embedding application (see text).}
    \label{tab:isolines-t-self}
  \end{table}

\section{Model Embedding in Hyperbolic Geometry}
\label{sec:t-dt}

$\mathcal{S} \defeq \{(\ve{x}_i,y_i), i \in [m]\}$ denotes a training sample of $m$ examples, where $\ve{x}_. \in \mathcal{X}$ and $y_. \in \{-1,1\}$ (labels or classes). A decision tree (DT) consists of a binary rooted tree \( H \), where at each internal nodes the two outgoing arcs define a Boolean test over observation variables (see Figure \ref{fig:dt-vs-mdt}, left, for an example); $\nodeset(H)$ is the set of nodes of $H$, $\leafset(H) \subseteq \nodeset(H)$ is the set of leaf nodes of $H$; $\leaf_i$ is the leaf reached by example $(\ve{x}_i,y_i)$ and $p^+_{\leaf}$ is the local relative proportion of positive examples at leaf $\leaf$ (a \textit{posterior}). 

\paragraph{Proper losses and the log loss} We briefly present the theory of \textit{losses for class probability estimation} (CPE, which has been instrumental in DT induction). We follow the notations of \cite{mnwRC}: A CPE $\loss : \mathcal{Y} \times [0,1]
\rightarrow \mathbb{R}$, is
\begin{eqnarray}
\loss(y,p) & \defeq & \iver{y=1}\cdot \partialloss{1}(p) +
                     \iver{y=-1}\cdot \partialloss{-1}(p), \label{eqpartialloss}
\end{eqnarray}
where $\iver{.}$ is Iverson's bracket \cite{kTN}. Functions $\partialloss{1}, \partialloss{-1}$ are called \textit{partial} losses. A CPE loss is
\textit{symmetric} when $\partialloss{1}(p) = \partialloss{-1}(1-p),
\forall p \in [0,1]$. The \textit{log-loss} is a symmetric CPE loss with $\partiallogloss{1}(p) \defeq -\log(p)$. The fundamental property of a CPE loss is \textit{properness}, which enforces the optimality of Bayes rule for classification: letting $\cbr(\pi) \defeq \inf_p \E_{\Y\sim \pi} \loss(\Y, p)$ denote the minimal pointwise achievable risk (called Bayes risk) where we sample labels according to the \textit{prior} $\Pr[\Y = 1] = \pi$, properness imposes $\cbr(\pi) = \E_{\Y\sim \pi} \properloss(\Y, \pi), \forall \pi
\in [0,1]$ and \textit{strict} properness imposes in addition $\E_{\Y\sim \pi} \properloss(\Y, p) > \E_{\Y\sim \pi} \properloss(\Y, \pi), \forall p \neq \pi$. The log-loss is strictly proper; its Bayes risk is very familiar in DT induction: it is the splitting criterion used in C4.5 \cite{qC4}:
\begin{eqnarray}
\cbrlog(p) & = & -p\cdot \log p -(1-p) \cdot \log(1-p).\label{cbrlog}
\end{eqnarray}
DT induction usually proceeds with a top-down induction of a big tree following the minimization of the Bayes risk, thus relying on a model that predicts a \textit{posterior} at the leaves, $\hat{\Pr}[\Y = 1 | \X]$. There exists a connection between CPE classification and the real-valued classification setting familiar to \textit{e.g.} deep learning: optimizing Bayes risk for the posterior is equivalent to minimizing its \textit{convex surrogate} using the \textit{canonical link} of the loss to compute real classification \cite[Theorem 1]{nmSL}. The canonical link of the log-loss, $\canolog : [0,1] \rightarrow \mathbb{R}$ is the inverse sigmoid, which has a very convenient form for our purpose, 
\begin{eqnarray}
\canolog(p) = \log \left(\frac{p}{1-p}\right). \label{defCANO}
\end{eqnarray}
Notably, the absolute value $|\canolog(p)|$ is a \textit{confidence} and the sign the class predicted. In the case of the log-loss, the convex surrogate is hugely popular in ML: it is the logistic loss. 
\paragraph{Top-down induction of a DT, the log-loss and hyperbolic distance} the top-down induction of a DT in many popular packages for DT induction boils down to repeatedly minimizing the expected Bayes risk at the leaves of the current DT $H$ \cite{bfosCA,qC4}, a loss we define as
\begin{eqnarray}
  \cbr(H, \mathcal{S}) & \defeq & \expect_{i\sim [m]}[\cbr(H_{[0,1]}(\ve{x}_i))] \quad \left(= \expect_{\leaf \sim \leafset(H)} [\cbr(p^+_{\leaf})]\right), \quad H_{[0,1]}(\ve{x}_i) \defeq p^+_{\leaf_i} \label{ecbr}
\end{eqnarray}
(the index notation in $H$ emphasizes its range). \eqref{ecbr} computes a loss that relies on estimated posteriors at the leaves, but quite remarkably, as already briefed, it can also be formulated using real-valued classification using the canonical link, which gives, for the log-loss, 
  \begin{eqnarray}
    \cbrlog(H, \mathcal{S}) = \expect_i [\log(1+\exp(-y_i \cdot H_{\mathbb{R}}(\ve{x}_i)))], \quad H_{\mathbb{R}}(\ve{x}_i) \defeq \canolog(p^+_{\leaf_i}) = \log \left(\frac{p^+_{\leaf_i}}{1-p^+_{\leaf_i}}\right).\label{ecbrReal}
  \end{eqnarray}
Let us put next to each other the absolute confidence of a leaf prediction and the hyperbolic distance to the origin of some $\ve{z} \in \mathbb{B}_1$, which hopefully clarifies the choice of parameterization made in Table \ref{tab:isolines-t-self}: 
  \begin{eqnarray}
|\canolog(p^+_{\node})| = \log\left(\frac{1+r_{\node}}{1-r_{\node}}\right), \quad r_{\node} \defeq |2p^+_{\node}-1| \quad ; \quad d_{\mathbb{B}_1}(\ve{0}, \ve{z}) = \log\left(\frac{1+\|\ve{z}\|}{1-\|\ve{z}\|}\right).\label{defCANOLOGB1}
  \end{eqnarray}
  
  \paragraph{Objective for hyperbolic geometric embedding of a DT}  At the end of the top-down induction of a DT, \textit{all} nodes have participated at some point during induction in the classification of examples, the root being the first leaf of a DT without arcs. A full mapping of this training history may help to interpret substantial local improvements, relate different such events if they appear in the same path, visualize options for subsequent pruning in their tree context \cite{kmOTj}, etc. . So, a good embedding should map the largest number of nodes in a clean and readable way, representing, for each of them, how good or confident they are / were. Looking at \eqref{defCANOLOGB1}, we see that the best nodes should then be mapped close to the border $\partial \mathbb{B}_1$. Here comes another side advantage of hyperbolic representation: for such nodes, the hyperbolic distance between them gives an approximate estimation of their joint absolute confidence \cite[Figure 1]{sdgrRT}. Let us go straight to the embedding objective, in three parts: 
  \begin{mdframed}[style=MyFrame]
    {\vspace{-0.30cm}\colorbox{gray!20}{\fbox{Embedding objective}}}\\
    \noindent \textbf{(A)} the largest part of the tree in DT $H$ is represented and easily readable for \textit{semantics} about $H$ (tree organization, embedding of Boolean branching tests, etc. ),\\
    \noindent \textbf{(B)} \textit{locally} speaking, each embedded node $\nu$ of $H$ gets mapped at some $\ve{z}_\nu$ such that $r_{\node}$ is close to $\|\ve{z}_\node\|$ \eqref{defCANOLOGB1},\\
    \noindent \textbf{(C)} \textit{globally} speaking, the whole embedding remains convenient and readable to compare, in terms of confidence in classification, different subtrees in the same tree or even between different trees. This includes their leveraging coefficient in an ensemble.
  \end{mdframed}
  Satisfying accurately all three objectives \textbf{(A-C)} mean a very easy way to visually cross classification, confidence in classification and the semantic of the model. But it is not hard to show that this is hard for a \textit{whole} DT. There is, fortunately, a simple workaround which involves embedding the \textit{monotonic} part of the DT classification on the form of a new class of models we now introduce. 

  \paragraph{Monotonic Decision Trees and hyperbolic embeddings} Let us start by explaining why a complete DT embedding cannot even remotely guarantee \textbf{(A-C)} in general: the sequence of absolute confidences in a path from the root to a leaf is in general \textit{not} monotonic. For example, one can have a root mapped to the center of the disk ($p^+_{\mbox{\tiny{root}}} = 0.50$), with a child with $p^+_{\node} \neq 0.50$ and a grandchild with $p^+_{\node'} = 0.50$, creating a double arc $\leftrightarrow$ in the embedding. In the same way, loops can happen for longer paths, making tricky even approximate solutions to \textbf{(A-C)} of good quality. Note that the source of the disorder is the presence of nodes of poor predictive quality in the tree. Our solution is simple in principle:
  \begin{center}
\textit{Embed only the monotonic classification part of a DT}
\end{center}
(meaning for each path from the root to a leaf in $H$, we end up embedding the subsequence of nodes whose absolute confidences are strictly increasing); to do so, we replace the DT by a path-monotonic approximation using a new class of models we introduce, called \textit{Monotonic Decision Trees} (MDT). 

  \begin{definition}\label{defMDT}
A \textit{Monotonic Decision Tree} (MDT) is a rooted tree with a Boolean test labeling each arc and a real valued prediction at each node. Any sequence of nodes in a path starting from the root is strictly monotonically increasing in absolute confidence. At any internal node, no two Boolean tests at its outgoing arcs can be simultaneously satisfied. The classification of an observation is obtained by the bottommost node's prediction reachable from the root.
\end{definition}
We now introduce an algorithm which takes as input a DT $H$ and outputs a MDT $H'$ satisfying the following invariant:
\begin{itemize}
\item [\textbf{(M)}] for any observation $\ve{x}\in \mathcal{X}$, the prediction $H'(\ve{x})$ is equal to the prediction in the path followed by $\ve{x}$ in $H$ of its deepest node in the strictly monotonic subsequence starting from the root of $H$.
\end{itemize}
Figure \ref{fig:dt-vs-mdt} (right) presents an example of MDT $H'$ that would be built for some DT $H$ (left) and satisfying \textbf{(M)} (unless observations have missing values, $H'$ is unique). Figure \ref{fig:dt-vs-mdt} adopts some additional conventions to ease parsing of $H'$:
\begin{itemize}
\item [\textbf{(D1)}] some internal nodes of $H'$ are also tagged with labels corresponding to the leaves of $H$.
If a node in $H'$ is tagged with a label of one of $H$'s leaves $\leaf$, it indicates that examples (and predictions) which reach $\leaf$ in the original $H$ are being `rerouted' to $H'$'s tagged node, where the original prediction at $\leaf$ may change in $H'$
(we emphasize that some are internal nodes of $H'$);
\item [\textbf{(D2)}] arcs in $H'$ have a width proportional to the number of boolean tests it takes to reach its tail from its head in $H$. A large width thus indicates a long path in $H$ to improve classification confidence.
\end{itemize}
It is also worth remarking that $H'$ satisfying \textbf{(M)} has in general a smaller number of vertices than $H$ (but never more), \textit{but} it always has the same depth. Finally, any pruning of $H$ is a subtree of $H$ and its corresponding MDT is a subtree of the MDT of $H$. The aforementioned troubles to embed the DT in the Poincar\'e disk considering \textbf{(A-C)} do not exist anymore for the corresponding MDT because the best embeddings necessarily have all arcs going outwards in the Poincar\'e disk. We now present algorithm \createmdt, in Algorithm \ref{alg-initsampling}. To produce $H'$, after having initialized it to a root = single leaf, we just run
\begin{center}
  \createmdt($\mbox{root}(H),\textbf{true}, \mbox{root}(H'), [\min\{p^+_{\mbox{\tiny{root}}}, 1-p^+_{\mbox{\tiny{root}}}\}, \max\{p^+_{\mbox{\tiny{root}}}, 1-p^+_{\mbox{\tiny{root}}}\}]$)
\end{center}
At the end of the algorithm, the tree rooted at $\mbox{root}(H')$ is the MDT sought.
\begin{algorithm}[H]
\caption{\createmdt($\node,\textbf{b},\node', \mathbb{I}$)}\label{alg-initsampling}
\begin{algorithmic}
  \STATE  \textbf{Input:} Node $\node\in \nodeset(H)$ ($H$ = DT), Boolean test $\textbf{b}$, Node $\node' \in \nodeset(H')$ ($H'$ = MDT being build from $H$), interval of forbidden posteriors $\mathbb{I} \subset [0,1]$;
  \STATE  1 : \textbf{if} $\node \in \leafset(H)$ \textbf{then}
  \STATE  2 :\hspace{0.5cm} \textbf{if} $p^+_\node \in \mathbb{I}$ \textbf{then}
  \STATE  3 :\hspace{1.0cm} $\node' \leftarrow \tagdtleaf(\node)$; \hfill // tags node $\node' \in \nodeset(H')$ with info from leaf $\node \in \leafset(H)$
  \STATE  4 :\hspace{0.5cm} \textbf{else}
    \STATE  5 :\hspace{1.0cm} $\node'' \leftarrow H'.\createnode(\node)$; \hfill // $\node''$ will be a new leaf in MDT $H'$
  \STATE  6 :\hspace{1.0cm} $H'.\addarc(\node', \textbf{b}, \node'')$; \hfill // adds arc $\node' \rightarrow_{\textbf{b}} \node''$ in $H'$ 
    \STATE  7 :\hspace{0.5cm} \textbf{endif} 
  \STATE  8 : \textbf{else}
  \STATE  9 :\hspace{0.5cm} \textbf{if} $p^+_{\node} \not\in \mathbb{I}$ \textbf{then}
    \STATE  10 :\hspace{1.0cm} $\node'' \leftarrow H'.\createnode(\node)$; \hfill // $\node''$ will be a new internal node in MDT $H'$
    \STATE  11 :\hspace{1.0cm} $H'.\addarc(\node', \textbf{b}, \node'')$; \hfill // adds arc $\node' \rightarrow_{\textbf{b}} \node''$ in $H'$ 
    \STATE  12 :\hspace{1.0cm} $\mathbb{I}_{\mbox{\tiny{new}}} \leftarrow [\min\{p^+_{\node}, 1-p^+_{\node}\}, \max\{p^+_{\node}, 1-p^+_{\node}\}]$; \hfill // $\mathbb{I} \subset \mathbb{I}_{\mbox{\tiny{new}}}$
    \STATE  13 :\hspace{1.0cm} $\node'_{\mbox{\tiny{new}}} \leftarrow \node''$;
    \STATE  14 :\hspace{1.0cm} $\textbf{b}_{\mbox{\tiny{new}}} \leftarrow \textbf{true}$;
    \STATE  15 :\hspace{0.5cm} \textbf{else}
    \STATE  16 :\hspace{1.0cm} $\mathbb{I}_{\mbox{\tiny{new}}} \leftarrow \mathbb{I}$; $\node'_{\mbox{\tiny{new}}} \leftarrow \node'$; $\textbf{b}_{\mbox{\tiny{new}}} \leftarrow \textbf{b}$; \hfill // node $\node$ does not produce any change in $H'$
    \STATE  17 :\hspace{0.5cm} \textbf{endif} 
    \STATE  18 :\hspace{0.5cm} \createmdt($\node.\mbox{leftchild},\textbf{b}_{\mbox{\tiny{new}}}\wedge \node.\textsc{test}(\mbox{leftchild}),\node'_{\mbox{\tiny{new}}}, \mathbb{I}_{\mbox{\tiny{new}}}$);
    \STATE  19 :\hspace{0.5cm} \createmdt($\node.\mbox{rightchild},\textbf{b}_{\mbox{\tiny{new}}}\wedge \node.\textsc{test}(\mbox{rightchild}),\node'_{\mbox{\tiny{new}}}, \mathbb{I}_{\mbox{\tiny{new}}}$);
  \STATE  20 : \textbf{endif}
\end{algorithmic}
\end{algorithm}
We complete the description of \createmdt: When a leaf of $H$ does not have sufficient confidence and ends up being mapped to an internal node of the MDT $H'$, \tagdtleaf~is the procedure that tags this internal node with information from the leaf (see \textbf{(D1)} above). We consider a tag being just the name of the leaf (Figure \ref{fig:dt-vs-mdt}, leaves $\#5, 9, 11$), but other conventions can be adopted. The other two methods we use grow $H'$ by creating a new node via \createnode~(and passing the information of the corresponding node in the DT it "copies") and adding a new arc between existing nodes via \addarc.

\begin{theorem}
When run with DT $H$, the MDT $H'$ built by \createmdt~satisfies \textbf{(M)}.
\end{theorem}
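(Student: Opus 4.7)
The plan is to prove by structural induction on the DT $H$ that \createmdt~maintains a loop invariant tying the forbidden interval $\mathbb{I}$ and the current MDT node $\node'$ to the monotonic subsequence of the path being traversed. First I will unpack what the interval $\mathbb{I}$ encodes: for any node $\mu$, the interval $[\min\{p^+_\mu, 1-p^+_\mu\}, \max\{p^+_\mu, 1-p^+_\mu\}]$ contains exactly those posteriors $p$ with $|2p-1| \leq |2p^+_\mu - 1|$. Hence the test $p^+_\node \notin \mathbb{I}$ at lines 2 and 9 is \emph{equivalent} to saying the absolute confidence strictly increases when moving from the ``latest kept ancestor'' (the one that defined $\mathbb{I}$) to $\node$. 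This is precisely the membership test for the strictly monotonic subsequence starting from the root.

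Next I would state and prove, by induction on the height of the subtree rooted at $\node$, the following invariant for any recursive call \createmdt($\node, \textbf{b}, \node', \mathbb{I}$): (i) $\node'$ is the MDT node corresponding to the deepest ancestor of $\node$ (inclusive of the root, exclusive of $\node$) that lies in the strictly monotonic subsequence of the root-to-$\node$ path in $H$; (ii) $\mathbb{I}$ is the forbidden interval induced by $\node'$'s posterior; and (iii) the Boolean $\textbf{b}$ is the conjunction of tests on the arc of $H$ (collapsed across non-monotonic nodes) that leads from $\node'$'s counterpart in $H$ to $\node$. The base call established before the first invocation provides (i)--(iii) at the root. Then the \textbf{then}/\textbf{else} branches at lines 9/15 preserve the invariant when descending: if $p^+_\node \notin \mathbb{I}$, a new MDT node $\node''$ is created as a child of $\node'$, the interval is refreshed at line 12, and $\textbf{b}_{\mathrm{new}}$ is reset to \textbf{true} since the new arc in $H'$ starts at $\node''$; otherwise $\node$ is skipped and the tuple $(\node'_{\mathrm{new}}, \textbf{b}_{\mathrm{new}}, \mathbb{I}_{\mathrm{new}})$ carries forward, accumulating the Boolean test along the ``collapsed'' arc. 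The inductive hypothesis applied to the recursive calls on the left and right children then yields the invariant at depth $+1$.

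Once the invariant is established, deriving (M) is a short case analysis on the leaf handling (lines 2--7). Fix an observation $\ve{x}$ and let $\leaf_{\ve{x}} \in \leafset(H)$ be the leaf it reaches, following a root-to-leaf path $\pi_{\ve{x}}$. Apply the invariant at the recursive call that processes $\leaf_{\ve{x}}$: if $p^+_{\leaf_{\ve{x}}} \in \mathbb{I}$, then $\leaf_{\ve{x}}$ is not in the monotonic subsequence, so its deepest monotonic ancestor is exactly the node encoded by $\node'$, and line 3 tags $\node'$ with $\leaf_{\ve{x}}$ so that the bottommost reachable node in $H'$ under $\ve{x}$ is $\node'$, matching the prediction of that ancestor. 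If $p^+_{\leaf_{\ve{x}}} \notin \mathbb{I}$, then $\leaf_{\ve{x}}$ itself lies in the monotonic subsequence (and is its deepest element), and lines 5--6 attach a fresh leaf of $H'$ copying $\node$; by Definition~\ref{defMDT}, since Booleans along the MDT path leading to this leaf are satisfied by $\ve{x}$ (by construction of $\textbf{b}_{\mathrm{new}}$), the bottommost reachable node is the new leaf, again matching the desired prediction.

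The main obstacle will be the bookkeeping for the Boolean tests $\textbf{b}$, specifically checking that the ``no two outgoing tests simultaneously satisfiable'' clause of Definition~\ref{defMDT} is maintained after collapsing chains of non-monotonic nodes: siblings in $H$ have mutually exclusive tests, and the \textbf{true}/\textbf{false} guards from any collapsed internal node are distributed to both recursive calls at lines 18--19, so sibling arcs in $H'$ inherit complementary guards and hence remain mutually exclusive. Once this (and the fact that $\ve{x}$ satisfies exactly the $\textbf{b}_{\mathrm{new}}$ built along its path) is verified, the proof of (M) closes.
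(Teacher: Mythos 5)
Your proposal is correct and follows essentially the same route as the paper's own (very terse) proof sketch: the paper simply asserts the two facts your invariant formalizes, namely that every vertex of the strictly monotonic confidence subsequence of the path followed by $\ve{x}$ in $H$ gets materialized in $H'$, and that $H'(\ve{x})$ is a prediction taken from that same path, so the bottommost reachable node of $H'$ yields the deepest monotonic node's prediction. Your explicit induction on the calls of \createmdt, with the invariant on $(\node', \mathbb{I}, \textbf{b})$ (including the observation that $p^+_{\node} \notin \mathbb{I}$ exactly tests strict confidence increase, and the mutual-exclusivity bookkeeping for the collapsed Boolean tests), is a rigorous elaboration of that sketch rather than a different argument.
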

\begin{proofsketch}
Take any observation $\ve{x} \in \mathcal{X}$. Trivially, $H'(\ve{x})$ is a real value that belongs to the path followed by $\ve{x}$ in $H$. Furthermore, if we consider any path from the root to a leaf in $H$, \textit{all} the vertices in its strictly monotonically increasing subsequence of absolute confidences appear in $H'$ (conditions 4, 8 in \createmdt), which guarantees the condition on prediction in \textbf{(M)}. Hence \textbf{(M)} is satisfied.
  \end{proofsketch}
  
\begin{figure}
  \centering
  \resizebox{\textwidth}{!}{\begin{tabular}{c?c}\Xhline{2pt}
                                 \includegraphics[trim=55bp 50bp 470bp 115bp,clip,height=0.4\columnwidth]{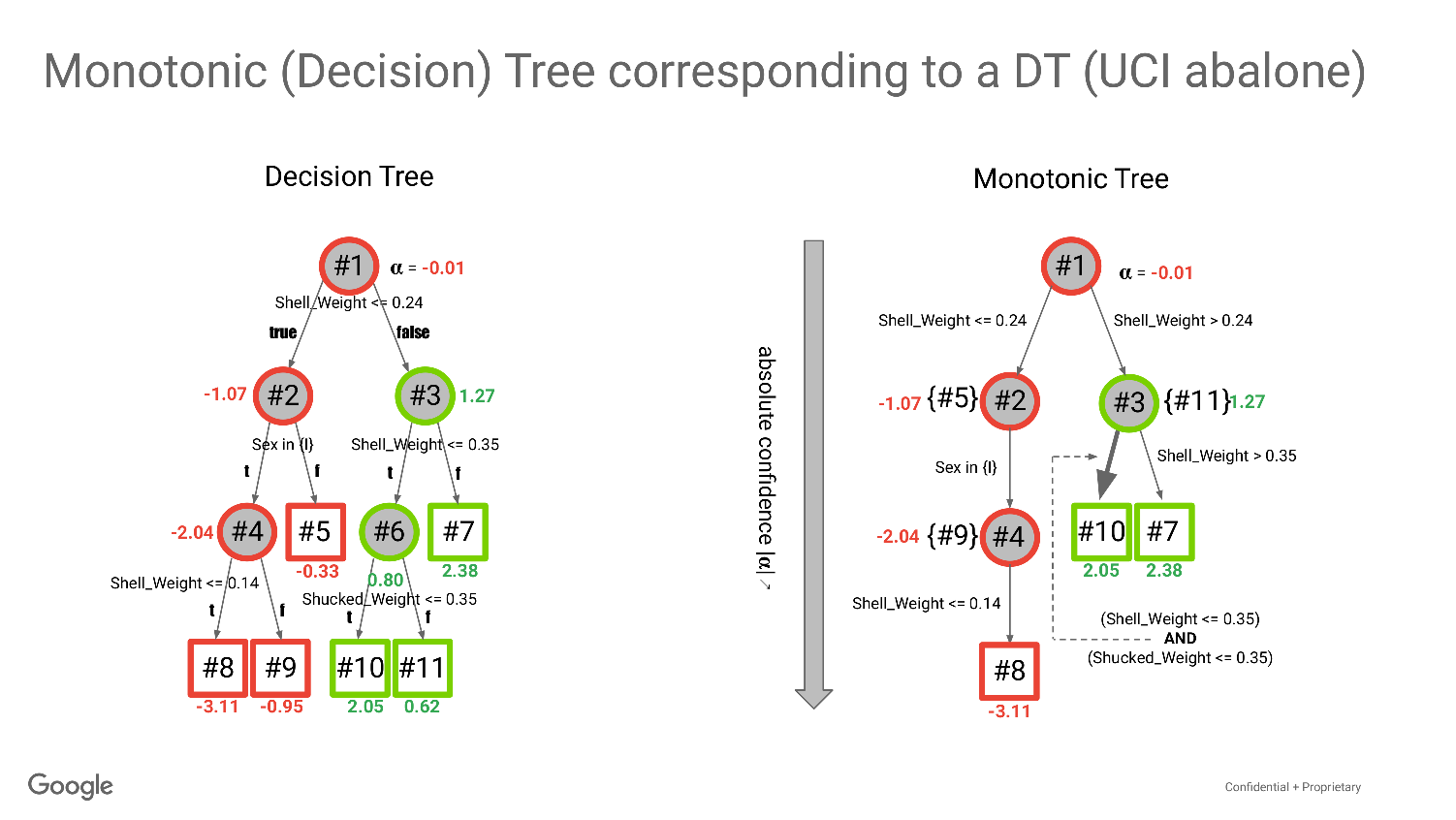} & \includegraphics[trim=370bp 50bp 70bp 115bp,clip,height=0.4\columnwidth]{Figs/slides-shared/dt-vs-mdt-example-uci-abalone.eps}\\
                                 Decision Tree $H$ & Monotonic Decision Tree $H'$ \\ \Xhline{2pt}
                            \end{tabular}}
                          \caption{A small decision tree (DT) learned on UCI \domainname{abalone} (left) and its corresponding monotonic decision tree (MDT, right) learned using \createmdt. Colors ({\color{red} red}, {\color{darkgreen} green}) denote the majority class in each node. In each node, the real-valued prediction \eqref{ecbrReal} is indicated, also in color. Observe that, indeed, $H$ does not grant path-monotonic classification but $H'$ does (Definition \ref{defMDT}). Observe also that in $H'$, some nodes have outdegree 1; also, internal node $\#$6 in the DT, whose prediction is worse than its parent, disappears in $H'$. One arc in $H'$ is represented in double width because its Boolean test aggregates both tests it takes to go from $\#$3 to $\#$10 in $H$. Finally, the depths of $H$ and $H'$ are the same.}
    \label{fig:dt-vs-mdt}
  \end{figure}

  \paragraph{Embedding a sequence of trees and leveraging coefficients: boosting} The interesting link between distances in the Poincar\'e model of hyperbolic geometry and tree prediction in the CPE loss theory \eqref{defCANOLOGB1} can be extended to linear combinations of tree predictions learned using one of ML's most celebrated optimization setting: boosting \cite{kTO,ssIBj}. What we mean is there exists an efficient boosting algorithm for the logistic loss which produces leveraging parameters that \textit{also} comply with the convenient form of DT predictions (with respect to the Poincar\'e hyperbolic distance) making them easily and reliably representable on the disk, in addition to the embedding of the DT they leverage. To get there, we need to rely on the optimization of the logistic loss \eqref{ecbrReal} \textit{and} we need to design a tailored boosting algorithm which parallels all tricks for the derivation and analysis of AdaBoost's confidence rated prediction boosting in \cite{ssIBj} -- note that this is different from LogitBoost \cite{fhtAL}, which applies Friedman's functional gradient boosting prescription \cite{fGFA}.

  To preserve readability, we fully derive and describe the boosting algorithm, called \logisticboost, in the \supplement~(Section \ref{algo_logisticboost}). Assuming basic knowledge of boosting algorithms, we just give here the main ingredients. First, the \textit{unnormalized} weight update after adding DT $H_j$ to the combination is
  \begin{eqnarray}
w_{(j+1)i} & \leftarrow & \frac{w_{ji}}{w_{ji} + (1-w_{ji}) \cdot \exp(\alpha_j y_i H_j(\ve{x}_i))} \label{defweightupdateMAIN},
  \end{eqnarray}
  all weights being initialized at value $1/2$ at the beginning. $\alpha_j$ is the leveraging coefficient learned during boosting for ``weak" classifier $H_j$. In the case of the log-/logistic-losses, the final model $\mbox{\hcomb}_T$ (linear combination of DTs) produces a classification which depends only on $\canolog$ and thus can be easily embedded in Poincar\'e disk \eqref{defCANOLOGB1}:
\begin{eqnarray}
\mbox{\hcomb}_T(\ve{x}) & \defeq & \sum_{j=1}^{T} \frac{({\canolog})_j }{({\canolog})^*_j} \cdot H_j(\ve{x}) \quad = \sum_{j=1}^{T} \frac{({\canolog})_j }{({\canolog})^*_j} \cdot \canolog\left(p^+_{j\leaf(\ve{x})}\right), \label{defBOOSTHMAIN}
\end{eqnarray}
where $({\canolog})^*_j$ is the maximal absolute confidence of $H_j$:
\begin{eqnarray}
({\canolog})^*_j & \defeq & \max_{\leaf \in \leafset(H_j)} \left|\log\left(\frac{p^+_{j\leaf}}{1-p^+_{j\leaf}}\right)\right|\label{defCANOLOGTSTAR}
\end{eqnarray}
(index `$j$' in $p^+_.$ emphasize the use of normalized boosting weights to compute all posteriors) and $p^+_{j\leaf(\ve{x})}$ is the posterior estimation at leaf reached by observation $\ve{x}$. Notice that $({\canolog})^*_j$ can be read directly from the Poincar\'e embedding, since it is not hard to show that the maximal absolute confidence for the MDT is also that of the DT. The second part in leveraging in \eqref{defBOOSTHMAIN} is
\begin{eqnarray}
({\canolog})_j & \defeq & \log\left(\frac{p^+_{j}}{1-p^+_{j}}\right) , \quad p^+_j \defeq \frac{1+r_j}{2} , \quad r_j \defeq \expect_{i\sim \tilde{w}_j} \left[\frac{1}{({\canolog})^*_j} \cdot \log\left(\frac{p^+_{j\leaf(\ve{x}_i)}}{1-p^+_{j\leaf(\ve{x}_i)}}\right)\right],\label{defCANOLOGT}
\end{eqnarray}
where $\tilde{\ve{w}}_j$ denotes weights \eqref{defweightupdateMAIN} normalized. It is not hard to show that $p^+_{j} \geq 1/2$ and so $({\canolog})_j \geq 0$, so it can easily be displayed and read from Poincar\'e disk as the confidence of an imaginary vertex with posterior $p^+_j$ (we choose to represent it with a circle, see Figure \ref{fig:hyperbolicsum}).

At this stage, we have (i) detailed the relationships between training a classifier with the log-/logistic-loss and Poincar\'e model of hyperbolic geometry, (ii) explained how to design an embedding of the monotonic part of a DT classification on the form of a MDT tackling objectives \textbf{(A-C)}, (iii) briefed training a boosted ensemble of DTs using the logistic loss and with leveraging coefficients easily embeddable in Poincar\'e disk. There remains to present how we embed the MDT corresponding to a DT. This last part is the simplest.

\paragraph{A modified Sarkar algorithm to embed trees and their leveraging information} Sarkar's algorithm \cite{sLDD} gives a clean low-distortion embedding when the tree is binary or the arc length is constant \cite{sdgrRT}. Things are different in our case: MDT nodes have arbitrary out-degree and lengths depend on the absolute confidence at the corresponding MDT nodes. Plus, a direct implementation of Sarkar's algorithm would violate the constraint for strict path-monotonicity in a MDT that nodes in a path from the root need to progressively come closer to the border -- equivalently, it would create substantial embedding errors for confidences and violate \textbf{(B)}. Without focusing on an optimal solution (that we leave for future work), one can remark that all these problems can be heuristically addressed by changing one step of Sarkar's algorithm, replacing the use of the total $2\pi$ fan out angle for mapping children (Step 5: in Algorithm 1 of \cite{sdgrRT}) by a variable angle with a special orientation in the disk.   We defer to \supplement~for a more detailed exposure of the tweak (Section \ref{sarkar-mod}). We evaluate the quality of the MDT embedding using an expected error for the embedding of each node:
\begin{eqnarray}
\rho(H') & \defeq & \expect_{\node \sim \nodeset(H')} \left[\left|\frac{\left|\alpha_{\node}\right| - d_{\mathbb{B}_1}(\ve{0}, \ve{z}_{\node})}{\left|\alpha_{\node}\right|}\right|\right], \label{errorEmbed}
\end{eqnarray}
where $\alpha_{\node}$ refers to the relevant $\canolog\left(p^+_{j\leaf}\right)$ in \eqref{defBOOSTHMAIN}, \textit{i.e.} the confidence computed at the iteration $t$ when $\node$ was a leaf, $\leaf$, in $H$ (and $\ve{z}_{\node} \in \mathbb{B}_1$ is its embedding in Poincar\'e disk). Hence, the error covers all nodes of MDT $H'$. The final representation of a MDT is shown on an example in Figure \ref{fig:hyperbolicsum}, learned on an UCI domain\footnote{\url{https://archive.ics.uci.edu/dataset/468/online+shoppers+purchasing+intention+dataset}.}. Notice the small embedding error $\rho$.

  \begin{figure}
  \centering
\includegraphics[trim=20bp 25bp 160bp 60bp,clip,width=\columnwidth]{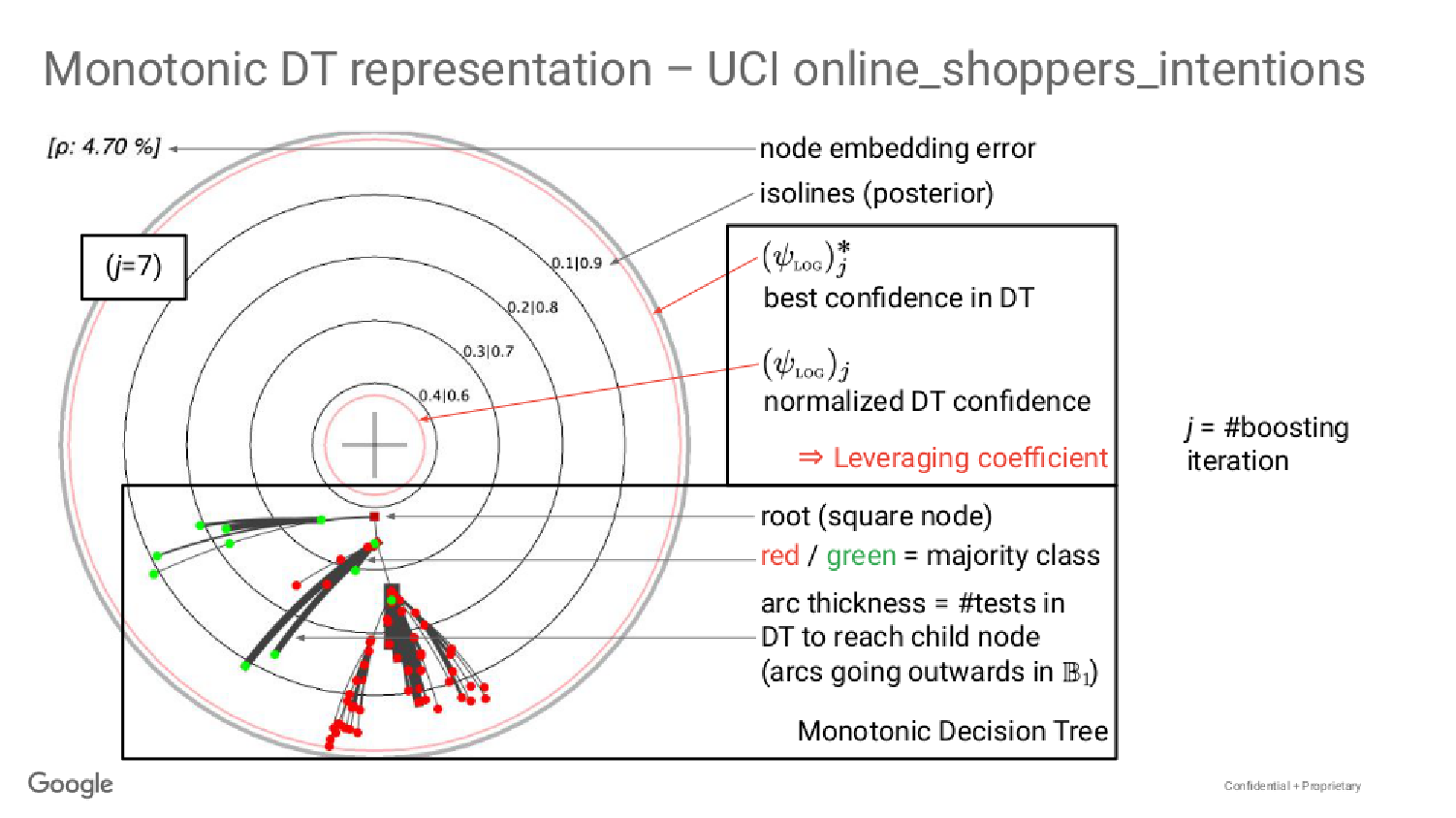} 
    \caption{Summary of the representation adopted for the $j=7^{th}$ DT in a boosted ensemble, using its MDT embedded in Poincar\'e disk $\mathbb{B}_1$ (UCI domain \domainname{online$\_$shoppers$\_$intention}, see Section \ref{sec:t-exp} for the complete experiment). See \eqref{errorEmbed} for the definition of $\rho$, \eqref{defCANOLOGTSTAR} for the defintion of $({\canolog})^*_j$, \eqref{defCANOLOGT} for the defintion of $({\canolog})_j$.}
    \label{fig:hyperbolicsum}
  \end{figure}

  \paragraph{Using the t-self} We implement the simple non-linear scaling mapping $\embedt_t$ \eqref{defembedpt} satisfying \eqref{eqIDDDT} (otherwise detailed at the end of Section \ref{sec:t-hyp}). In short, we scale the norm of all relevant parameters in $\mathbb{B}_1$ (nodes' embeddings, boosting's leveraging coefficients, isolines) via \eqref{eqIDDDT} to preserve the hyperbolic distance. For example, any node $\node$ of a MDT $H'$, which is embedded using $\ve{z}_\node \in \mathbb{B}_1$, becomes in the t-self $\ve{z}^{(t)}_\node \in \mathbb{B}^{(t)}_1$ by changing the norm to satisfy \eqref{eqIDDDT}, given that in the t-self we have
  \begin{eqnarray}
d^{(t)}_{\mathbb{B}_1}(\ve{0}, \ve{z}^{(t)}_\node) = \log_t \exp d_{\mathbb{B}_1}(\ve{0}, \ve{z}^{(t)}_\node) = \log_t \left(\frac{1+\|\ve{z}^{(t)}_\node\|}{1-\|\ve{z}^{(t)}_\node\|}\right) \label{defCANOLOGBT1}.
  \end{eqnarray}
  All parameters depending on $\canolog$ in \eqref{defBOOSTHMAIN} get the same non-linear mapping. Notice the interest of the t-self in the context of numerical accuracy and readability, since for example $({\canolog})^*_t$ \eqref{defCANOLOGTSTAR} is computed from the node that is the closest from the border of the disk. 
\section{Experiments}
\label{sec:t-exp}

We detail experiments carried out, with the complete set of experiments and additional ones in \supplement, Section \ref{sec-sup-exp}. In the top-down induction scheme for DT, the split is the one that minimizes $\cbrlog(., \mathcal{S})$ \eqref{ecbr} among all choices. The leaf chosen to be split is the heaviest non pure leaf, \textit{i.e.} the one among the leaves containing both classes with the largest proportion of training examples, as measured by the initial distribution (first tree) or the boosting updated distribution (next trees). Induction is stopped when the tree reaches a fixed size, or when all leaves are pure, or when no further split allows to decrease $\cbrlog(., \mathcal{S})$. We do not prune DTs. The computation of the corresponding MDTs, the embeddings in Poincar\'e disk are done as described in Section \ref{sec:t-dt}. The domains we consider are all public domains, from the UCI repository of ML datasets \cite{dgUM}, OpenML or Kaggle (details in \supplement). All test errors are estimated from a 10-folds stratified cross-validation.

\paragraph{Visualizing a DT \textit{via} its MDT} One can always choose to directly learn Monotonic DTs instead of DTs, given their natural fit for embedding in the Poincar\'e disk. In this case, the hyperbolic representation of the MDT can be immediately used for assessment. 
Suppose we stick to learning DTs (because \textit{e.g.} they have been standard in ML for decades) \textit{and} wish to use the visualization of its corresponding MDT to make inferences on the original DT itself. 
Can we make reliable conclusions? 
We explore this question by observing that the prediction from DT to MDT can only change if the leaf $\leaf$ that an example reaches in the DT satisfies two conditions:
(a) $\leaf$ does not appear as a leaf in the MDT and (b) it is tagged to an internal node of the MDT with a confidence of the opposite sign (this is \textbf{(D1)}, Step 3: in \createmdt).
Without tackling the formal aspect of this question, we have designed a simple experiment for a simple assessment of whether / when this is reasonable. We have trained a set of $T$=200 boosted DTs, each with maximal size 201 (total number of nodes). After having computed the corresponding MDTs, we have compared the test errors of the boosted set of trees and that of the ensemble where each DT is replaced by its MDT, \textit{but} the leveraging coefficients do not change. Intuitively, if test errors are on par, the variation in classification of DTs vs MDTs (including confidences) is negligible and we can ``reduce" the interpretation of the DT to that of its MDT in the Poincar\'e disk. The results are summarized in Table \ref{tab:dt-vs-mdt-XAI}.
\begin{table}
  \centering
\begin{tabular}{crrc}\Xhline{2pt}
  domain & \multicolumn{1}{c}{DT} & \multicolumn{1}{c}{MDT} & p-val\\ \hline
  \domainname{breastwisc} & $4.15\pm 2.18$ & $5.00\pm 2.38$ & \textbf{0.2408}\\
  \domainname{ionosphere} & $5.71\pm 2.70$ & $9.11\pm 5.34$ & 0.0237\\
  \domainname{tictactoe} & $2.61\pm 1.85$ & $2.09\pm 1.10$ & \textbf{0.1390}\\
  \domainname{winered} & $18.39 \pm 2.02$ & $18.32 \pm 2.27$ & \textbf{0.9227}\\
  \domainname{german} & $24.00\pm 4.37$ & $23.90\pm 4.25$ & \textbf{0.8793}\\
  \domainname{analcatdata$\_$supreme} & $23.40 \pm 1.85$ & $22.56 \pm 1.75$ & 0.0134\\
  \domainname{abalone} & $22.15 \pm 2.20$ & $21.14 \pm 2.44$ & \textbf{0.1267}\\
  \domainname{qsar} & $12.98 \pm 2.63$ & $12.89 \pm 4.05$ & \textbf{0.8772}\\
  \domainname{hillnoise} & $37.04 \pm 2.73$ & $45.88 \pm 4.72$ & 0.0008\\
  \domainname{firmteacher} & $6.87 \pm 1.23$ & $7.04 \pm 1.17$ & \textbf{0.4292}\\
  \domainname{online$\_$shoppers$\_$intention} & $9.90 \pm 0.78$ & $10.67 \pm 0.54$ & 0.0057\\
  \domainname{hardware} & $1.33 \pm 0.27$ & $1.44 \pm 0.19$ & \textbf{0.0534}\\\Xhline{2pt} 
  \end{tabular}
  \caption{Results of the experiments checking whether "reducing" the interpretation of a DT to that of its MDT (using its hyperbolic embedding) can give accurate information about the tree as well. Numerical columns, from left to right, give the average$\pm$std dev error for DTs, for MDTs and provide the p-value of a Student paired t-test with H0 being the identity of the average errors. Entries in \textbf{bold faces} correspond to \textit{keeping H0} for a 0.05 first-order risk. See text for details.}
    \label{tab:dt-vs-mdt-XAI}
  \end{table}
   From Table \ref{tab:dt-vs-mdt-XAI}, we can safely say that our hypothesis reasonably holds in many cases, with two important domains for which it does not: \domainname{ionosphere} and \domainname{hillnoise}. For the former domain, we attribute it to the small size of the domain, which prevents training big enough trees; for the latter domain, we attribute it to the fact that the domain contains substantial noise, which makes it difficult to substantially improve posteriors by splitting and thus make many DT nodes, including leaves, ``disappear" in the MDT conversion (Steps 2: and 14: in \createmdt).

  \paragraph{Poincar\'e embeddings of MDTs} We first summarize in Table \ref{tab:summary-dt-exerpt} a number of experiments on the domains of Table \ref{tab:dt-vs-mdt-XAI} (conventions and notations in Section \ref{sec:t-dt}; the training setting is same as Table \ref{tab:dt-vs-mdt-XAI}). The complete set of plots is in \supplement, Section \ref{sec-exp-pdemb}. On each of these plots, the ratio between the hyperbolic radii of the smallest over the largest red circle gives boosting's leveraging coefficient of the DTs. Node colors are labels and the thicker an arc $\node \rightarrow_{\mbox{\tiny MDT}} \node'$ appears, the more ``digging" we need in the DT to go from node $\node$ down to another ($\node'$) with larger absolute confidence (see also Figure \ref{fig:hyperbolicsum} for a summary). Hyperbolic distances from the origin directly approximate a node's absolute confidence (error $\rho$ (\ref{errorEmbed}, indicated), so the closer a node is from the border, the better it is for classification. The center of the disk represents ``poor" classification (confidence 0), or, alternatively in the context of boosting, random classification.

  All experiments display (expectable) common patterns, seen in Table \ref{tab:summary-dt-exerpt}: as boosting iterations increase, low-depth tree nodes tend to converge to the center of the disk, indicating increased hardness in classification. This is also an effect of boosting, which increases the weight of ``hard" examples. Highly imbalanced domains get a root predictably initially embedded ``far" from the origin (\domainname{online$\_$shoppers$\_$intention}, \domainname{analcatdata$\_$supreme}) while more balanced domains get their roots embedded near the origin (\domainname{abalone}).

  Stark differences also emerge between domains: Table \ref{tab:dt-vs-mdt-XAI} displays that \domainname{online$\_$shoppers$\_$intention} yields small test error. The plots display that for many iterations the bottom-most nodes indeed get embeddings close to the border, \textit{and} there is an early good split, at the root, which makes a good partition between positive and negative examples (one of the edge is the longest in MDT $\#1$ in Table \ref{tab:summary-dt-exerpt}). This property still survives to some extent in the 10$^{th}$ tree. For \domainname{abalone}, we keep a reasonably good split at the root for the first tree, but very soon this nice property disappears: many nodes are aggregated near the origin in the 10$^{th}$ tree, with subtrees almost equally balanced between classes. The case of \domainname{analcatdata$\_$supreme} is worse: very soon during boosting iterations \textit{all nodes} get dragged close to the origin, and many nodes keep negative classification, which displays the hardness of classifying the positive ({\color{darkgreen} green}) class. Those visible embedding differences translate in differences in performances (Table \ref{tab:dt-vs-mdt-XAI}) that align quite well with the ``observed" hardnesses.

\begin{table}
  \centering
  \resizebox{\textwidth}{!}{\begin{tabular}{c|c?c?c?c}\Xhline{2pt}
                              & {\scriptsize \domainname{online$\_$shoppers$\_$intention}} & {\scriptsize \domainname{analcatdata$\_$supreme}} & {\scriptsize \domainname{abalone}}& {\scriptsize \domainname{hardware}}\\ \hline
                              \rotatebox[origin=l]{90}{MDT $\#$ 1} & \includegraphics[trim=0bp 0bp 0bp 0bp,clip,width=0.3\columnwidth]{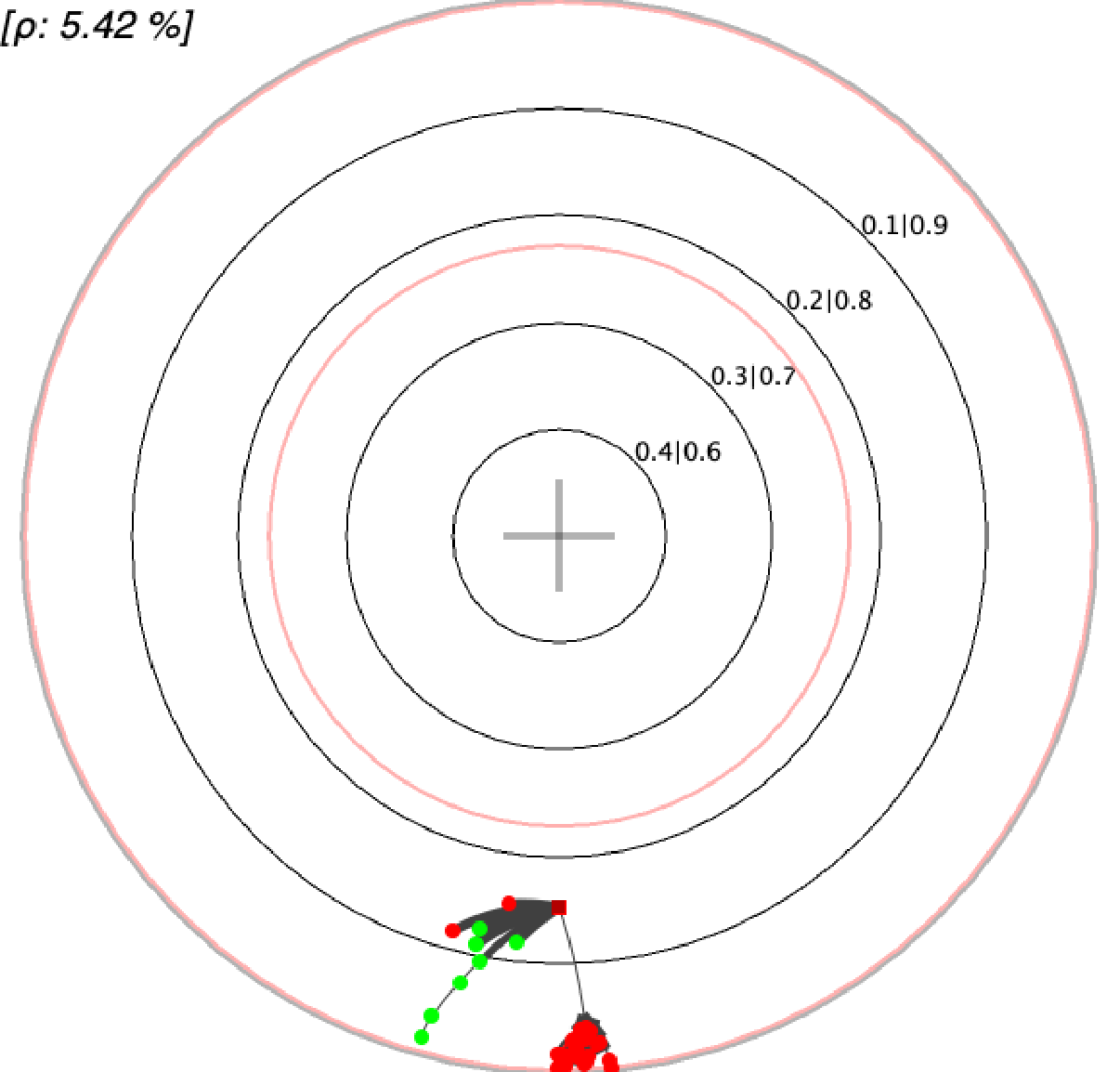} & \includegraphics[trim=0bp 0bp 0bp 0bp,clip,width=0.3\columnwidth]{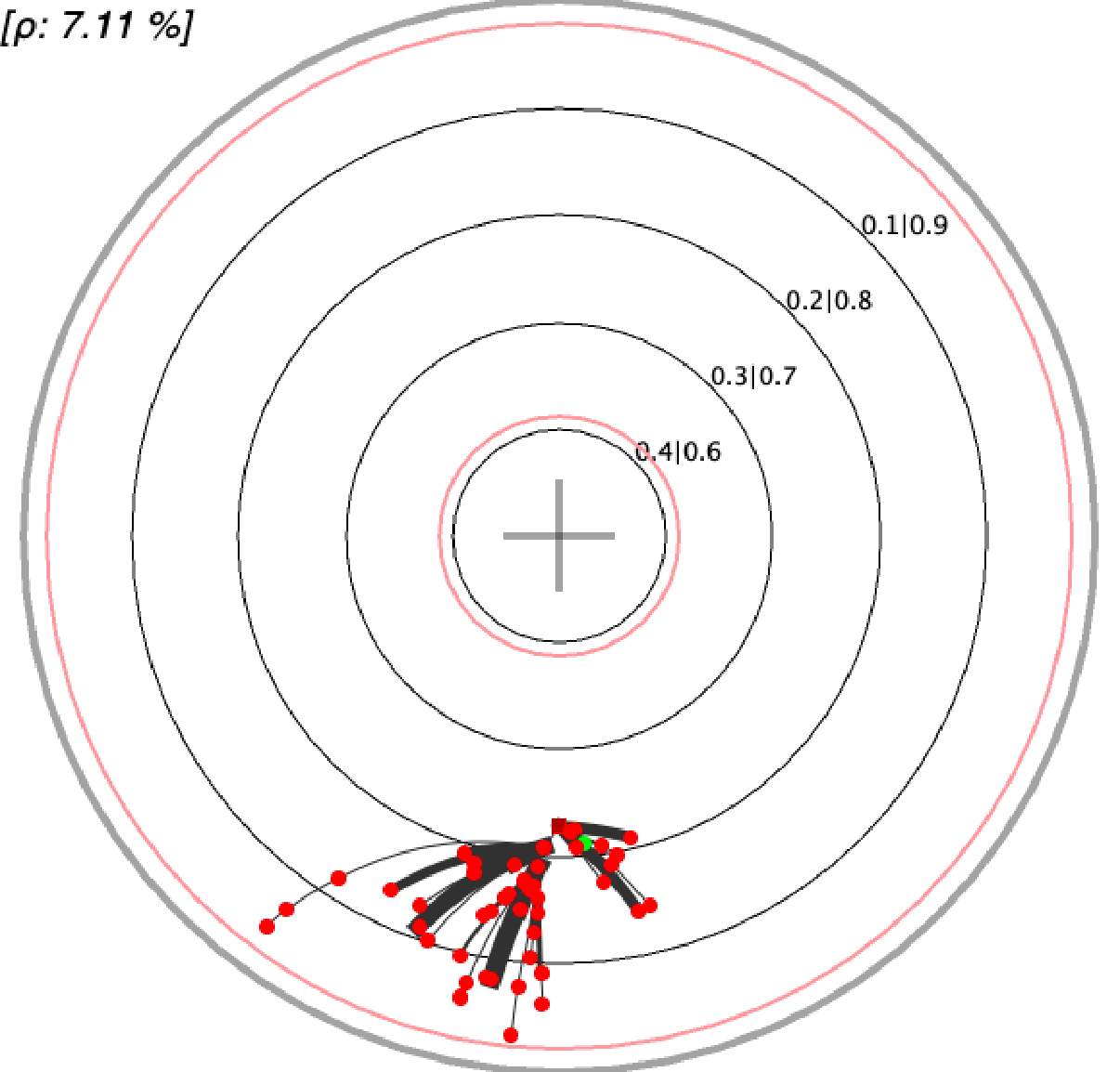} & \includegraphics[trim=0bp 0bp 0bp 0bp,clip,width=0.3\columnwidth]{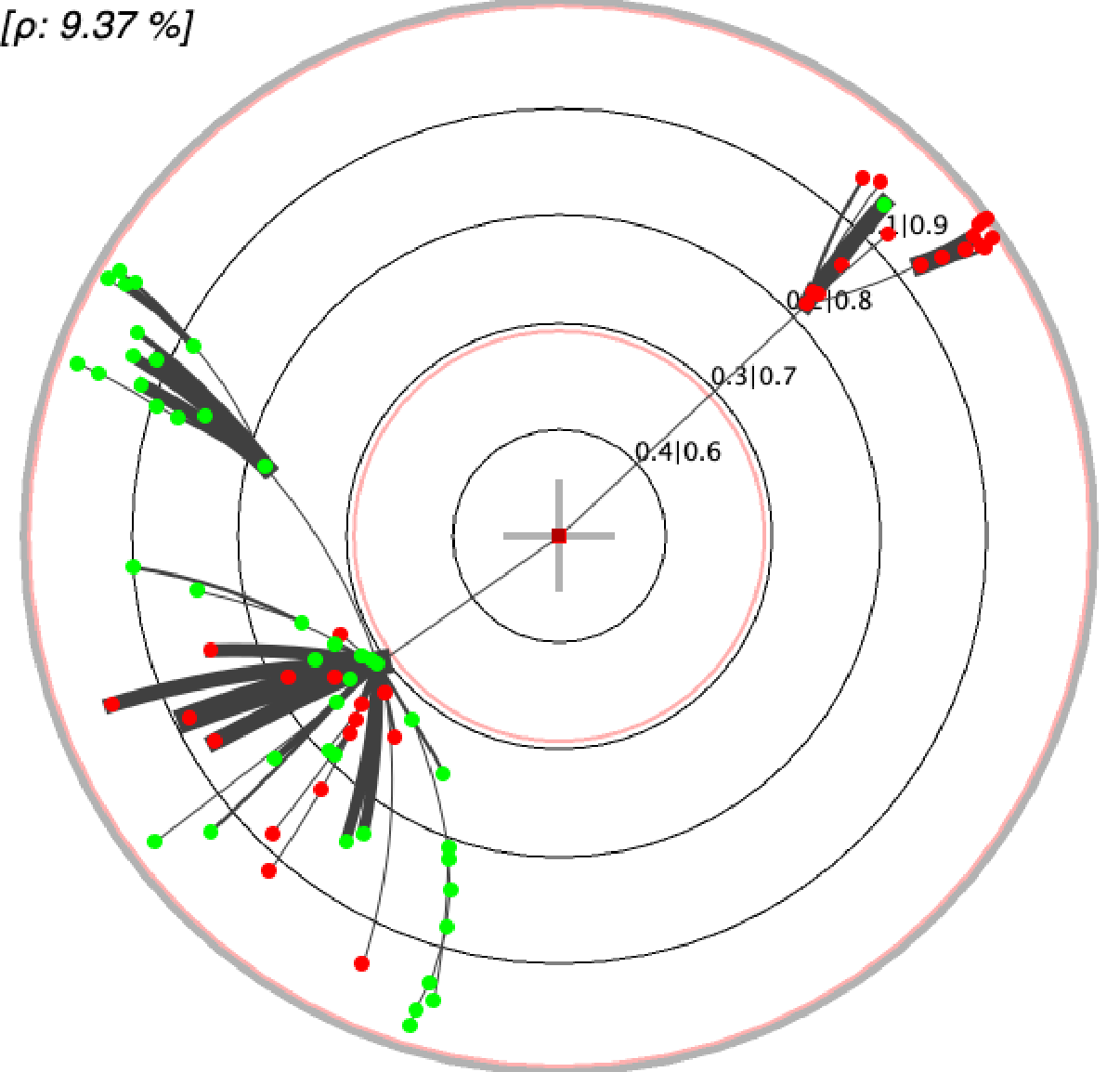} & \includegraphics[trim=0bp 0bp 0bp 0bp,clip,width=0.3\columnwidth]{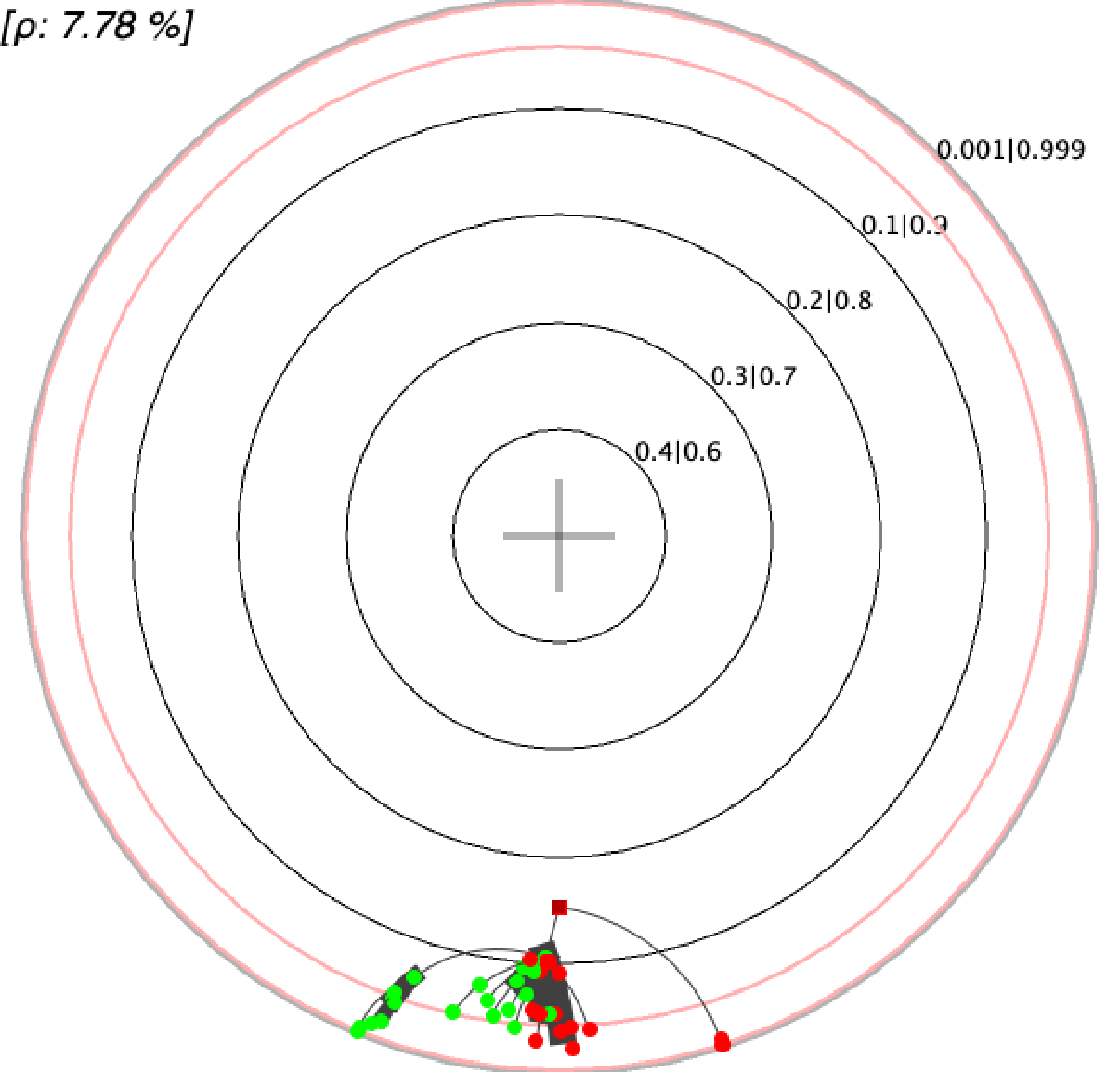} \\ \hline
                             \rotatebox[origin=l]{90}{MDT $\#$ 10} & \includegraphics[trim=0bp 0bp 0bp 0bp,clip,width=0.3\columnwidth]{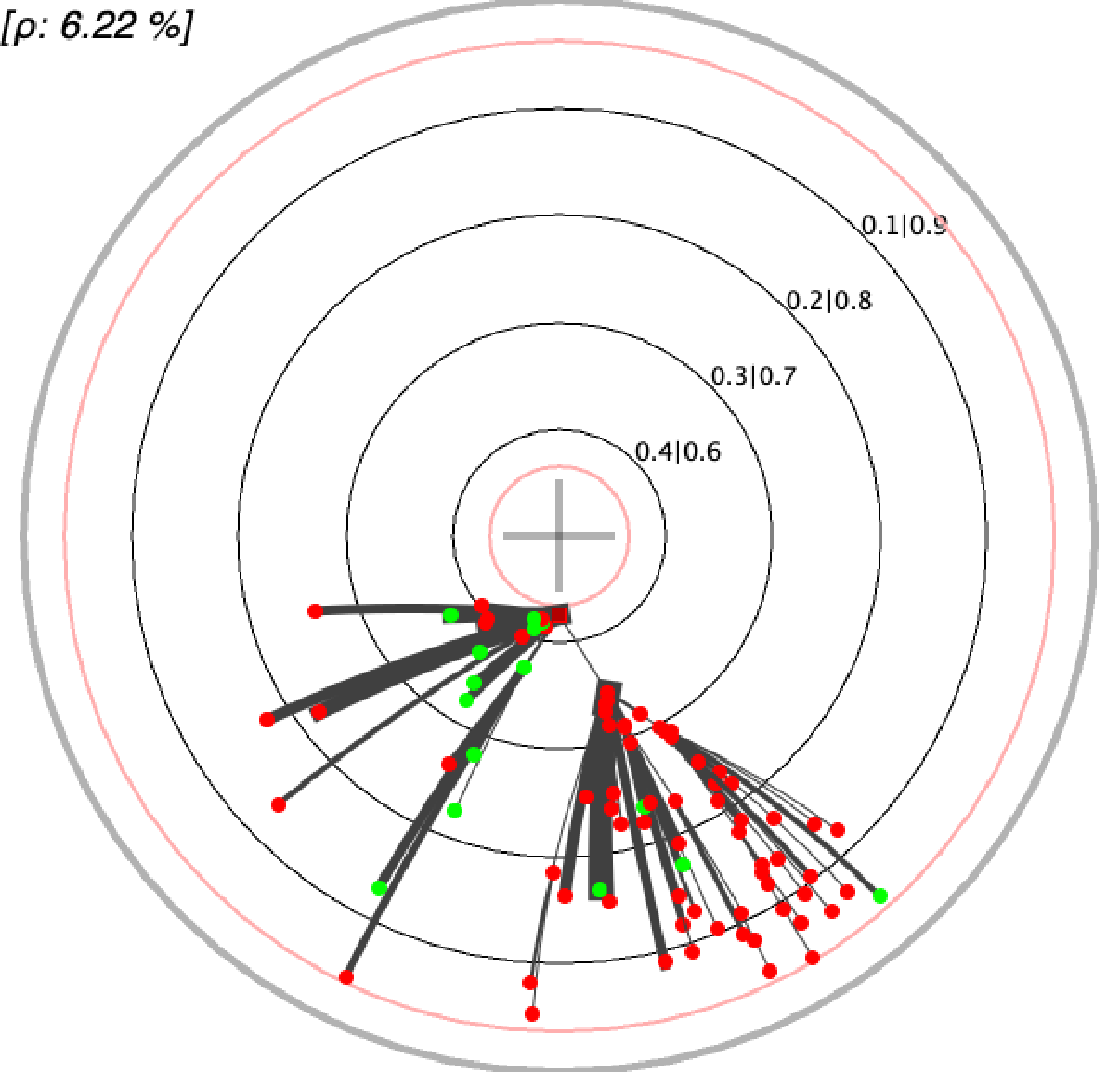} & \includegraphics[trim=0bp 0bp 0bp 0bp,clip,width=0.3\columnwidth]{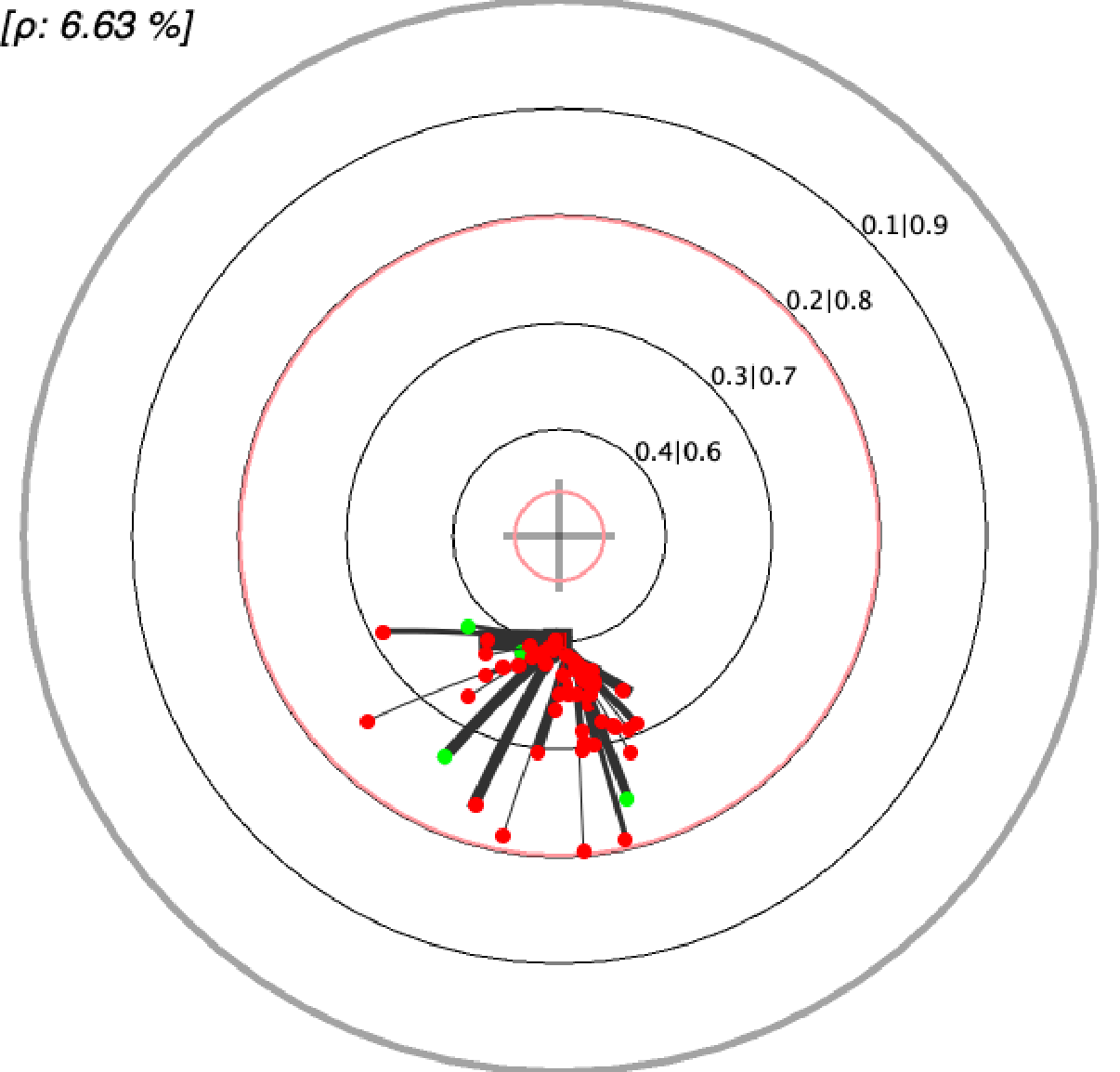} & \includegraphics[trim=0bp 0bp 0bp 0bp,clip,width=0.3\columnwidth]{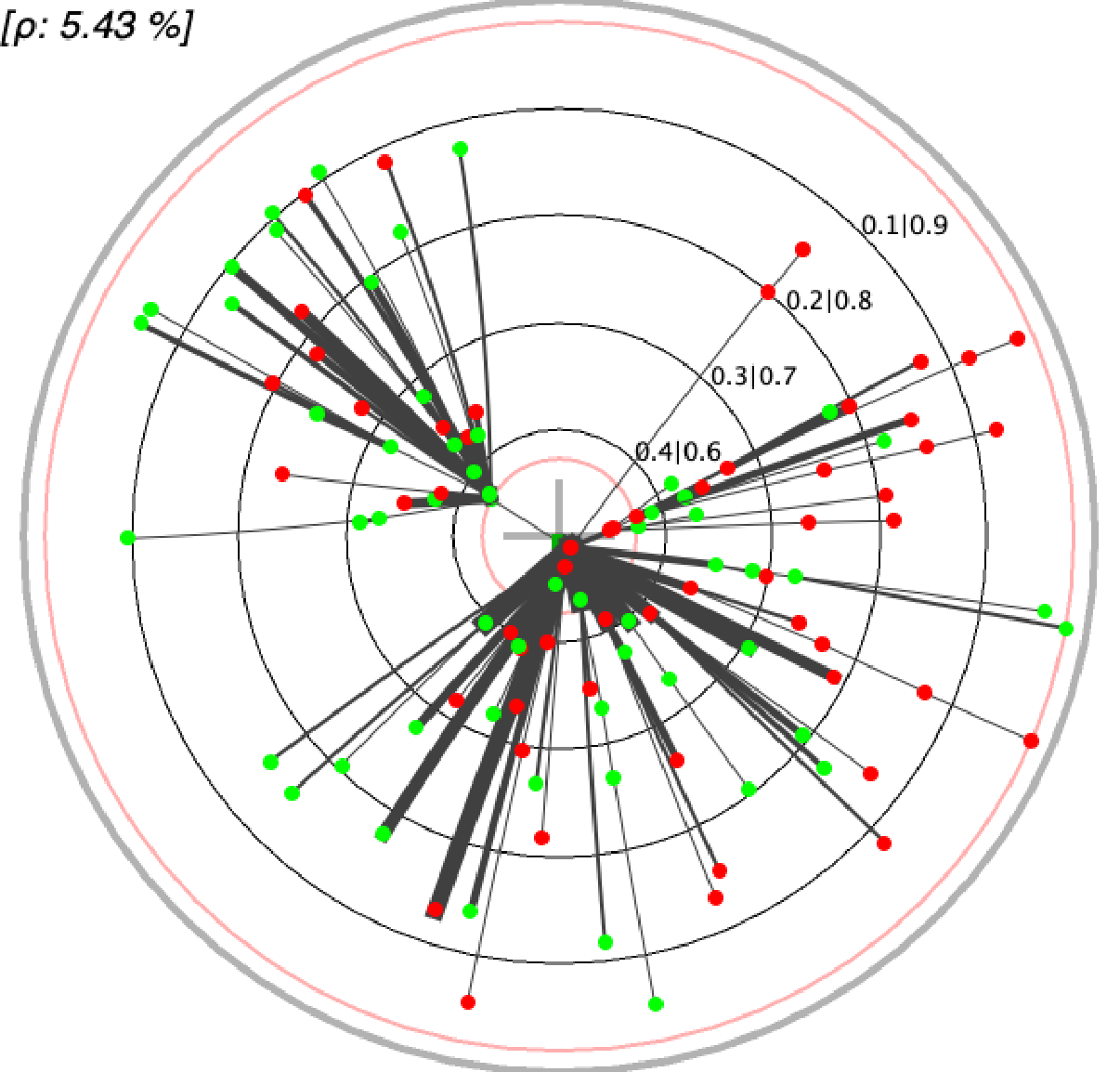}& \includegraphics[trim=0bp 0bp 0bp 0bp,clip,width=0.3\columnwidth]{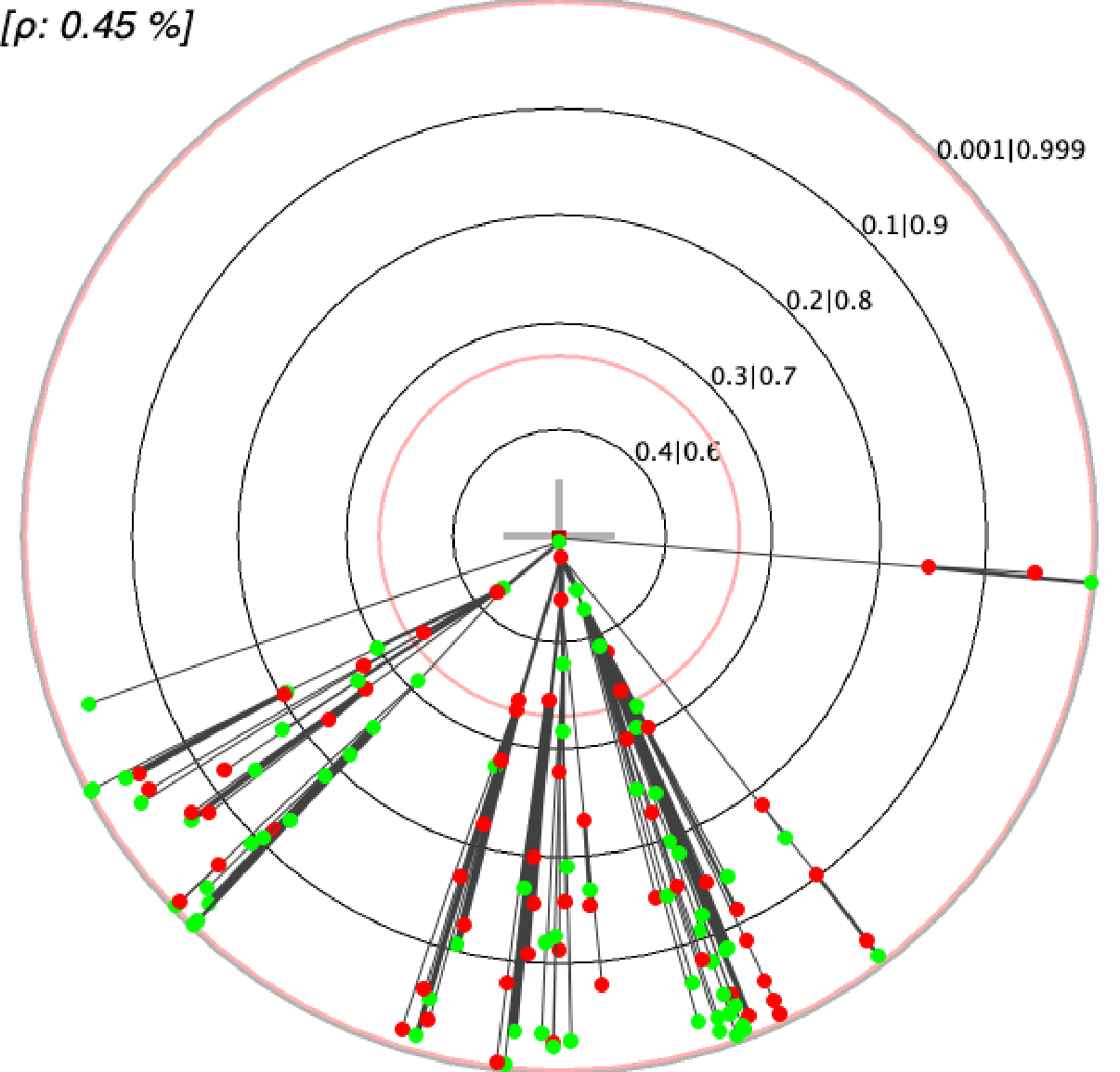} \\\Xhline{2pt} 
  \end{tabular}}
\caption{Embedding in Poincar\'e disk of the Monotonic Decision Trees corresponding to the DTs learned at the first (top row) and 10$^{th}$ (bottom row) boosting iteration, on four UCI domains. Stark differences emerges between these domains from the plots alone, see text for details.}
    \label{tab:summary-dt-exerpt}
  \end{table}

  \paragraph{Interest of the t-self for visualization} We now test the experimental impact of switching to the t-self in Poincar\'e disk model (Section \ref{sec:t-hyp}). Recall that switching to the t-self moves way model parts close to $\partial \mathbb{B}_1$ but keeps a low non-linear distortion around the center, which is thus roughly only affected by a scaling factor. Table \ref{tab:self-vs-t-self} presents a few results. For domain \domainname{online$\_$shoppers$\_$intention}, we note that the part of the tree that is within the isoline defined by posterior $p^+_. \in \{0.1,0.9\}$ gets indeed just scaled: both plots lots quite identical. Very substantial differences appear near the border: the best parts of the model could easily be misjudged as equivalent from $\mathbb{B}_1$ alone ({\color{orange} orange} rectangle) but there is little double from $\mathbb{B}^{(t)}_1$ that one of them, which crosses the $p^+_. \in \{0.001,0.009\}$ isoline, is in fact much better than the others. When many subtrees seem to be aggregating near the border as in \domainname{buzz$\_$in$\_$social$\_$media}, stark differences can appear on the t-self: the best subtrees are immediately spotted from the t-self ({\color{orange} orange} rectangles). In between, the t-self makes a much more visible ordering between the best nodes and subtrees, compared to Poincar\'e disk. \domainname{hardware} demonstrate that such very good nodes that are hard to differentiate from the others in $\mathbb{B}_1$ can appear at any iteration.

  \begin{figure}
    \centering
    \resizebox{\textwidth}{!}{\begin{tabular}{c?c}\Xhline{2pt}
                                {\small \domainname{online$\_$shoppers$\_$intention}} $(j=0)$ & {\small \domainname{twitter}} $(j=0)$\\ \hline
                                \includegraphics[trim=35bp 50bp 50bp 60bp,clip,width=0.8\columnwidth]{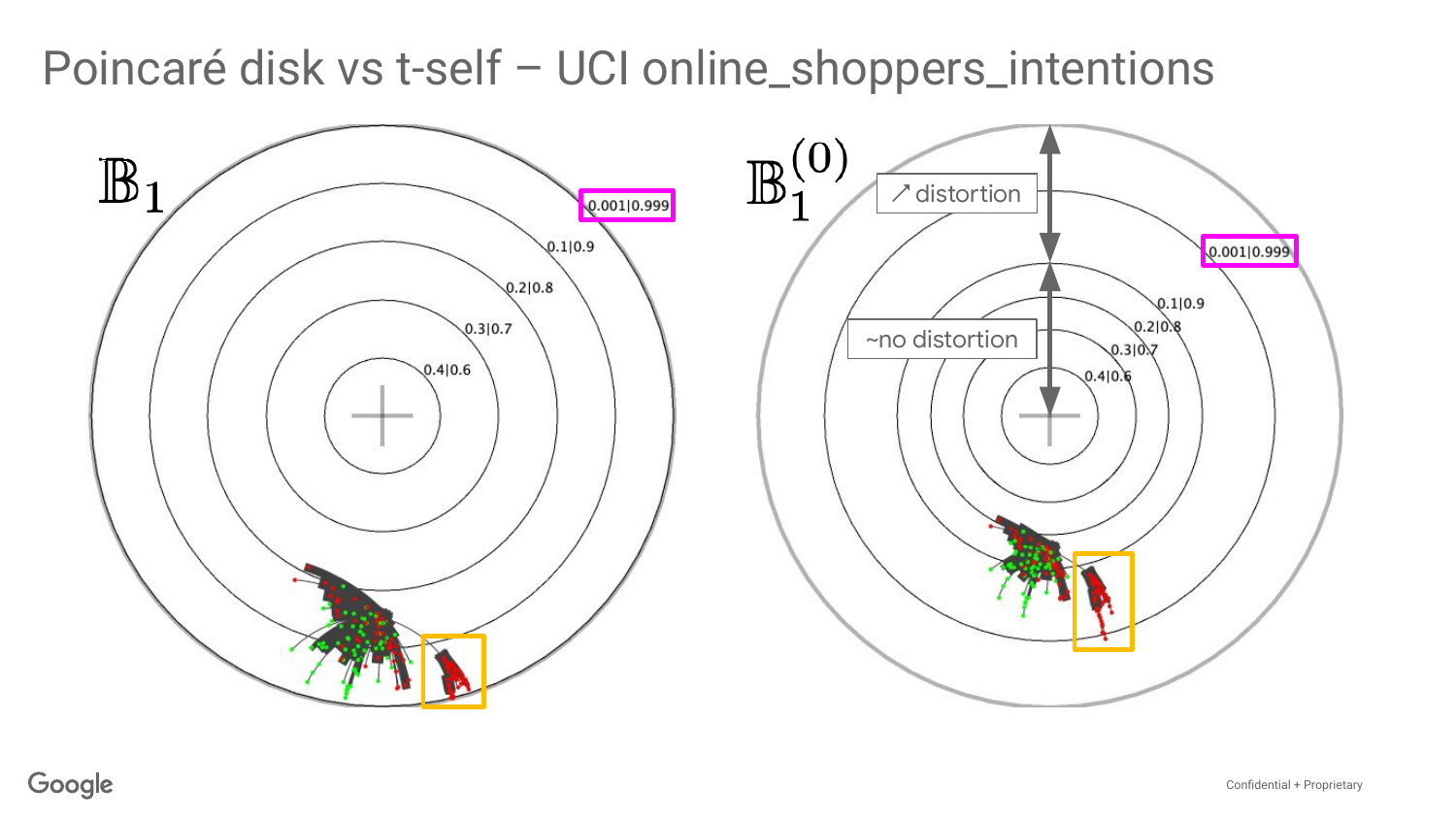} & \includegraphics[trim=35bp 50bp 50bp 60bp,clip,width=0.8\columnwidth]{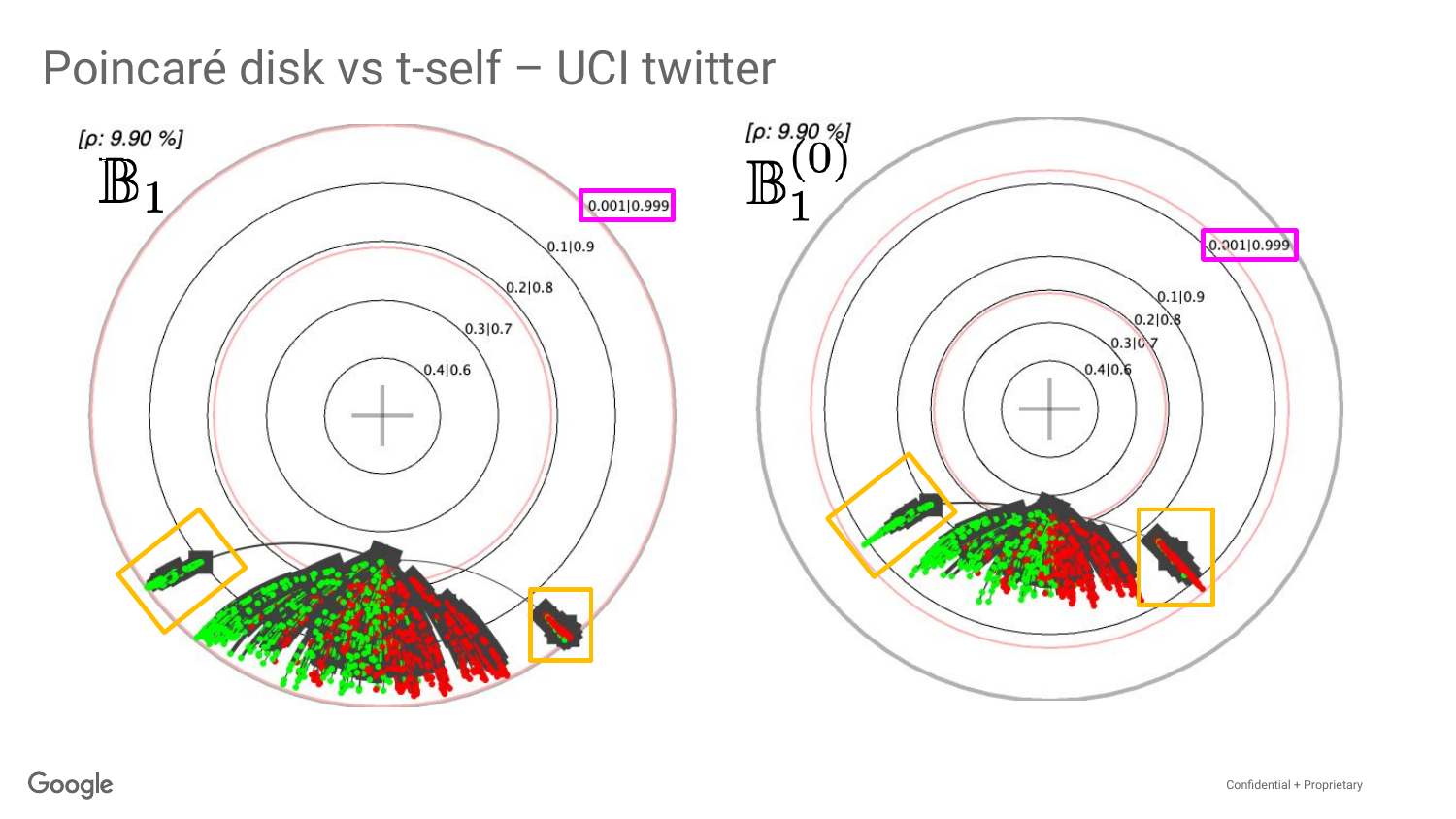} \\\Xhline{2pt} 
                               {\small \domainname{hardware}} $(j=70)$& {\small \domainname{hardware}} $(j=139)$\\ \hline
                                \includegraphics[trim=35bp 50bp 50bp 60bp,clip,width=0.8\columnwidth]{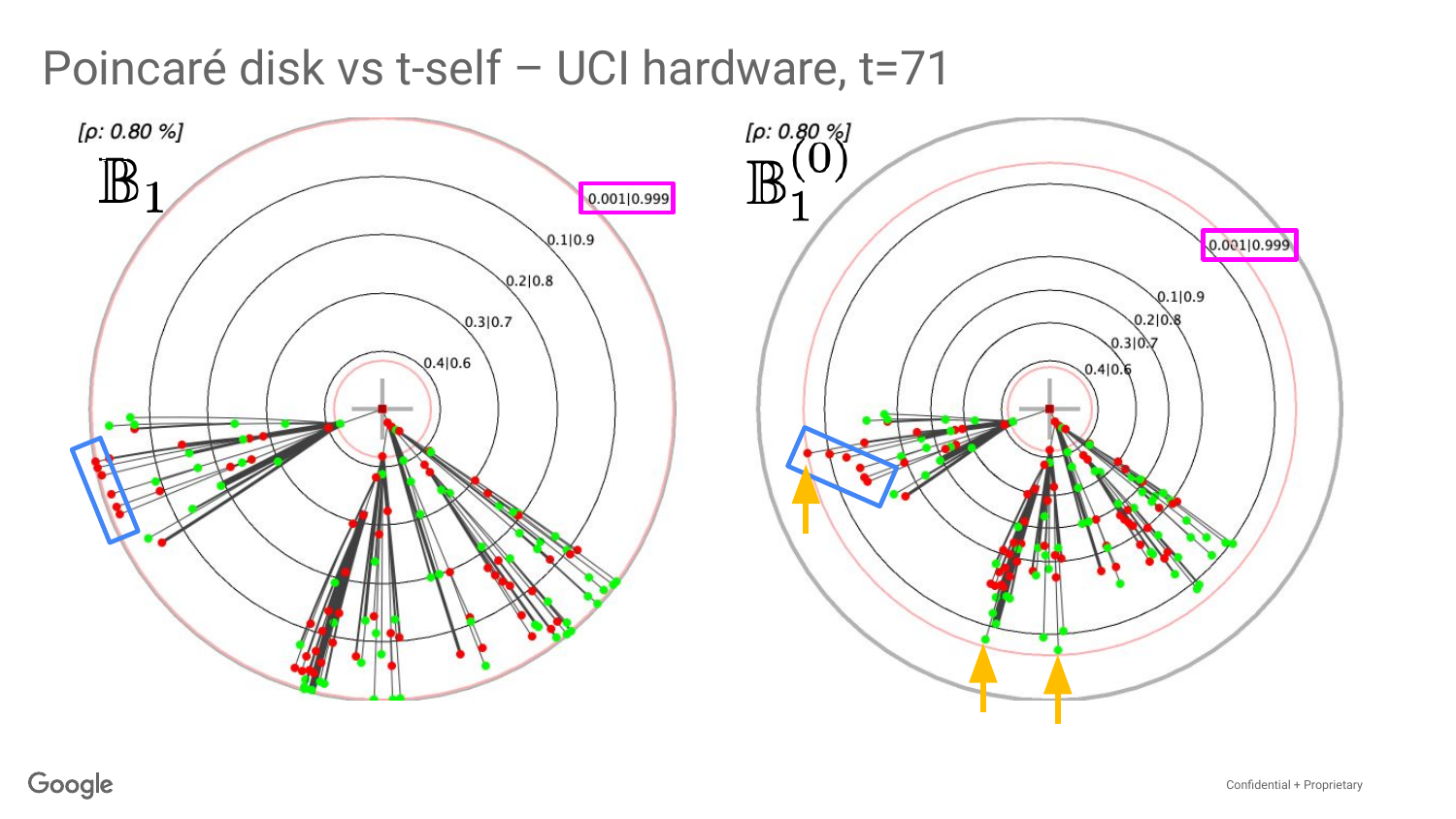} & \includegraphics[trim=35bp 50bp 50bp 60bp,clip,width=0.8\columnwidth]{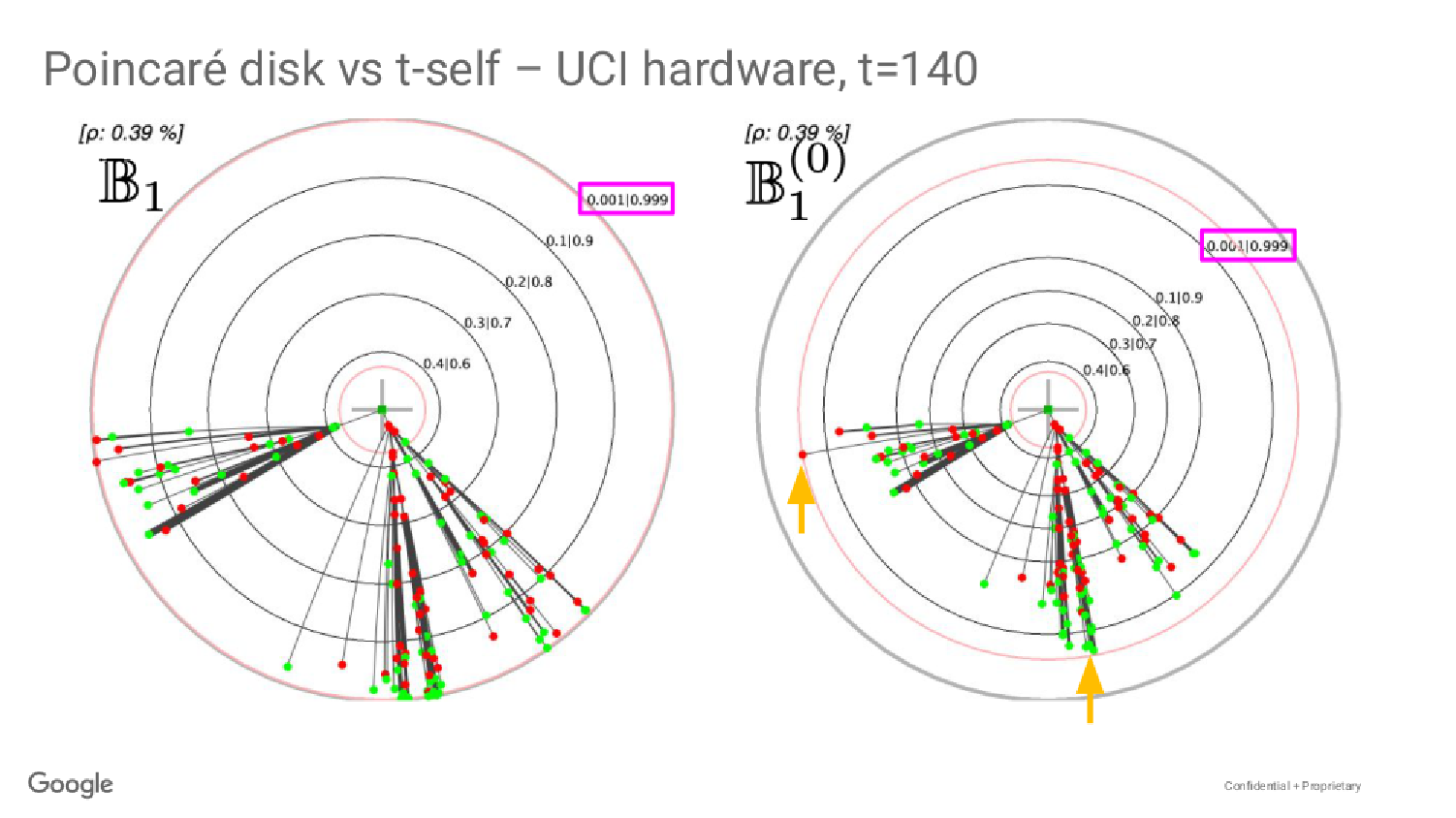} \\ \Xhline{2pt} 
  \end{tabular}}
\caption{\textit{Top pane}: comparison between Poincar\'e disk embedding and its t-self for $t=0$ for a MDT learned on UCI \domainname{online$\_$shoppers$\_$intention} (left) and \domainname{twitter} (right). On the left panel, we have not plotted the boosting coefficients information. The isoline distinguished in {\color{magenta} magenta} is the big \textbf{width} one in Table \ref{tab:isolines-t-self}. Note the difference in (non-linear) distortion created in the t-self, in which the central part just enjoys a scaling. \textit{Bottom pane}: stark differences in visualization between $\mathbb{B}_1$ and the t-self do not just appear initially: they can appear at any iteration. We display two MDTs learned on UCI \domainname{hardware} at two different iterations (indicated). It would be very hard to differentiate the best leaves from just the Poincar\'e disk embedding, while it becomes obvious from the t-self ({\color{orange} orange} arrows). Note also the set of {\color{red} red} nodes in the Poincar\'e disk for $t=70$ that mistakenly look aligned, but not in the t-self ({\color{blue} blue} rectangle). See text for details.}
    \label{tab:self-vs-t-self}
  \end{figure}

\section{Discussion and conclusion}
\label{sec:disc}

Our contributions start with a generalization of classical integration and derivation to a context encompassing the concept of additivity, upon which integration is built. This generalization is rooted in the theory governing nonextensive statistical mechanics, in which additivity needs to be replaced by a more general notion of $t$-additivity. The standard properties of integration and derivation that we have investigated are extended in a natural way, for many of them involving a generalization of the arithmetic over the reals also tied to the field of nonextensive statistical mechanics. That  such properties can be obtained without additional technical ``effort" offers a perspective of developments and applications in other subfields of ML where such tools could be used, not just in the context of hyperbolic embeddings or more broadly distortion measures on which we have focused. We note the recent developments of tools in connection with nonextensive statistical mechanics, that have been used in ML problems as diverse as clustering \cite{anwCA}, boosting \cite{nawBW}, optimal transport \cite{annwOT}, etc. .

More specific avenues for future work are related to our application. First, we experimentally observed that Mototonic Decision Trees (MDTs) can be substantially better than DTs even when using the DTs' leveraging coefficients in boosting. This could be attributable to the fact that being smaller in general (and sometimes much smaller), they reduce the risk of overfitting on big models. This could make them a good alternative to DT for ML and deserves further investigation. Second, our embedding algorithm for MDTs, based on Sarkar's approach, is heuristic and, while it does seem to provide good visual results in general (modulo some fine tuning of some of its parameters), it does not give any formal guarantees with respect to the embedding quality; this problem would deserve a full treatment of its own to get to a provably good embedding. This seems to be a very promising direction since popular packages to represent decision trees rely on a simple topological layer provided by the tree graph, to which various additional information are superimposed without link to the embedding. We believe our approach is the first to provide all layers of a full geometric embedding with a geometry directly interpretable in terms of classification performances, considering trees alone or in a boosted ensemble.

\section*{Acknowledgments}

Many thanks to Mathieu Guillame-Bert for discussions and suggestions around the material presented.

\section*{Code availability}

The code and an example public dataset to process can be obtained from 
\begin{center}
    \scriptsize{\url{http://users.cecs.anu.edu.au/~rnock/code/boosting-tempered-and-hyperbolic.html}}
\end{center} 

\bibliographystyle{plain}
\bibliography{bibgen}

\begin{thebibliography}{10}

\bibitem{fdiv-AliSilvey-1966}
Syed~Mumtaz Ali and Samuel~D Silvey.
\newblock A general class of coefficients of divergence of one distribution
  from another.
\newblock {\em Journal of the Royal Statistical Society: Series B
  (Methodological)}, 28(1):131--142, 1966.

\bibitem{amari2009alpha}
Shun-Ichi Amari.
\newblock {$\alpha$-Divergence is unique, belonging to both $f$-divergence and
  Bregman divergence classes}.
\newblock {\em IEEE Transactions on Information Theory}, 55(11):4925--4931,
  2009.

\bibitem{annwOT}
E.~Amid, F.~Nielsen, R.~Nock, and M.-K. Warmuth.
\newblock Optimal transport with tempered exponential measures.
\newblock In {\em AAAI'24}, 2024.

\bibitem{anwCA}
E.~Amid, R.~Nock, and M.-K. Warmuth.
\newblock Clustering above exponential families with tempered exponential
  measures.
\newblock In {\em 26$^{nd}$ AISTATS}, 2023.

\bibitem{amid2023tempered}
Ehsan Amid, Frank Nielsen, Richard Nock, and Manfred~K Warmuth.
\newblock The tempered {H}ilbert simplex distance and its application to
  non-linear embeddings of {TEM}s.
\newblock {\em arXiv preprint arXiv:2311.13459}, 2023.

\bibitem{bLTF}
F.~Bach.
\newblock {\em Learning Theory from First Principles}.
\newblock Course notes, MIT press (to appear), 2024.

\bibitem{bmdgCW}
A.~Banerjee, S.~Merugu, I.~Dhillon, and J.~Ghosh.
\newblock Clustering with bregman divergences.
\newblock In {\em Proc.\ of the $4^{th}$ SIAM International Conference on Data
  Mining}, pages 234--245, 2004.

\bibitem{bmTH}
A.-F. Beardon and D.~Minda.
\newblock The hyperbolic metric and geometric function theory.
\newblock In {\em Proc. of the International Workshop on Quasiconformal
  Mappings and their Applications}, 2005.

\bibitem{birrell2022f}
Jeremiah Birrell, Paul Dupuis, Markos~A Katsoulakis, Yannis Pantazis, and Luc
  Rey-Bellet.
\newblock {$(f, \Gamma)$-Divergences: interpolating between $f$-divergences and
  integral probability metrics}.
\newblock {\em The Journal of Machine Learning Research}, 23(1):1816--1885,
  2022.

\bibitem{bACOG}
B.-H. Bowditch.
\newblock {\em A course on geometric group theory}.
\newblock Math. Soc. Japan Memoirs, 2006.

\bibitem{Bregman-1967}
Lev~M. Bregman.
\newblock The relaxation method of finding the common point of convex sets and
  its application to the solution of problems in convex programming.
\newblock {\em USSR computational mathematics and mathematical physics},
  7(3):200--217, 1967.

\bibitem{bfosCA}
L.~Breiman, J.~H. Freidman, R.~A. Olshen, and C.~J. Stone.
\newblock {\em Classification and regression trees}.
\newblock Wadsworth, 1984.

\bibitem{cgcrFT}
I.~Chami, A.~Gu, V.~Chatziafratis, and C.~R{\'{e}}.
\newblock From trees to continuous embeddings and back: Hyperbolic hierarchical
  clustering.
\newblock In {\em NeurIPS*33}, 2020.

\bibitem{cgnrHH}
I.~Chami, A.~Gu, D.~Nguyen, and C.~R{\'{e}}.
\newblock Horopca: Hyperbolic dimensionality reduction via horospherical
  projections.
\newblock In {\em 38$^{th}$ ICML}, volume 139 of {\em Proceedings of Machine
  Learning Research}, pages 1419--1429. {PMLR}, 2021.

\bibitem{ctkpFH}
P.~Chlenski, E.~Turok, A.~{Khalil Moretti}, and I.~Pe'er.
\newblock Fast hyperboloid decision tree algorithms.
\newblock In {\em 12$^{th}$ ICLR}, 2024.

\bibitem{cdpbLM}
H.~Cho, B.~Demeo, J.~Peng, and B.~Berger.
\newblock Large-margin classification in hyperbolic space.
\newblock In {\em 22$^{nd}$ AISTATS}, volume~89 of {\em Proceedings of Machine
  Learning Research}, pages 1832--1840. {PMLR}, 2019.

\bibitem{cnBD}
Z.~Cranko and R.~Nock.
\newblock Boosted density estimation remastered.
\newblock In {\em 36$^{th}$ ICML}, pages 1416--1425, 2019.

\bibitem{Csiszar-1967}
Imre Csisz{\'a}r.
\newblock Information-type measures of difference of probability distributions
  and indirect observation.
\newblock {\em studia scientiarum Mathematicarum Hungarica}, 2:229--318, 1967.

\bibitem{cuturi2013sinkhorn}
Marco Cuturi.
\newblock {Sinkhorn distances: Lightspeed computation of optimal transport}.
\newblock {\em Advances in neural information processing systems}, 26, 2013.

\bibitem{dfPI}
J.-D. Dollard and C.-N. Friedman.
\newblock {\em Product integration with application to differential equations}.
\newblock Addison-Wesley, 1979.

\bibitem{dmsmHR}
L.~Doorenbos, P.~{M{\'{a}}rquez{-}Neila}, R.~Sznitman, and P.~Mettes.
\newblock Hyperbolic random forests.
\newblock {\em CoRR}, abs/2308.13279, 2023.

\bibitem{dgUM}
D.~Dua and C.~Graff.
\newblock {UCI} machine learning repository, 2021.

\bibitem{fhtAL}
J.~Friedman, T.~Hastie, and R.~Tibshirani.
\newblock {A}dditive {L}ogistic {R}egression : a {S}tatistical {V}iew of
  {B}oosting.
\newblock {\em Ann. of Stat.}, 28:337--374, 2000.

\bibitem{fGFA}
J.~H. Friedman.
\newblock Greedy function approximation: a gradient boosting machine.
\newblock {\em Ann. of Stat.}, 29:1189--1232, 2001.

\bibitem{gbhHE}
{O.-E.} Ganea, G.~B{\'{e}}cigneul, and T.~Hofmann.
\newblock Hyperbolic entailment cones for learning hierarchical embeddings.
\newblock In {\em 35$^{th}$ ICML}, volume~80 of {\em Proceedings of Machine
  Learning Research}, pages 1632--1641. {PMLR}, 2018.

\bibitem{gbhHN}
{O.-E.} Ganea, G.~B{\'{e}}cigneul, and T.~Hofmann.
\newblock Hyperbolic neural networks.
\newblock In {\em NeurIPS*31}, pages 5350--5360, 2018.

\bibitem{gjAS}
R.-D. Gill and S.~Johansen.
\newblock A survey of product integration with a view toward application in
  survival analysis.
\newblock {\em Ann. of Stat.}, 18:1501--1555, 1990.

\bibitem{jGVB}
E.-T. Jaynes.
\newblock Gibbs vs {B}oltzmann entropies.
\newblock {\em American J. of Physics}, 5:391--398, 1965.

\bibitem{kTO}
M.J. Kearns.
\newblock Thoughts on hypothesis boosting, 1988.
\newblock ML class project.

\bibitem{kmOTj}
M.J. Kearns and Y.~Mansour.
\newblock On the boosting ability of top-down decision tree learning
  algorithms.
\newblock {\em J. Comp. Syst. Sc.}, 58:109--128, 1999.

\bibitem{kTN}
D.-E. Knuth.
\newblock Two notes on notation.
\newblock {\em The American Mathematical Monthly}, 99(5):403--422, 1992.

\bibitem{llszLD}
{M.-T.} Law, R.~Liao, J.~Snell, and {R.-S.} Zemel.
\newblock Lorentzian distance learning for hyperbolic representations.
\newblock In {\em 36$^{th}$ ICML}, volume~97 of {\em Proceedings of Machine
  Learning Research}, pages 3672--3681. {PMLR}, 2019.

\bibitem{mnwRC}
Y.~Mansour, R.~Nock, and R.-C. Williamson.
\newblock Random classification noise does not defeat all convex potential
  boosters irrespective of model choice.
\newblock In {\em 40$^{th}$ ICML}, 2023.

\bibitem{mwwyTN}
G.~Mishne, Z.~Wan, Y.~Wang, and S.~Yang.
\newblock The numerical stability of hyperbolic representation learning.
\newblock In {\em 40$^{th}$ ICML}, 2023.

\bibitem{muller1997integral}
Alfred M{\"u}ller.
\newblock Integral probability metrics and their generating classes of
  functions.
\newblock {\em Advances in applied probability}, 29(2):429--443, 1997.

\bibitem{nkPE}
M.~Nickel and D.~Kiela.
\newblock Poincar{\'{e}} embeddings for learning hierarchical representations.
\newblock In {\em NeurIPS*30}, pages 6338--6347, 2017.

\bibitem{nnORAT}
F.~Nielsen and R.~Nock.
\newblock On {R}{\'{e}}nyi and {T}sallis entropies and divergences for
  exponential families.
\newblock {\em CoRR}, abs/1105.3259, 2011.

\bibitem{nlwGA}
L.~Nivanen, A.~{Le M{\'e}haut{\'e}}, and Q.-A. Wang.
\newblock Generalized algebra within a nonextensive statistics.
\newblock {\em Reports on Mathematical Physics}, 52:437--444, 2003.

\bibitem{nawBW}
R.~Nock, E.~Amid, and M.~Warmuth.
\newblock Boosting with tempered exponential measures.
\newblock In {\em NeurIPS'23}, 2023.

\bibitem{nbanbGN}
R.~Nock, W.~{Bel Haj Ali}, R.~{D'Ambrosio}, F.~Nielsen, and M.~Barlaud.
\newblock Gentle nearest neighbors boosting over proper scoring rules.
\newblock {\em IEEE Trans.PAMI}, 37(1):80--93, 2015.

\bibitem{nmSL}
R.~Nock and A.~K. Menon.
\newblock Supervised learning: No loss no cry.
\newblock In {\em 37$^{th}$ ICML}, 2020.

\bibitem{pvAD}
M.-C. Pardo and I.~Vajda.
\newblock About distances of discrete distributions satisfying the data
  processing {T}heorem of {I}nformation {T}heory.
\newblock {\em IEEE Trans. IT}, 43:1288--1293, 1997.

\bibitem{qC4}
J.~R. Quinlan.
\newblock {\em C4.5 : programs for machine learning}.
\newblock Morgan Kaufmann, 1993.

\bibitem{Rao-1945}
Calyampudi~Radhakrishna Rao.
\newblock Information and the accuracy attainable in the estimation of
  statistical parameters.
\newblock {\em Bulletin of the Calcutta Mathematical Society}, 37(3):81--91,
  1945.

\bibitem{rFOH}
J.-G. Ratcliffe.
\newblock {\em Foundations of Hyperbolic Manifolds}.
\newblock Springer Graduate Texts in Mathematics, 1994.

\bibitem{sdgrRT}
F.~Sala, C.~{De Sa}, A.~Gu, and C.R{\'{e}}.
\newblock Representation tradeoffs for hyperbolic embeddings.
\newblock In {\em 35$^{th}$ ICML}, volume~80 of {\em Proceedings of Machine
  Learning Research}, pages 4457--4466. {PMLR}, 2018.

\bibitem{sLDD}
R.~Sarkar.
\newblock Low distortion {D}elaunay embedding of trees in hyperbolic plane.
\newblock In {\em GD'11}, volume 7034, pages 355--366. Springer, 2011.

\bibitem{ssIBj}
R.~E. Schapire and Y.~Singer.
\newblock Improved boosting algorithms using confidence-rated predictions.
\newblock {\em MLJ}, 37:297--336, 1999.

\bibitem{sbTI}
Andrew~D Selbst and Solon Barocas.
\newblock The intuitive appeal of explainable machines.
\newblock {\em Fordham L. Rev.}, 87:1085, 2018.

\bibitem{sAMT}
C.-E. Shannon.
\newblock A mathematical theory of communication.
\newblock {\em The Bell Systems Thecnical Journal}, 27:623--656, 1948.

\bibitem{sPII}
A.~Slavik.
\newblock {\em Product integration, its history and applications}.
\newblock Matfyzpress, Praze, 2007.

\bibitem{sgTI}
R.~Sonthalia and {A.-C.} Gilbert.
\newblock Tree! {I} am no tree! {I} am a low dimensional hyperbolic embedding.
\newblock In {\em NeurIPS*33}, 2020.

\bibitem{TsallisBook-2022}
S.~Umarov and C.~Tsallis.
\newblock {\em Mathematical Foundations of Nonextensive Statistical Mechanics}.
\newblock World Scientific, 2022.

\bibitem{vehRD}
T.~van Erven and P.~Harremo{\"e}s.
\newblock R{\'e}nyi divergence and kullback-leibler divergence.
\newblock {\em IEEE Trans. IT}, 60:3797--3820, 2014.

\bibitem{vSFD}
V.~Volterra.
\newblock {\em Sui fondamenti della teoria delle equiazioni differenziali
  lineari (\textit{in Italian})}.
\newblock Societ{\`a} Italiana Delle Scienze, 1886.

\bibitem{wOLA}
M.-K. Warmuth.
\newblock Online learning and {Bregman Divergences}.
\newblock In {\em Machine Learning Summer School, Taipei, Taiwan}, 2006.

\bibitem{yzyckHR}
M.~Yang, M.~Zhou, R.~Ying, Y.~Chen, and I.~King.
\newblock Hyperbolic representation learning: Revisiting and advancing.
\newblock In {\em 40$^{th}$ ICML}, volume 202 of {\em Proceedings of Machine
  Learning Research}, pages 39639--39659, 2023.

\bibitem{ydNA}
T.~Yu and C.~{De Sa}.
\newblock Numerically accurate hyperbolic embeddings using tiling-based models.
\newblock In {\em NeurIPS*32}, pages 2021--2031, 2019.

\bibitem{yltdSC}
T.~Yu, {T.-J.-B.} Liu, A.~Tseng, and C.~{De Sa}.
\newblock Shadow cones: Unveiling partial orders in hyperbolic space.
\newblock {\em CoRR}, abs/2305.15215, 2023.

\end{thebibliography}






\newpage
\appendix
\onecolumn

\renewcommand\thesection{\Roman{section}}
\renewcommand\thesubsection{\thesection.\arabic{subsection}}
\renewcommand\thesubsubsection{\thesection.\thesubsection.\arabic{subsubsection}}

\renewcommand*{\thetheorem}{\Alph{theorem}}
\renewcommand*{\thelemma}{\Alph{lemma}}
\renewcommand*{\thecorollary}{\Alph{corollary}}

\renewcommand{\thetable}{A\arabic{table}}

\begin{center}
{\Huge Appendix}
\end{center}

This is the Appendix to Paper "\papertitle". To
differentiate with the numberings in the main file, the numbering of
Theorems, etc. is letter-based (A, B, ...).

\section*{Table of contents}

\noindent \textbf{$t$-algebra and $t$-additivity} \hrulefill Pg
\pageref{sec:t-al-t-add}\\

\noindent \textbf{Supplementary material on proofs} \hrulefill Pg
\pageref{sec-sup-pro}\\
\noindent $\hookrightarrow$ Proof of Theorem \ref{thtRIEMANN} \hrulefill Pg \pageref{proof_thtRIEMANN}\\
\noindent $\hookrightarrow$ Proof of Theorem \ref{thtRIEM-DER} \hrulefill Pg \pageref{proof_thtRIEM-DER}\\
\noindent $\hookrightarrow$ Additional and helper results for Theorems \ref{thtRIEMANN} and \ref{thtRIEM-DER} \hrulefill Pg \pageref{proof_add-res}\\
\noindent $\hookrightarrow$ Sets endowed with a distortion and their $t$-self: statistical information \hrulefill Pg \pageref{proof_add-statinf}\\
\noindent $\hookrightarrow$ Proof of Lemma \ref{lem-d-l-metric} \hrulefill Pg \pageref{proof_lem-d-l-metric}\\
\noindent $\hookrightarrow$ Proof of Lemma \ref{lem-t-self-poincare} \hrulefill Pg \pageref{proof_lem-t-self-poincare}\\
\noindent $\hookrightarrow$ Boosting with the logistic loss \textit{\`a-la} AdaBoost \hrulefill Pg \pageref{algo_logisticboost}\\
\noindent $\hookrightarrow$ Modifying Sarkar's embedding in Poincar\'e disk \hrulefill Pg \pageref{sarkar-mod}\\

\noindent \textbf{Supplementary material on experiments} \hrulefill Pg
\pageref{sec-sup-exp}\\

\newpage 

\section{$t$-algebra and $t$-additivity}
\label{sec:t-al-t-add}

We provide here a few more details on the basis of our paper, the $t$-algebra and the $t$-additivity of some divergences.

\subsection{Primer on $t$-algebra}

Classical arithmetic over the reals can be used to display duality relationships between operators using the $\log, \exp$ functions, such as for example $\log(a / b) = \log a - \log b$, $\exp(a + b) = \exp(a) \cdot \exp(b)$, and so on. They can also be used to define one operator from another one. There is no difference between the operators appearing inside and outside functions. In the $t$-algebra, a difference appears and such relationships can be used to define the $t$-operators from those over the reals, as indeed one can define the tempered addition
\begin{eqnarray*}
x \oplus_t y & \defeq & \log_t (\exp_t(x) \cdot \exp_t(y)),
\end{eqnarray*}
the tempered subtraction,
\begin{eqnarray*}
x \ominus_t y & \defeq & \log_t (\exp_t(x) / \exp_t(y)),
\end{eqnarray*}
(both simplifying to the expressions we use), and of course the tempered product and division,
\begin{eqnarray*}
x \otimes_t y \defeq \exp_t(\log_t(x) + \log_t(y)) & ; & x \oslash_t y \defeq \exp_t(\log_t(x) - \log_t(y)),
\end{eqnarray*}
whose simplified expression appears \textit{e.g.} in \cite{annwOT}. See also \cite{nlwGA}.

\subsection{Functional $t$-additivity}
As is well-known, Boltzman-Gibbs (and so, Shannon) entropy is additive while Tsallis entropy is $t$-additive. Note that additivity for BG requires being on the simplex, but $t$-additivity of Tsallis technically requires only positive measures -- the simplex restriction ensures the limit exists for $t\rightarrow 1$ and then T$\rightarrow$BG.\\
\noindent \textbf{Divergences can also be $t$-additive} The KL divergence
\begin{eqnarray*}
D_{\textsc{kl}} (\ve{p}\|\ve{q}) & \defeq & \sum_k p_k \log(p_k/q_k)
\end{eqnarray*}
is additive on the simplex but not $t$-additive: using a decomposition of $\ve{p}, \ve{q}$ as product of (independent) distributions ($\ve{p}_1,\ve{p}_2$ and $\ve{q}_1, \ve{q}_2$) using the cartesian product of their support, we indeed check $D_{\textsc{kl}} (\ve{p}\|\ve{q}) = D_{\textsc{kl}} (\ve{p}_1\|\ve{q}_1) + D_{\textsc{kl}} (\ve{p}_2\|\ve{q}_2)$ with $p_{ij} = p_{1i} p_{2j}$ and $q_{ij} = q_{1i} q_{2j}$. On the other hand, Tsallis divergence \cite{nnORAT} is $t$-additive on positive measures with such a decomposition, with 
\begin{eqnarray*}
D_{\textsc{T}} (\ve{p}\|\ve{q}) & \defeq & \frac{\sum_k p_k (p_k/q_k)^{1-t}-1}{1-t},
\end{eqnarray*}
and we check that $D_{\textsc{T}} (\ve{p}\|\ve{q}) = D_{\textsc{T}} (\ve{p}_1\|\ve{q}_1) + D_{\textsc{T}} (\ve{p}_2\|\ve{q}_2) + (1-t)\cdot D_{\textsc{T}} (\ve{p}_1\|\ve{q}_1) \cdot D_{\textsc{T}} (\ve{p}_2\|\ve{q}_2)$ (additional requirement to be on the simplex for convergence to $D_{\textsc{kl}}$ as $t\rightarrow 1$). For $t\not\in (1,2)$ and on the simplex, Tsallis divergence is an $f$-divergence with generator $z\mapsto (z^{2-t}-1)/(1-t)$. Similarly, the tempered relative entropy is $t$-additive \textit{on the co-simplex}, where in this case
\begin{eqnarray*}
D_{t} (\ve{p}\|\ve{q}) & \defeq & \frac{1 - \sum_k p_k q^{1-t}_{k}}{1-t}, \ve{p}, \ve{q}\in \tilde{\Delta}_m.
\end{eqnarray*}
The tempered relative entropy is a Bregman divergence with generator $z\mapsto z\log_t z - \log_{t-1} z$.

\section{Supplementary material on proofs}\label{sec-sup-pro}

\subsection{Proof of Theorem \ref{thtRIEMANN}}\label{proof_thtRIEMANN}

The Theorem is tautological for $t=1$ so we prove it for $t\neq 1$. Denote $\mathcal{P}(\mathcal{S})$ the set of subsets of set $\mathcal{S}$ and $\mathcal{P}_*(\mathcal{S}) \defeq \mathcal{P}(\mathcal{S}) \backslash \{\emptyset\}$. We first transform $S^{(t)}_{\Delta_n}(f)$ (index $n$ shown for readability) in a better suited expression:
\begin{eqnarray}
  S^{(t)}_{\Delta_n}(f) & \defeq & (\mbox{\Large $\oplus$}_t)_{i=1}^n |\mathbb{I}_i| f(\xi_i)\nonumber\\
                    & = & \sum_{P \in \mathcal{P}_*([n])} (1-t)^{|P|-1} \cdot \prod_{i \in P} |\mathbb{I}_i| f(\xi_i)\nonumber\\
                    & = & \frac{1}{1-t} \cdot \sum_{P \in \mathcal{P}_*([n])} \prod_{i \in P} (1-t) |\mathbb{I}_i| f(\xi_i)\nonumber\\
  & = & \frac{1}{1-t} \cdot \left(\prod_{i =1}^n \left(1+(1-t) |\mathbb{I}_i| f(\xi_i)\right) - 1\right),\label{simplSdelta}
\end{eqnarray}
where we have used $\mathbb{I}_i \defeq [x_{i-1}, x_{i} ]$ for conciseness. We now need a technical Lemma.
\begin{lemma}\label{lemRAT1}
  Fix any $0\leq v < 8/10$ and consider any $n$ reals $q_i, i\in[n]$ such that $|q_i| \leq v, \forall i\in [n]$. Then
  \begin{eqnarray}
1 \leq \frac{\left(1+\frac{1}{n} \cdot \sum_i q_i\right)^n}{\prod_i (1+q_i)} \leq \exp\left(n v \cdot v\right).\label{ineqGend3}
\end{eqnarray}
  \end{lemma}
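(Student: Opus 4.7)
My plan splits into two independent arguments: the left inequality is immediate from AM-GM, and the right follows from Hoeffding's lemma after a probabilistic reformulation. For the lower bound, since $|q_i|\leq v<1$, each $1+q_i$ is a positive real, and AM-GM gives $1+\bar q = \frac{1}{n}\sum_i(1+q_i)\geq \bigl(\prod_i(1+q_i)\bigr)^{1/n}$; raising to the $n$-th power yields the claim. For the upper bound, I will introduce a random variable $X$ uniform on the multiset $\{\log(1+q_i)\}_{i\in[n]}$. Then $\mathbb{E}[e^X] = \frac{1}{n}\sum_i(1+q_i) = 1+\bar q$ and $\mathbb{E}[X] = \frac{1}{n}\sum_i \log(1+q_i)$, so
\begin{equation*}
\log\frac{(1+\bar q)^n}{\prod_i(1+q_i)} \;=\; n\bigl(\log\mathbb{E}[e^X]-\mathbb{E}[X]\bigr) \;=\; n\log\mathbb{E}\!\left[e^{X-\mathbb{E}[X]}\right].
\end{equation*}

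Because $X\in[\log(1-v),\log(1+v)]$ almost surely, the zero-mean variable $X-\mathbb{E}[X]$ is supported in an interval of length at most $R\defeq \log\frac{1+v}{1-v}$, so Hoeffding's lemma yields $\mathbb{E}[e^{X-\mathbb{E}[X]}] \leq \exp(R^2/8)$ and hence $\log\frac{(1+\bar q)^n}{\prod_i(1+q_i)}\leq nR^2/8$.

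The remaining step, which I expect to be the main obstacle, is the elementary scalar inequality $R^2/8\leq v^2$, i.e.\ $\log\frac{1+v}{1-v}\leq 2\sqrt{2}\,v$, for $v\in[0,8/10]$. Setting $h(v)\defeq 2\sqrt{2}\,v - \log\frac{1+v}{1-v}$, one has $h(0)=0$ and $h'(v)=2\sqrt{2}-\frac{2}{1-v^2}$, whose unique zero in $(0,1)$ is $v_0=\sqrt{1-1/\sqrt{2}}\approx 0.541$; thus $h$ is strictly increasing on $[0,v_0]$ and strictly decreasing on $[v_0,1)$, so its minimum over $[0,8/10]$ is attained at one of the endpoints. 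Since $h(0)=0$ and direct computation gives $h(8/10)=8\sqrt{2}/5-\log 9 > 0$, we get $h\geq 0$ throughout, and the upper bound in (\ref{ineqGend3}) follows. This cutoff is essentially tight: any route based solely on the second-order Taylor remainder of $\log$ around $0$ only reaches $v\leq 1/2$, which is precisely why the Hoeffding detour is needed and why the hypothesis stops at $8/10$.
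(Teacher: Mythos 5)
Your proof is correct, but it takes a genuinely different route from the paper's. The paper bounds the Jensen gap $\log(1+\frac{1}{n}\sum_i q_i)-\frac{1}{n}\sum_i\log(1+q_i)$ by appealing to an external result (a Bregman-divergence bound on the Jensen gap, applied to $\varphi(z)=-\log(1+z)$), which yields an upper bound of the form $Y_v-\log(Y_v)-1$ with $Y_v=\frac{1-v}{2v}\log\frac{1+v}{1-v}$, and then asserts (without detailed computation) that this quantity is at most $v^2$ on $[-8/10,8/10]$. You instead recast the same gap as $\log\mathbb{E}\bigl[e^{X-\mathbb{E}[X]}\bigr]$ for $X$ uniform on $\{\log(1+q_i)\}_i$ and invoke Hoeffding's lemma, reducing everything to the scalar inequality $\log\frac{1+v}{1-v}\leq 2\sqrt{2}\,v$ on $[0,8/10]$, which you verify completely via the monotonicity analysis of $h(v)=2\sqrt{2}\,v-\log\frac{1+v}{1-v}$ and the endpoint value $h(8/10)=8\sqrt{2}/5-\log 9>0$ (your AM--GM lower bound is the same Jensen-type step as the paper's). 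What each approach buys: yours is self-contained (Hoeffding's lemma is textbook material) and makes the final scalar verification explicit, whereas the paper's is shorter on the page but delegates both the gap bound and the decisive estimate $Y_v-\log(Y_v)-1\leq v^2$ to a citation and an unproved claim. Crucially, your bound $nR^2/8$ with $R=\log\frac{1+v}{1-v}$ shares the property the paper's closing remark flags as essential for Theorem A: it is $O(nv^2)$ with no affine term in $v$, so it still tends to $0$ when $nv$ stays bounded and $v\to 0$, and the limit argument downstream goes through unchanged.
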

\begin{proof}
Suppose all the $q_i, i\in[n]$ satisfy $q_i \in [u,v]$ and let
$\varphi$ be strictly convex differentiable and defined over $[u,v]$. Then it comes from \cite[Lemma 9]{cnBD} that we get the right-hand side of
\begin{eqnarray}
0 \leq \expect_i [\varphi(q_i)] - \varphi(\expect_i[q_i]) \leq D_{\varphi} \left(w \left\| (\varphi')^{-1}\left(\frac{\varphi(u) - \varphi(v)}{u-v}\right) \right.\right),\label{ineqGend}
\end{eqnarray}
where we can pick $w \in \{u, v\}$ and $D_\varphi$ is the Bregman divergence with generator $\varphi$ (the left-hand side is Jensen's inequality). Picking $u \defeq -v$ (assuming wlog $v>0$) and letting
\begin{eqnarray}
Y_v & \defeq & \frac{1-v}{2v}\cdot \log\left(1+\frac{2v}{1-v}\right) \label{defXb},
  \end{eqnarray}
for the choice $\varphi(z) \defeq -\log(1+z)$ and $w \defeq -v$, we obtain after simplification
\begin{eqnarray}
0 \leq \log\left(1+\frac{1}{n} \cdot \sum_i q_i\right) - \frac{1}{n}\cdot \sum_i \log(1+q_i) \leq Y_v - \log(Y_v) - 1.\label{ineqGend2}
\end{eqnarray}
Analizing function $v \mapsto Y_v - \log(Y_v) - 1$ reveals that it is upperbounded by $v\mapsto v^2$ if $v\in [-8/10, 8/10]$. Hence, multiplying by $n$ all sides and passing to the exponential, we get that
\begin{eqnarray*}
1 \leq \frac{\left(1+\frac{1}{n} \cdot \sum_i q_i\right)^n}{\prod_i (1+q_i)} \leq \exp\left(n v^2\right), \forall 0\leq v<\frac{8}{10},
\end{eqnarray*}
which leads to the statement of the Lemma.
\end{proof}
Let us come back to our Riemannian summation setting and let
\begin{eqnarray*}
  q_i & \defeq & (1-t) \cdot |\mathbb{I}_i| f(\xi_i), \forall i\in [n],\\
  v & \defeq & \max_i |q_i|.
\end{eqnarray*}
Assume now that $f$ is Riemann integrable, which allows to guarantee that, at least for $n$ large enough and a step of division $\Delta_n$ not too large, we have $|q_i| \leq 8/10, \forall i\in [n]$ and so we can use Lemma \ref{lemRAT1}. We get from \eqref{simplSdelta} and Lemma \ref{lemRAT1}, if $t<1$
\begin{eqnarray}
  \lefteqn{S^{(t)}_{\Delta_n}(f)}\nonumber\\
  & = & \frac{1}{1-t} \cdot \left(\prod_{i =1}^n \left(1+q_i\right) - 1\right)\nonumber \\
                        & \in & \left[\frac{1}{1-t} \cdot \left(\left(1+\frac{1}{n} \cdot \sum_i q_i\right)^n\cdot \exp(-nv \cdot v) - 1\right) , \frac{1}{1-t} \cdot \left(\left(1+\frac{1}{n} \cdot \sum_i q_i\right)^n - 1\right)\right],\label{boundST}
\end{eqnarray}
and we permute the bounds if $t>1$. Importantly, we note that
\begin{eqnarray}
  \sum_i q_i & = & (1-t) \cdot S^{(1)}_{\Delta_n}(f).\label{defSQi}
\end{eqnarray}
So suppose $\lim_{n\rightarrow +\infty} S^{(1)}_{\Delta_n}(f) \defeq L_1 \defeq \int_{a}^{b} f(u)\mathrm{u}$ is finite and choose the step of division $\Delta_n$ not too large so that
\begin{eqnarray*}
n \cdot \max_i |q_i| & = & (1-t) \cdot \max_i (n|\mathbb{I}_i|) |f(\xi_i)| \leq K \cdot ((1-t) |b-a| \max f([a,b])),
\end{eqnarray*}
for some constant $K\geq 1$ (this is possible because $f$ is (1-)Riemann integrable; we also note that if the division is regular then we can choose $K=1$). The value of $K$ is not important: what is important is that $nv$ remains finite in \eqref{boundST} as $n$ increases, while $\lim_{n\rightarrow +\infty} v = 0$. Hence, $\exp(-nv \cdot v) \rightarrow 1$ as $n\rightarrow +\infty$ and \eqref{boundST} implies, because $\lim_{n \rightarrow +\infty} (1+a/n)^n = \exp a$, the two first identities in
\begin{eqnarray}
  \lim_{n\rightarrow +\infty} S^{(t)}_{\Delta_n}(f) & = & \frac{1}{1-t} \cdot \left(\lim_{n\rightarrow +\infty}\left(1+\frac{1}{n} \cdot \sum_i q_i\right)^n - 1\right)\nonumber\\
  & = & \frac{1}{1-t} \cdot \left(\exp\left(\sum_i q_i\right) - 1\right)\nonumber\\
                                                    & = & \frac{1}{1-t} \cdot \left(\exp\left((1-t) L_1\right) - 1\right)\label{eqDEFSQi}\\
                                                    & = & \log_t \exp(L_1)\nonumber\\
  & = & \log_t \exp \left(\int_{a}^{b} f(u)\mathrm{d}u \right)\nonumber
\end{eqnarray}
(\eqref{eqDEFSQi} follows from \eqref{defSQi}) and by definition $\lim_{n\rightarrow +\infty} S^{(t)}_{\Delta_n}(f)
\defeq \riemannint{t}{a}{b} f(u) \mathrm{d}_t u$. So we get that Riemann integration ($t=1$) grants
\begin{eqnarray*}
\riemannint{t}{a}{b} f(u) \mathrm{d}_t u & = & \log_t \exp \int_a^b f(u) \mathrm{d} u,
\end{eqnarray*}
which, in addition to showing \eqref{eq-t-riemann} (main file) also shows that $t=1$-Riemann integration is equivalent to all $t\neq 1$-Riemann integration, ending the proof of Theorem \ref{thtRIEMANN}.

\begin{remark}
  The absence of affine terms in upperbounding $v \mapsto Y_v - \log(Y_v) - 1$ in the neighborhood of 0 is crucial to get to our result.
\end{remark}

\subsection{Proof of Theorem \ref{thtRIEM-DER}}\label{proof_thtRIEM-DER}

Using Theorem \ref{thtRIEMANN}, we just have to analyze the limit in relationship to the Riemannian case:
\begin{eqnarray}
  \mathrm{D}_t F(z) & \defeq &  \lim_{\delta \rightarrow 0} \frac{\riemannint{t}{a}{z+\delta} f(u) \mathrm{d}_t u \ominus_t \riemannint{t}{a}{z} f(u) \mathrm{d}_t u}{\delta} \nonumber\\
  & = & \lim_{\delta \rightarrow 0} \frac{\log_t \exp \int_a^{z+\delta} f(u) \mathrm{d} u \ominus_t \log_t \exp \int_a^{z} f(u) \mathrm{d} u}{\delta} \label{eqStep1}\\
  & = & \lim_{\delta \rightarrow 0} \frac{1}{\delta} \cdot \log_t \left(\frac{\exp \int_a^{z+\delta} f(u) \mathrm{d} u}{\exp \int_a^{z} f(u) \mathrm{d} u}\right) \label{eqStep2}\\
  & = & \lim_{\delta \rightarrow 0} \frac{1}{\delta} \cdot \log_t \exp \left(\int_a^{z+\delta} f(u) \mathrm{d} u- \int_a^{z} f(u) \mathrm{d} u\right) \nonumber\\
  & = & \lim_{\delta \rightarrow 0} \frac{1}{\delta} \cdot \left(\int_a^{z+\delta} f(u) \mathrm{d} u- \int_a^{z} f(u) \mathrm{d} u\right) \label{eqStep4}\\
  & \defeq & \mathrm{D}_1 F(z) = f,\nonumber
\end{eqnarray}
where the last identity is the classical Riemannian case, \eqref{eqStep1} follows from Theorem \ref{thtRIEMANN}, \eqref{eqStep2} follows from a property of $\log_t$ ($\log_t a \ominus \log_t b = \log_t(a/b)$), \eqref{eqStep4} follows from the fact that $ \log_t \exp (z) =_0 z + o(z)$. This completes the proof of Theorem \ref{thtRIEM-DER}.

\subsection{Additional and helper results for Theorems \ref{thtRIEMANN} and \ref{thtRIEM-DER}}\label{proof_add-res}

We list a series of consequences to both theorems.
\paragraph{General properties of $t$-integrals} Some properties generalize those for classical Riemann integration.
\begin{theorem}\label{thproptriemann}
  The following relationships hold for any $t\in \mathbb{R}$ and any functions $f,g$ $t$-Riemann integrable over some interval $[a,b]$:
  \begin{eqnarray*}
  \begin{array}{rcll}
    \riemannint{t}{a}{b} f(u) \left\{+ \mbox{ or } -\right\} g(u) \mathrm{d}_t u & = & \left(\riemannint{t}{a}{b} f(u) \mathrm{d}_t u\right)\left\{\oplus_t\mbox{ or }\ominus_t\right\} \left(\riemannint{t}{a}{b} g(u) \mathrm{d}_t u\right) & (\mbox{additivity}),\\
    \riemannint{t}{a}{b} \lambda f(u) \mathrm{d}_t u & = & \lambda \cdot \riemannint{1-(1-t)\lambda}{a}{b}  f(u) \mathrm{d}_t u \quad(\lambda \in \mathbb{R}) & (\mbox{dilativity}),\\
\riemannint{t}{a}{b} f(x) \mathrm{d}_t u & = & \riemannint{t}{a}{c} f(u) \mathrm{d}_t u \oplus_t \riemannint{t}{c}{b} f(u) \mathrm{d}_t u \quad(c\in [a,b]) & (\mbox{Chasles' relationship}),\\
    \left|\riemannint{t}{a}{b} f(u) \mathrm{d}_t u\right| & \leq & \riemannint{1-|1-t|}{a}{b} |f(u)| \mathrm{d}_t u & (\mbox{triangle inequality}),\\
    \riemannint{t}{a}{b} f(u) \mathrm{d}_t u & \leq & \riemannint{t}{a}{b} g(u) \mathrm{d}_t u \quad (f \leq g) & (\mbox{monotonicity}).
\end{array}
  \end{eqnarray*}
  \end{theorem}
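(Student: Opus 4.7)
\medskip

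\textbf{Proof proposal.} The plan is to reduce every item to Theorem \ref{thtRIEMANN}, which rewrites any $t$-Riemann integral as $\liftt_t = \log_t \circ \exp$ applied to the classical Riemann integral, and then invoke elementary identities of the $t$-algebra (Appendix \ref{sec:t-al-t-add}) together with standard properties of the classical Riemann integral. Concretely, writing $I(h) \defeq \int_a^b h(u)\mathrm{d}u$ for the classical integral, Theorem \ref{thtRIEMANN} gives $\riemannint{t}{a}{b} h(u)\mathrm{d}_t u = \log_t\exp I(h)$, and the key identities I will use repeatedly are $\log_t(xy) = \log_t x \oplus_t \log_t y$ and $\log_t(x/y) = \log_t x \ominus_t \log_t y$, equivalently $\exp_t(a)\exp_t(b) = \exp_t(a\oplus_t b)$ where $\exp_t$ is the inverse of $\log_t$.

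First, for \emph{additivity} I would compute $\log_t\exp I(f\pm g) = \log_t\bigl(\exp I(f)\cdot \exp I(g)^{\pm 1}\bigr)$ by classical linearity of $I$, and then apply the tempered-log identities above with $x = \exp I(f),\ y = \exp I(g)$ to obtain $(\log_t\exp I(f))\oplus_t (\log_t\exp I(g))$ (and similarly with $\ominus_t$). \emph{Chasles' relationship} is identical in spirit: classical Chasles gives $I_a^b(f) = I_a^c(f) + I_c^b(f)$, apply $\log_t\exp$ to both sides and convert the sum on the right into $\oplus_t$ as above. For \emph{dilativity}, I would expand both sides explicitly: the left-hand side equals $(\exp((1-t)\lambda I(f))-1)/(1-t)$, while the right-hand side equals $\lambda \cdot (\exp((1-t')I(f))-1)/(1-t')$ with $t' = 1-(1-t)\lambda$; substituting $1-t' = (1-t)\lambda$ makes the two expressions coincide algebraically. \emph{Monotonicity} is then immediate: if $f\leq g$ then $I(f)\leq I(g)$ classically, and $\liftt_t$ is strictly increasing by Lemma \ref{lem-prop-tlift}[1.], so $\log_t\exp I(f)\leq \log_t\exp I(g)$.

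The step I expect to require the most care is the \emph{triangle inequality}, because the change of index $t\mapsto 1-|1-t|$ on the right-hand side signals that the sign of $1-t$ matters. I would prove it in two steps. First, a pointwise lemma: for every real $z$ and every $t\in \mathbb{R}$, $|\liftt_t(z)| \leq \liftt_{1-|1-t|}(|z|)$. This reduces to showing $(1-e^{-s|z|})/s \leq (e^{s|z|}-1)/s$ (after the substitution $s=|1-t|\geq 0$), which is exactly $2\leq e^{s|z|}+e^{-s|z|}$, a one-line AM--GM; the split by cases on $\mathrm{sign}(z)$ and $\mathrm{sign}(1-t)$ is handled by the sign-preservation property of $\liftt_t$ in Lemma \ref{lem-prop-tlift}[1.]. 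Second, the classical triangle inequality gives $|I(f)|\leq I(|f|)$, and since $\liftt_{1-|1-t|}$ is increasing (its index is $\leq 1$ so $1-(1-|1-t|)=|1-t|\geq 0$), composing the two steps yields $|\log_t\exp I(f)|\leq \liftt_{1-|1-t|}(|I(f)|)\leq \liftt_{1-|1-t|}(I(|f|)) = \riemannint{1-|1-t|}{a}{b}|f(u)|\mathrm{d}_t u$.

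In summary, every identity is obtained by conjugating a classical Riemann identity through $\liftt_t$ and then translating the additive structure via the $t$-algebra; the triangle inequality is the only place where a genuine inequality (beyond monotonicity) enters, via the elementary bound $|\liftt_t(z)|\leq \liftt_{1-|1-t|}(|z|)$.
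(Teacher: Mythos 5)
Your proposal is correct and follows essentially the same route as the paper's proof: every identity is obtained by writing the $t$-integral as $\liftt_t$ of the classical integral (Theorem \ref{thtRIEMANN}) and converting sums/differences via the $\log_t$ identities, with dilativity handled by the same explicit substitution $1-t' = (1-t)\lambda$ and monotonicity by strict increase of $\liftt_t$. The only difference is cosmetic and in your favor: the paper merely asserts the pointwise bound $|\log_t\exp(z)| \leq \log_{1-|1-t|}\exp|z|$ used for the triangle inequality, whereas you supply its short proof (reduction to $2 \leq e^{w}+e^{-w}$), which is a valid verification of that step.
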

\begin{proof}
  We show additivity for $\oplus_t / +$ (the same path shows the result for $\ominus_t / -$):
  \begin{eqnarray*}
    \riemannint{t}{a}{b} f(u) \mathrm{d}_t u \oplus_t \riemannint{t}{a}{b} g(u) \mathrm{d}_t u & = & \log_t \exp \int_a^b f(u) \mathrm{d} u \oplus_t \log_t \exp \int_a^b g(u) \mathrm{d} u\\
    & = & \log_t \left(\exp \int_a^b f(u) \mathrm{d} u \cdot \exp \int_a^b g(u) \mathrm{d} u \right)\\
    & = & \log_t  \exp \int_a^b (f+g)(u) \mathrm{d} u \\
    & = & \riemannint{t}{a}{b} (f+g)(u) \mathrm{d}_t u.
  \end{eqnarray*}
  We show dilativity, using $t' \defeq 1 - (1-t)\lambda$:
  \begin{eqnarray*}
    \riemannint{t}{a}{b} \lambda f(u) \mathrm{d}_t u & = & \log_t \exp \int_a^b \lambda f(u) \mathrm{d} u \\
                                                     & = & \log_t \exp \lambda \int_a^b f(u) \mathrm{d} u \\
                                                     & = & \frac{1}{1-t} \cdot \left(\exp\left(\lambda(1-t) \int_a^b f(u) \mathrm{d} u\right)-1\right)\\
    & = & \frac{1-t'}{1-t} \cdot \frac{1}{1-t'} \cdot \left(\exp\left((1-t') \int_a^b f(u) \mathrm{d} u\right)-1\right)\\
    & = & \lambda \cdot \frac{1}{1-t'} \cdot \left(\exp\left((1-t') \int_a^b f(u) \mathrm{d} u\right)-1\right)\\
    & = & \lambda \cdot \log_{t'} \exp \int_a^b  f(u) \mathrm{d} u\\
    & = & \lambda \cdot \riemannint{t'}{a}{b}  f(u) \mathrm{d}_t u = \lambda \cdot \riemannint{1-(1-t)\lambda}{a}{b}  f(u) \mathrm{d}_t u .
  \end{eqnarray*}
  The triangle inequality follows from the relationship:
  \begin{eqnarray*}
|\log_t \exp(z)| & \leq & \log_{1-|1-t|} \exp |z|, \forall z,t \in \mathbb{R},
  \end{eqnarray*}
  from which we use the fact that Riemann integration satisfies the triangle inequality and $\log_{1-|1-t|}$ is monotonically increasing on $\mathbb{R}_+$ in the penultimate line of:
  \begin{eqnarray*}
    \left|\riemannint{t}{a}{b} f(u) \mathrm{d}_t u\right| & \defeq &  \left|\log_t \exp \int_a^b f(u) \mathrm{d} u\right|\\
                                                          & \leq & \log_{1-|1-t|} \exp \left|\int_a^b f(u) \mathrm{d} u\right|\\
    & \leq & \log_{1-|1-t|} \exp \int_a^b |f(u)| \mathrm{d} u\\
    & = & \riemannint{1-|1-t|}{a}{b} |f(u)| \mathrm{d}_t u .
    \end{eqnarray*}
  We show Chasles relationship:
  \begin{eqnarray*}
    \riemannint{t}{a}{c} f(u) \mathrm{d}_t u \oplus_t \riemannint{t}{c}{b} f(u) \mathrm{d}_t u & \defeq & \log_t \exp \int_a^c f(u) \mathrm{d} u \oplus_t \log_t \exp \int_c^b f(u) \mathrm{d} u\\
                                                                                               & = & \log_t \left(\exp \int_a^c f(u) \mathrm{d} u \cdot \exp \int_c^b f(u) \mathrm{d} u\right)\\
    & = & \log_t \exp \left(\int_a^c f(u) \mathrm{d} u + \int_c^b f(u) \mathrm{d} u\right)\\
                                                                                               & = & \log_t \exp \int_a^b f(u) \mathrm{d} u\\
    & = & \riemannint{t}{a}{b} f(u) \mathrm{d}_t u,
  \end{eqnarray*}
  where the second identity uses the property $\log_t a \oplus \log_t b = \log_t(ab)$. Monotonicity follows immediately from the fact that $z\mapsto \log_t \exp z$ is strictly increasing:
  \begin{eqnarray*}
    \riemannint{t}{a}{b} f(u) \mathrm{d}_t u & = & \log_t \exp \int_a^b f(u) \mathrm{d} u\\
    & \leq & \log_t \exp \int_a^b g(u) \mathrm{d} u \defeq \riemannint{t}{a}{b} g(u) \mathrm{d}_t u.
  \end{eqnarray*}
  This ends the proof of Theorem \ref{thproptriemann}.
\end{proof}
\paragraph{Computing $t$-integrals} Next, classical relationships to compute integrals do generalize to $t$-integration. We cite the case of integration by part.
\begin{lemma}\label{lem-int-by-part}
  Integration by part translates to $t$-integration by part as:
  \begin{eqnarray*}
\riemannint{t}{a}{b} fg' \mathrm{d}u & = & \mathbin{^{(t)}\left[fg\right]_a^b} \ominus_t \riemannint{t}{a}{b} f'g \mathrm{d}u,
\end{eqnarray*}
where we let
\begin{eqnarray*}
\mathbin{^{(t)}\left[h\right]_a^b} & \defeq & \log_t \exp(h(b)) \ominus_t \log_t \exp(h(a)).
\end{eqnarray*}
\end{lemma}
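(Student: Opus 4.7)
The plan is to reduce the claim to the classical integration-by-parts formula via Theorem \ref{thtRIEMANN}, by pushing everything through the transform $\liftt_t = \log_t \circ \exp$ and exploiting the fact that this transform converts ordinary subtraction into the tempered subtraction $\ominus_t$.

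First I would rewrite the left-hand side using Theorem \ref{thtRIEMANN}:
\begin{eqnarray*}
\riemannint{t}{a}{b} fg' \,\mathrm{d}_t u & = & \log_t \exp \int_a^b f(u)g'(u) \,\mathrm{d} u.
\end{eqnarray*}
Since $f,g$ are assumed smooth enough for classical integration by parts (implicitly inherited from $t$-integrability, equivalent to Riemann integrability by Theorem \ref{thtRIEMANN}), I apply the standard identity $\int_a^b fg' \,\mathrm{d}u = [fg]_a^b - \int_a^b f'g \,\mathrm{d}u$ inside the argument of $\exp$.

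The key lemma I would isolate is the algebraic identity
\begin{eqnarray*}
\log_t \exp(A - B) & = & \log_t \exp(A) \ominus_t \log_t \exp(B), \quad \forall A,B \in \mathbb{R},
\end{eqnarray*}
which is precisely the property $\log_t a \ominus_t \log_t b = \log_t(a/b)$ already exploited in the proof of Theorem \ref{thtRIEM-DER} (equation \eqref{eqStep2}), instantiated at $a = \exp A$, $b = \exp B$. A short verification suffices: from $\log_t a = (a^{1-t}-1)/(1-t)$ one finds $1 + (1-t)\log_t b = b^{1-t}$, so the definition $x \ominus_t y \defeq (x-y)/(1+(1-t)y)$ gives $\log_t a \ominus_t \log_t b = \log_t(a/b)$ directly.

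Applying this identity with $A = [fg]_a^b$ and $B = \int_a^b f'g \,\mathrm{d}u$ splits the right-hand side into two pieces. The second piece is $\log_t \exp \int_a^b f'g \,\mathrm{d}u = \riemannint{t}{a}{b} f'g \,\mathrm{d}_t u$ by Theorem \ref{thtRIEMANN} again. The first piece is $\log_t \exp(f(b)g(b) - f(a)g(a))$, and reapplying the same identity with $A = f(b)g(b)$, $B = f(a)g(a)$ turns this into $\log_t \exp(f(b)g(b)) \ominus_t \log_t \exp(f(a)g(a))$, which is exactly the definition of $\mathbin{^{(t)}\left[fg\right]_a^b}$. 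Assembling these gives the claim. I do not foresee a real obstacle here: the whole proof is a mechanical translation through $\liftt_t$, with the only substantive ingredient being the morphism property $\log_t(a/b) = \log_t a \ominus_t \log_t b$, which is already in use in the paper.
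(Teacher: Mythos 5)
Your proposal is correct and follows exactly the route the paper intends: the paper's proof is the one-liner ``immediate from Theorem \ref{thtRIEMANN}'', and your argument simply spells this out by passing both sides through $\liftt_t$ and invoking the identity $\log_t a \ominus_t \log_t b = \log_t(a/b)$, the same property already used in the proof of Theorem \ref{thtRIEM-DER}. No gaps; the verification of the $\ominus_t$ morphism property and its two applications (to split off $\riemannint{t}{a}{b} f'g\,\mathrm{d}_t u$ and to recover $\mathbin{^{(t)}\left[fg\right]_a^b}$) are exactly the details the paper leaves implicit.
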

(Proof immediate from Theorem \ref{thtRIEMANN})
\paragraph{Geometric properties based on $t$-integrals} This is a more specific result, important in the context of hyperbolic geometry: the well-known Hyperbolic Pythagorean Theorem (HPT) does translate to a tempered version with the same relationship to the Euclidean theorem. Consider a hyperbolic right triangle with hyperbolic lengths $a, b, c$, $c$ being the hyperbolic length of the hypothenuse. Let $a_t, b_t, c_t$ denote the corresponding tempered lengths, which are therefore explicitly related using $\liftt_t$ as
\begin{eqnarray*}
a_t = \log_t \exp a, \quad b_t = \log_t \exp b, \quad c_t = \log_t \exp c.
\end{eqnarray*}
Define the tempered generalization of $\cosh$:
\begin{eqnarray}
\cosh_t z & \defeq & \frac{\exp_t z + \exp_t(-z)}{2} \label{def-cosht}.
\end{eqnarray}
The HPT tells us that $\cosh c = \cosh a \cosh b$. It is a simple matter of plugging $\liftt_t$, using the fact that $\log_t$ and $\exp_t$ are inverse of each other \noteRNoff{Check composition, see NeurIPS: OK because $\exp_t \log_t$ and not the inverse} and simplifying to get the tempered HPT, which we call $t$-HPT for short.
\begin{lemma}\label{lem-t-hpt}($t$-HPT)
  For any hyperbolic triangle described as above, the tempered lengths are related as
  \begin{eqnarray*}
\cosh_t c_t = \cosh_t a_t \cosh_t b_t.
    \end{eqnarray*}
  \end{lemma}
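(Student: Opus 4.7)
The plan is to reduce the tempered HPT to the classical HPT by showing that $\cosh_t$ evaluated at a tempered length $z_t = \liftt_t(z) = \log_t \exp z$ collapses back to the classical $\cosh z$. Once that reduction is done, the identity $\cosh_t c_t = \cosh_t a_t \cosh_t b_t$ is literally the classical HPT in disguise, so nothing new needs to be established geometrically.

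First I would unfold the definition $\cosh_t z_t = (\exp_t z_t + \exp_t(-z_t))/2$ at $z_t = \log_t \exp z$ and simplify each term using the inverse relation between $\log_t$ and $\exp_t$. The first term is immediate: $\exp_t(z_t) = \exp_t(\log_t \exp z) = \exp z$. For the second term, I would rewrite $-z_t$ within the tempered algebra using the identity $\log_t(1/x) = 0 \ominus_t \log_t(x)$ (viewing the ``minus'' consistently with the tempered arithmetic as in the $\ominus_t$ of Theorem \ref{thtRIEM-DER}), so that $\exp_t(-z_t)$ reduces to $1/\exp z = \exp(-z)$. Summing these two contributions yields $\cosh_t \liftt_t(z) = (\exp z + \exp(-z))/2 = \cosh z$.

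Specializing this identity to $z \in \{a,b,c\}$ gives $\cosh_t a_t = \cosh a$, $\cosh_t b_t = \cosh b$, and $\cosh_t c_t = \cosh c$. Combining with the classical HPT $\cosh c = \cosh a \cosh b$ produces the claimed $\cosh_t c_t = \cosh_t a_t \cosh_t b_t$, closing the argument.

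The only delicate step is the reduction $\exp_t(-z_t) = \exp(-z)$: it requires reading the sign in $\exp_t(-z_t)$ through the tempered arithmetic rather than the Euclidean one, so that the operation $x \mapsto \exp_t(-\log_t x)$ returns the genuine reciprocal $1/x$ (equivalently, $\exp_t(a) \otimes_t \exp_t(-a) = 1$). Once this is justified via the $t$-algebra of Section \ref{sec:t-al-t-add}, the rest of the proof is purely algebraic substitution and the classical HPT, as the hint in the excerpt already indicates.
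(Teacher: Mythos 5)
Your argument is correct and is essentially the paper's own: the paper likewise obtains the $t$-HPT by substituting the tempered lengths into the classical identity $\cosh c = \cosh a \cosh b$ and simplifying via $\exp_t(\log_t x) = x$, i.e.\ by observing that $\cosh_t$ of a tempered length collapses to $\cosh$ of the original hyperbolic length. The point you single out as delicate --- that the sign in $\exp_t(-z_t)$ must be read through the tempered arithmetic, so that $\exp_t(\ominus_t \log_t x) = 1/x$ rather than a literal Euclidean negation --- is exactly the simplification the paper leaves implicit, and it is the reading needed for the reduction to the classical HPT to go through.
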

  Now, remark that \textit{for any $t\neq 0$}, a series expansion around 0 gives
  \begin{eqnarray*}
\exp_t(z) & = & 1 + \frac{t z^2}{2} + o(z^3)
  \end{eqnarray*}
($\exp_t$ is always infinitely differentiable around $0$, for any $t$) So the $t$-HPT gives
  \begin{eqnarray*}
1 + \frac{t c_t^2}{2} + o(c_t^3) & = & \left(1 + \frac{t a_t^2}{2} + o(a_t^3)\right)\cdot\left(1 + \frac{t b_t^2}{2} + o(b_t^3)\right),
  \end{eqnarray*}
  which simplifies, if we multiply both sides by $2/t$ and simplify, into
  \begin{eqnarray*}
c_t^2 + o(c_t^3) & = & a_t^2 + b_t^2 + o(a_t^3) + o(b_t^3),
  \end{eqnarray*}
  which for an infinitesimal right triangle gives $c_t^2 \approx a_t^2 + b_t^2 $, \textit{i.e.} Pythagoras Theorem, as does the HPT one gives in this case ($c^2 \approx a^2 + b^2$), which is equivalent of the particular $t=1$-HPT case.
  
\paragraph{$t$-mean value Theorem} The $t$-derivative yields a generalization of the Euclidean mean-value theorem.
\begin{lemma}\label{lem-tTAF}
  Let $t\in \mathbb{R}$ and $f$ be continuous over an interval $[a,b]$, differentiable on $(a,b)$ and such that $-1/(1-t) \not\in f([a,b])$. Then $\exists c \in (a,b)$ such that
  \begin{eqnarray*}
\mathrm{D}_t f(c) & = & \frac{(f(b) \ominus_t f(c)) - (f(a) \ominus_t f(c))}{b-a}.
    \end{eqnarray*}
  \end{lemma}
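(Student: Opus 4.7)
The plan is to reduce the statement to the classical Euclidean mean-value theorem by rewriting both sides of the claimed identity in terms of $f'$. The hypothesis $-1/(1-t) \not\in f([a,b])$ exists precisely to ensure that the denominators introduced by $\ominus_t$ never vanish on the relevant range, so that every expression below is well-defined.

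First I would compute $\mathrm{D}_t f(z)$ explicitly. Plugging $a \ominus_t b = (a-b)/(1+(1-t)b)$ into the definition \eqref{eq:t-derivative}, one gets
\begin{eqnarray*}
\mathrm{D}_t f(z) \;=\; \lim_{\delta \to 0} \frac{f(z+\delta) - f(z)}{\delta \bigl(1 + (1-t)\,f(z)\bigr)} \;=\; \frac{f'(z)}{1 + (1-t)\,f(z)},
\end{eqnarray*}
the last step following from the continuity of $f$ at $z$ and the standing hypothesis that $1+(1-t)f(z) \neq 0$ for $z\in [a,b]$.

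Next I would simplify the right-hand side of the target identity. Because $\ominus_t$ shares the same denominator at a fixed second argument, one has
\begin{eqnarray*}
(f(b) \ominus_t f(c)) - (f(a) \ominus_t f(c)) \;=\; \frac{f(b) - f(c)}{1+(1-t)f(c)} - \frac{f(a) - f(c)}{1+(1-t)f(c)} \;=\; \frac{f(b)-f(a)}{1+(1-t)f(c)}.
\end{eqnarray*}
Dividing by $b-a$ reduces the target identity to the requirement that there exists $c \in (a,b)$ with
\begin{eqnarray*}
\frac{f'(c)}{1+(1-t)f(c)} \;=\; \frac{f(b)-f(a)}{(b-a)\bigl(1+(1-t)f(c)\bigr)},
\end{eqnarray*}
i.e.\ $f'(c) = (f(b)-f(a))/(b-a)$.

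The last step is then an immediate appeal to the classical (Euclidean) mean-value theorem applied to $f$ on $[a,b]$, whose hypotheses (continuity on $[a,b]$, differentiability on $(a,b)$) are exactly the ones assumed. There is no genuine obstacle here: the only subtlety is bookkeeping the non-vanishing of $1+(1-t)f(c)$, guaranteed by the hypothesis $-1/(1-t)\notin f([a,b])$, which legitimizes both the computation of $\mathrm{D}_t f(c)$ and the simplification of the $\ominus_t$ differences. In particular, the result reveals $\mathrm{D}_t$ simply as a pointwise reweighting of the Euclidean derivative by $1/(1+(1-t)f(z))$, a viewpoint that also recovers $\mathrm{D}_t \liftt_t = 1$ of Lemma \ref{lem-prop-tlift} as a sanity check.
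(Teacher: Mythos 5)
Your proposal is correct and follows essentially the same route as the paper: compute the closed form $\mathrm{D}_t f(z) = f'(z)/(1+(1-t)f(z))$, note that the two $\ominus_t$ differences share the denominator $1+(1-t)f(c)$ so the identity collapses to $f'(c) = (f(b)-f(a))/(b-a)$, and invoke the classical mean-value theorem (the paper merely runs the same algebra in the opposite direction, starting from the MVT and dividing by $1+(1-t)f(c)$). The handling of the non-vanishing denominator via the hypothesis $-1/(1-t)\notin f([a,b])$ matches the paper's as well.
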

  \begin{proof}
We can obtain a direct expression of $\mathrm{D}_t f$ by using the definition of $\ominus_t$ and the classical derivative: 
\begin{equation}
\label{eq:t-derivative-simple}
    \mathrm{D}_t f(x) = \lim_{\delta \rightarrow 0} \frac{1}{\delta} \cdot \frac{f(x+\delta) - f(x)}{1+(1-t)f(x)} = \frac{1}{1+(1-t)f(x)} \cdot \lim_{\delta \rightarrow 0} \frac{f(x+\delta) - f(x)}{\delta} = \frac{f'(x)}{1+(1-t)f(x)}.
  \end{equation}
  From here, the mean-value theorem tells us that there exists $c\in [a,b]$ such that
  \begin{eqnarray*}
f'(c) & = & \frac{f(b) - f(a)}{b-a}
  \end{eqnarray*}
  Dividing by $1+(1-t)f(c)$ (assuming $f(c) \neq -1/(1-t)$) and reorganising, we get
  \begin{eqnarray*}
    \mathrm{D}_t f(c) & = & \frac{1}{b-a} \cdot \frac{f(b) - f(a)}{1+(1-t)f(c)}\\
                      & =  & \frac{1}{b-a} \cdot \frac{f(b) - f(c)}{1+(1-t)f(c)} - \frac{1}{b-a} \cdot \frac{f(a) - f(c)}{1+(1-t)f(c)} \\
    & = & \frac{(f(b) \ominus_t f(c)) - (f(a) \ominus_t f(c))}{b-a},
  \end{eqnarray*}
  which completes the proof of the Lemma.
    \end{proof}
    In fact, the $t$-derivative of $f$ at some $c$ is "just" an Euclidean derivative for an affine transformation of the function, namely $z \mapsto f(z) \ominus_t f(c)$, also taken at $z=c$. This "proximity" between $t$-derivation and derivation is found in the tempered chain rule (proof straightforward).
    \begin{lemma}
      Suppose $g$ differentiable at $z$ and $f$ differentiable at $g(z)$. Then
      \begin{eqnarray*}
\mathrm{D}_t (f\circ g)(z) & = & \mathrm{D}_t (f)(g(z)) \cdot g'(z).
  \end{eqnarray*}
\end{lemma}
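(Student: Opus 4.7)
The plan is to derive the tempered chain rule directly from the closed-form expression of $\mathrm{D}_t$ established in the proof of Lemma \ref{lem-tTAF}, namely equation \eqref{eq:t-derivative-simple}, which states that $\mathrm{D}_t f(x) = f'(x)/(1+(1-t)f(x))$. This reduces the tempered chain rule to the classical chain rule plus a purely algebraic identification of denominators, so no genuine limit argument needs to be repeated.

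First, I would record the hypotheses ($g$ differentiable at $z$, $f$ differentiable at $g(z)$) and note that these imply $f\circ g$ is differentiable at $z$ with the classical chain rule giving $(f\circ g)'(z) = f'(g(z))\cdot g'(z)$. Second, I would apply \eqref{eq:t-derivative-simple} to the composite function $f\circ g$ evaluated at $z$, obtaining
\[
\mathrm{D}_t(f\circ g)(z) \;=\; \frac{(f\circ g)'(z)}{1+(1-t)\,(f\circ g)(z)} \;=\; \frac{f'(g(z))\cdot g'(z)}{1+(1-t)\,f(g(z))}.
\]
Third, I would apply \eqref{eq:t-derivative-simple} to $f$ at the point $g(z)$ to get $\mathrm{D}_t f(g(z)) = f'(g(z))/(1+(1-t)f(g(z)))$, and multiply by $g'(z)$. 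Comparing the two expressions yields the claimed identity.

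The only delicate point worth a brief remark is the implicit well-definedness condition $1+(1-t)f(g(z))\neq 0$, which is the natural analogue of the condition $-1/(1-t)\notin f([a,b])$ appearing in Lemma \ref{lem-tTAF}; under this proviso the two displayed fractions are legitimate real numbers and the identity is immediate. There is no real obstacle here: once equation \eqref{eq:t-derivative-simple} is available, the tempered chain rule is essentially the classical chain rule multiplied by a common factor $1/(1+(1-t)f(g(z)))$, which is precisely why the proof in the paper is described as straightforward.
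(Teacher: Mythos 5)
Your proposal is correct and matches the route the paper intends when it marks the proof as straightforward: apply the closed-form expression \eqref{eq:t-derivative-simple}, $\mathrm{D}_t h(x) = h'(x)/(1+(1-t)h(x))$, to $f\circ g$ and to $f$ at $g(z)$, and invoke the classical chain rule. Your added remark that $1+(1-t)f(g(z))\neq 0$ is needed is a fair observation, and it is in any case implicit in the definition of $\ominus_t$ (and thus of $\mathrm{D}_t$) itself.
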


\subsection{Sets endowed with a distortion and their $t$-self: statistical information}\label{proof_add-statinf}

Here, $\mathcal{X}$ contains probability distributions or the parameters of probability distributions: $f$ can then be an $f$-divergence (information theory) or a Bregman divergence (information geometry). Tsallis divergence and the tempered relative entropy are examples of $t$-additive information theoretic and information geometric divergences. A key property of information theory is the data processing inequality \textbf{(D)}, $\mathcal{X}$ being a probability space, which says that passing random variables through a Markov chain cannot increase their divergence as quantified by $f$ \cite{pvAD,vehRD}. A key property of information geometry is the population minimizer property \textbf{(P)}, which elicits a particular function of a set of points as the minimizer of the expected distortion to the set, as quantified by $f$ \cite{bmdgCW}. We let \textbf{(J)} denote the joint convexity property, which would state for $f$ and any $\ve{x}_1, \ve{x}_2, \ve{y}_1, \ve{y}_2$ that
\begin{eqnarray}
f(\lambda\cdot \ve{x}_1 + (1-\lambda)\cdot \ve{x}_2, \lambda\cdot \ve{y}_1 + (1-\lambda)\cdot \ve{y}_2) \leq \lambda f(\ve{x}_1, \ve{y}_1) + (1-\lambda) f(\ve{x}_2, \ve{y}_2), \label{eq-jointc}
\end{eqnarray}
and convexity \textbf{(C)}, which amounts to picking $\ve{y}_1 = \ve{y}_2$ (convexity in the left parameter) xor $\ve{x}_1 = \ve{x}_2$ (in the right parameter).
\begin{lemma}\label{lem-inf-prop}
  For any $t\in \mathbb{R}$, the following holds true:
  \begin {itemize}
  \item [\textbf{(D)}] $f$ satisfies the data processing inequality iff $f^{(t)}$ satisfies the data processing inequality;
  \item [\textbf{(P)}] $\ve{\mu}_* \in \arg\min_{\ve{\mu}} \sum_i f(\ve{x}_i, \ve{\mu})$ iff
    \begin{eqnarray}
      \ve{\mu}_* & \in & \arg\min_{\ve{\mu}} \left(\oplus_t\right)_i f^{(t)}(\ve{x}_i, \ve{\mu}); \label{eq-popmint}
      \end{eqnarray}
  \item [\textbf{(J)}] $f$ satisfies \eqref{eq-jointc} iff the following $(t,t',t'')$-joint convexity property holds:
    \begin{eqnarray}
f^{(t)}(\lambda\cdot \ve{x}_1 + (1-\lambda)\cdot \ve{x}_2, \lambda\cdot \ve{y}_1 + (1-\lambda)\cdot \ve{y}_2) \leq \lambda f^{(t')} (\ve{x}_1, \ve{y}_1) + (1-\lambda) f^{(t'')} (\ve{x}_2, \ve{y}_2), \label{eq-jointc2}
    \end{eqnarray}
    with $t' \defeq \min\{t,1-\lambda + \lambda t\}$ and $t'' \defeq \min\{t,\lambda + (1-\lambda)t\}$.
    \end{itemize}
  \end{lemma}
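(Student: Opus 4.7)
The plan is to push the defining identity $f^{(t)} = \liftt_t \circ f$ through, leveraging properties of $\liftt_t$ summarized in Lemma \ref{lem-prop-tlift}: strict monotonicity on $\mathbb{R}$ for all $t$, strict convexity for $t<1$, strict concavity for $t>1$. A single auxiliary identity,
\[
(\oplus_t)_i \liftt_t(a_i) \;=\; \liftt_t\Big(\sum_i a_i\Big),
\]
will do the heavy lifting for \textbf{(P)}; \textbf{(D)} needs only monotonicity; \textbf{(J)} will require a case split on $\mathrm{sign}(1-t)$.

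For \textbf{(D)}, strict monotonicity of $\liftt_t$ turns the inequality $f(P,Q) \geq f(MP, MQ)$, for any Markov kernel $M$, into the equivalent $f^{(t)}(P,Q) \geq f^{(t)}(MP, MQ)$, in both directions, so the data processing inequality transfers back and forth between $f$ and $f^{(t)}$. For \textbf{(P)}, I first prove the display above by combining the closed form $(\oplus_t)_i q_i = ((\prod_i(1+(1-t) q_i)) - 1)/(1-t)$ from \eqref{simplSdelta} with the rewrite $1 + (1-t)\liftt_t(a) = \exp((1-t) a)$; the product of exponentials telescopes into $\exp((1-t) \sum_i a_i)$ and an application of $\log_t$ closes the identity. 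Substituting $a_i = f(\ve{x}_i, \ve{\mu})$ yields $(\oplus_t)_i f^{(t)}(\ve{x}_i, \ve{\mu}) = \liftt_t(\sum_i f(\ve{x}_i, \ve{\mu}))$, and strict monotonicity of $\liftt_t$ preserves argmins, giving \eqref{eq-popmint}.

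For \textbf{(J)}, write $D := f(\lambda \ve{x}_1 + (1-\lambda)\ve{x}_2, \lambda \ve{y}_1 + (1-\lambda)\ve{y}_2)$ and $D_i := f(\ve{x}_i, \ve{y}_i)$. The backward direction (property \eqref{eq-jointc2} implies \eqref{eq-jointc}) reduces to specializing $t=1$, where $t'=t''=1$ and $\liftt_1 = \mathrm{id}$. For the forward direction, a short inspection of the $\min$ in the definitions shows $t'=t''=t$ when $t \leq 1$, whereas $t' = 1-\lambda+\lambda t$ and $t'' = \lambda + (1-\lambda)t$ when $t>1$. In the $t\leq 1$ regime, $\liftt_t$ is convex and monotone so that Jensen and joint convexity of $f$ give $\liftt_t(D) \leq \liftt_t(\lambda D_1 + (1-\lambda)D_2) \leq \lambda \liftt_t(D_1) + (1-\lambda)\liftt_t(D_2)$. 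In the $t>1$ regime, plugging $\liftt_t(z) = (\exp((1-t)z) - 1)/(1-t)$ into the right-hand side of \eqref{eq-jointc2} simplifies it to $(\exp(\lambda(1-t)D_1) + \exp((1-\lambda)(1-t)D_2) - 2)/(1-t)$; setting $a := \exp(\lambda(1-t)D_1)$ and $b := \exp((1-\lambda)(1-t)D_2)$, the inequality becomes $\exp((1-t)D) \geq a + b - 1$. Joint convexity of $f$ combined with $1-t<0$ and monotonicity of $\exp$ give $\exp((1-t)D) \geq ab$, so it suffices to show $ab \geq a + b - 1$, which is $(a-1)(b-1) \geq 0$.

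The main obstacle I anticipate is the $t>1$ subcase of \textbf{(J)}: besides the bookkeeping of $t', t''$ through the $\min$, the payoff step $(a-1)(b-1) \geq 0$ hinges on two sign hypotheses used simultaneously — non-negativity of the divergence $f$ and negativity of $1-t$ — which together place both $a$ and $b$ in $(0,1]$. Everything else is routine manipulation of $\liftt_t$ via its monotonicity, convexity, and the Riemann-style product formula already in hand.
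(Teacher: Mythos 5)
Your proposal is correct and follows essentially the same route as the paper: \textbf{(D)} and \textbf{(P)} from strict monotonicity of $\liftt_t$ plus the product identity behind $(\oplus_t)_i\liftt_t(a_i)=\liftt_t(\sum_i a_i)$, and \textbf{(J)} by the same case split, with convexity and monotonicity of $\liftt_t$ for $t\le 1$ and, for $t>1$, an exponential-form computation whose key step $(a-1)(b-1)\ge 0$ is exactly the paper's inequality $u\oplus_t v\le u+v$ (valid for $uv\ge 0$, $t\ge 1$) combined with its rescaling identity $\log_t(a^b)=b\log_{t'}(a)$, unpacked. Your explicit verification of the sign hypotheses (nonnegativity of $f$ and $1-t<0$) and of the resolution of the $\min$'s in $t',t''$ matches what the paper uses implicitly.
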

  \begin{proof}
\textbf{(D)} and \textbf{(P)} are immediate consequences of Lemma \ref{lem-prop-tlift} (point [1.], main file) and properties of $\log_t$. We prove \textbf{(J)}.  $\liftt_t$ being strictly increasing for any $t$, we get for $t\leq 1$
    \begin{eqnarray}
      f^{(t)}(\lambda\cdot \ve{x}_1 + (1-\lambda)\cdot \ve{x}_2, \lambda\cdot \ve{y}_1 + (1-\lambda)\cdot \ve{y}_2) & \leq & \liftt_t \left( \lambda f(\ve{x}_1, \ve{y}_1) + (1-\lambda) f(\ve{x}_2, \ve{y}_2)\right)\nonumber\\
                                                                                                                    & \leq & \lambda \cdot \liftt_t \circ f(\ve{x}_1, \ve{y}_1) + (1-\lambda) \cdot \liftt_t \circ f(\ve{x}_2, \ve{y}_2)\nonumber\\
      & & = \lambda f^{(t)}(\ve{x}_1, \ve{y}_1) + (1-\lambda)f^{(t)}(\ve{x}_2, \ve{y}_2), \label{eq-tsmaller}
    \end{eqnarray}
    because $\liftt_t$ is convex. If $t>1$, we restart from the first inequality and remark that
    \begin{eqnarray}
      \liftt_t \left( \lambda f(\ve{x}_1, \ve{y}_1) + (1-\lambda) f(\ve{x}_2, \ve{y}_2)\right) & = & \liftt_t \left( \lambda f(\ve{x}_1, \ve{y}_1)\right) \oplus_t \liftt_t\left((1-\lambda) f(\ve{x}_2, \ve{y}_2)\right)\nonumber\\
                                                                                               & \leq & \liftt_t \left( \lambda f(\ve{x}_1, \ve{y}_1)\right) + \liftt_t\left((1-\lambda) f(\ve{x}_2, \ve{y}_2)\right)\nonumber\\
                                                                                               & & = \lambda \cdot \liftt_{1-\lambda + \lambda t)} \circ f(\ve{x}_1, \ve{y}_1) + (1-\lambda) \cdot \liftt_{\lambda + (1-\lambda)t} \circ f(\ve{x}_2, \ve{y}_2)\nonumber\\
      & = & \lambda f^{(1-\lambda + \lambda t)} (\ve{x}_1, \ve{y}_1) + (1-\lambda) f^{(\lambda + (1-\lambda)t)} (\ve{x}_2, \ve{y}_2).\label{eq-tlarger}
    \end{eqnarray}
    The inequality is due to the fact that $a \oplus_t b = a + b + (1-t)ab \leq a+b$ if $ab\geq 0$ and $t\geq 1$. The last equality holds because, for $t' \defeq 1-(1-t)b$, we have
    \begin{eqnarray*}
      \log_{t} \left(a^b\right) = \frac{a^{(1-t)b}-1}{1-t} = \frac{1-t'}{1-t} \cdot \frac{a^{1-t'}-1}{1-t'} = b \cdot \log_{t'} (a).
    \end{eqnarray*}
    Putting altogether \eqref{eq-tsmaller} and \eqref{eq-tlarger}, we get that for any $t\in \mathbb{R}$,
    \begin{eqnarray}
      \lefteqn{f^{(t)}(\lambda\cdot \ve{x}_1 + (1-\lambda)\cdot \ve{x}_2, \lambda\cdot \ve{y}_1 + (1-\lambda)\cdot \ve{y}_2) }\nonumber\\
      & \leq & \lambda f^{(\min\{t,1-\lambda + \lambda t\})} (\ve{x}_1, \ve{y}_1) + (1-\lambda) f^{(\min\{t,\lambda + (1-\lambda)t\})} (\ve{x}_2, \ve{y}_2),
    \end{eqnarray}
    as claimed for \textbf{(J)}. This ends the proof of Lemma \ref{lem-inf-prop}.
    \end{proof}
    We note that \eqref{eq-jointc2} also translates into a property for \textbf{(C)}; also, if $t\leq 1$, then $t=t'=t''$ in \eqref{eq-jointc2}.
    
    \subsection{Proof of Lemma \ref{lem-d-l-metric}}\label{proof_lem-d-l-metric}

    $d_L^{(t)}$ still obviously satisfies reflexivity, the identity of indiscernibles and non-negativity, so we check the additional property it now satisfies, the weaker version of the triangle inequality. Given any $\ve{x}, \ve{y}, \ve{z}$ in $\mathbb{H}_c$, condition $d_L^{(t)}(\ve{x},\ve{z}) \leq d_L^{(t)}(\ve{x},\ve{y}) + d_L^{(t)}(\ve{y},\ve{z}) + \delta$ for $t > 1$ is 
    \begin{eqnarray*}
      \frac{1-\exp\left(-(t-1) \left(-\frac{2}{c} - 2 \cdot \ve{x} \circ \ve{z}\right)\right)}{t-1} & \leq & \frac{1-\exp\left(-(t-1) \left(-\frac{2}{c} - 2 \cdot \ve{x} \circ \ve{y}\right)\right)}{t-1} \\
      & & + \frac{1-\exp\left(-(t-1) \left(-\frac{2}{c} - 2 \cdot \ve{y} \circ \ve{z}\right)\right)}{t-1} + \delta,
    \end{eqnarray*}
    which simplifies to
    \begin{eqnarray*}
      \exp\left(2(t-1) \cdot \ve{x} \circ \ve{y}\right) + \exp\left(2(t-1) \cdot \ve{y} \circ \ve{z}\right) & \leq & \exp\left(-\frac{2(t-1)}{c}\right) + (t-1)\delta \exp\left(-\frac{2(t-1)}{c}\right) \\
      & & + \exp\left(2(t-1) \cdot \ve{x} \circ \ve{z}\right).
    \end{eqnarray*}
    We have $\ve{x} \circ \ve{y} \leq -1/c$ by definition, so a sufficient condition to get the inequality is to have
    \begin{eqnarray}
\exp\left(2(t-1) \cdot \ve{y} \circ \ve{z}\right) & \leq & (t-1)\delta \exp\left(-\frac{2(t-1)}{c}\right). \label{eq-sc1}
    \end{eqnarray}
    Function $h(z) \defeq z \delta \exp(-2z/c)$ is maximum over $\mathbb{R}_+$ for $z_* = c/2$, for which it equals $h(z_*) = c\delta / (2e)$. Fix $t = 1 + (c/2)$. We then have $\exp\left(2(t-1) \cdot \ve{y} \circ \ve{z}\right) \leq 1/e$ so to get \eqref{eq-sc1} for this choice of $t$, it is sufficient to pick curvature $c=2/\delta$, yielding relationship $t = 1 + (1/\delta)$.

    \subsection{Proof of Lemma \ref{lem-t-self-poincare}}\label{proof_lem-t-self-poincare}

    We start by proving \eqref{cont-d-poinc-t}. Using the proof of \cite[Proposition 3.1]{mwwyTN}, we know that any point $\ve{x}$ $k$-close to the boundary (Definition \ref{def-close-enc}) satisfies
\begin{eqnarray*}
d^{(t)}(\ve{x}, \ve{0}) & = & \log_t \left(\frac{1+\|\ve{x}\|}{1-\|\ve{x}\|}\right) = \log_t \left(2\cdot 10^{k}-1\right) ,
    \end{eqnarray*}
so to get \eqref{cont-d-poinc-t}, we want $\log_t (2\cdot 10^k-1) \geq g(k)$. Letting $t = 1-f(k)$ with $f(k) \in \mathbb{R}$, we observe $\log_t (2\cdot 10^k-1) =  \left((2\cdot 10^k-1)^{f(k)}-1\right)/f(k)$, so we want, after taking logs,
    \begin{eqnarray}
\log(2\cdot 10^k-1) & \geq & \frac{\log\left(1+f(k)g(k)\right)}{f(k)} \label{eq-logfk}
    \end{eqnarray}
    (this also holds if $f(k) < 0$ because $1+f(k)g(k) < 1$), and there remains to observe $\log(2\cdot 10^k-1) =  k\log(10) + \log(2-1/10^{k})$ with $\log(2-1/10^{k}) \geq 0, \forall k\geq 0$. Hence, to get \eqref{eq-logfk} it is sufficient to request
    \begin{eqnarray*}
\frac{\log\left(1+f(k)g(k)\right)}{f(k)} & \leq & \log(10) \cdot k,
    \end{eqnarray*}
    which is \eqref{cont-d-poinc-t}. For such $t$, the new hyperbolic constant satisfies
    \begin{eqnarray*}
\tau_t & = & \log_t \exp\tau = \frac{1}{f(k)} \cdot \left(\exp \left(f(k) \tau\right) - 1\right)
      \end{eqnarray*} 

\subsection{Boosting with the logistic loss \textit{\`a-la} AdaBoost}\label{algo_logisticboost}

\begin{algorithm}[H]
\caption{\logisticboost($\mathcal{S}$)}\label{alg-logisticboost}
\begin{algorithmic}
  \STATE  \textbf{Input:} Labeled sample $\mathcal{S} \defeq \{(\ve{x}_i,y_i), i \in [m]\}$, $T\in \mathbb{N}_{>0}$;
  \STATE  1 : $w_{1i} \leftarrow 1/2, \forall i\in [m]$; \hfill // initialize all weights (equivalent to maximally \textit{un}confident prediction)
  \STATE  2 :\hspace{0.5cm} \textbf{for} $j = 1, 2, ..., T$
  \STATE  3 :\hspace{1.0cm} $H_j \leftarrow \weaklearn(\mathcal{S}, \ve{w}_j)$; \hfill // request a DT as a "weak hypothesis"
  \STATE  4 :\hspace{1.0cm} $\alpha_j \in \mathbb{R}$; \hfill // picks leveraging coefficient for $H_j$
  \STATE  5 :\hspace{1.0cm} \textbf{for} $i = 1, 2, ..., m$ \hfill // weight update, \textit{not} normalized
  \begin{eqnarray}
w_{(j+1)i} & \leftarrow & \frac{w_{ji}}{w_{ji} + (1-w_{ji}) \cdot \exp(\alpha_j y_i H_j(\ve{x}_i))} ; \label{defweightupdate}
  \end{eqnarray}
  \STATE  \textbf{Output:} Classifier $\mbox{\hcomb}_T  \defeq \sum_j \alpha_j H_j(.)$;
\end{algorithmic}
\end{algorithm}

We want a boosting algorithm for the liner combination of DTs which displays classification that can be easily and directly embedded in the Poincar\'e disk. Algorithm \logisticboost~is provided above. For the weight update, we refer to \cite{nbanbGN}. We already know how to embed DTs via their Monotonic DTs. What a boosting algorithm of this kind does is craft
\begin{eqnarray}
\mbox{\hcomb}_T & \defeq & \sum_{j=1}^{T} u_j(.), \quad u_j(\ve{x}) \defeq \alpha_j \cdot H_j(\ve{x}); \label{defBOOSTH}
\end{eqnarray}
Remark that we have merged the leveraging coefficient and the DTs' outputs $H_.$, on purpose. To compute the leveraging coefficients $\alpha_.$ in Step 4, we use AdaBoost's secant approximation trick\footnote{Explained in \cite[Section 3.1]{ssIBj}.}, applied not to the exponential loss but to the logistic loss: for any $z \in [-R,R]$ and $\alpha\in \mathbb{R}$,
\begin{eqnarray}
\log(1+\exp(-\alpha z)) & \leq & \frac{1+u}{2}\cdot \log(1+\exp(-\alpha R)) + \frac{1-u}{2}\cdot \log(1+\exp(\alpha R)) . \label{eqAPPROXALPHA}
\end{eqnarray}
Hence, to compute the leveraging coefficient $\alpha^*_j$ that approximately minimizes the current loss, $\sum_i w_{ji} \log(1+\exp(-\alpha y_i H_j(\ve{x}_i)))$, we minimize instead the upperbound using \eqref{eqAPPROXALPHA}. Letting
\begin{eqnarray}
({\canolog})^*_j & \defeq & \max_{\leaf \in \leafset(H_j)} \left|\log\left(\frac{p_{j\leaf}}{1-p_{j\leaf}}\right)\right|\label{defNt}
\end{eqnarray}
(we remind that $\leafset(.)$ denotes the set of leaves of a tree; the index in the local proportion of positive example $p^+_{j\leaf}$ reminds that weights used need to be boosting's weights) which we note can be directly approximated on the Poincar\'e disk by looking at the leaf nearest to the border of the disk because the maximal absolute confidence in the DT is also the maximal absolute confidence in its MDT, we obtain the sought minimum,
\begin{eqnarray*}
\alpha^*_j & = & \frac{1}{({\canolog})^*_j} \cdot \log\left(\frac{1+r_j}{1-r_j}\right), 
\end{eqnarray*}
where $r_j \in [-1,1]$ is the normalized edge
\begin{eqnarray}
  r_j & \defeq & \frac{1}{\sum_i w_{ji}} \cdot \sum_{i\in [m]}  w_{ji} \cdot \frac{y_i H_j(\ve{x}_i)}{\max_k |H_j(\ve{x}_k)|}\nonumber\\
  & = & \expect_{i\sim \tilde{w}_j} \left[\frac{1}{({\canolog})^*_j} \cdot \log\left(\frac{p^+_{j\leaf(\ve{x}_i)}}{1-p^+_{j\leaf(\ve{x}_i)}}\right)\right],
\end{eqnarray}
where $\tilde{\ve{w}}_j$ indicates normalized weights. It is not hard to show that because we use the local posterior $p^+_{j\leaf}$ at each leaf, $\alpha^*_j \geq 0$ and also $r_j \geq 0$. Hence, everything is like if we had an imaginary node $\node_j$ with $p^+_{j\node_j} \defeq (1+r_j)/2$ ($\geq 1/2$) and positive confidence
\begin{eqnarray*}
  ({\canolog})_j & \defeq & {\canolog}(p^+_{j\node_j}) = \log\left(\frac{p^+_{j\node_j}}{1-p^+_{j\node_j}}\right)
\end{eqnarray*}
that we can display in Poincar\'e disk (we choose to do it as a circle, see Figure \ref{fig:hyperbolicsum}, main file). We deduce from \eqref{defBOOSTH} that
\begin{eqnarray*}
u_j(\ve{x}) & = & \frac{({\canolog})_j }{({\canolog})^*_j} \cdot \log\left(\frac{p^+_{j\leaf(\ve{x})}}{1-p^+_{j\leaf(\ve{x})}}\right),
\end{eqnarray*}
and note that \textit{all} three key parameters can easily be displayed or computed directly from the Poincar\'e disk.

\subsection{Modifying Sarkar's embedding in Poincar\'e disk}\label{sarkar-mod}

We refer to the concise and neat description of Sarkar's embedding in \cite{sdgrRT} for the full algorithm. Our modification relies on changing one step of the algorithm, as described in Figure \ref{fig:sarkar}. The key step that we change is step 5: in the description of \cite[Algorithm 1]{sdgrRT}. This step embeds the children of a given node (and then the algorithm proceeds recursively until all nodes are processed). Sarkar's algorithm corresponds to the simple case where all arcs to/from a node define a fixed angle, which does not change during reflection because Poincar\'e model is conformal. Hence, if the tree is binary, this angle is $2\pi/3$, which provides a very clean display of the tree. In our case however, some children may have just one arc to a leaf while others may support big subtrees. Also, arc lengths can vary substiantially. We thus design the region of the disk into which the subtrees are going to be embedded by choosing an angle proportional to the number of leaves reachable from the node, and of course lengths have to match the difference between absolute confidence between a node and its children. There is no optimization step to learn a clean embedding, so we rely on a set of hyperparameters to effectively compute this new step of the algorithm.

\begin{figure}
  \centering 
\includegraphics[trim=20bp 30bp 150bp 60bp,clip,width=0.8\columnwidth]{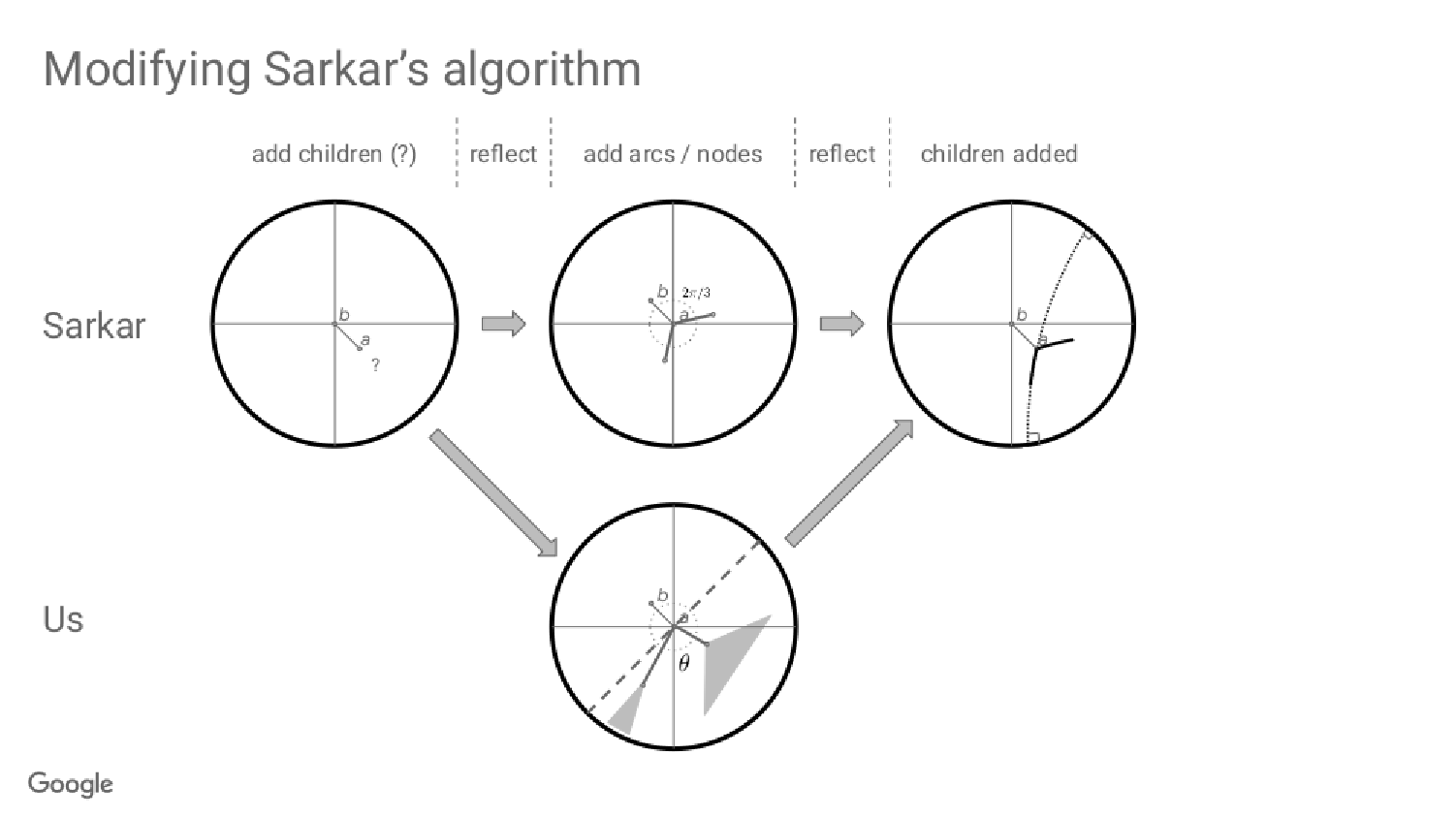} 
    \caption{Schematic description of our modification (bottom) of Sarkar's algorithm (up). Our modification solely changes the step in which the children of a node (here, $a$) are computed, this node having been reflected back to the origin. Instead of using a fixed angle and length to position arcs (and thus children of the node), the angle depends on the number of leaves that can be reached from a children and the length depends on the difference between absolute confidence between the node and the corresponding child. We also define an angle which represents the domain (before reflecting back) in which the embedding is going to take place, shown with the thick dashed line (here, this angle is $\pi$).}
    \label{fig:sarkar}
  \end{figure}

\section{Supplementary material on experiments}\label{sec-sup-exp}

\subsection{Domains}\label{sec-doms}

  \setlength\tabcolsep{5pt}

\begin{table}[h]
\begin{center}
\begin{tabular}{|crr|}
\hline \hline
Domain & \multicolumn{1}{c}{$m$} & \multicolumn{1}{c|}{$d$}  \\ \hline 
\domainname{breastwisc}  & 699 & 9 \\
\domainname{ionosphere} & 351 & 33  \\
\domainname{tictactoe} & 958 & 9 \\
\domainname{winered} & 1 599 & 11 \\
\domainname{german} & 1 000 & 20  \\
\domainname{analcatdata$\_$supreme}  & 4 053 & 8  \\
 \domainname{abalone}  & 4 177 & 8  \\
\domainname{qsar} & 1 055 & 41\\
\domainname{hillnoise} & 1 212 & 100\\
\domainname{firmteacher} & 10 800 & 16\\ 
\domainname{online$\_$shoppers$\_$intention} & 12 330 & 17\\ 
\domainname{give$\_$me$\_$some$\_$credit} & 120 269 & 11\\ 
Buzz$\_$in$\_$social$\_$media (Tom's \domainname{hardware}) & 28 179 & 96\\
Buzz$\_$in$\_$social$\_$media (\domainname{twitter}) & 583 250 & 78\\ \hline\hline       
\end{tabular}
\end{center}
\caption{UCI, OpenML (Analcatdata$\_$supreme) and Kaggle (Give$\_$me$\_$some$\_$credit) domains considered in our experiments ($m=$ total number
  of examples, $d=$ number of features), ordered in
  increasing $m \times n$.}
  \label{t-s-uci}
\end{table}
The domains we consider are all public domains, from the UCI repository of ML datasets \cite{dgUM}, OpenML, or Kaggle, see Table \ref{t-s-uci}.

\setlength\tabcolsep{0pt}

\subsection{Supplement on experiments -- Poincar\'e disk embeddings}\label{sec-exp-pdemb}

\begin{table}
  \centering
  \resizebox{\textwidth}{!}{\begin{tabular}{ccccc}\Xhline{2pt}
                              \includegraphics[trim=0bp 0bp 0bp 0bp,clip,width=0.3\columnwidth]{Experiments/poincare_DT/online-shopping-intentions/treeplot_Jan_15th__17h_51m_26s_USE_BOOSTING_WEIGHTS_Algo0_SplitCV4_Tree0.eps} & \includegraphics[trim=0bp 0bp 0bp 0bp,clip,width=0.3\columnwidth]{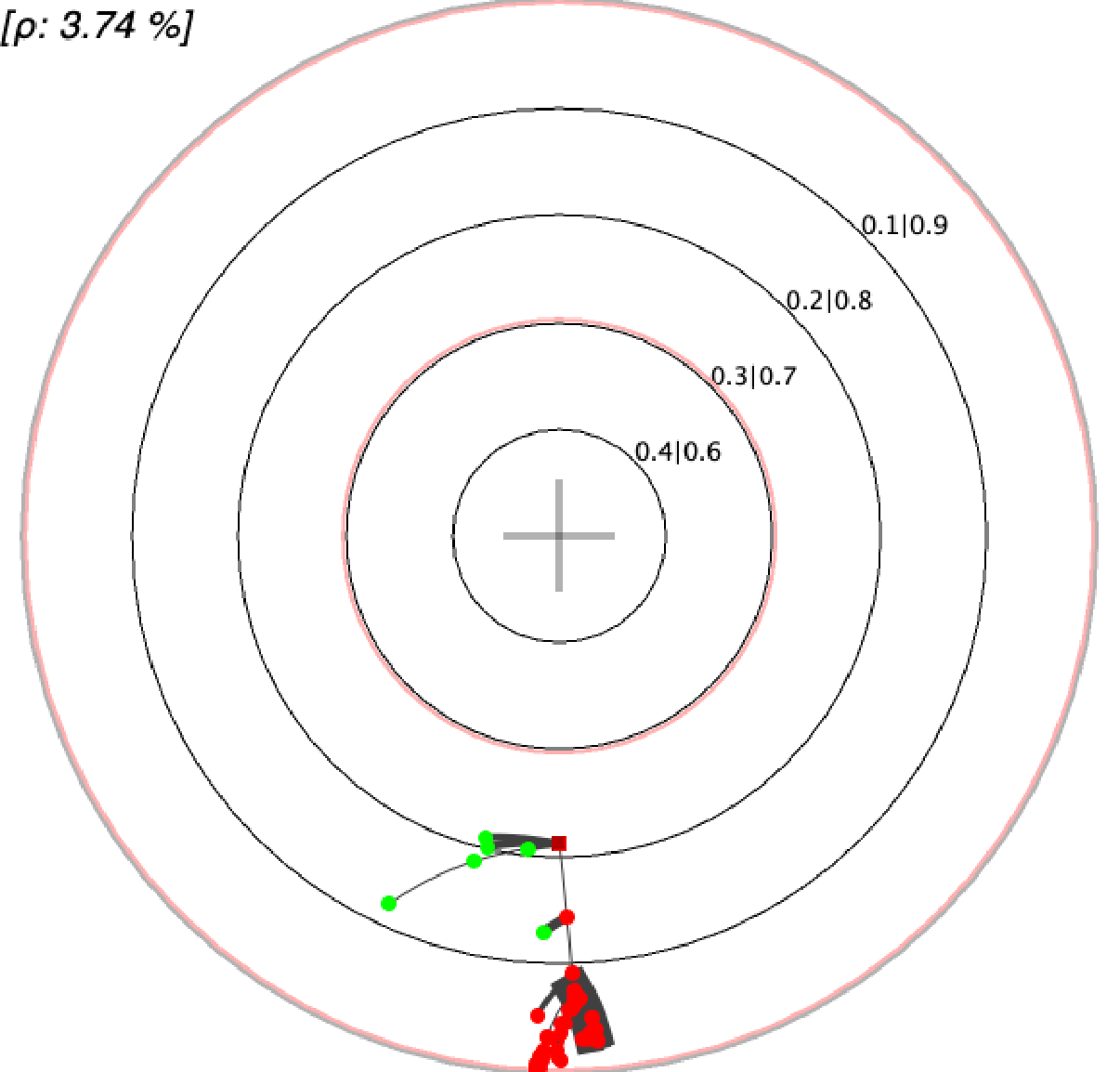} & \includegraphics[trim=0bp 0bp 0bp 0bp,clip,width=0.3\columnwidth]{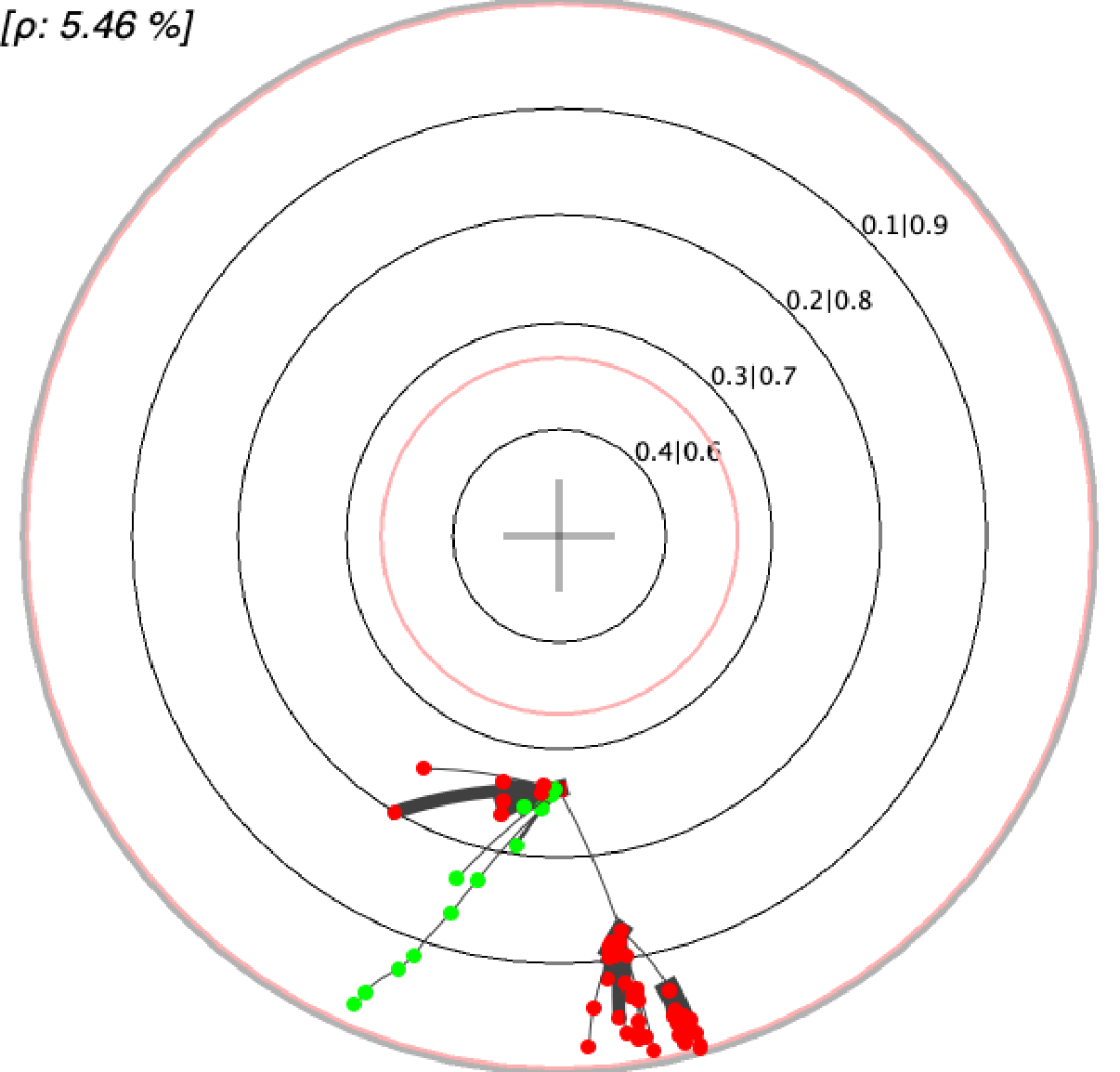}  & \includegraphics[trim=0bp 0bp 0bp 0bp,clip,width=0.3\columnwidth]{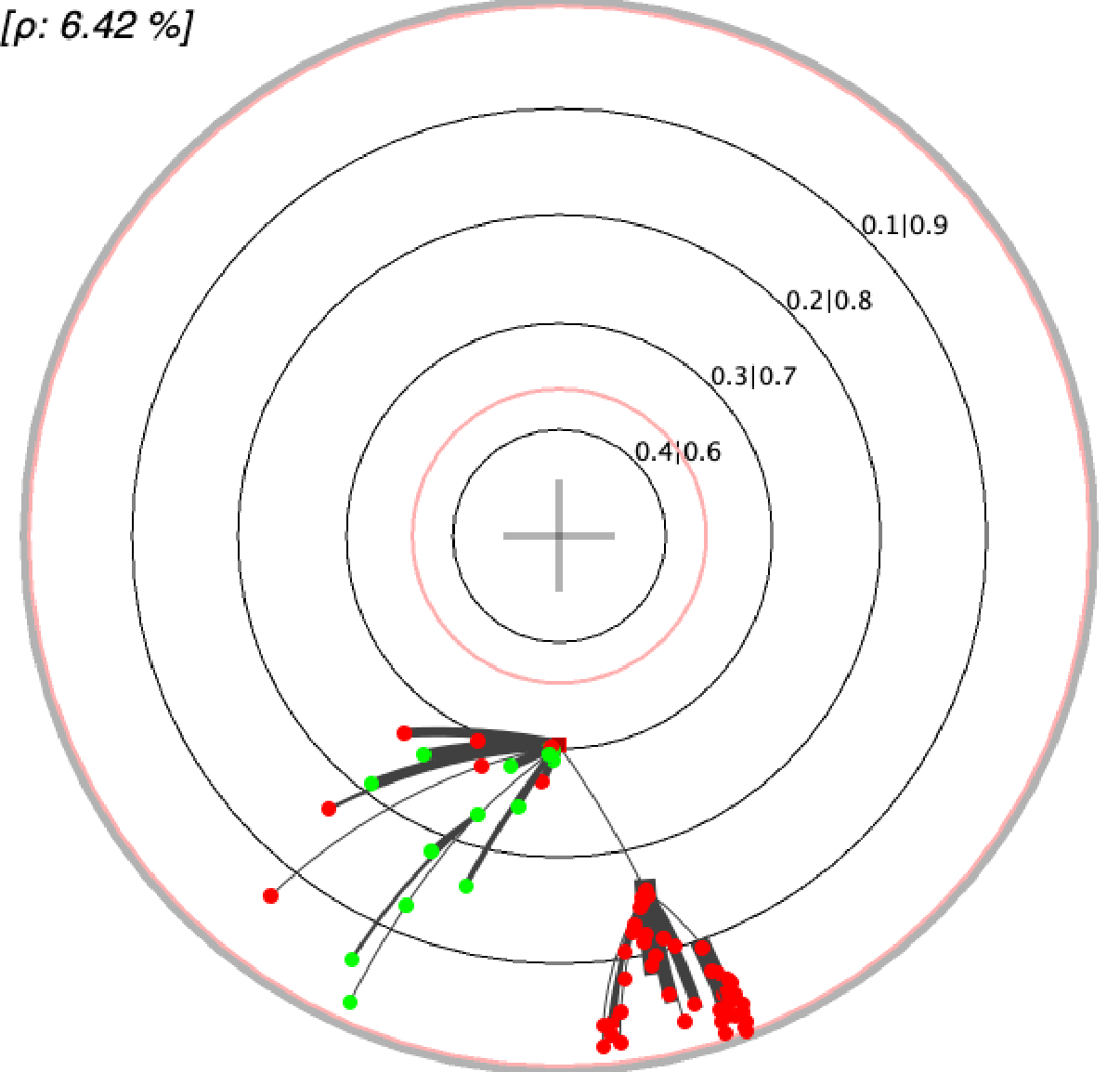}  & \includegraphics[trim=0bp 0bp 0bp 0bp,clip,width=0.3\columnwidth]{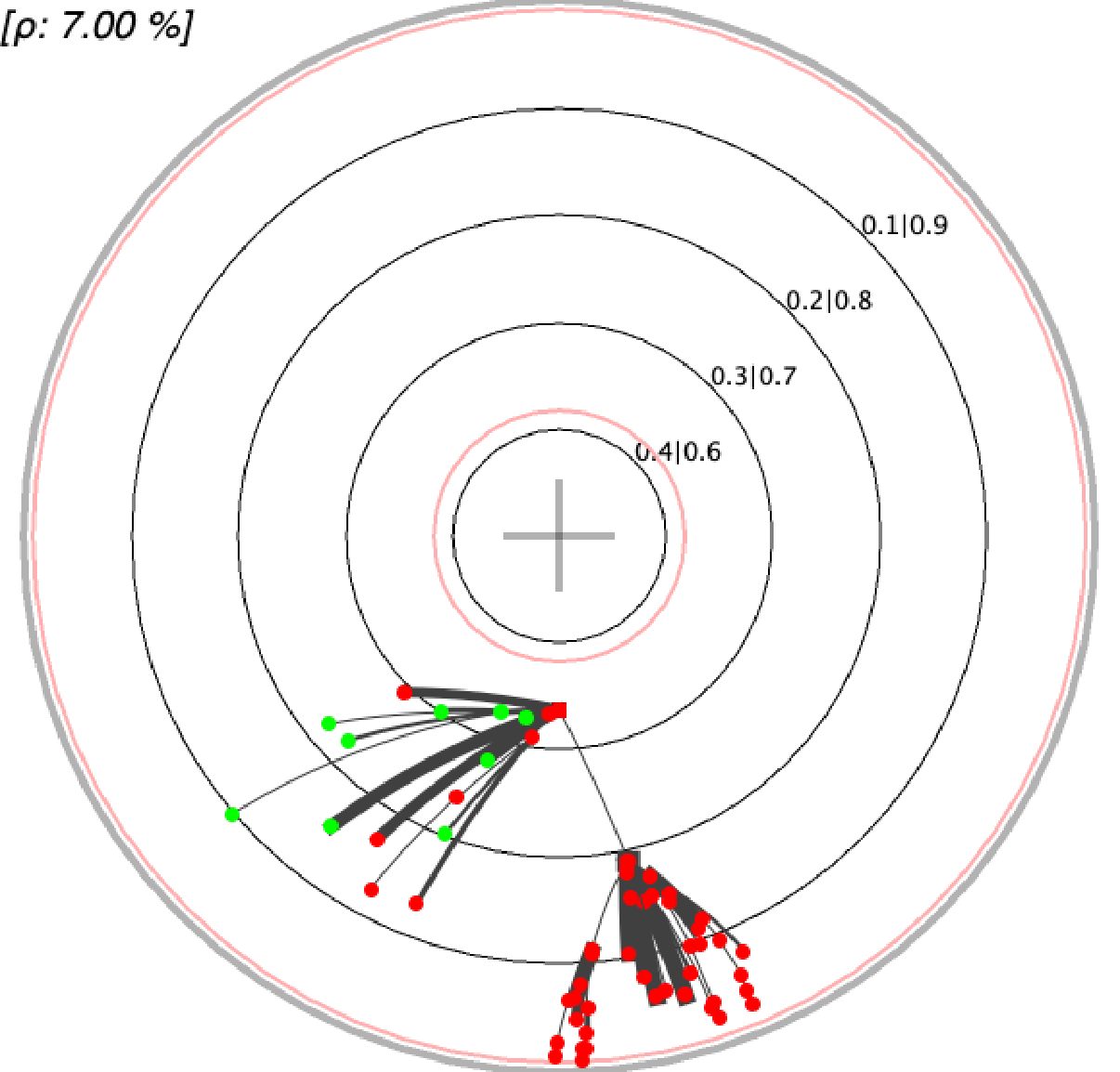} \\
                              MDT $\#1$ & MDT $\#2$& MDT $\#3$& MDT $\#4$& MDT $\#5$\\
                             \includegraphics[trim=0bp 0bp 0bp 0bp,clip,width=0.3\columnwidth]{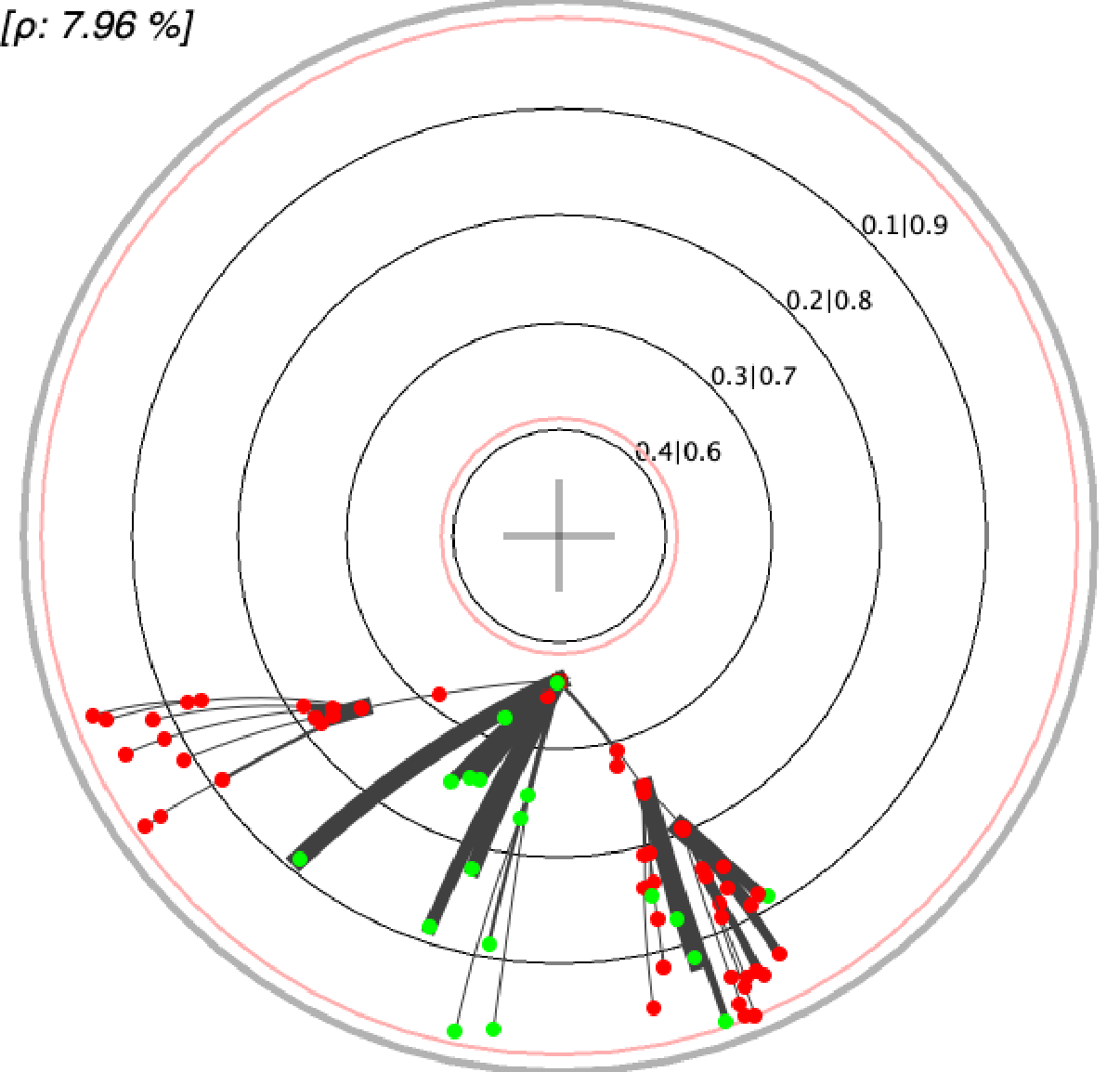} & \includegraphics[trim=0bp 0bp 0bp 0bp,clip,width=0.3\columnwidth]{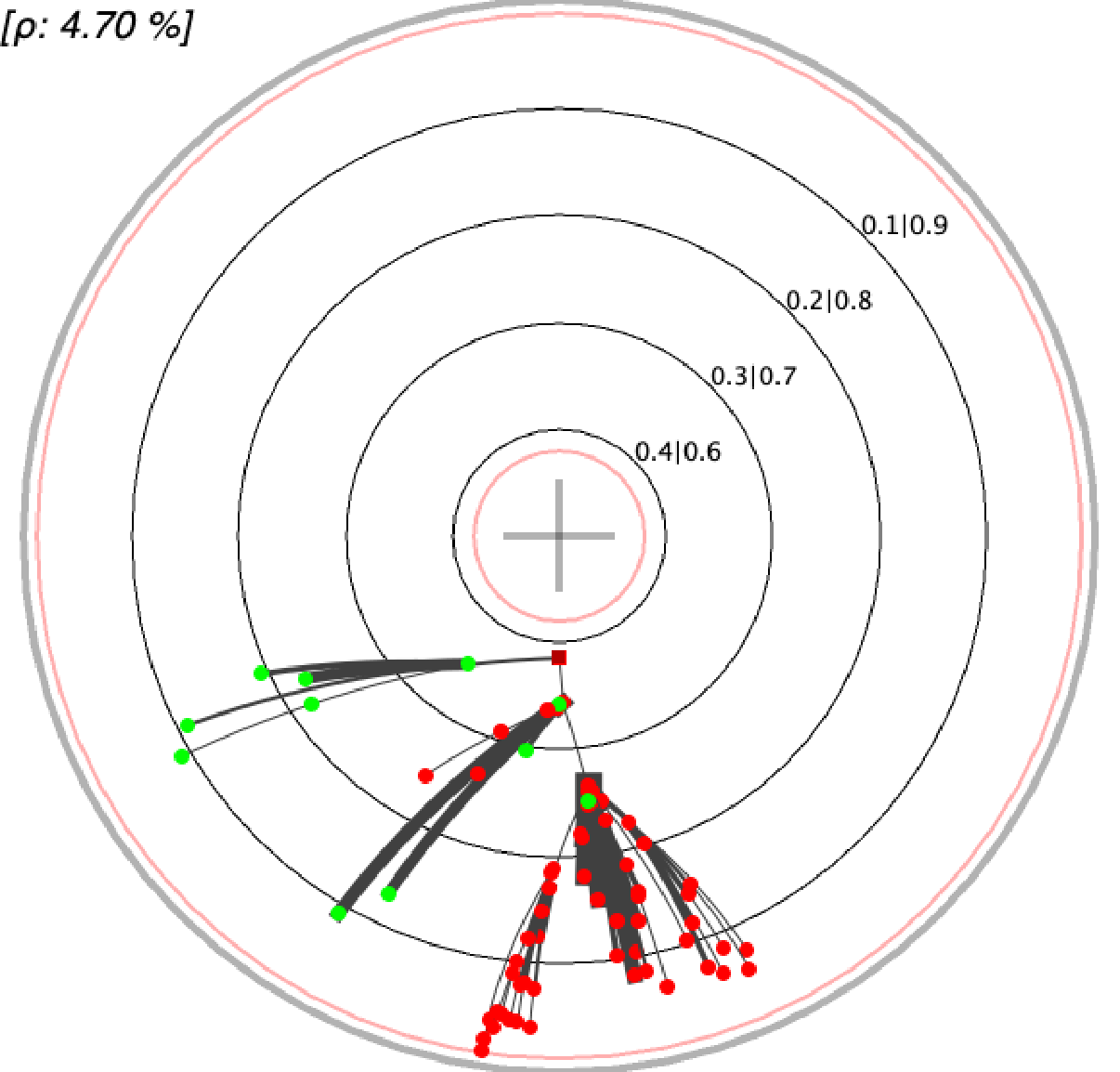} & \includegraphics[trim=0bp 0bp 0bp 0bp,clip,width=0.3\columnwidth]{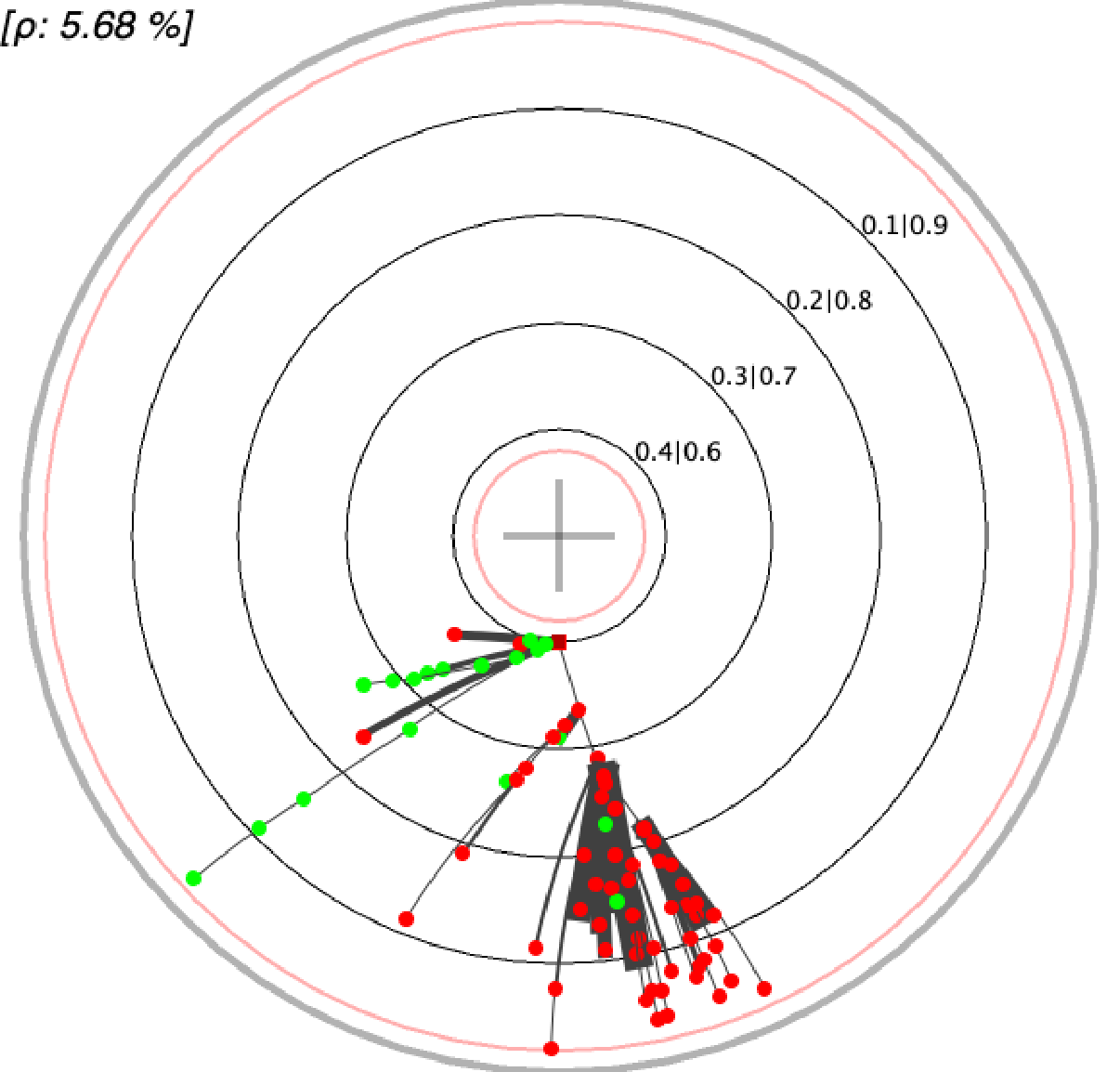}  & \includegraphics[trim=0bp 0bp 0bp 0bp,clip,width=0.3\columnwidth]{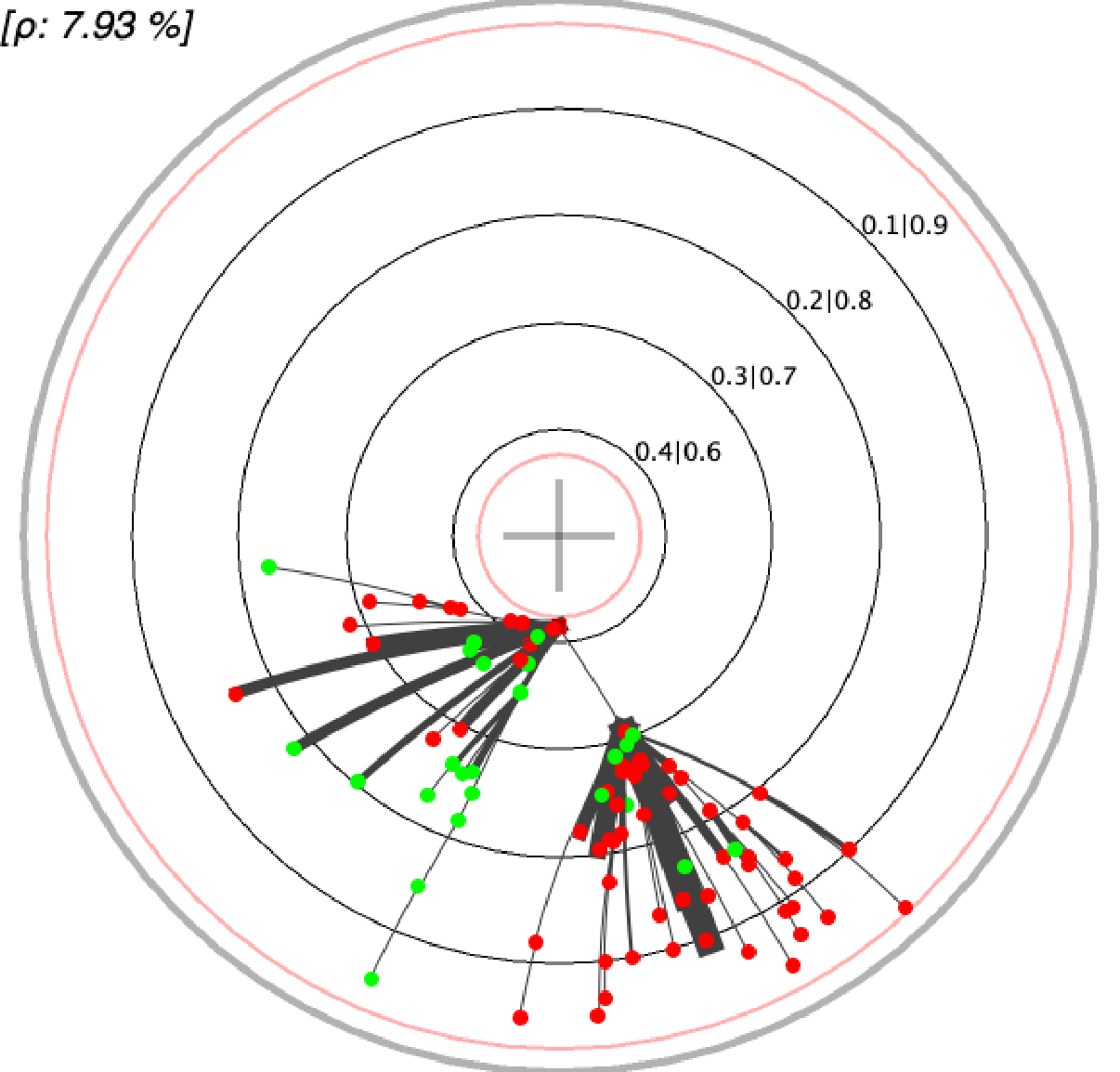}  & \includegraphics[trim=0bp 0bp 0bp 0bp,clip,width=0.3\columnwidth]{Experiments/poincare_DT/online-shopping-intentions/treeplot_Jan_15th__17h_51m_26s_USE_BOOSTING_WEIGHTS_Algo0_SplitCV4_Tree9.eps} \\
                              MDT $\#6$ & MDT $\#7$& MDT $\#8$& MDT $\#9$& MDT $\#10$\\  \Xhline{2pt} 
  \end{tabular}}
\caption{First 10 Monotonic Decision Trees (MDTs) embedded in Poincar\'e disk, corresponding to several Decision Trees (DTs) with 201 nodes each, learned by boosting the log / logistic-loss on UCI \domainname{online$\_$shoppers$\_$intention}. Isolines correspond to the node's prediction confidences, and geometric embedding errors ($\rho \%$) are indicated. See text for details.}
    \label{tab:online-shopping-intentions-dt-exerpt}
  \end{table}

  \begin{table}
  \centering
  \resizebox{\textwidth}{!}{\begin{tabular}{ccccc}\Xhline{2pt}
                              \includegraphics[trim=0bp 0bp 0bp 0bp,clip,width=0.3\columnwidth]{Experiments/poincare_DT/analcatdata-supreme/treeplot_Jan_17th__8h_49m_55s_USE_BOOSTING_WEIGHTS_Algo0_SplitCV3_Tree0.eps} & \includegraphics[trim=0bp 0bp 0bp 0bp,clip,width=0.3\columnwidth]{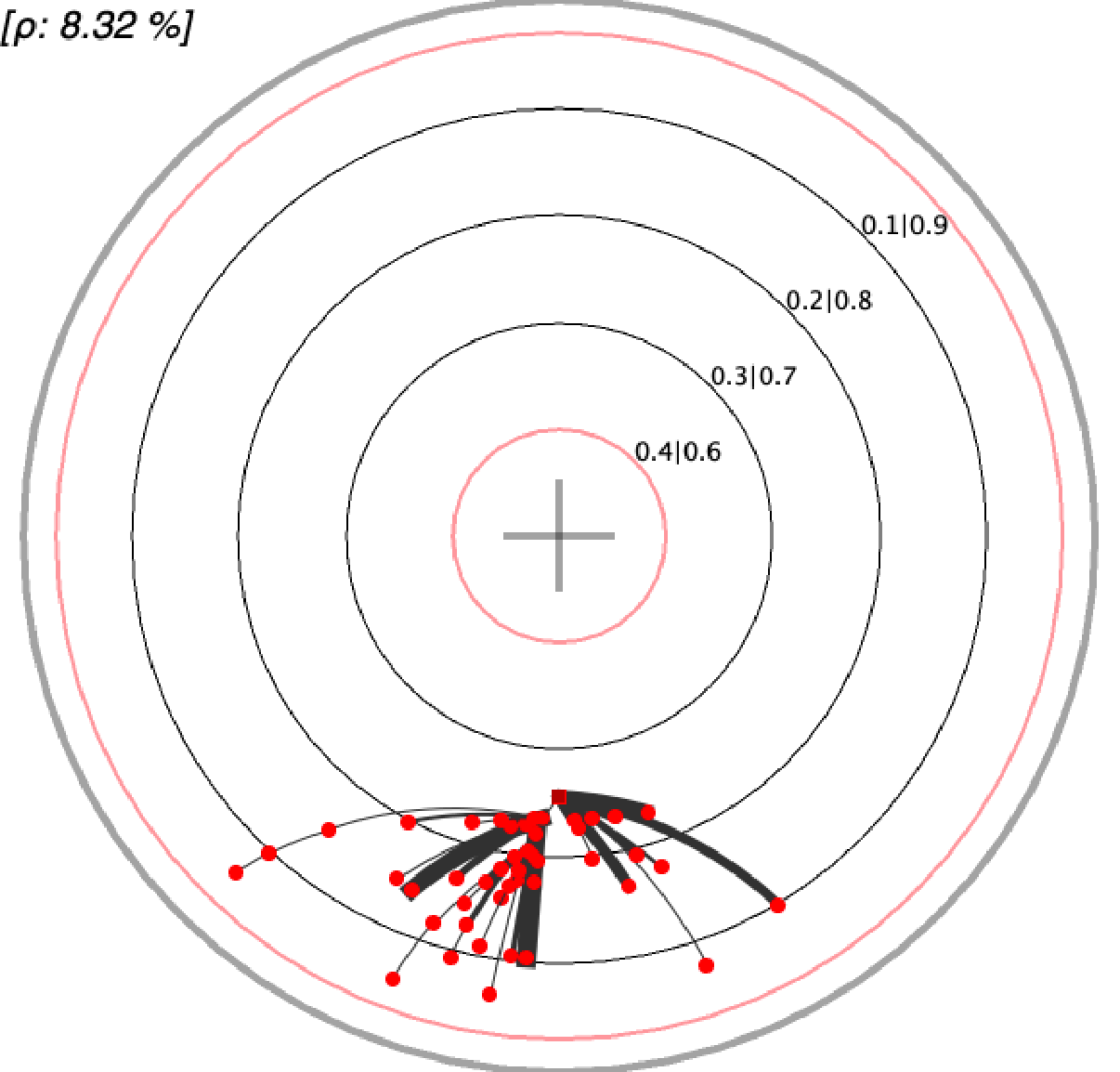} & \includegraphics[trim=0bp 0bp 0bp 0bp,clip,width=0.3\columnwidth]{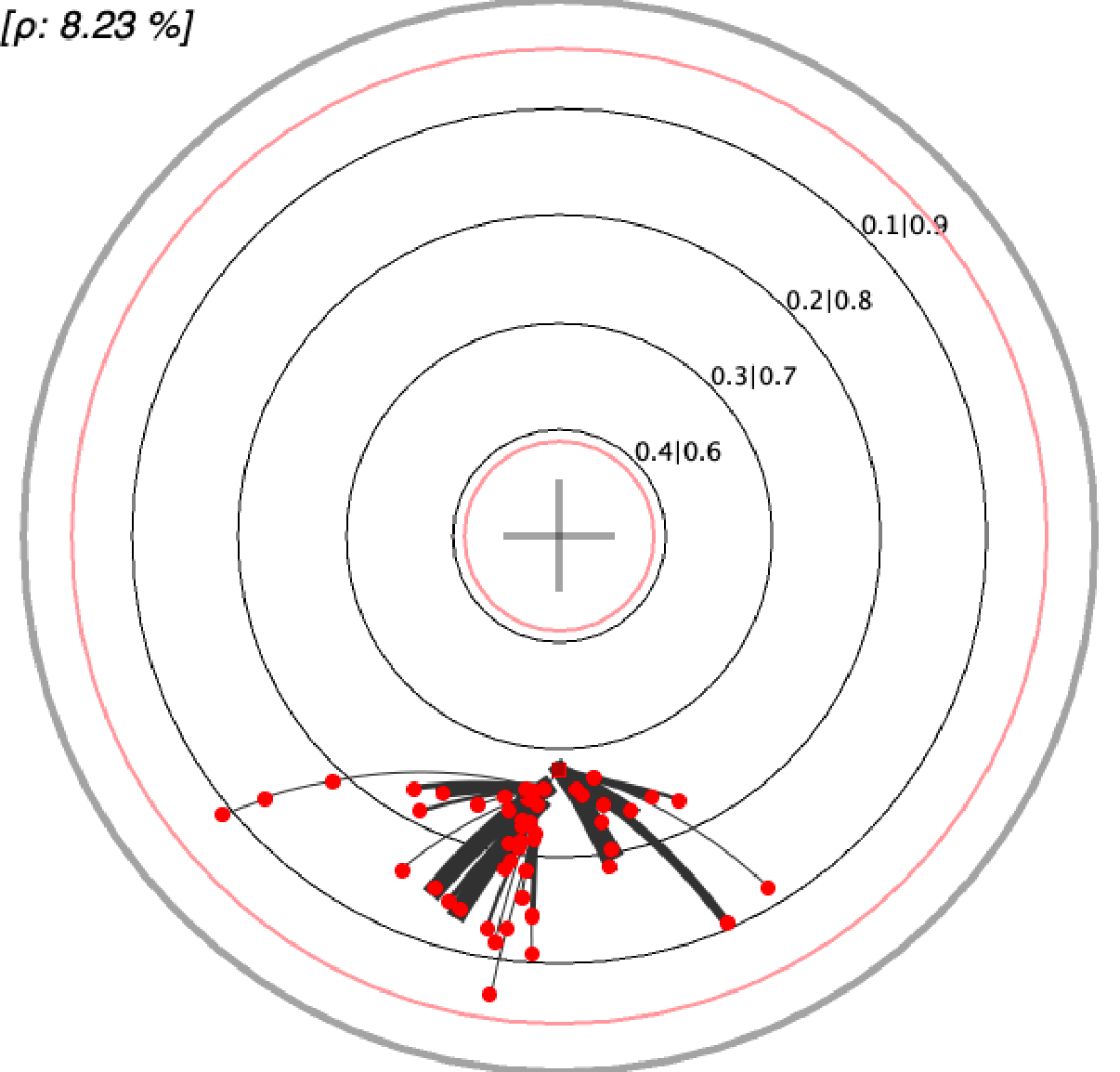}  & \includegraphics[trim=0bp 0bp 0bp 0bp,clip,width=0.3\columnwidth]{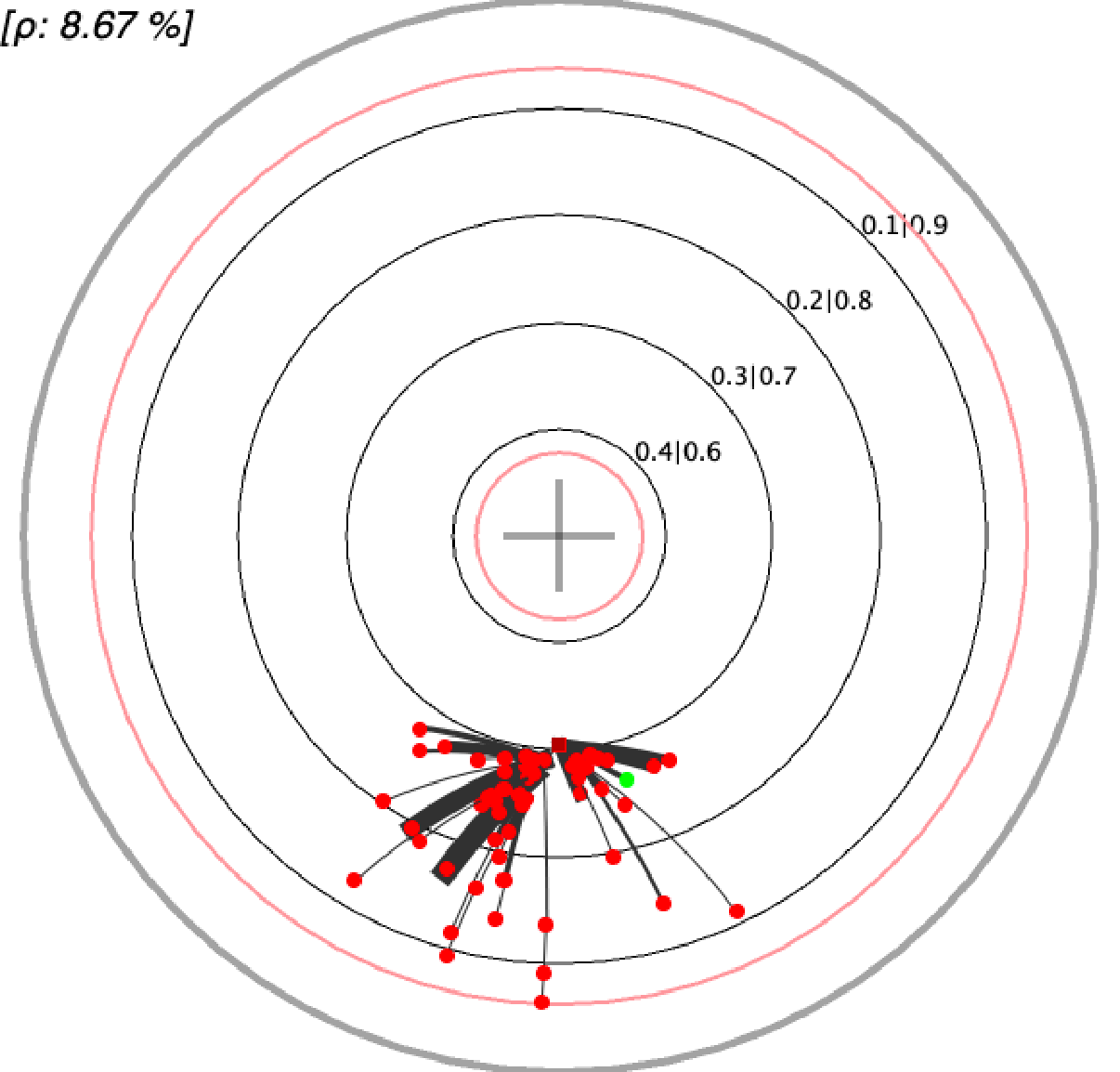}  & \includegraphics[trim=0bp 0bp 0bp 0bp,clip,width=0.3\columnwidth]{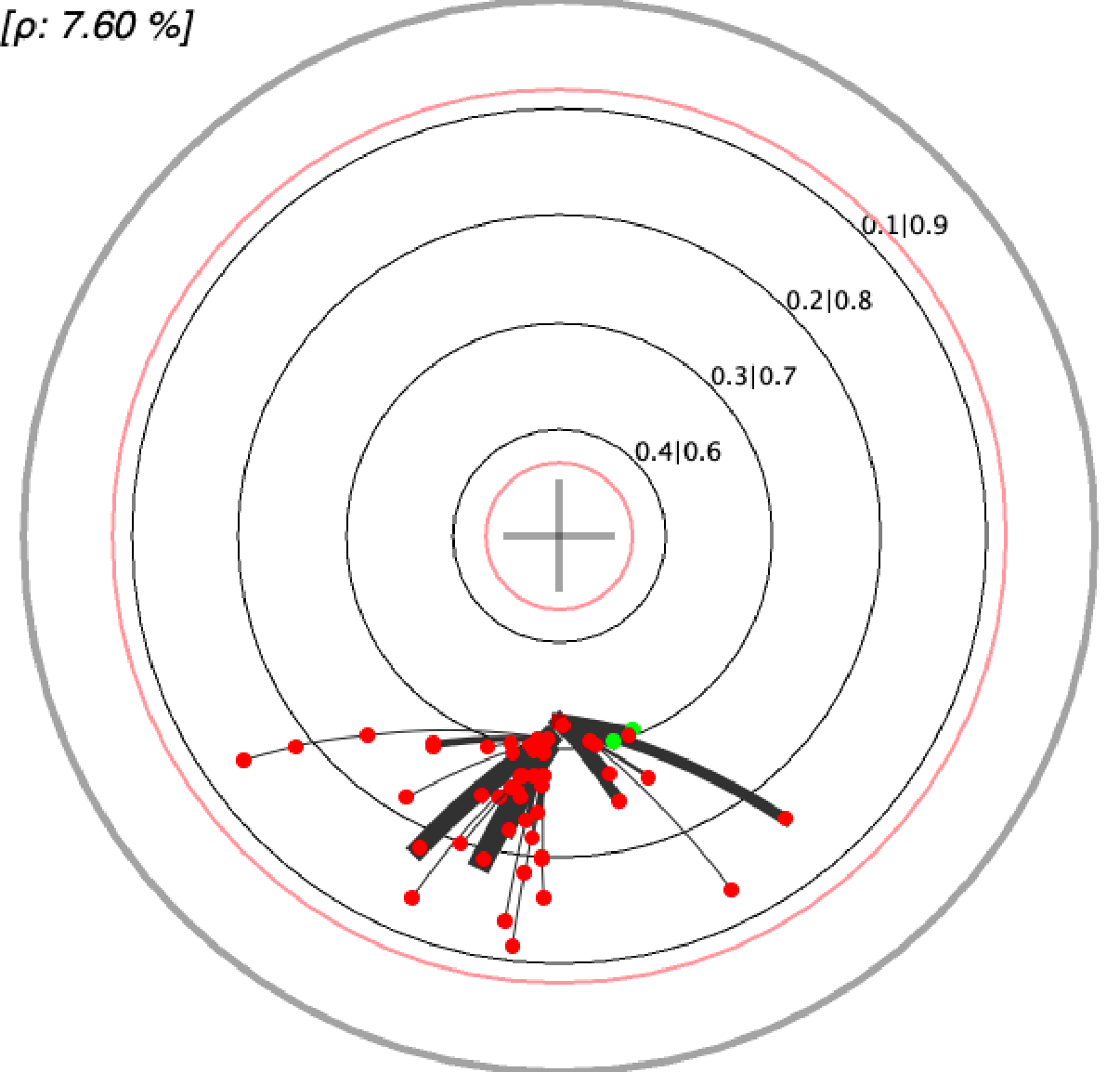} \\
                              MDT $\#1$ & MDT $\#2$& MDT $\#3$& MDT $\#4$& MDT $\#5$\\
                             \includegraphics[trim=0bp 0bp 0bp 0bp,clip,width=0.3\columnwidth]{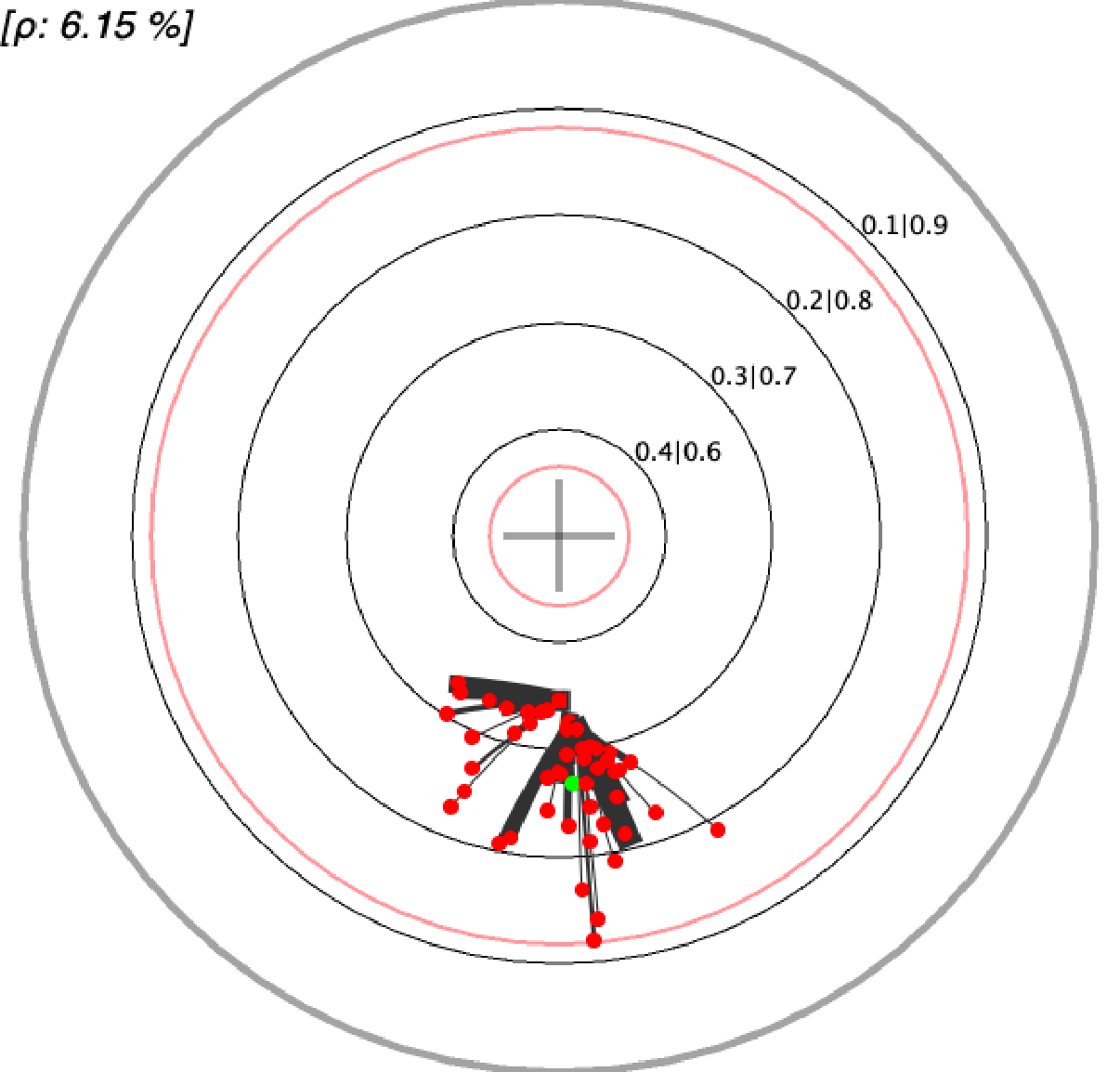} & \includegraphics[trim=0bp 0bp 0bp 0bp,clip,width=0.3\columnwidth]{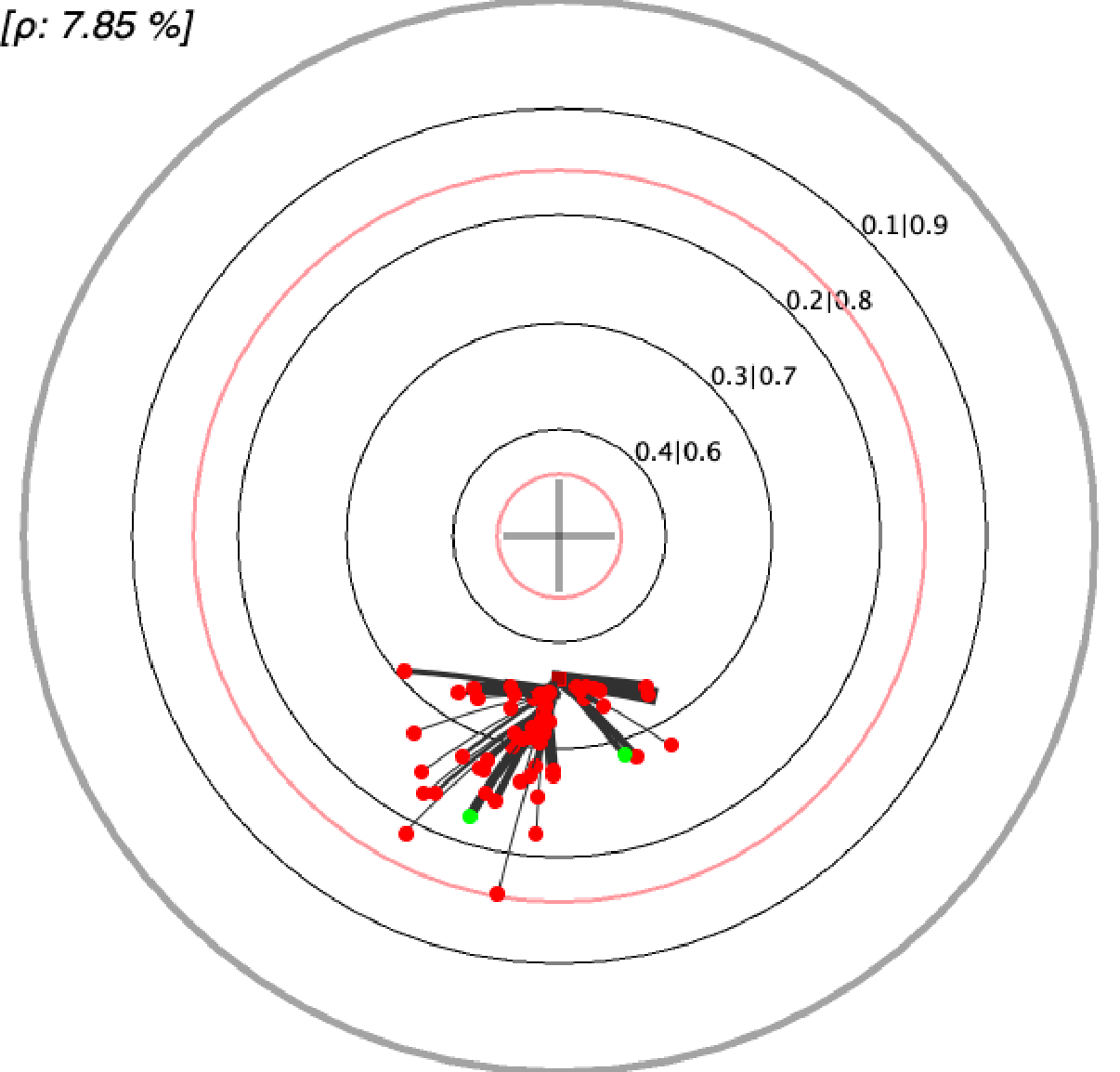} & \includegraphics[trim=0bp 0bp 0bp 0bp,clip,width=0.3\columnwidth]{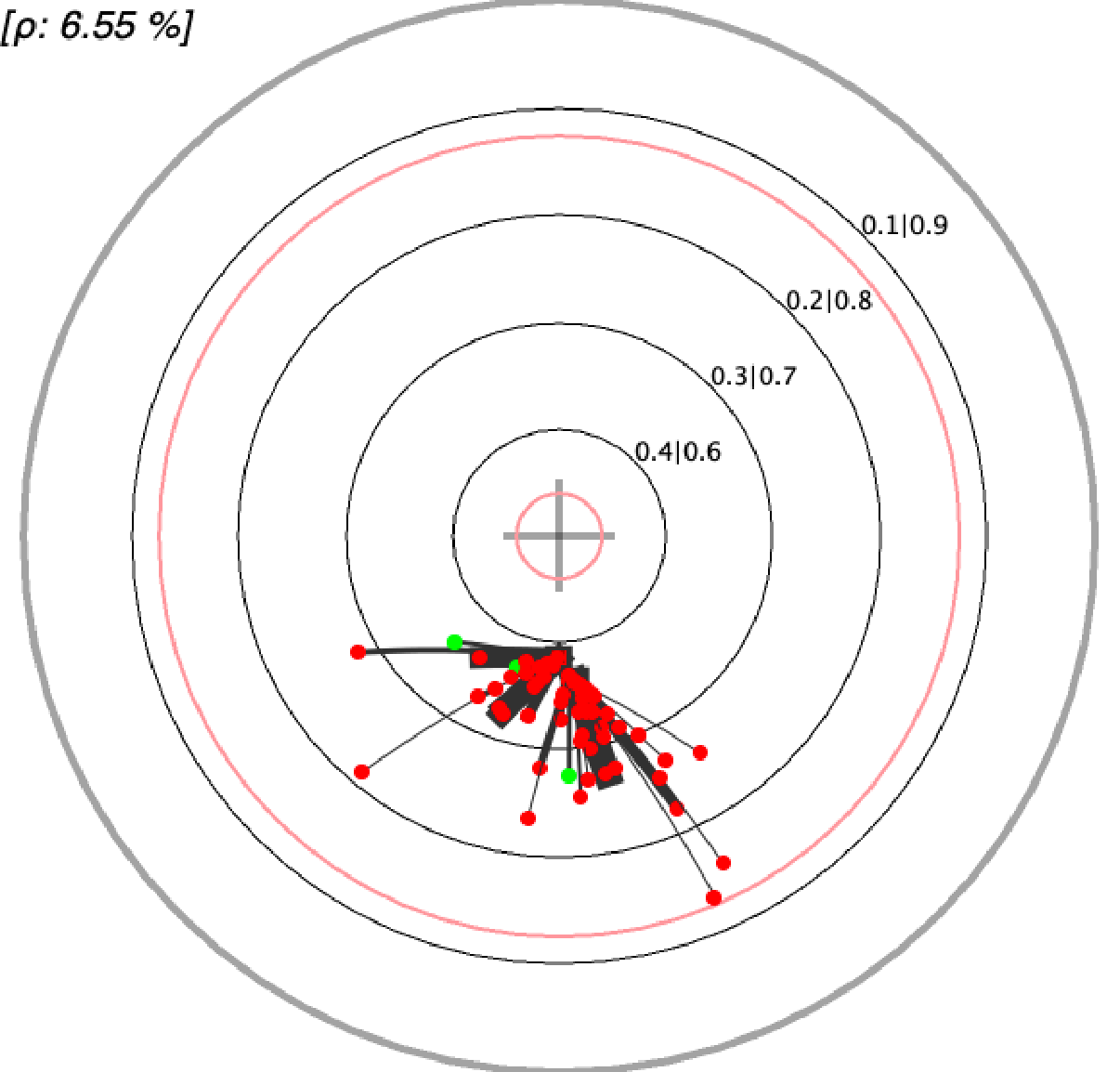}  & \includegraphics[trim=0bp 0bp 0bp 0bp,clip,width=0.3\columnwidth]{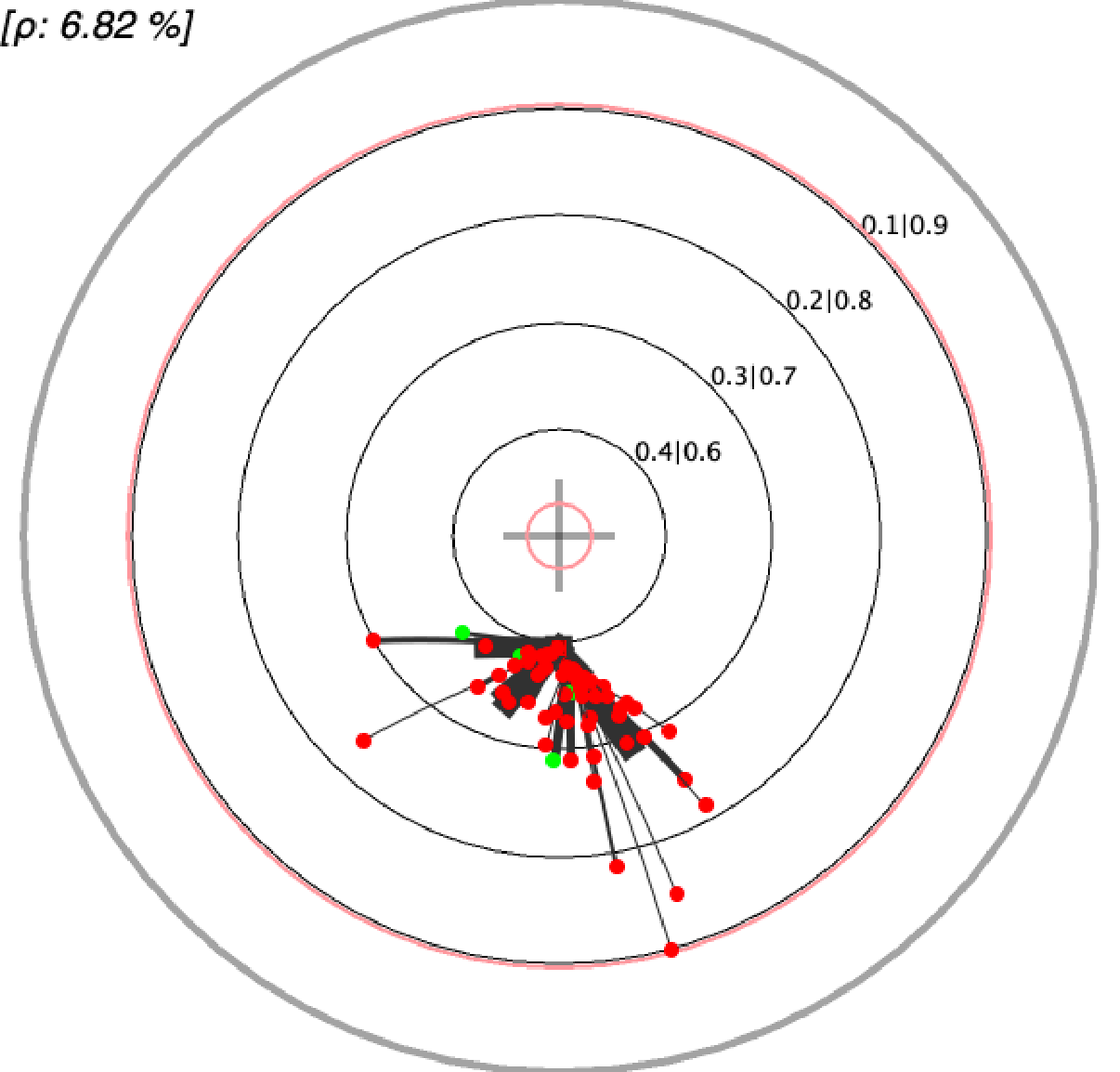}  & \includegraphics[trim=0bp 0bp 0bp 0bp,clip,width=0.3\columnwidth]{Experiments/poincare_DT/analcatdata-supreme/treeplot_Jan_17th__8h_49m_55s_USE_BOOSTING_WEIGHTS_Algo0_SplitCV3_Tree9.eps} \\
                              MDT $\#6$ & MDT $\#7$& MDT $\#8$& MDT $\#9$& MDT $\#10$\\  \Xhline{2pt} 
  \end{tabular}}
\caption{First 10 MDTs for UCI \domainname{analcatdata$\_$supreme}. Convention follows Table \ref{tab:online-shopping-intentions-dt-exerpt}.}
    \label{tab:analcatdata-supreme-dt-exerpt}
  \end{table}
  
  \begin{table}
  \centering
  \resizebox{\textwidth}{!}{\begin{tabular}{ccccc}\Xhline{2pt}
                              \includegraphics[trim=0bp 0bp 0bp 0bp,clip,width=0.3\columnwidth]{Experiments/poincare_DT/abalone/treeplot_Jan_17th__8h_54m_58s_USE_BOOSTING_WEIGHTS_Algo0_SplitCV5_Tree0.eps} & \includegraphics[trim=0bp 0bp 0bp 0bp,clip,width=0.3\columnwidth]{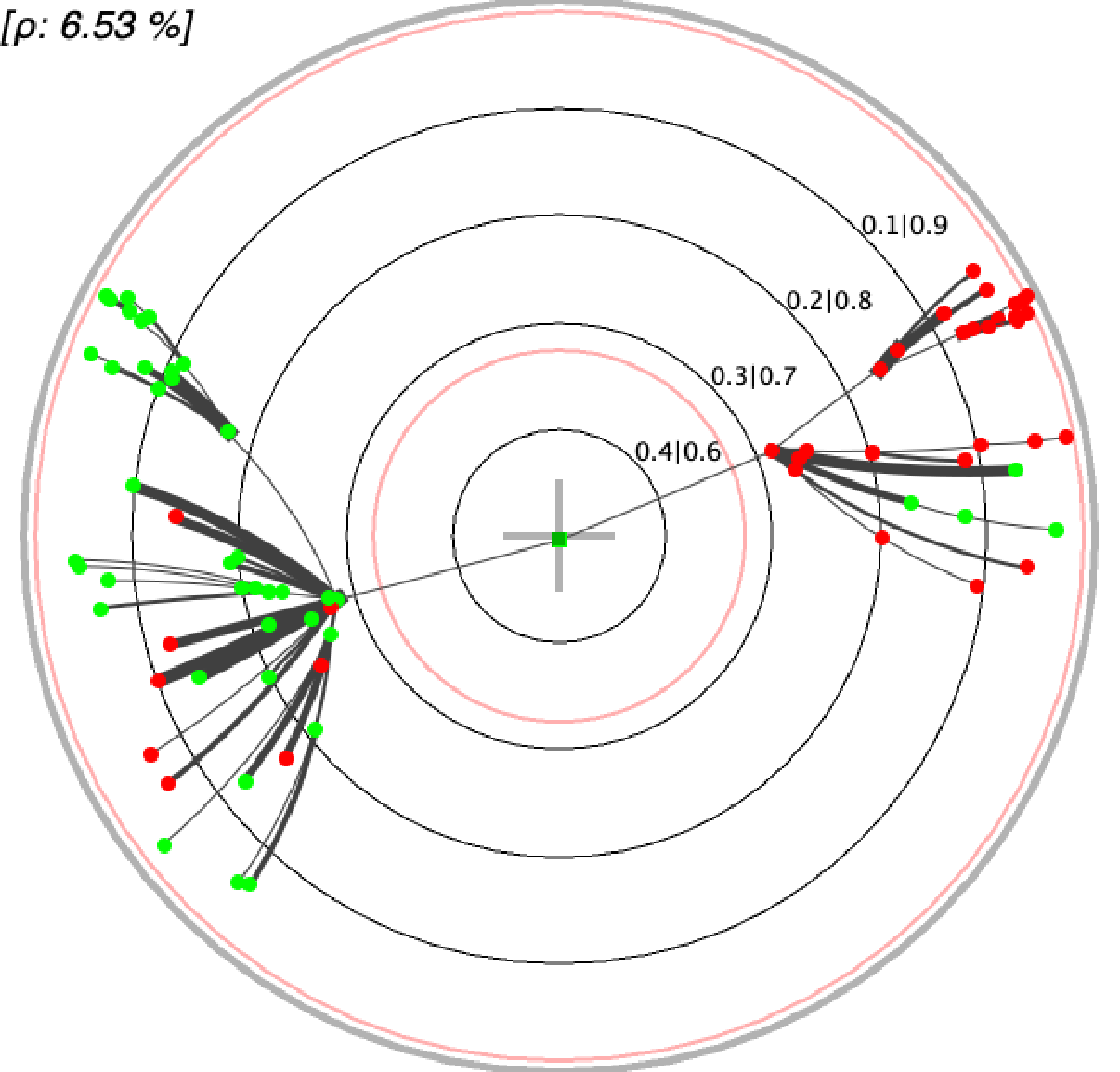} & \includegraphics[trim=0bp 0bp 0bp 0bp,clip,width=0.3\columnwidth]{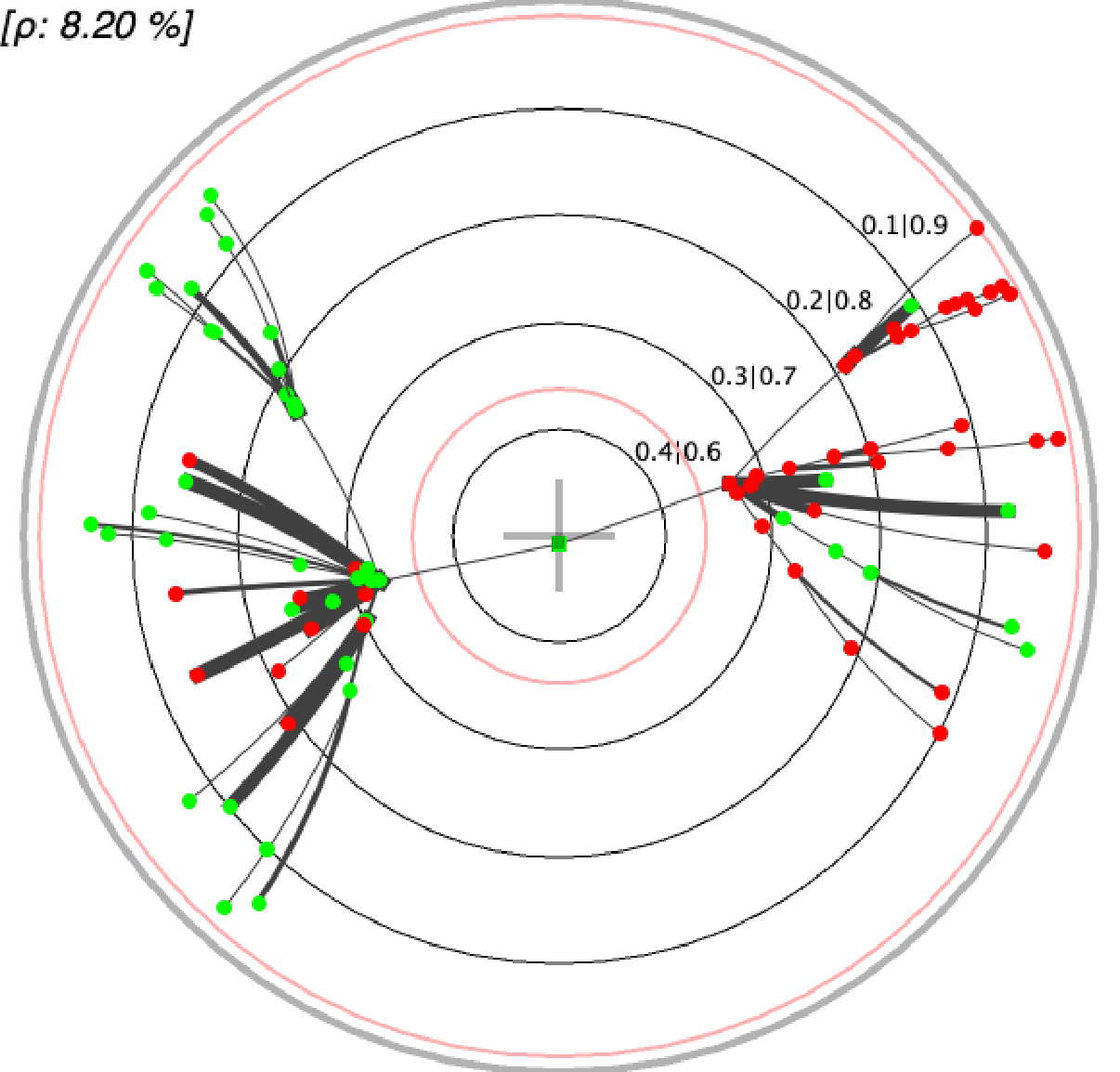}  & \includegraphics[trim=0bp 0bp 0bp 0bp,clip,width=0.3\columnwidth]{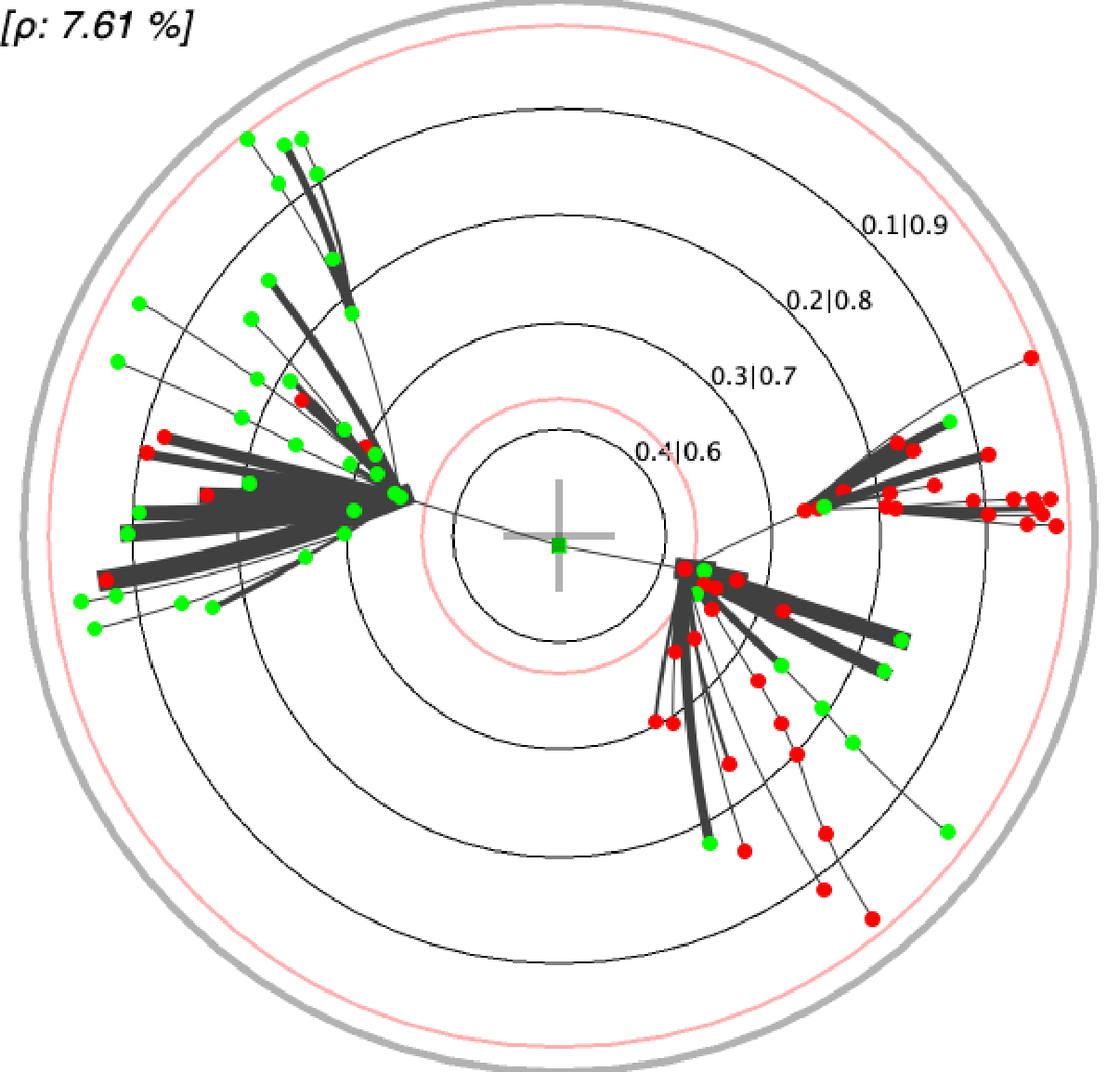}  & \includegraphics[trim=0bp 0bp 0bp 0bp,clip,width=0.3\columnwidth]{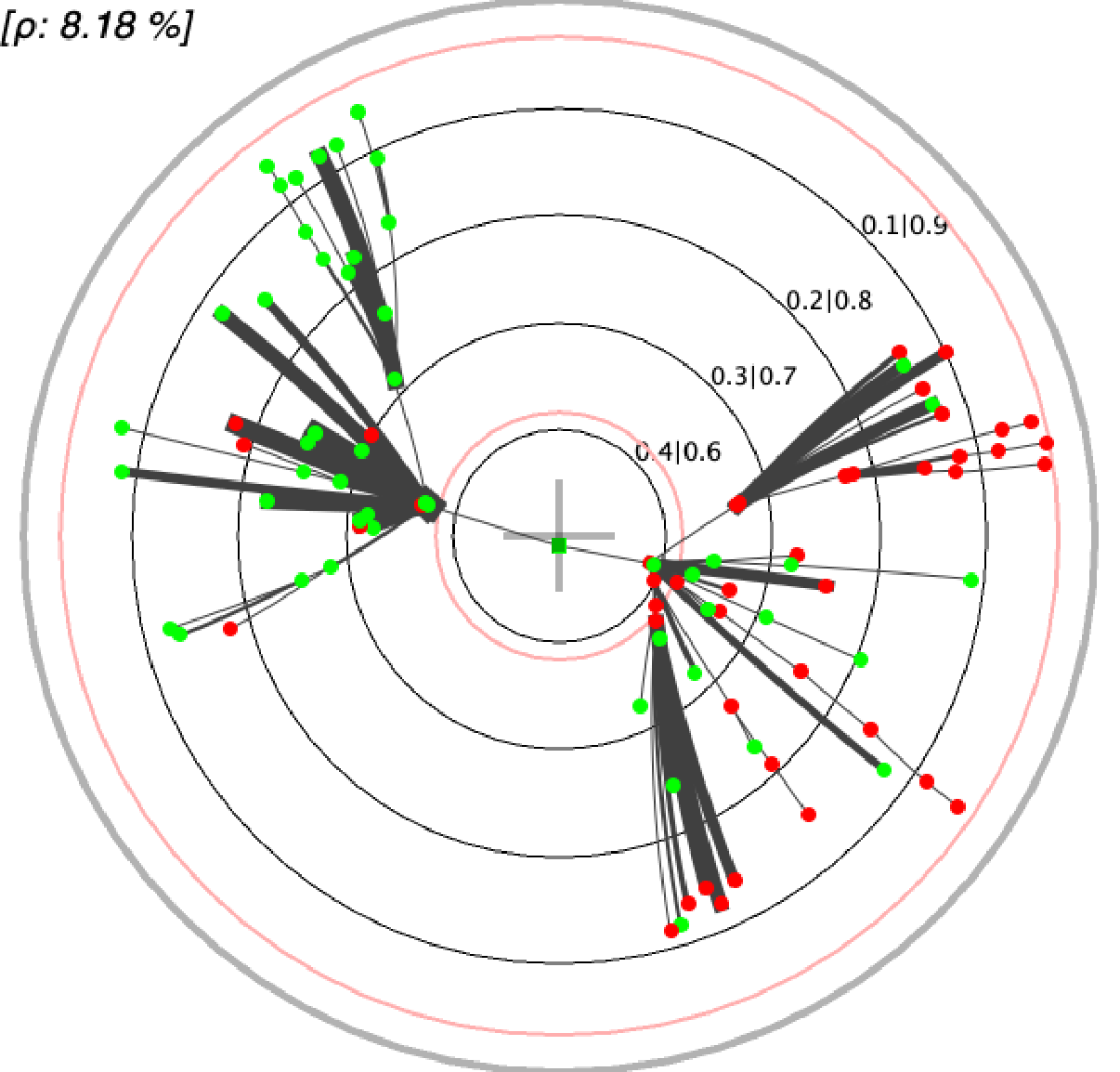} \\
                              MDT $\#1$ & MDT $\#2$& MDT $\#3$& MDT $\#4$& MDT $\#5$\\
                             \includegraphics[trim=0bp 0bp 0bp 0bp,clip,width=0.3\columnwidth]{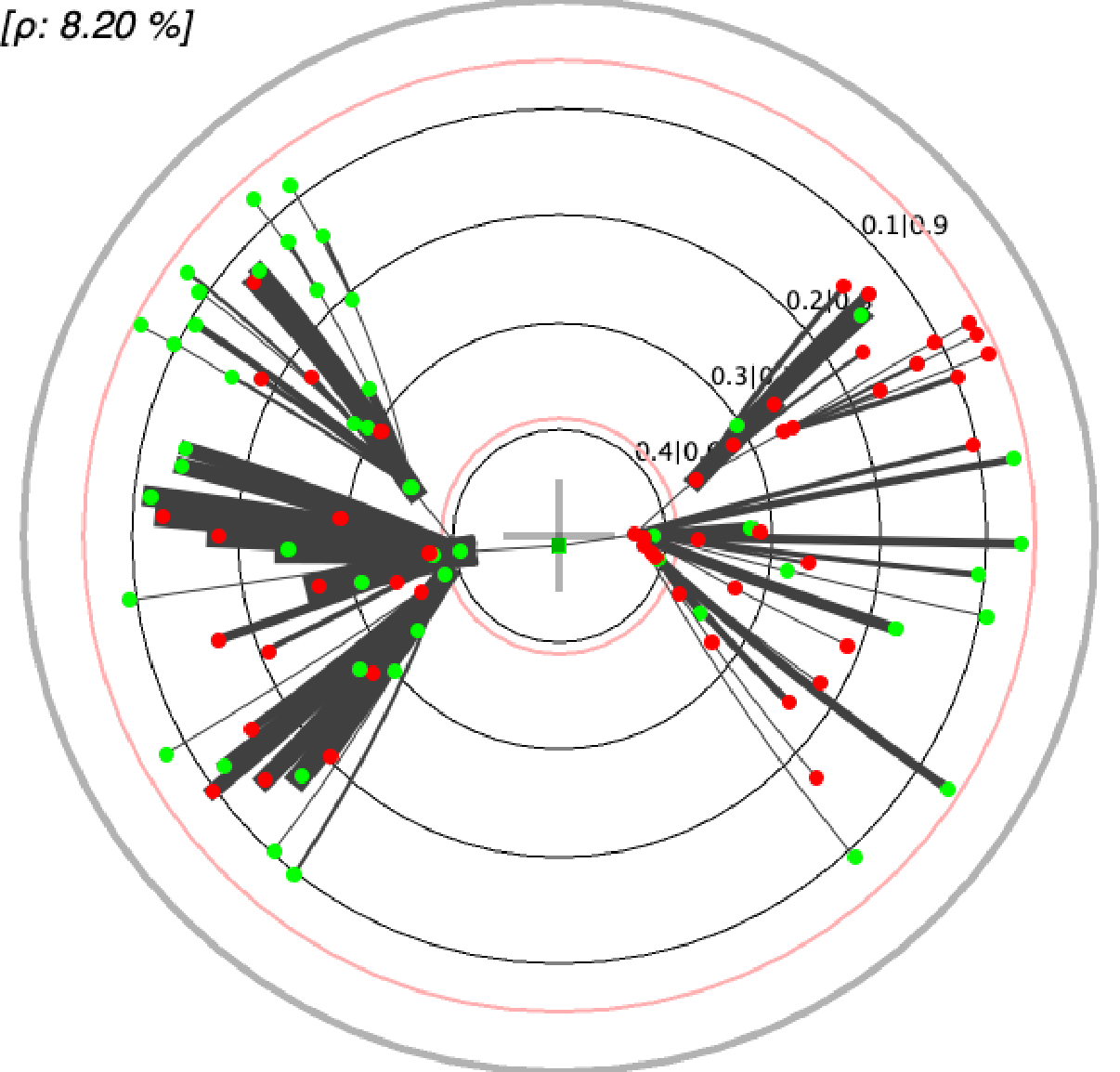} & \includegraphics[trim=0bp 0bp 0bp 0bp,clip,width=0.3\columnwidth]{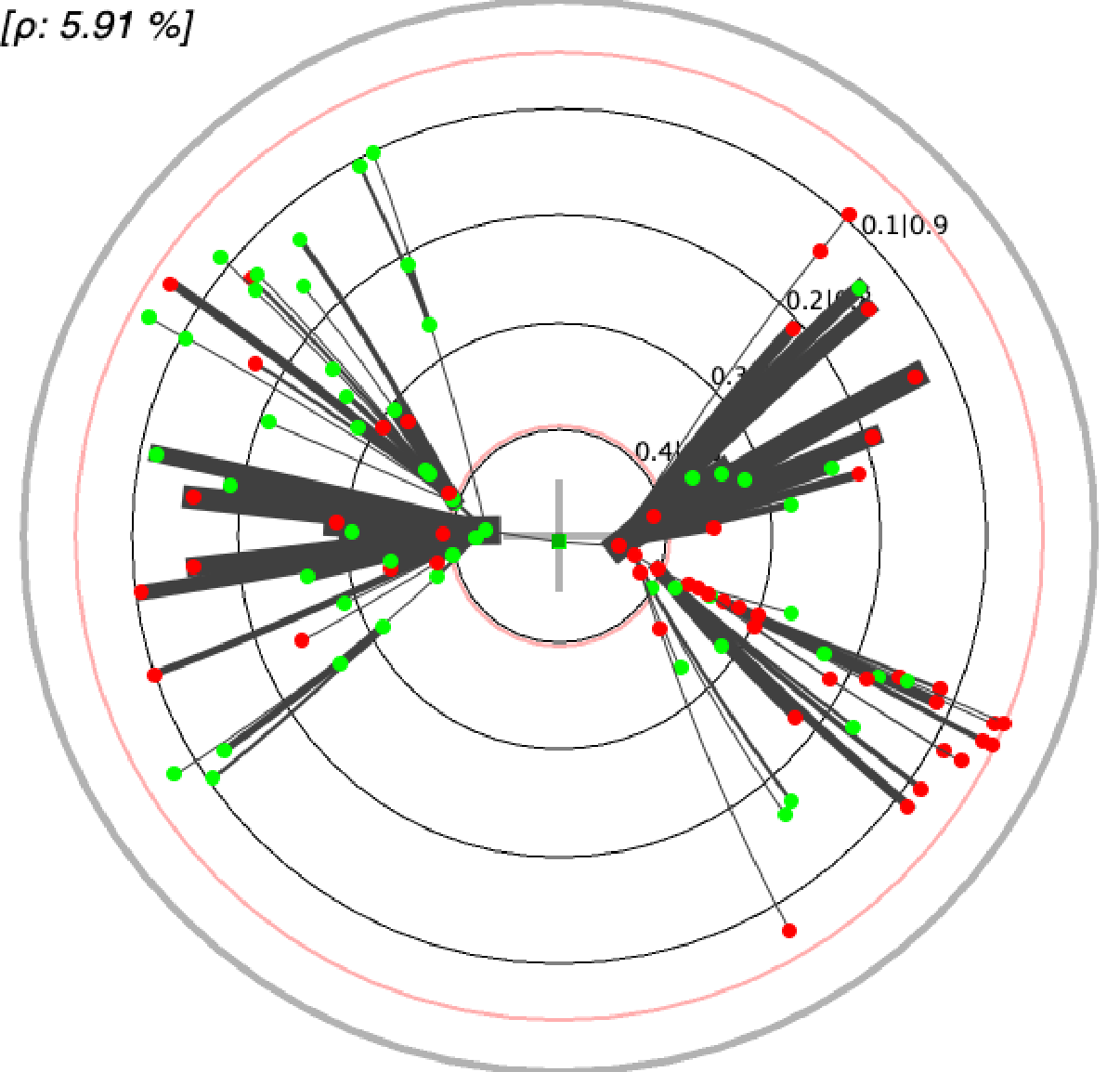} & \includegraphics[trim=0bp 0bp 0bp 0bp,clip,width=0.3\columnwidth]{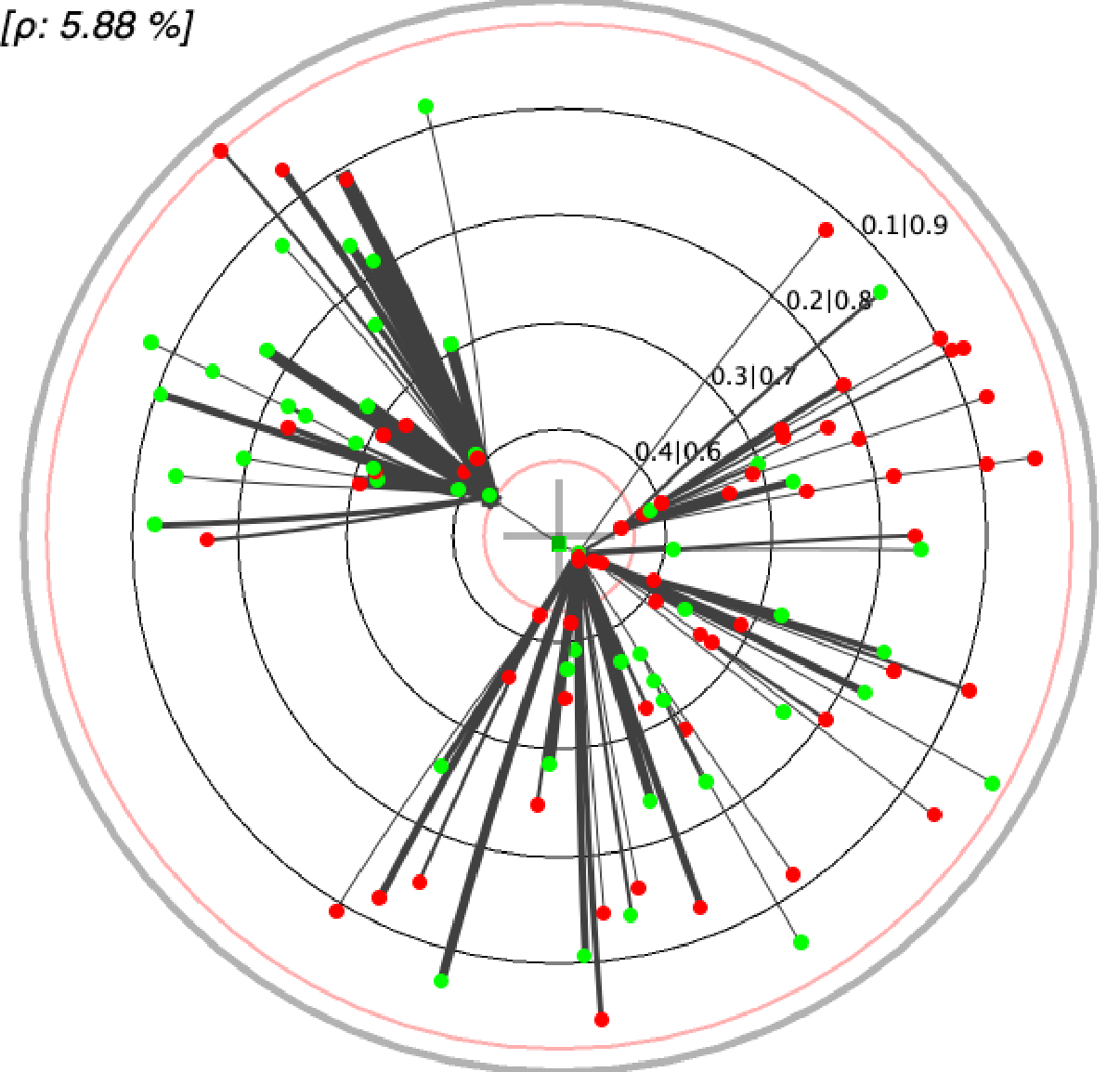}  & \includegraphics[trim=0bp 0bp 0bp 0bp,clip,width=0.3\columnwidth]{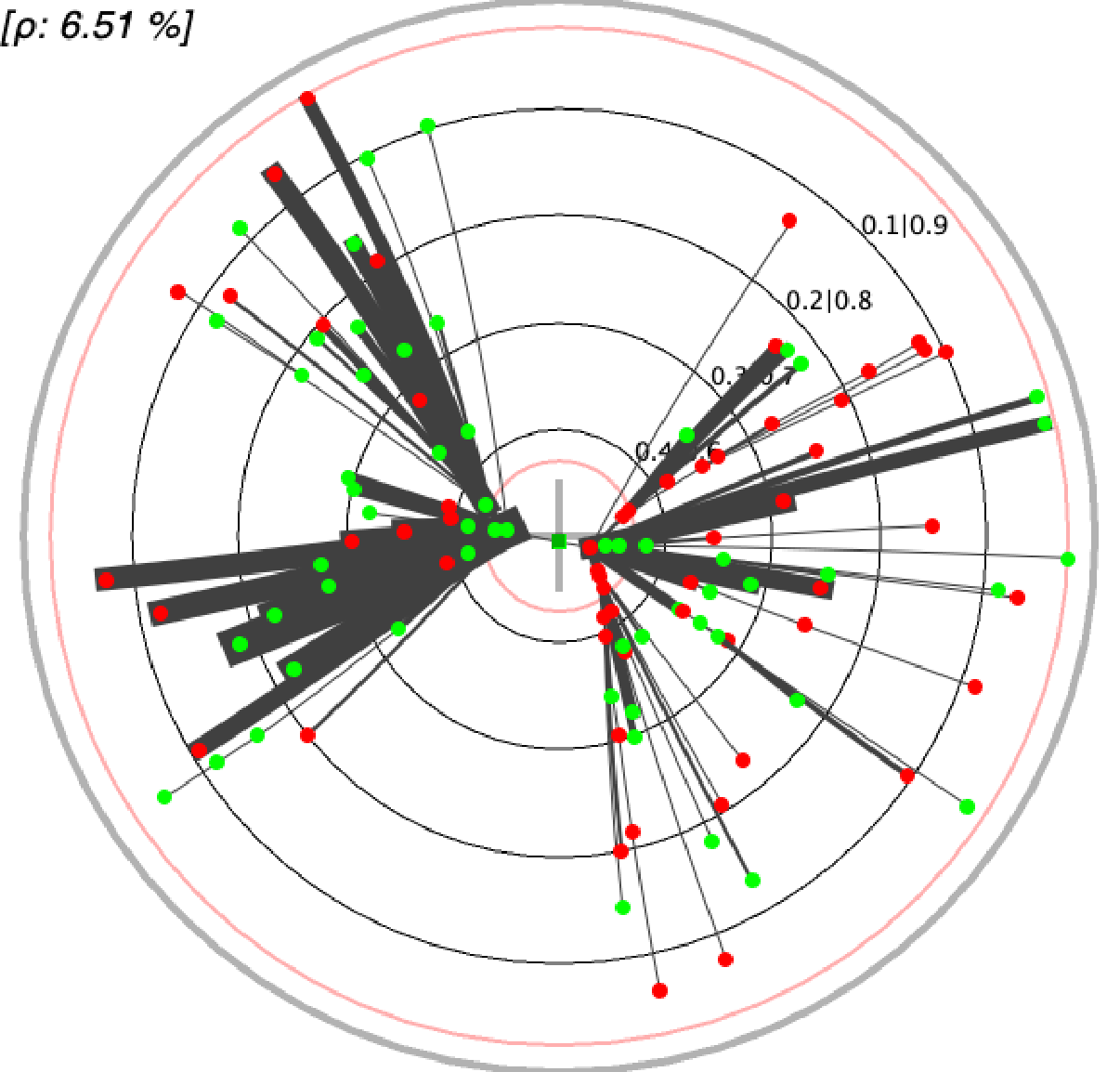}  & \includegraphics[trim=0bp 0bp 0bp 0bp,clip,width=0.3\columnwidth]{Experiments/poincare_DT/abalone/treeplot_Jan_17th__8h_54m_58s_USE_BOOSTING_WEIGHTS_Algo0_SplitCV5_Tree9.eps} \\
                              MDT $\#6$ & MDT $\#7$& MDT $\#8$& MDT $\#9$& MDT $\#10$\\  \Xhline{2pt} 
  \end{tabular}}
\caption{First 10 MDTs for UCI \domainname{abalone}. Convention follows Table \ref{tab:online-shopping-intentions-dt-exerpt}.}
    \label{tab:abalone-dt-exerpt}
  \end{table}

   \begin{table}
  \centering
  \resizebox{\textwidth}{!}{\begin{tabular}{ccccc}\Xhline{2pt}
                              \includegraphics[trim=0bp 0bp 0bp 0bp,clip,width=0.3\columnwidth]{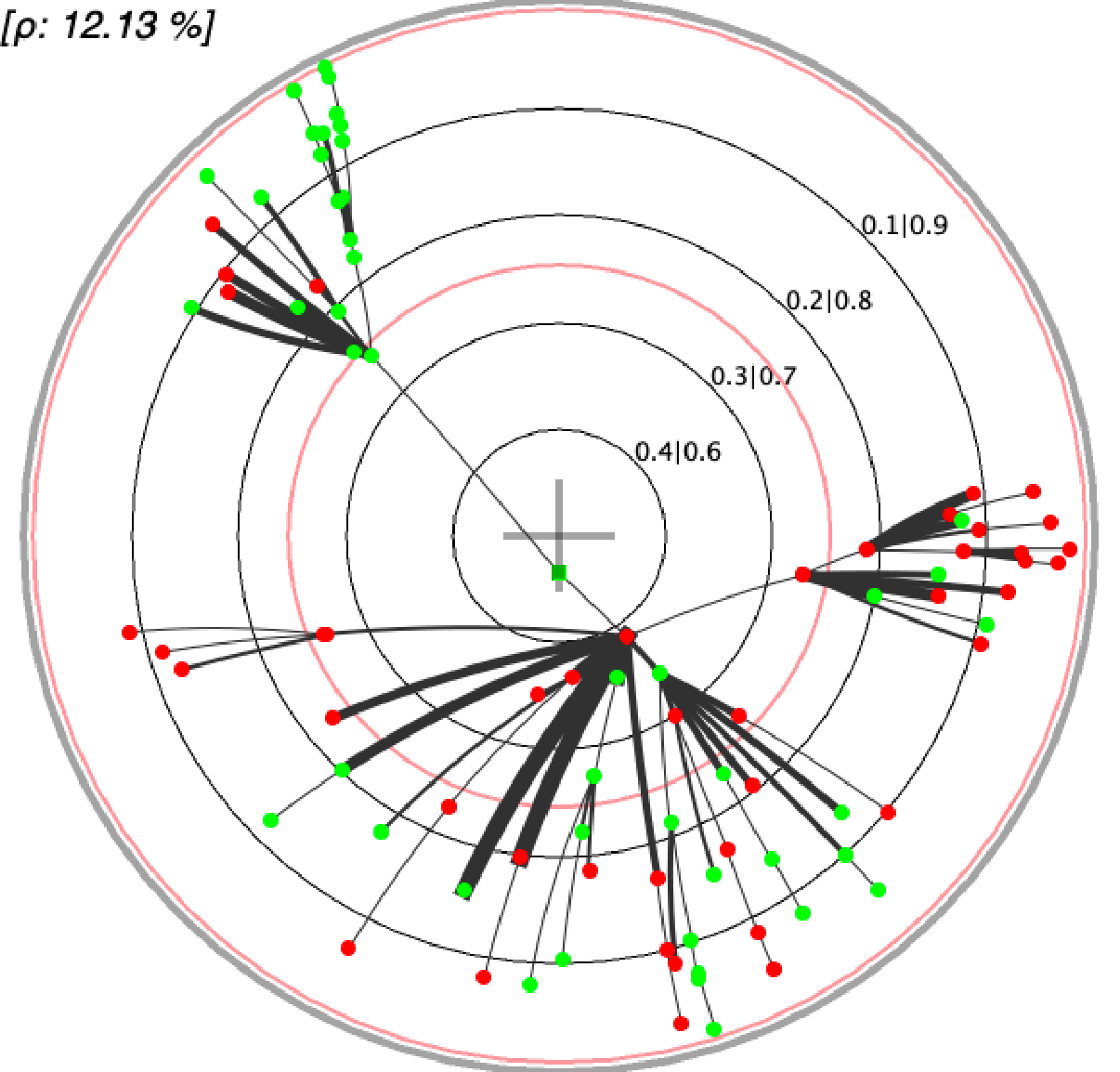} & \includegraphics[trim=0bp 0bp 0bp 0bp,clip,width=0.3\columnwidth]{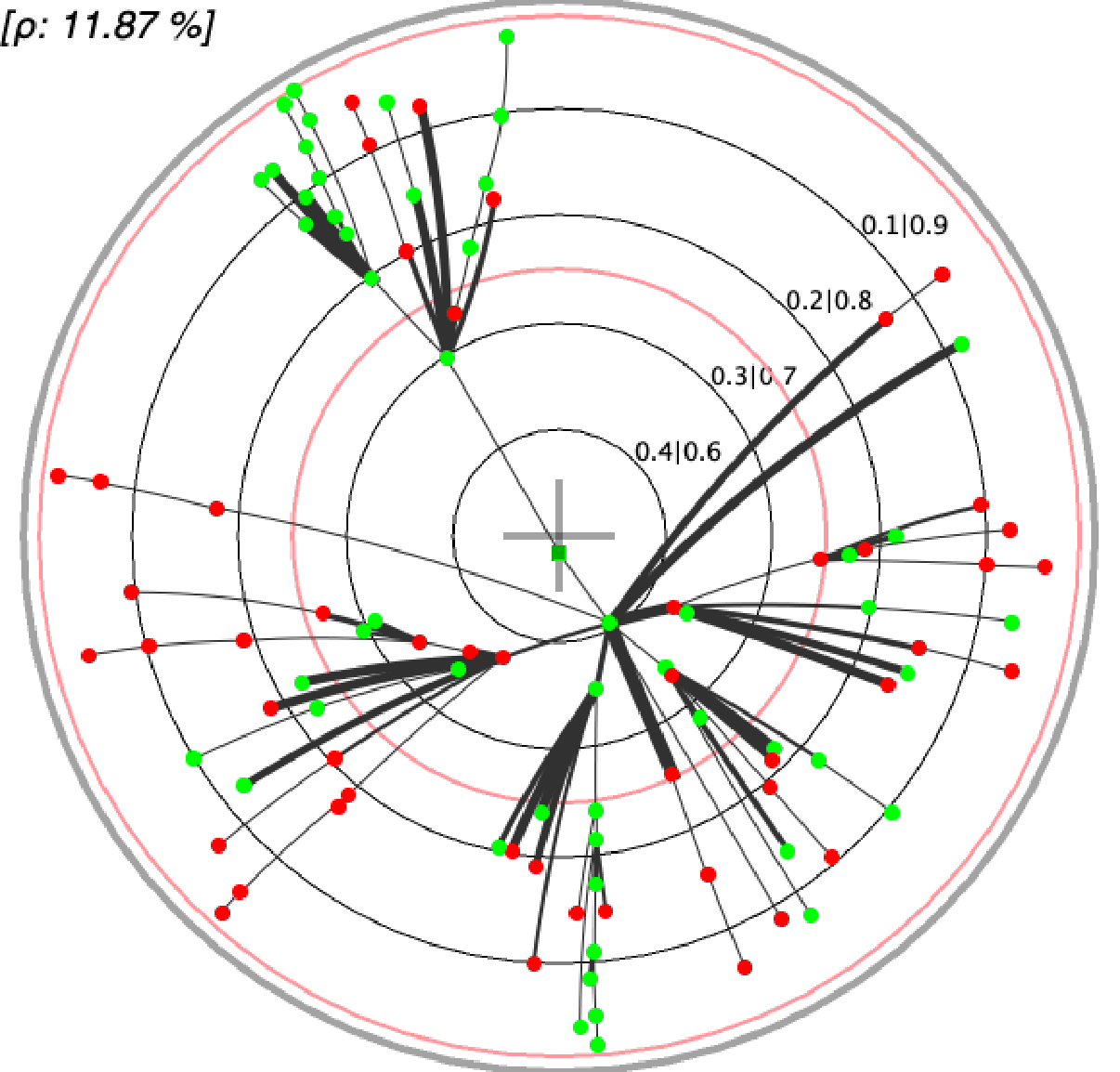} & \includegraphics[trim=0bp 0bp 0bp 0bp,clip,width=0.3\columnwidth]{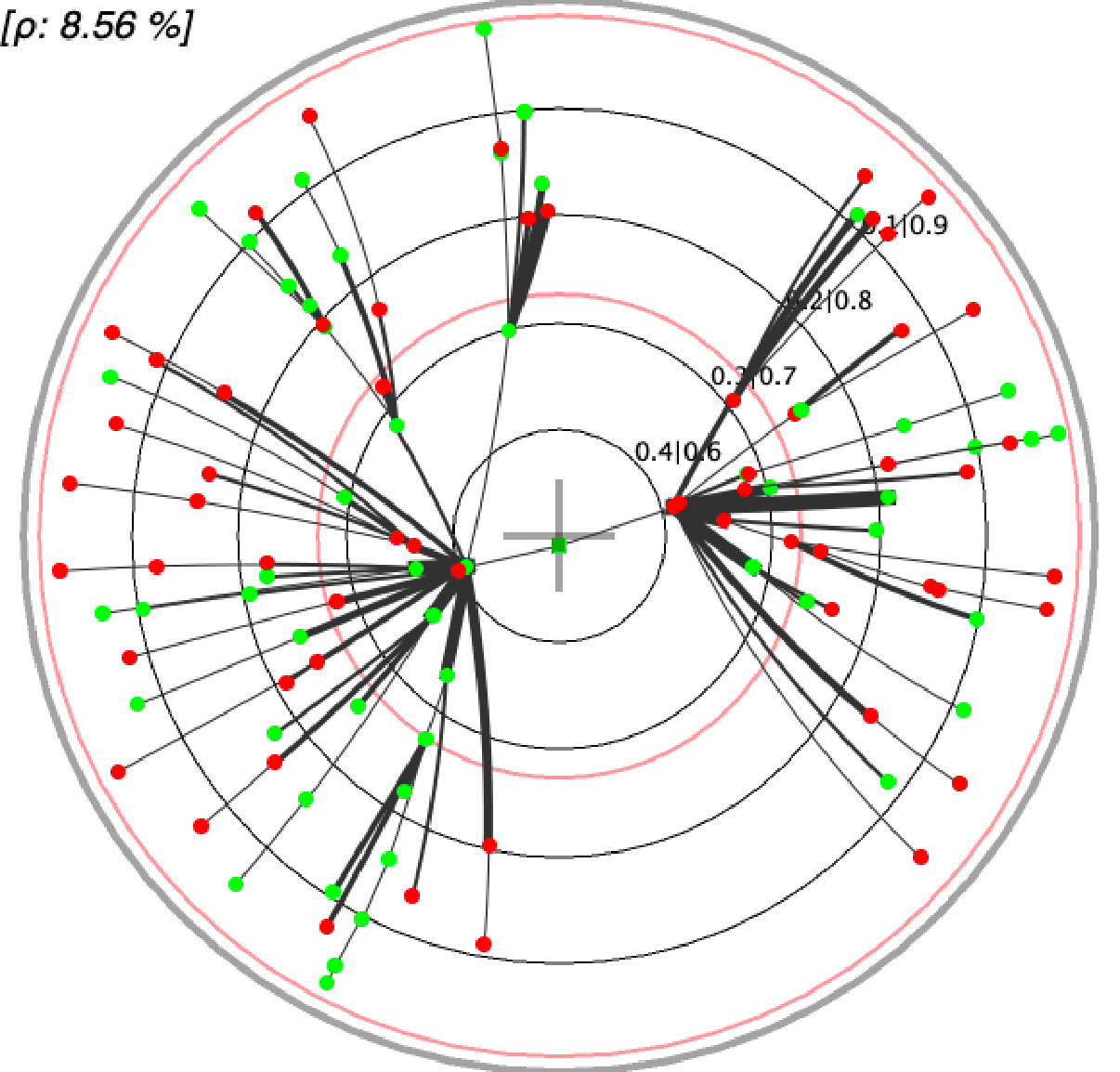}  & \includegraphics[trim=0bp 0bp 0bp 0bp,clip,width=0.3\columnwidth]{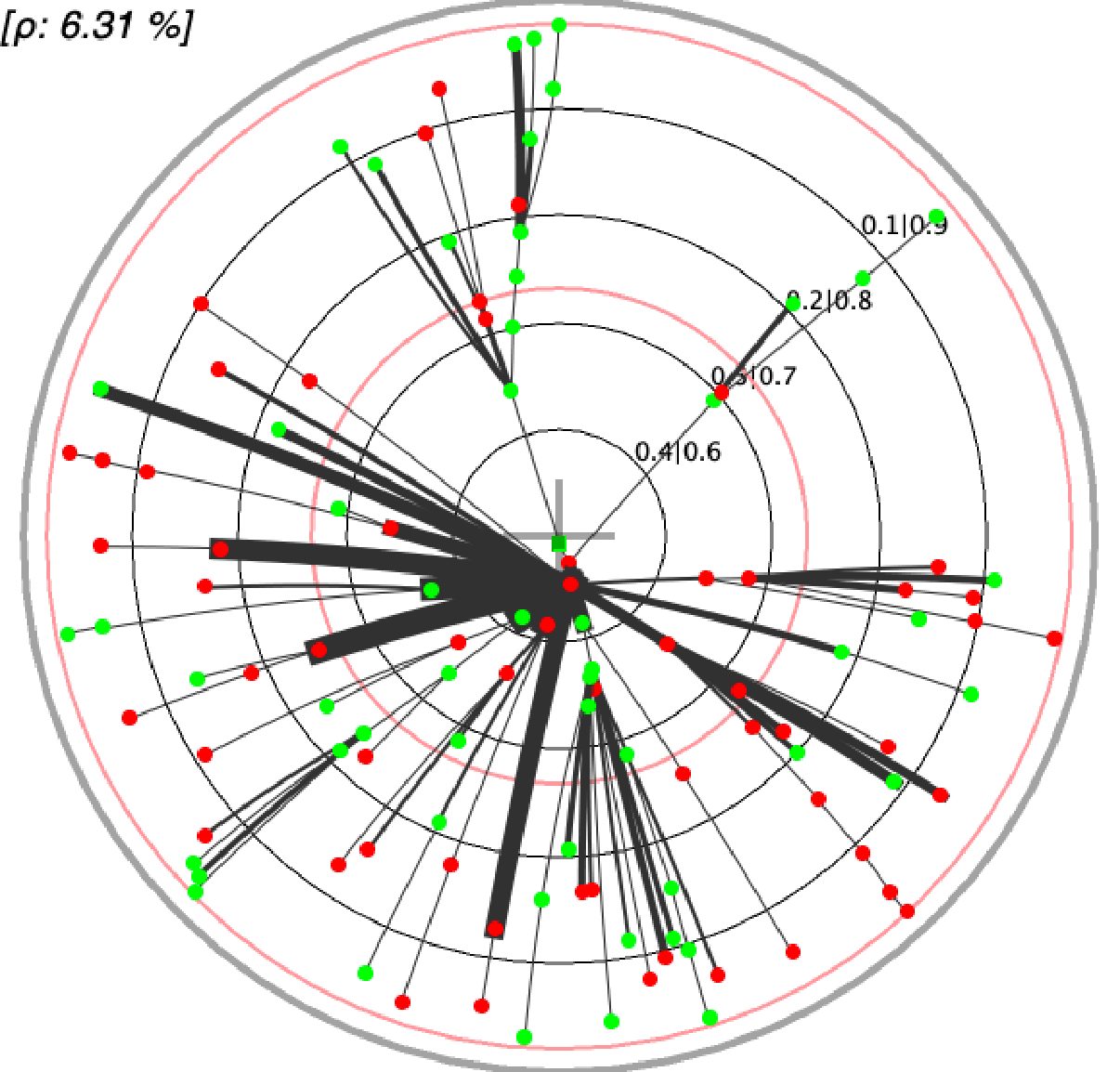}  & \includegraphics[trim=0bp 0bp 0bp 0bp,clip,width=0.3\columnwidth]{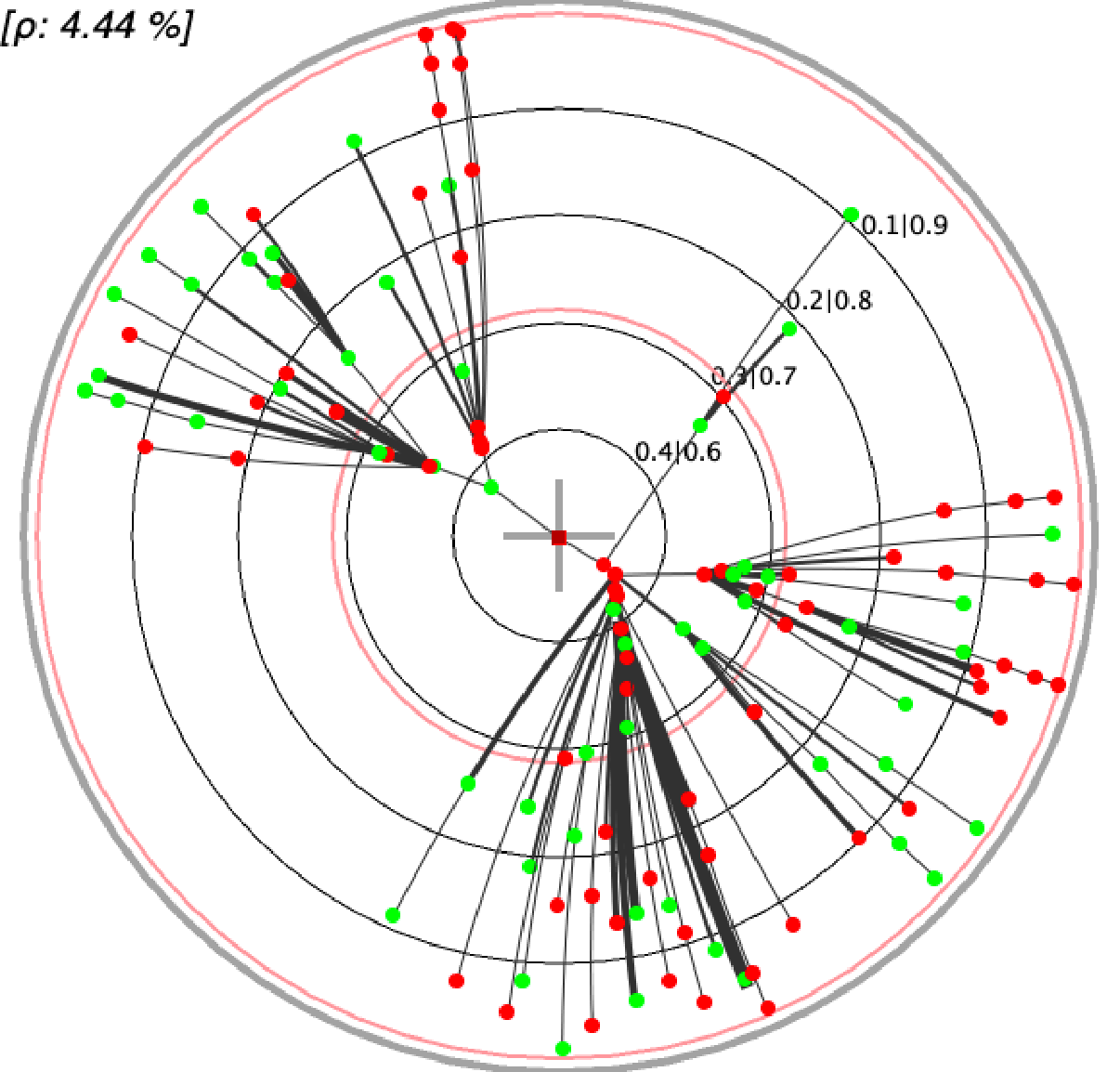} \\
                              MDT $\#1$ & MDT $\#2$& MDT $\#3$& MDT $\#4$& MDT $\#5$\\
                             \includegraphics[trim=0bp 0bp 0bp 0bp,clip,width=0.3\columnwidth]{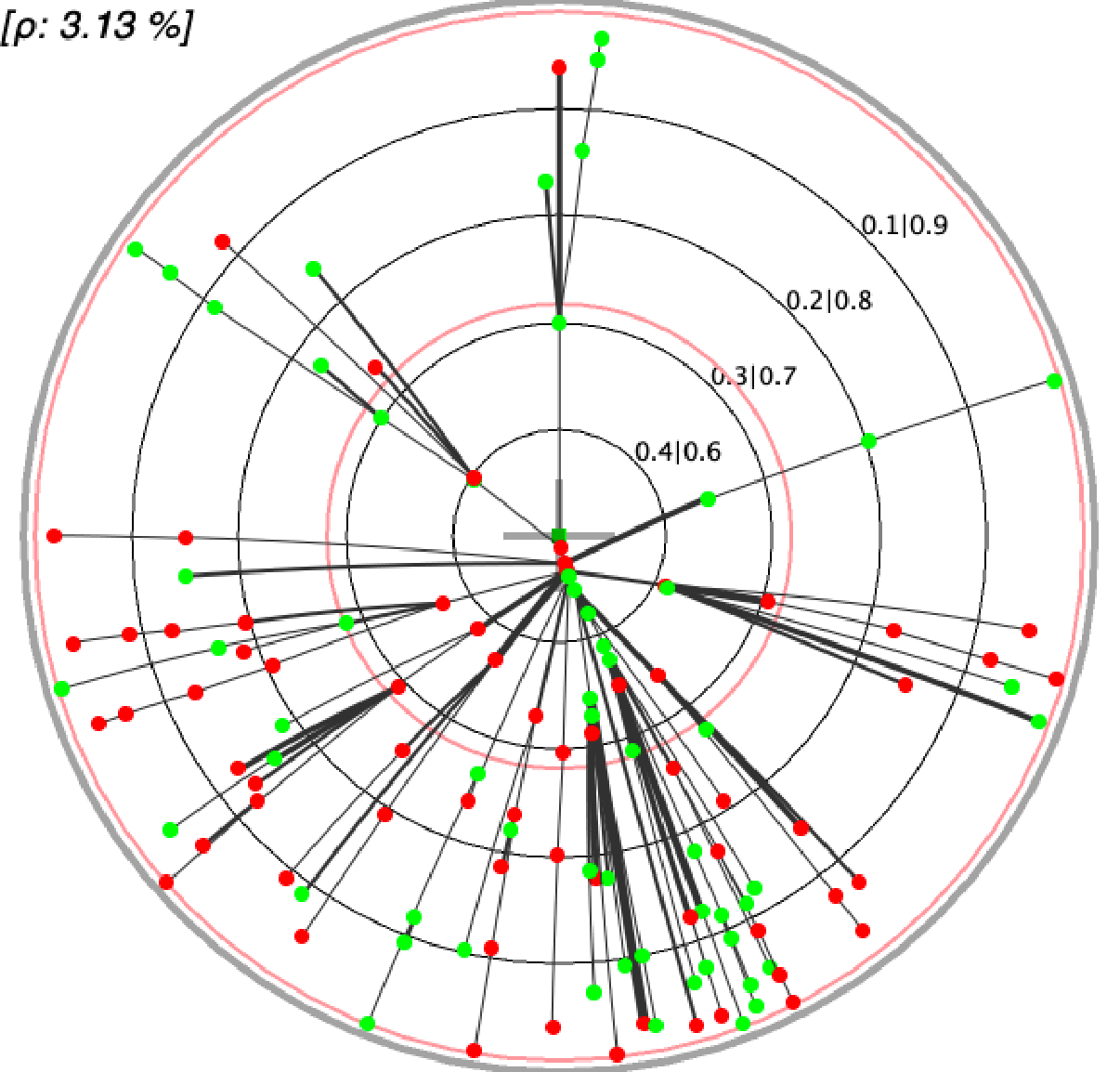} & \includegraphics[trim=0bp 0bp 0bp 0bp,clip,width=0.3\columnwidth]{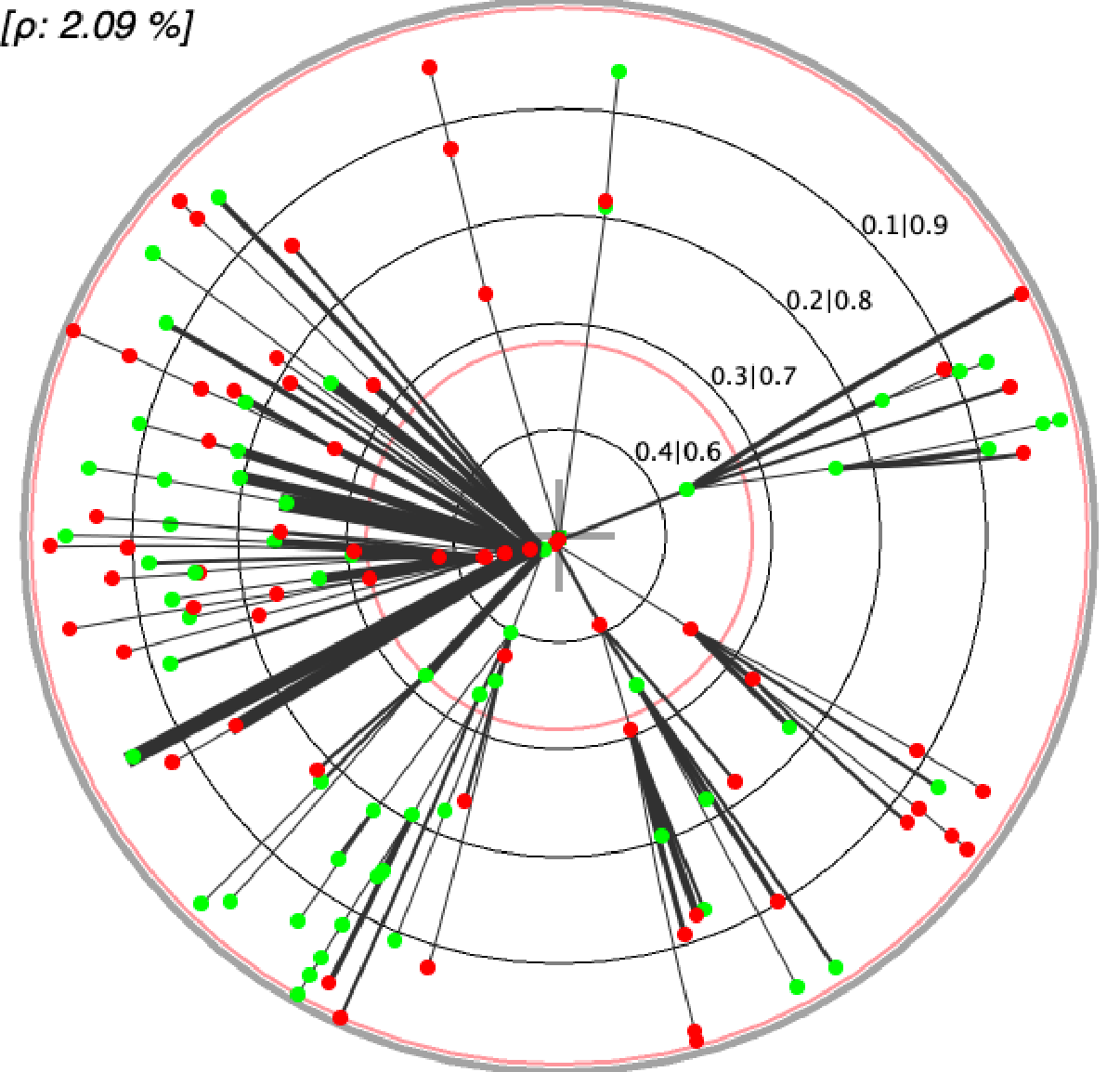} & \includegraphics[trim=0bp 0bp 0bp 0bp,clip,width=0.3\columnwidth]{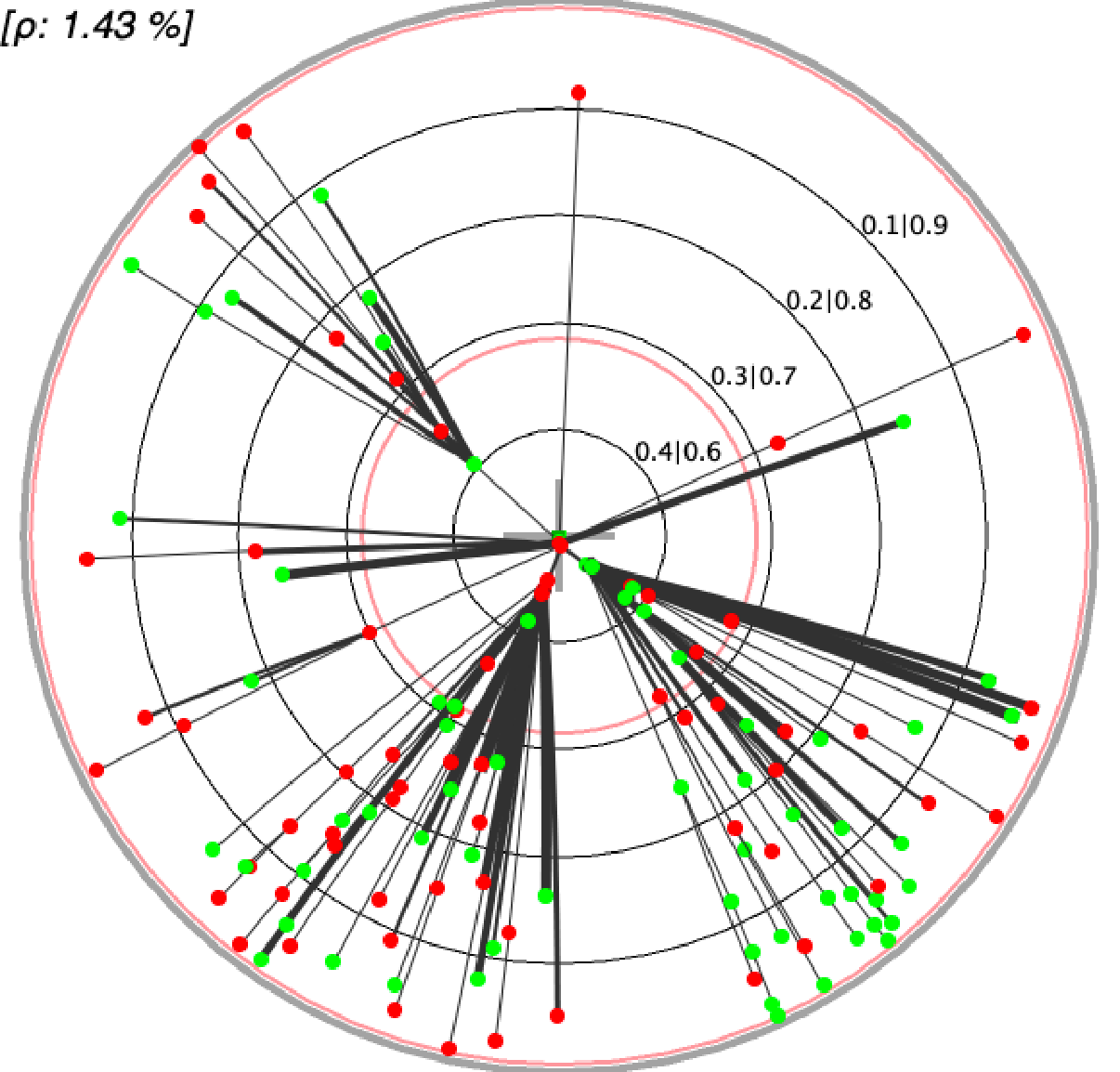}  & \includegraphics[trim=0bp 0bp 0bp 0bp,clip,width=0.3\columnwidth]{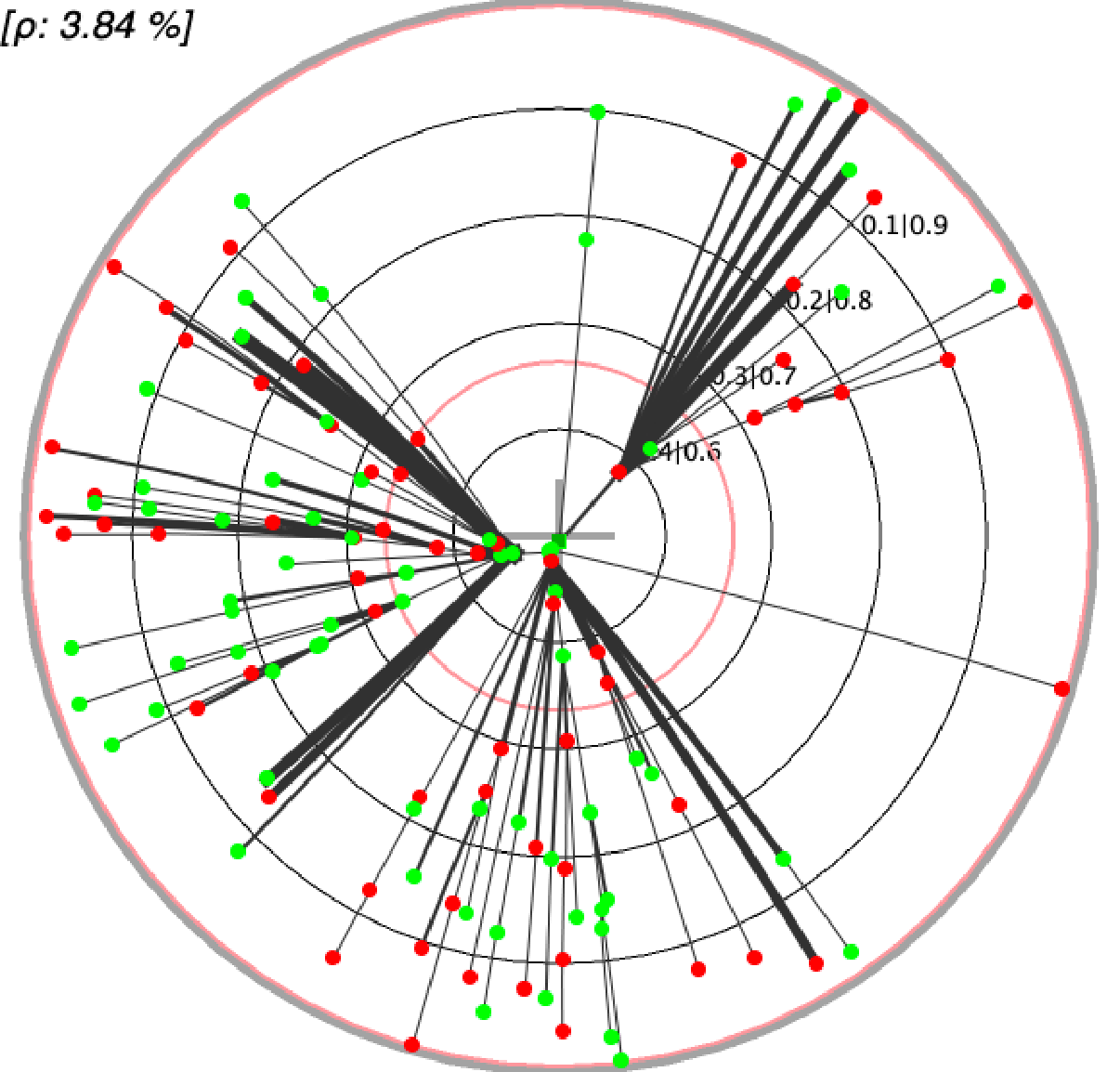}  & \includegraphics[trim=0bp 0bp 0bp 0bp,clip,width=0.3\columnwidth]{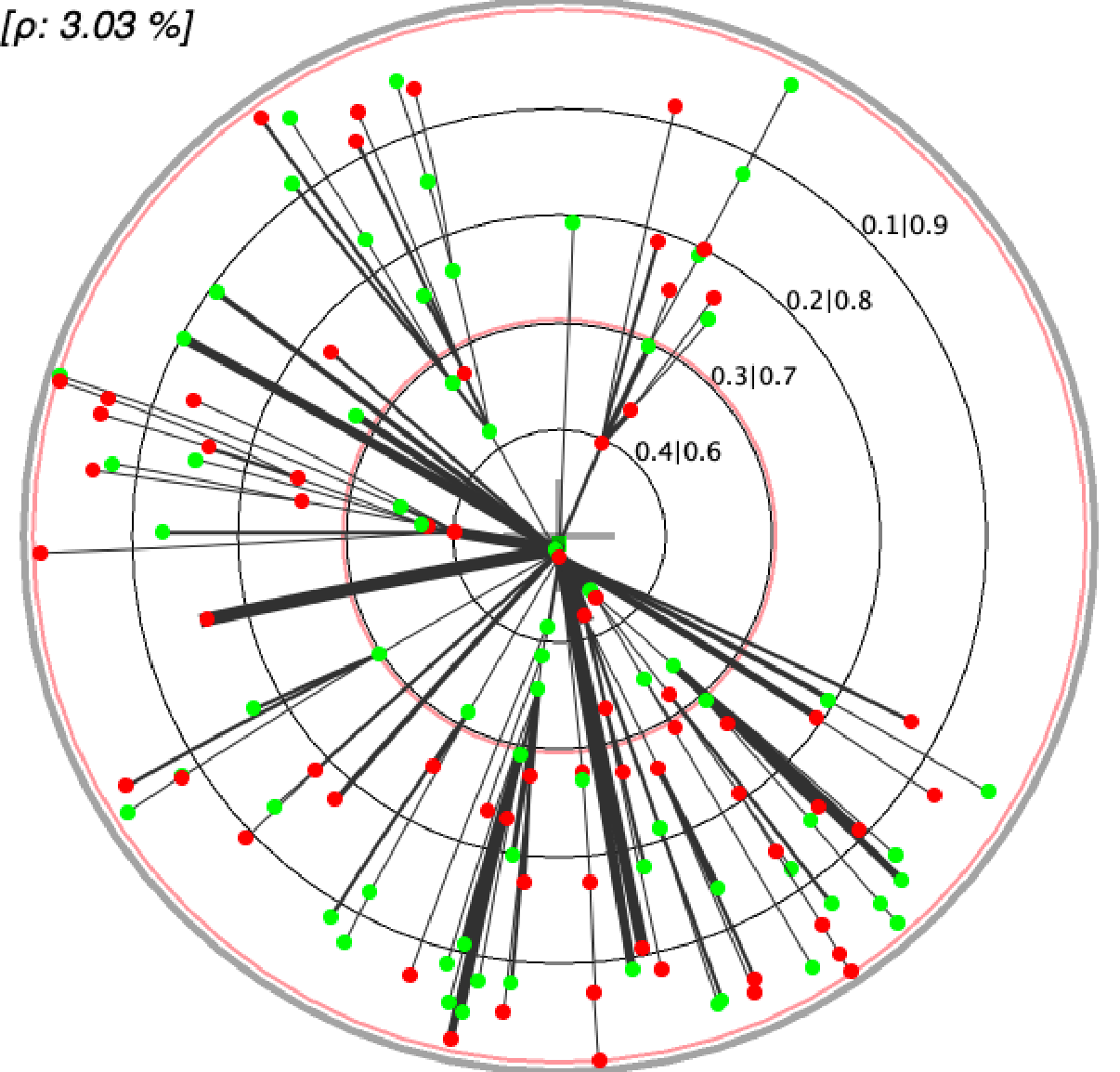} \\
                              MDT $\#6$ & MDT $\#7$& MDT $\#8$& MDT $\#9$& MDT $\#10$\\  \Xhline{2pt} 
  \end{tabular}}
\caption{First 10 MDTs for UCI \domainname{winered}. Convention follows Table \ref{tab:online-shopping-intentions-dt-exerpt}.}
    \label{tab:winered-dt-exerpt}
  \end{table}

   \begin{table}
  \centering
  \resizebox{\textwidth}{!}{\begin{tabular}{ccccc}\Xhline{2pt}
                              \includegraphics[trim=0bp 0bp 0bp 0bp,clip,width=0.3\columnwidth]{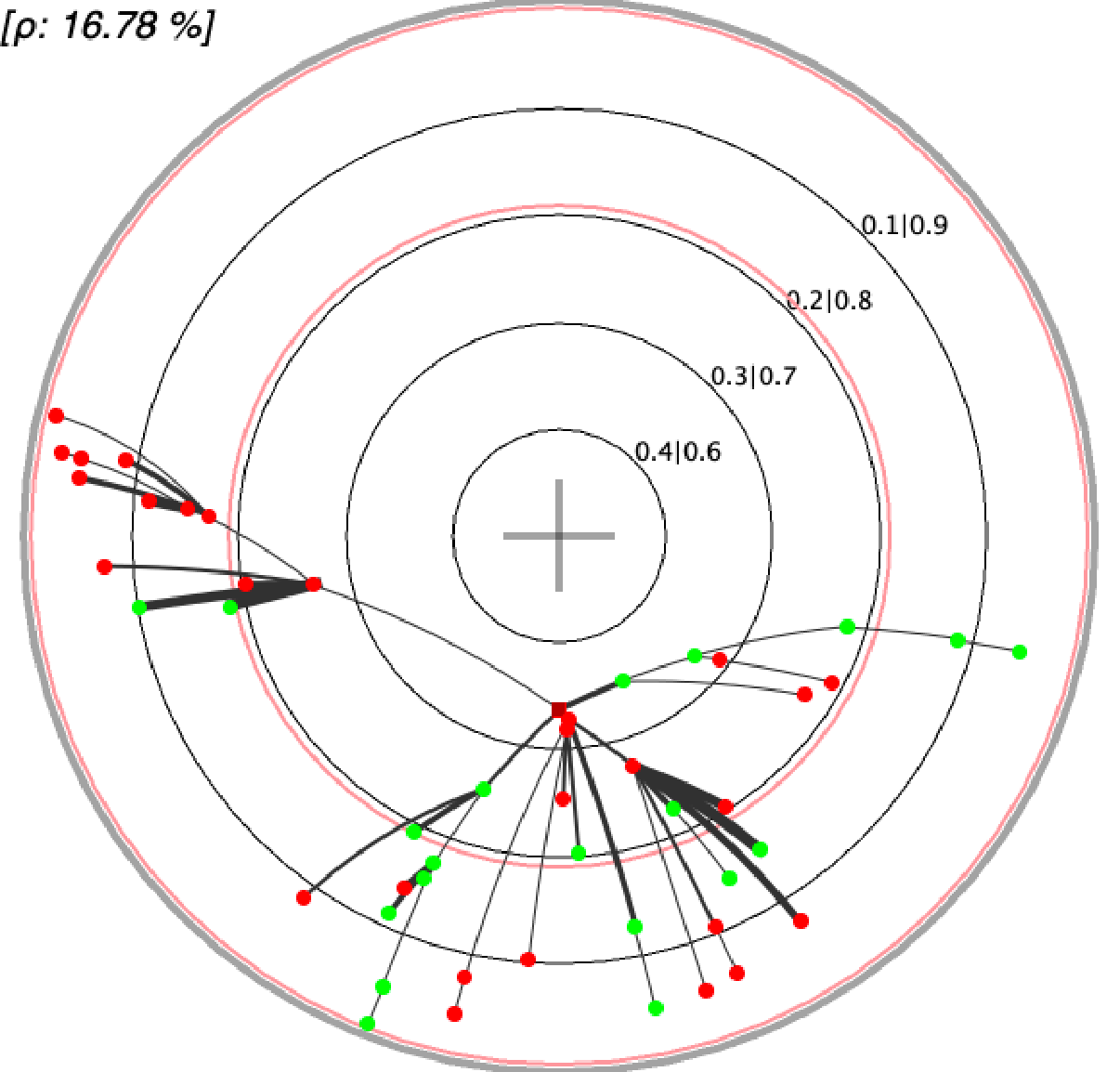} & \includegraphics[trim=0bp 0bp 0bp 0bp,clip,width=0.3\columnwidth]{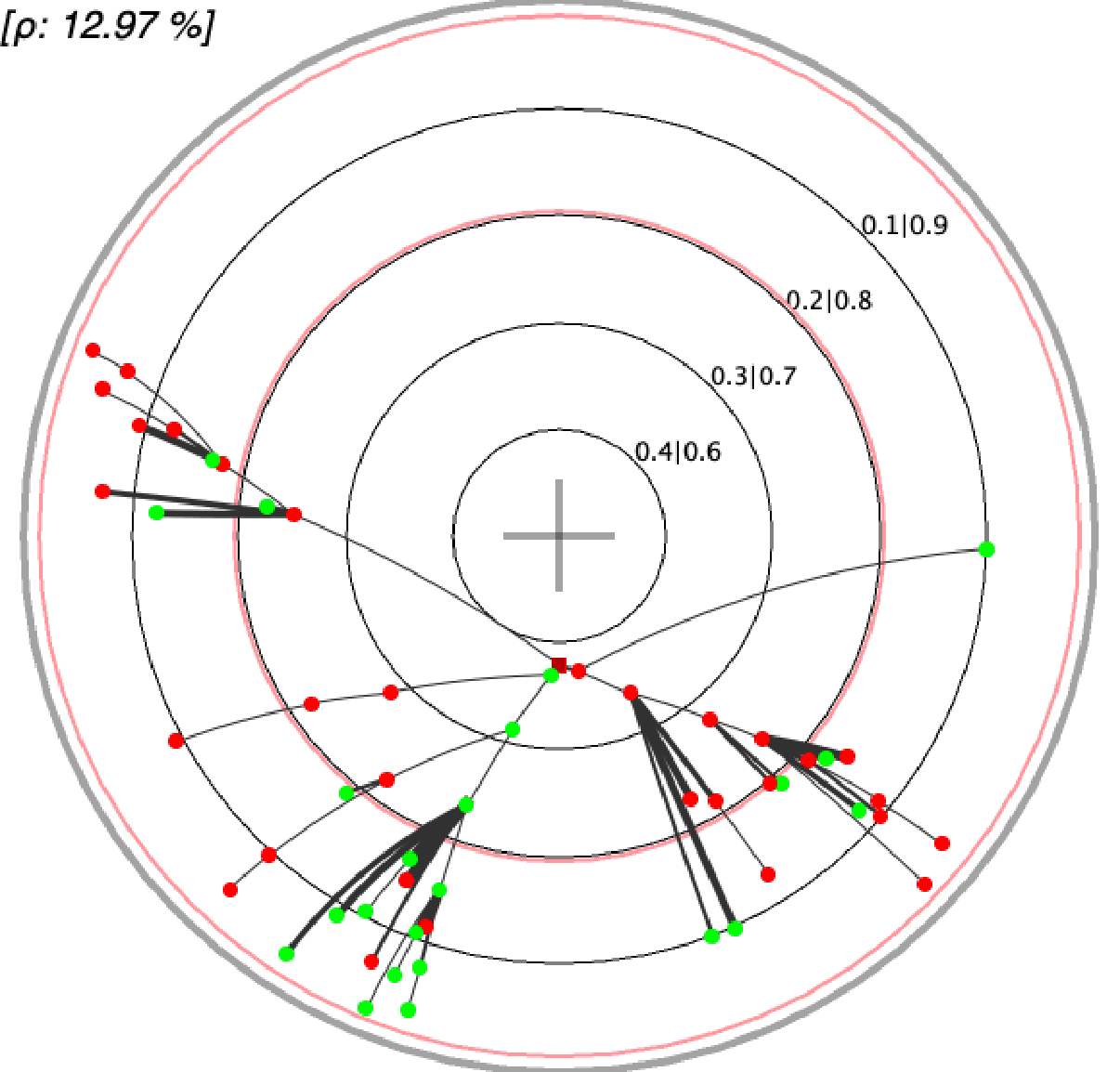} & \includegraphics[trim=0bp 0bp 0bp 0bp,clip,width=0.3\columnwidth]{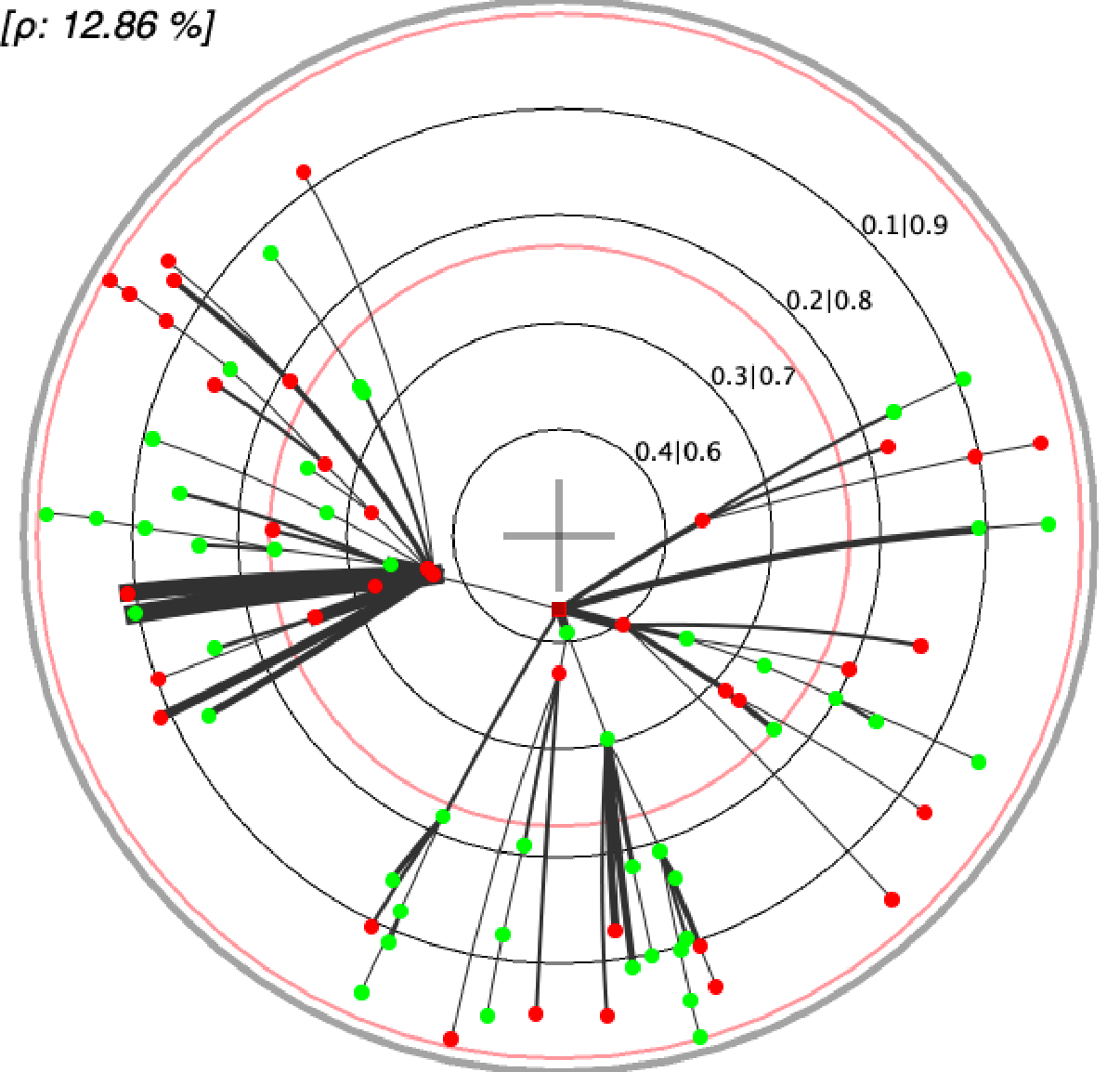}  & \includegraphics[trim=0bp 0bp 0bp 0bp,clip,width=0.3\columnwidth]{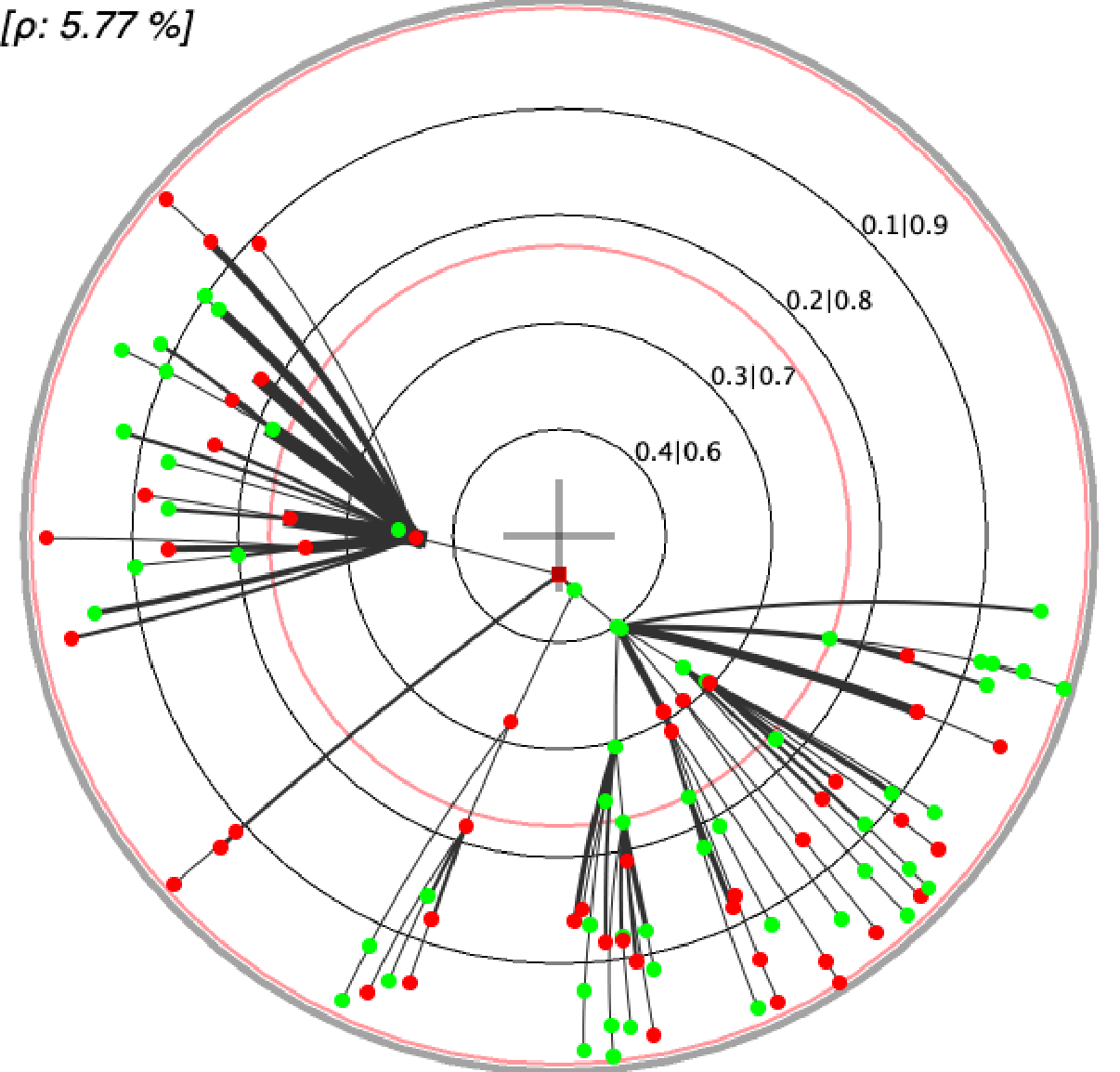}  & \includegraphics[trim=0bp 0bp 0bp 0bp,clip,width=0.3\columnwidth]{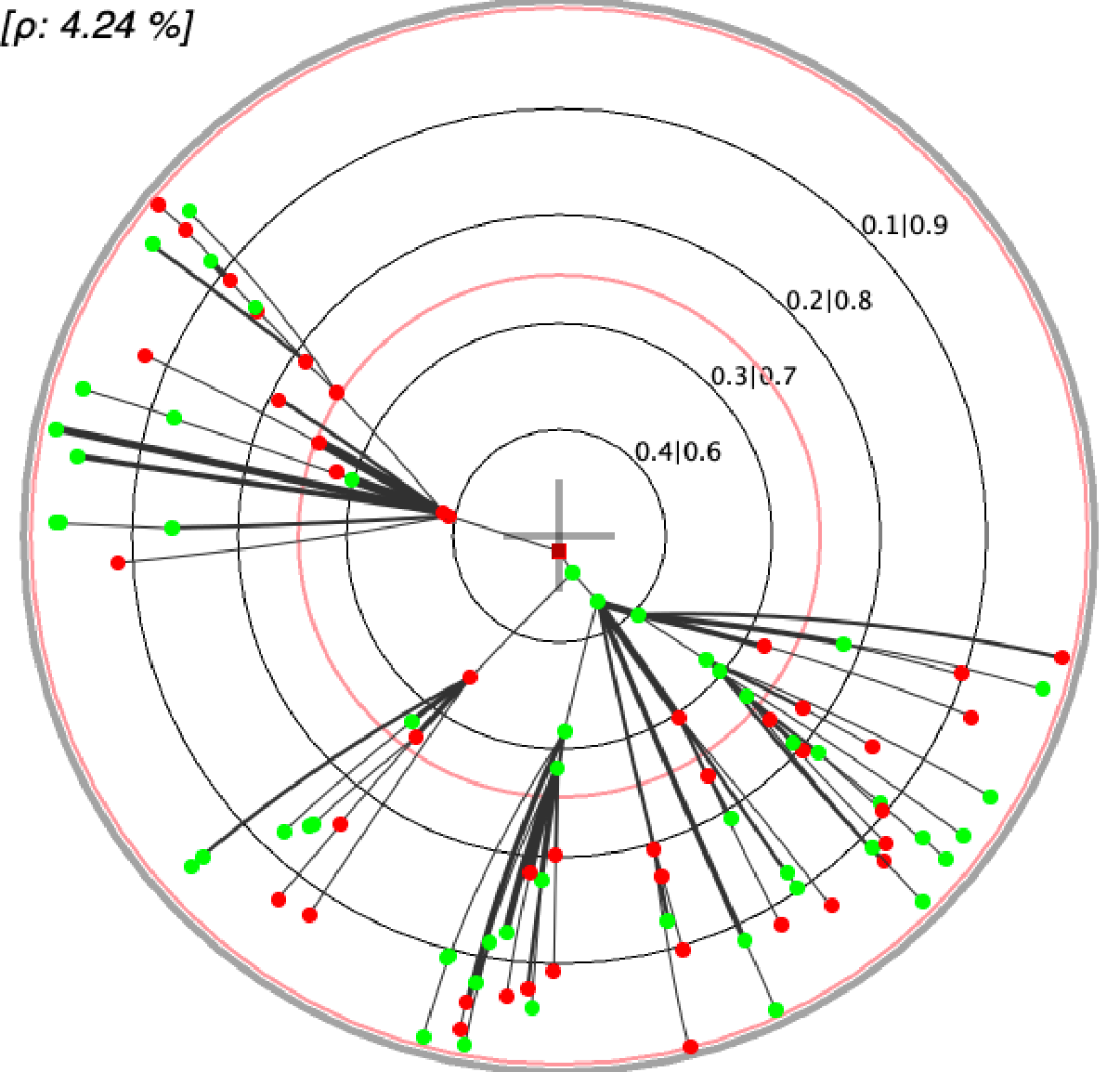} \\
                              MDT $\#1$ & MDT $\#2$& MDT $\#3$& MDT $\#4$& MDT $\#5$\\
                             \includegraphics[trim=0bp 0bp 0bp 0bp,clip,width=0.3\columnwidth]{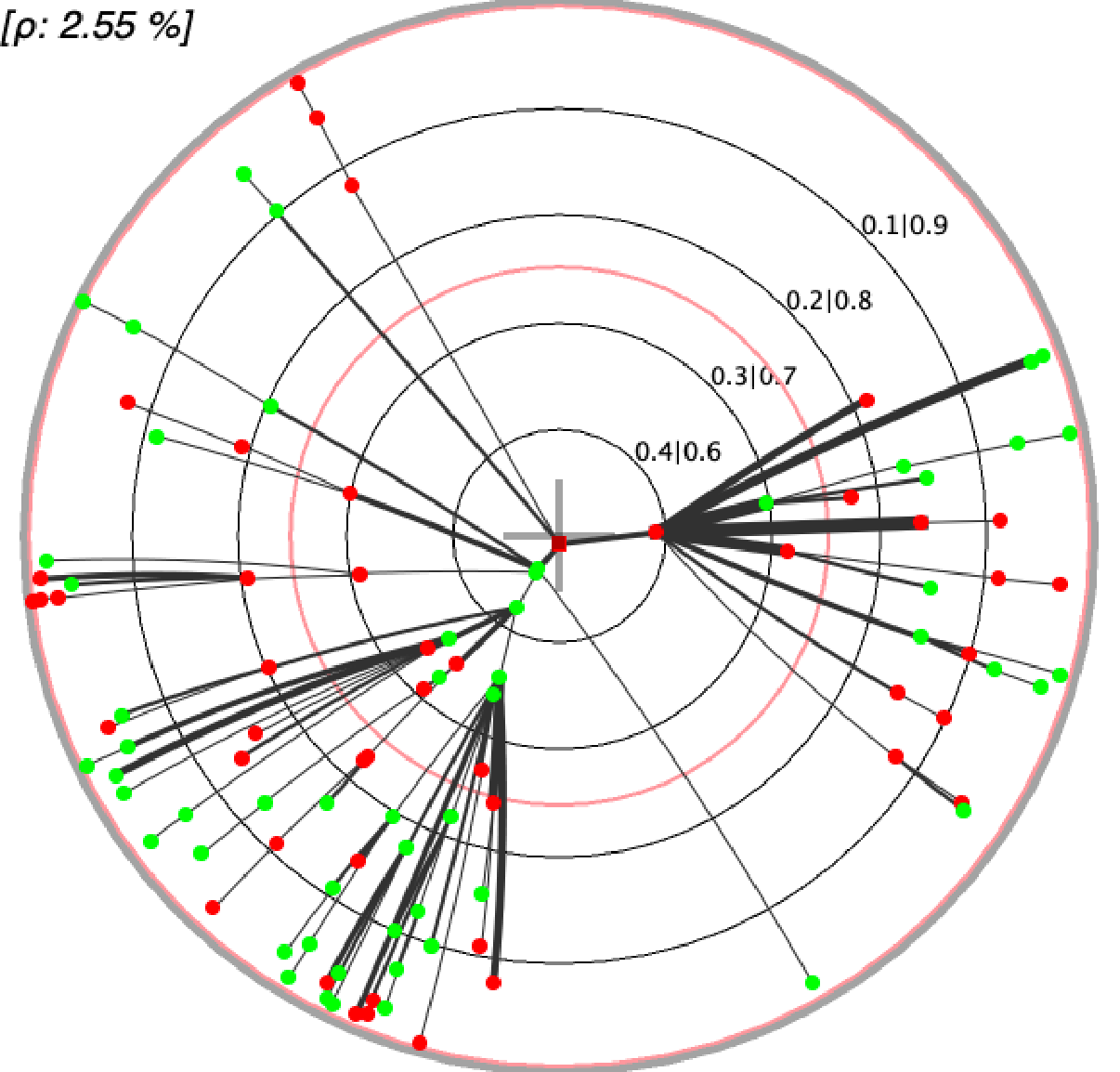} & \includegraphics[trim=0bp 0bp 0bp 0bp,clip,width=0.3\columnwidth]{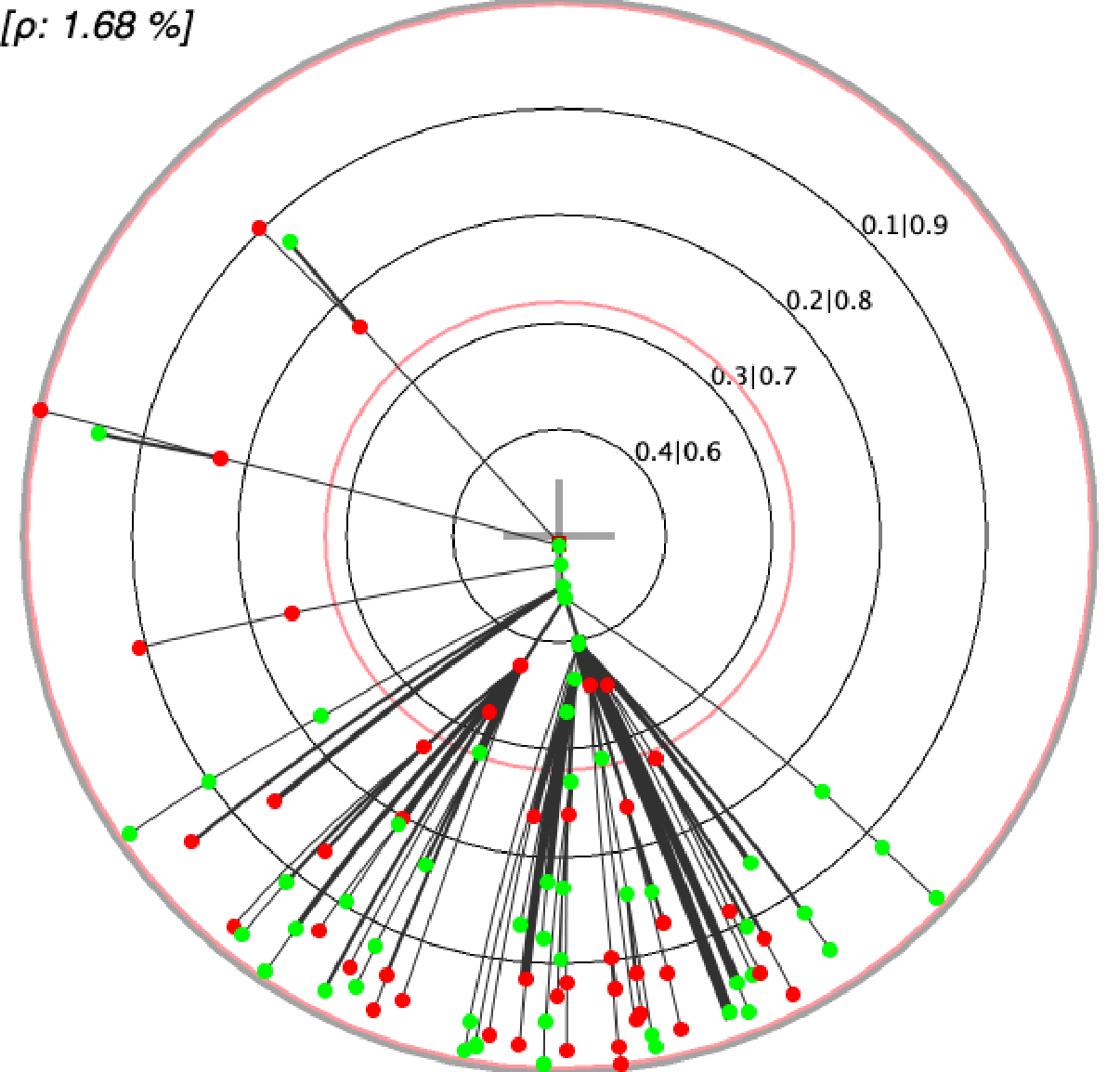} & \includegraphics[trim=0bp 0bp 0bp 0bp,clip,width=0.3\columnwidth]{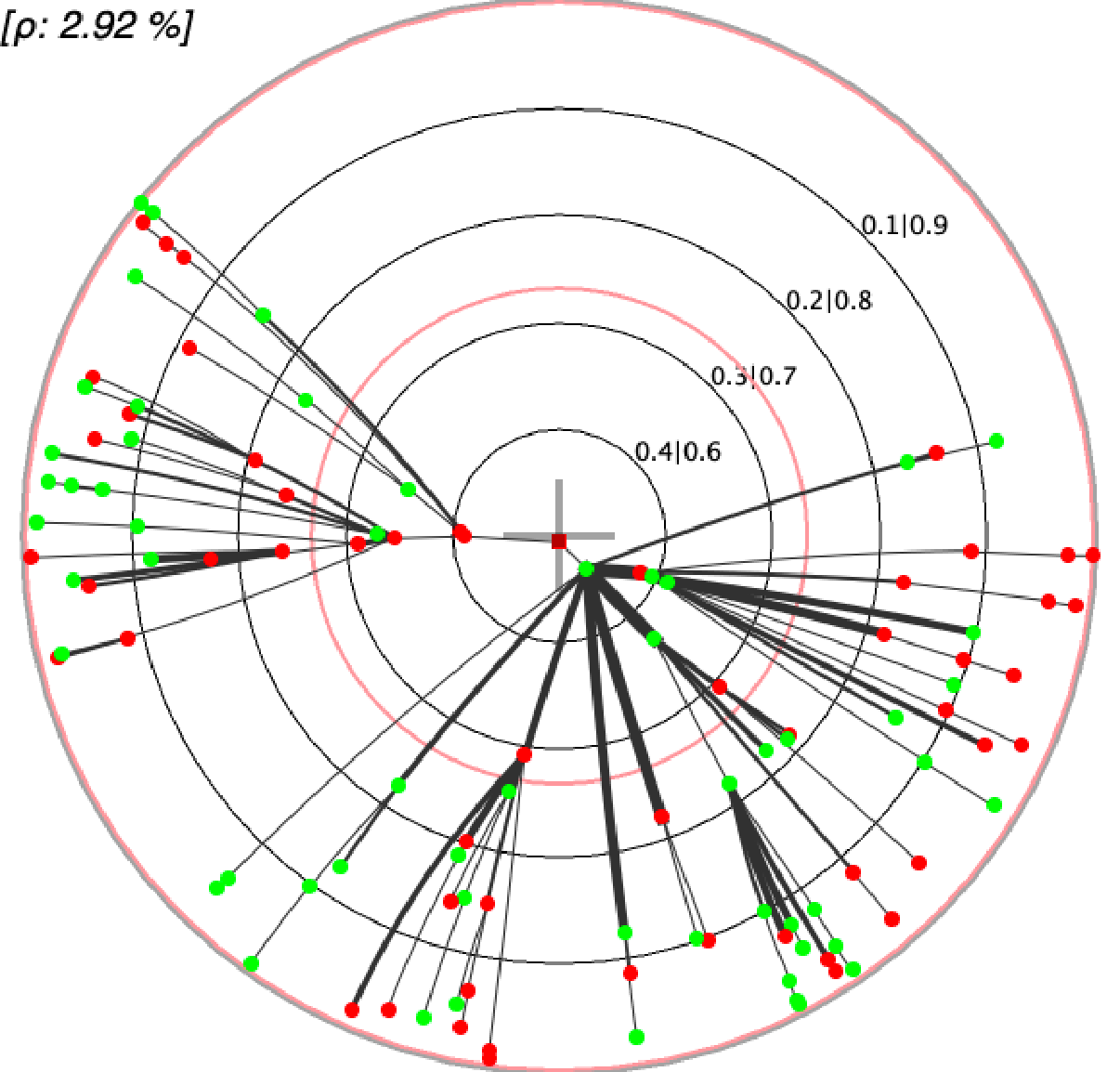}  & \includegraphics[trim=0bp 0bp 0bp 0bp,clip,width=0.3\columnwidth]{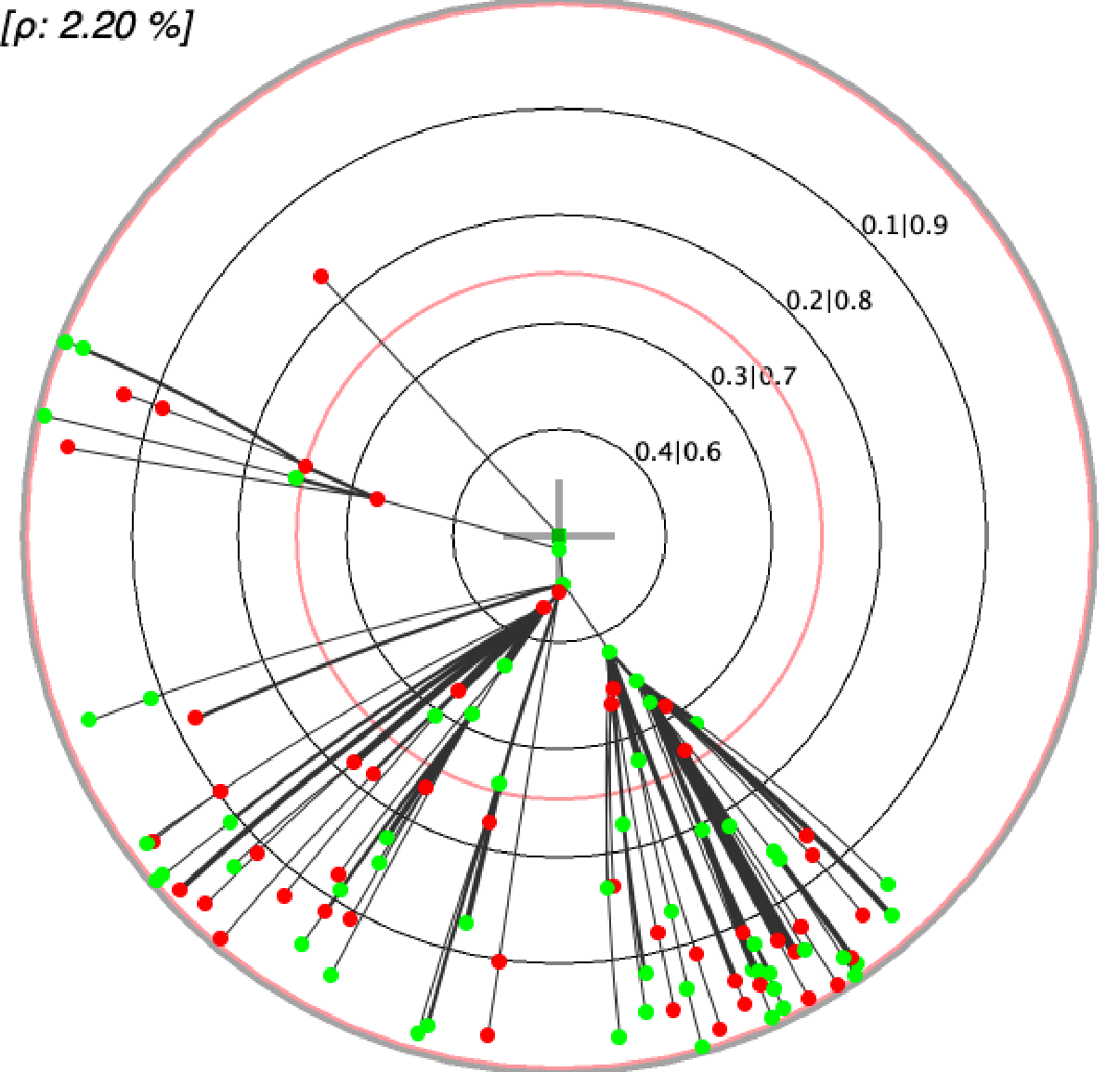}  & \includegraphics[trim=0bp 0bp 0bp 0bp,clip,width=0.3\columnwidth]{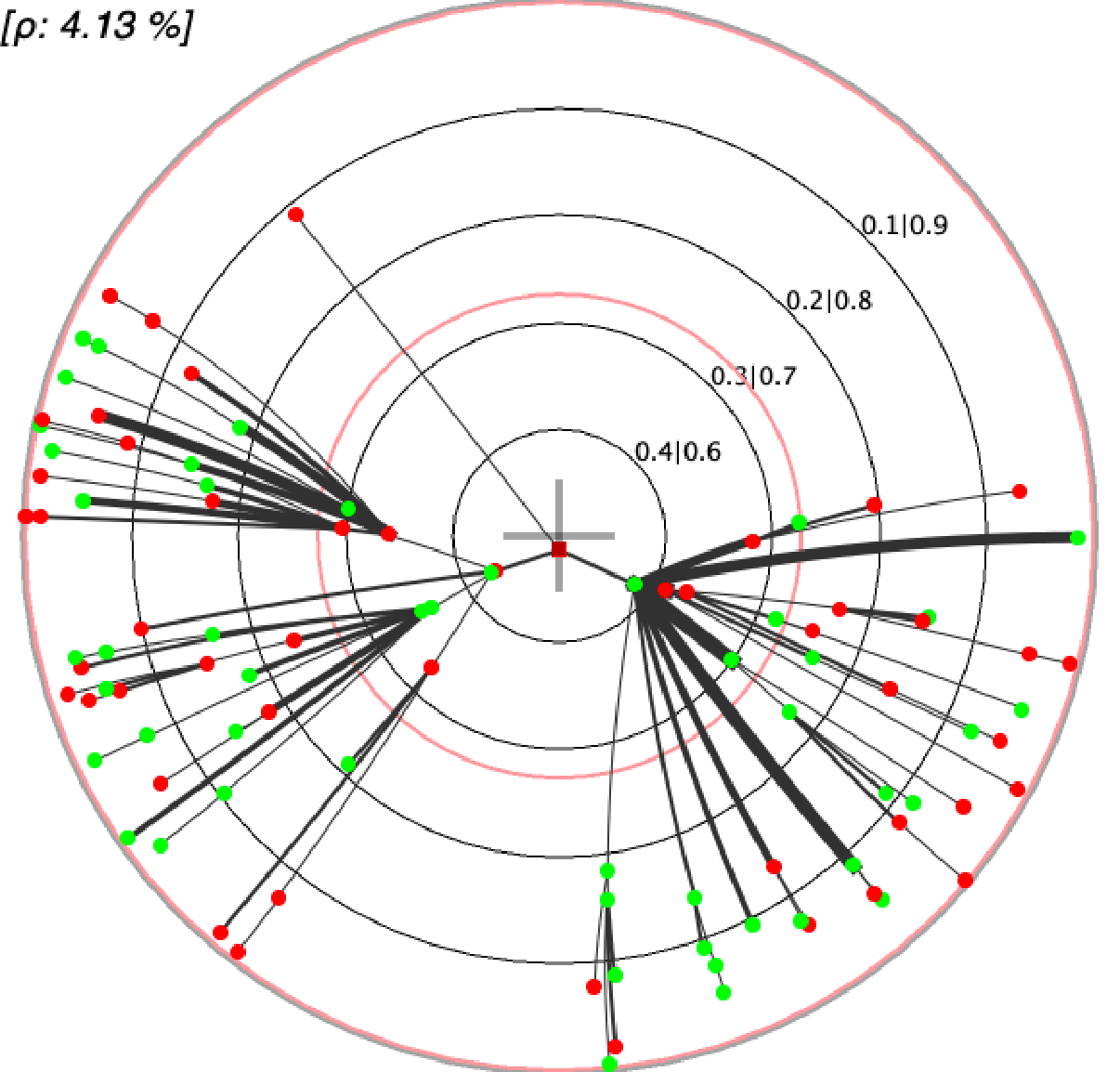} \\
                              MDT $\#6$ & MDT $\#7$& MDT $\#8$& MDT $\#9$& MDT $\#10$\\  \Xhline{2pt} 
  \end{tabular}}
\caption{First 10 MDTs for UCI \domainname{qsar}. Convention follows Table \ref{tab:online-shopping-intentions-dt-exerpt}.}
    \label{tab:qsar-dt-exerpt}
  \end{table}

   \begin{table}
  \centering
  \resizebox{\textwidth}{!}{\begin{tabular}{ccccc}\Xhline{2pt}
                              \includegraphics[trim=0bp 0bp 0bp 0bp,clip,width=0.3\columnwidth]{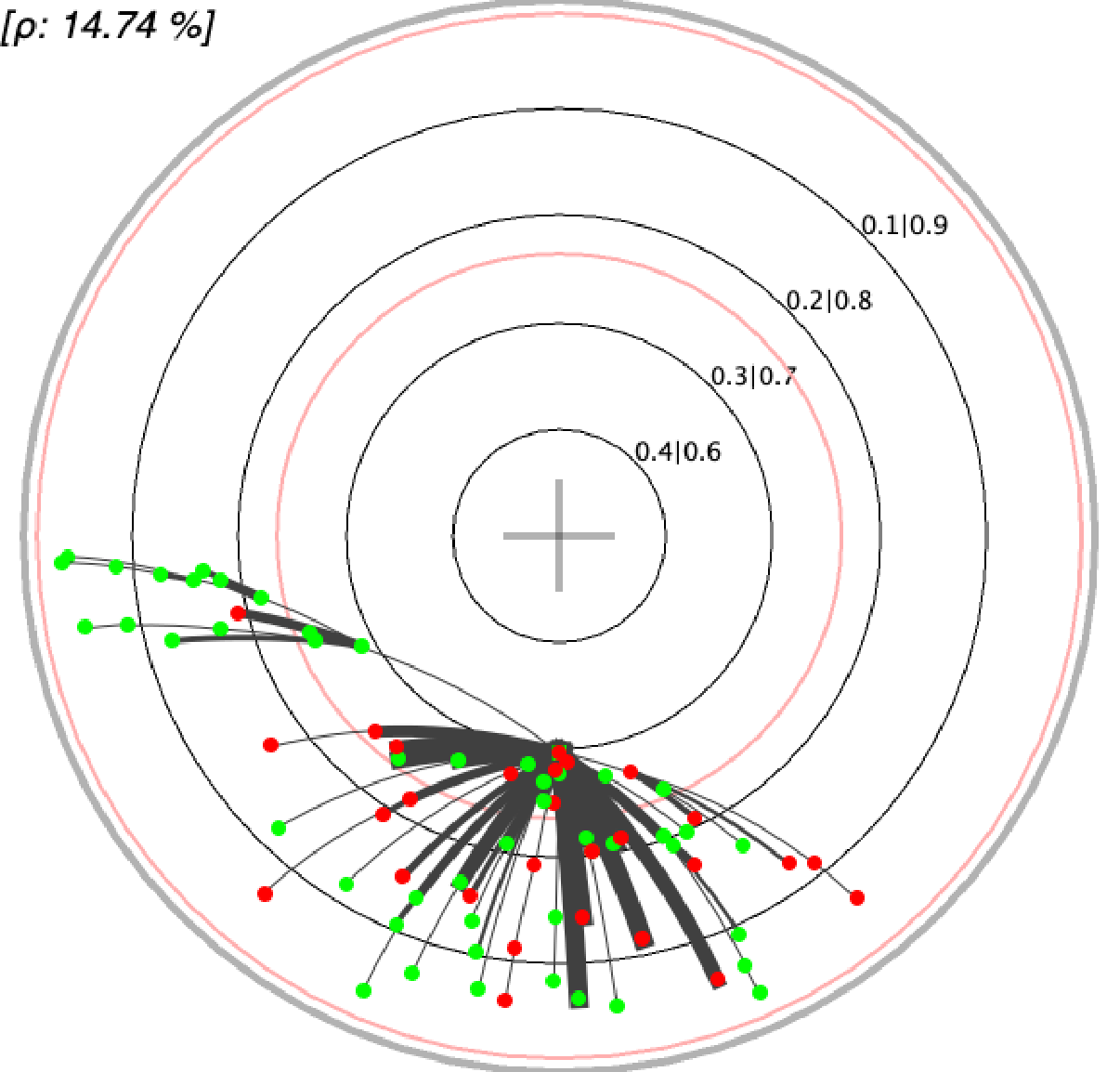} & \includegraphics[trim=0bp 0bp 0bp 0bp,clip,width=0.3\columnwidth]{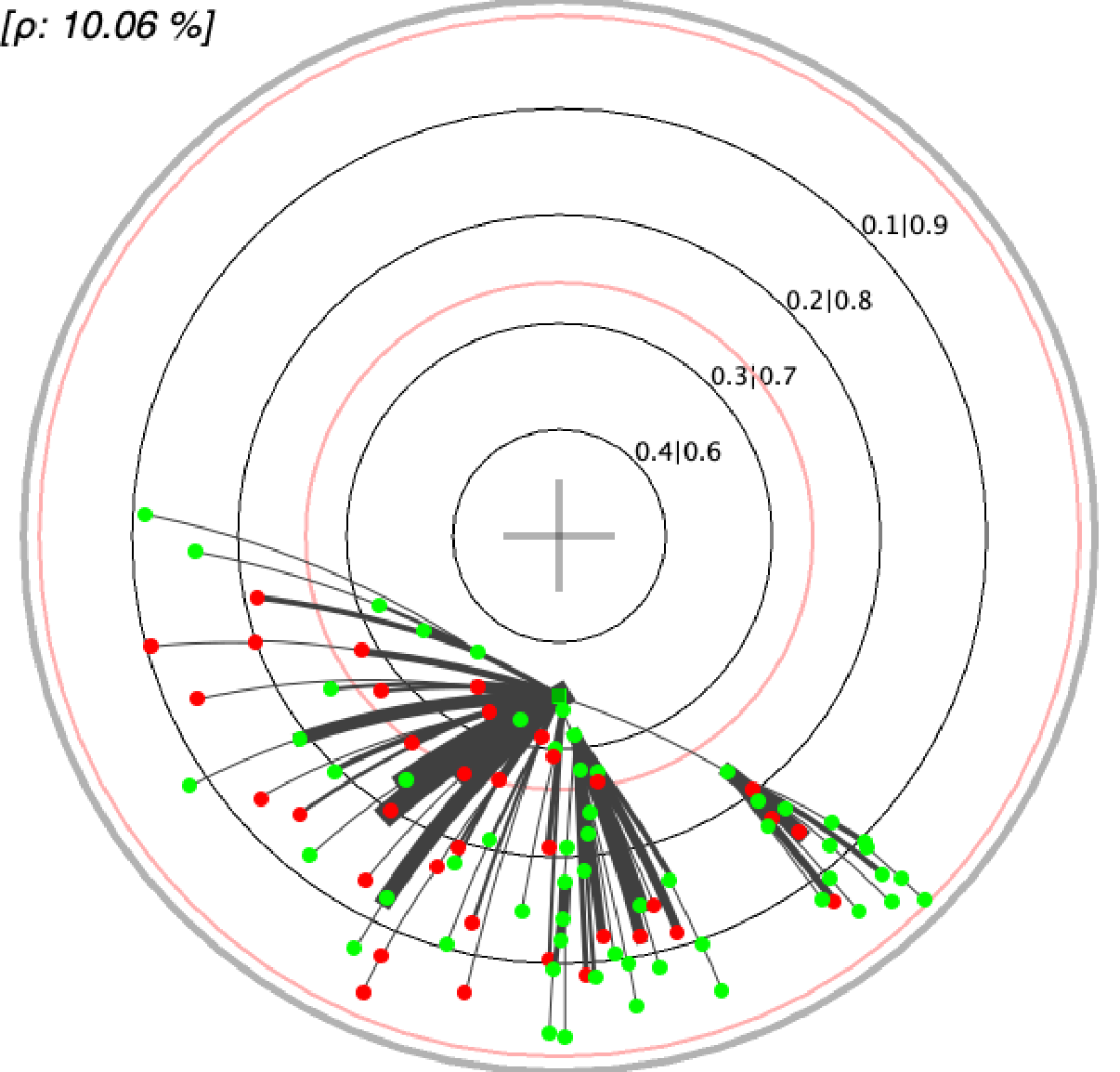} & \includegraphics[trim=0bp 0bp 0bp 0bp,clip,width=0.3\columnwidth]{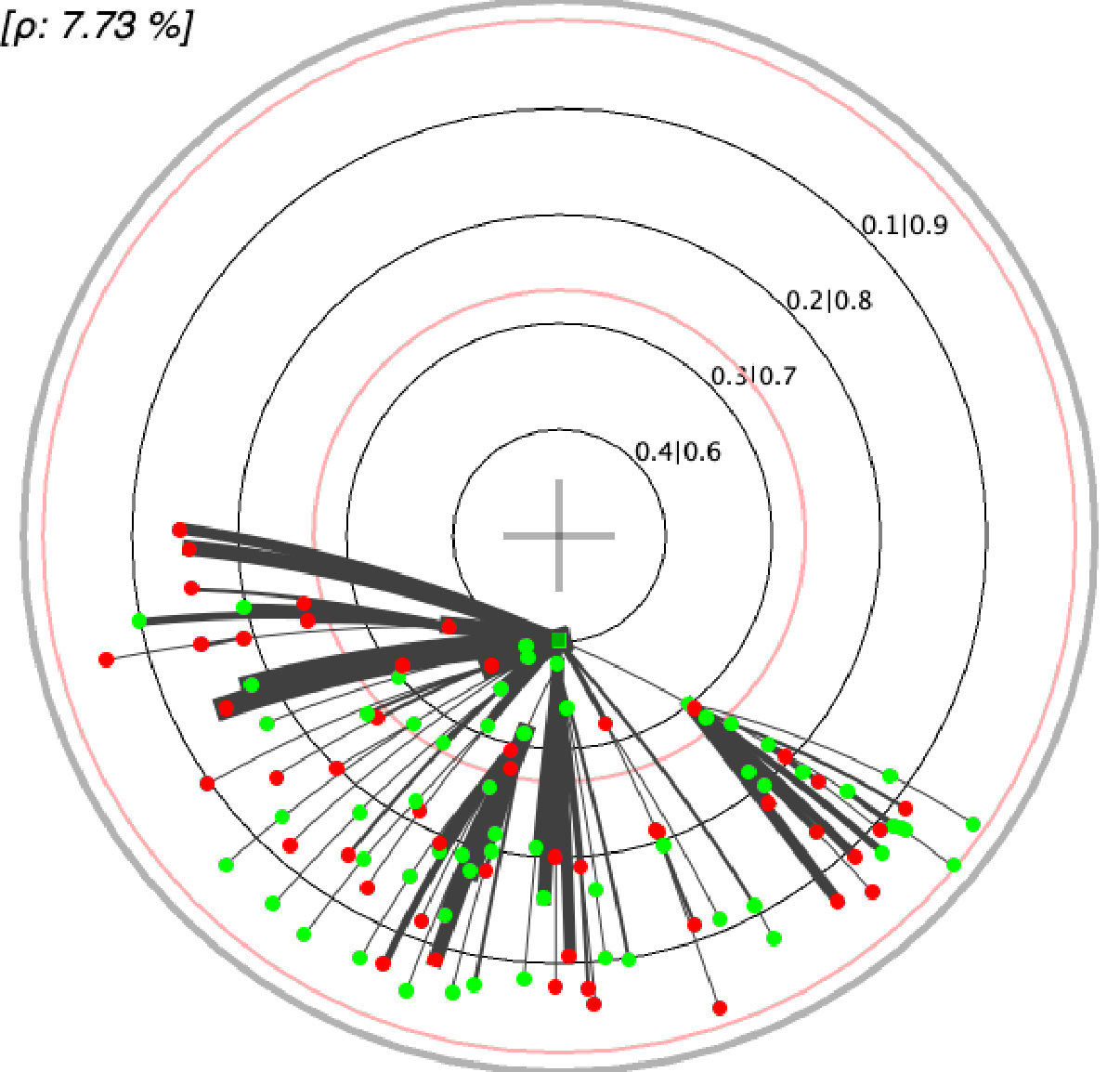}  & \includegraphics[trim=0bp 0bp 0bp 0bp,clip,width=0.3\columnwidth]{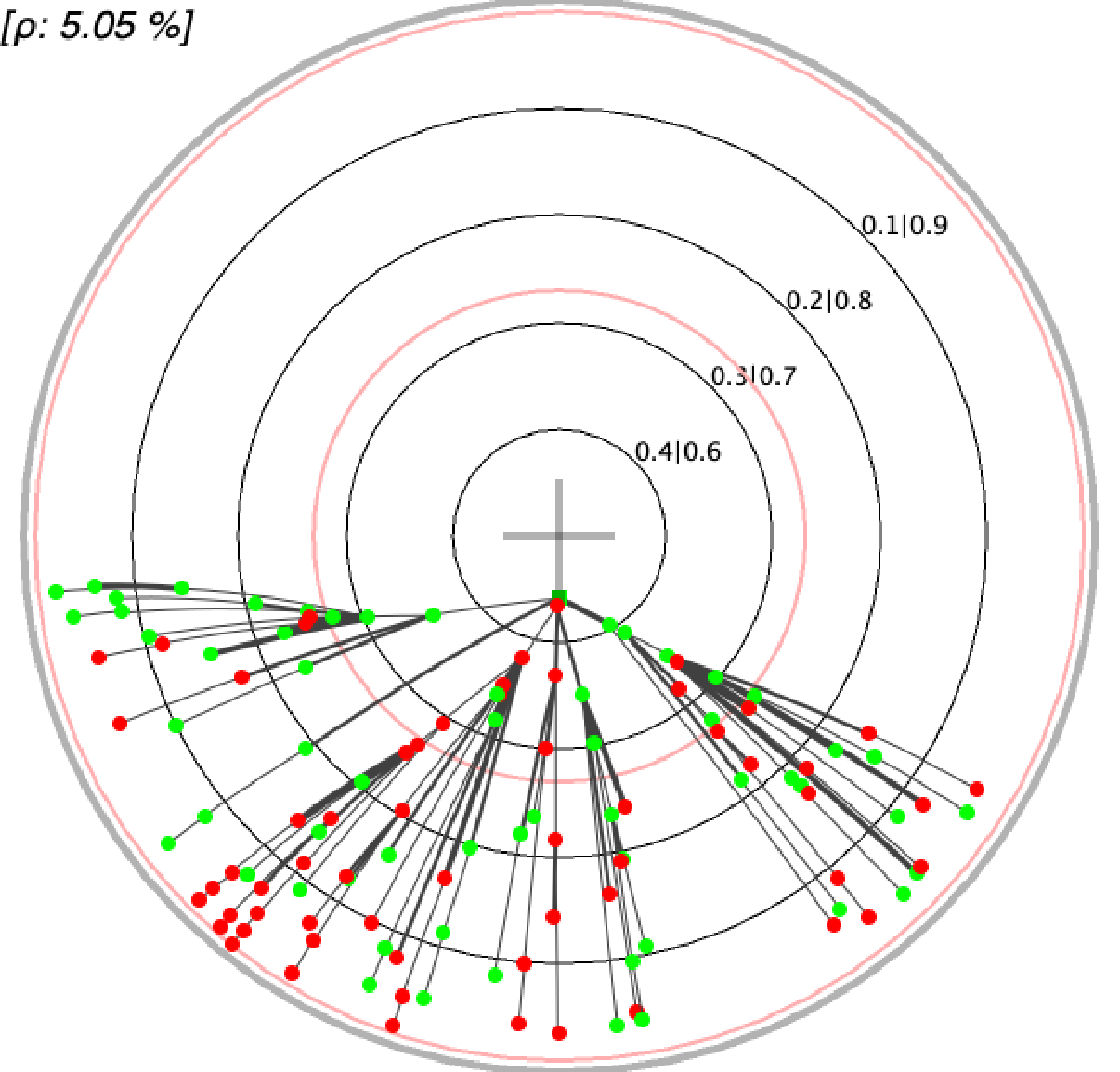}  & \includegraphics[trim=0bp 0bp 0bp 0bp,clip,width=0.3\columnwidth]{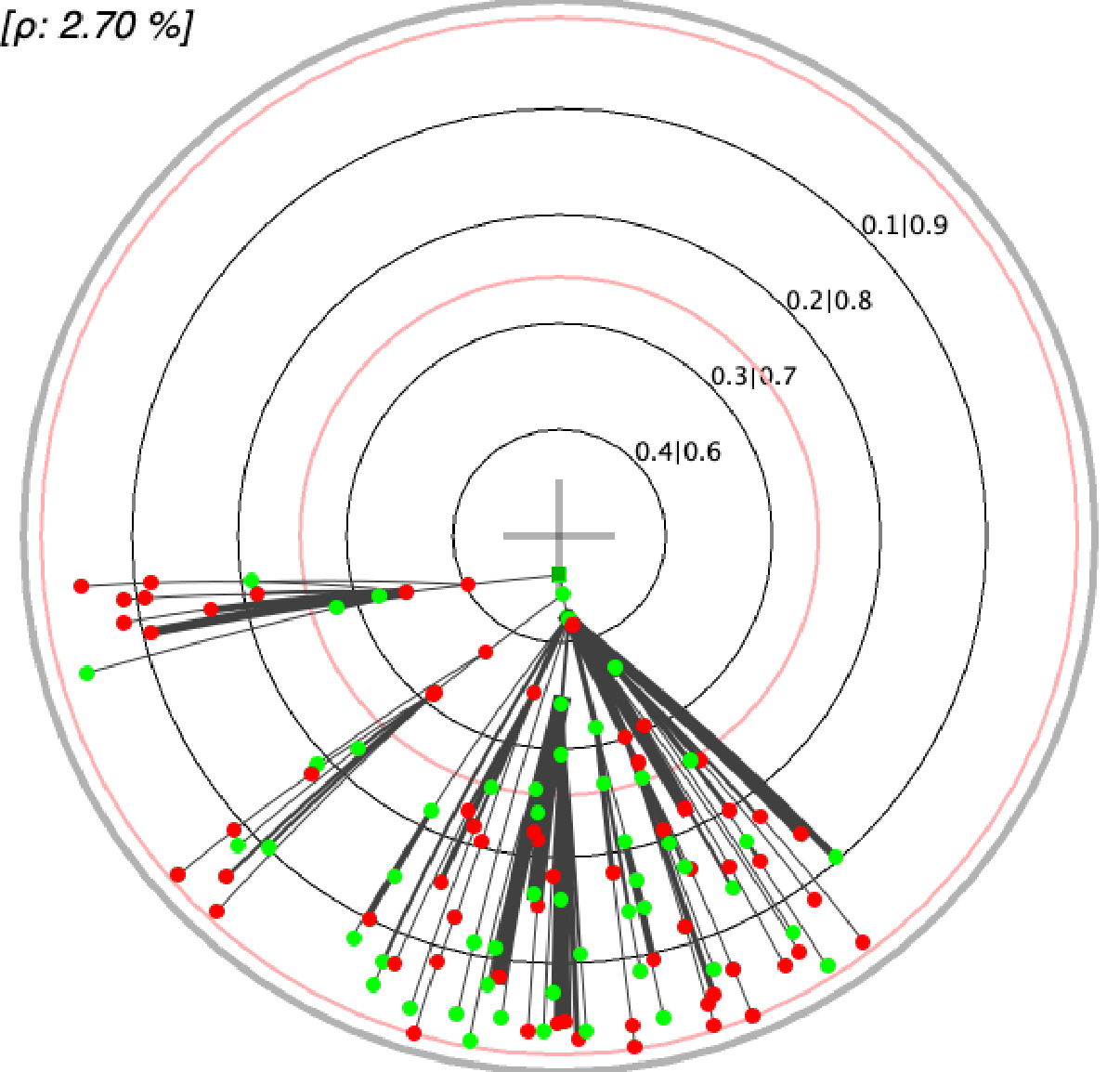} \\
                              MDT $\#1$ & MDT $\#2$& MDT $\#3$& MDT $\#4$& MDT $\#5$\\
                             \includegraphics[trim=0bp 0bp 0bp 0bp,clip,width=0.3\columnwidth]{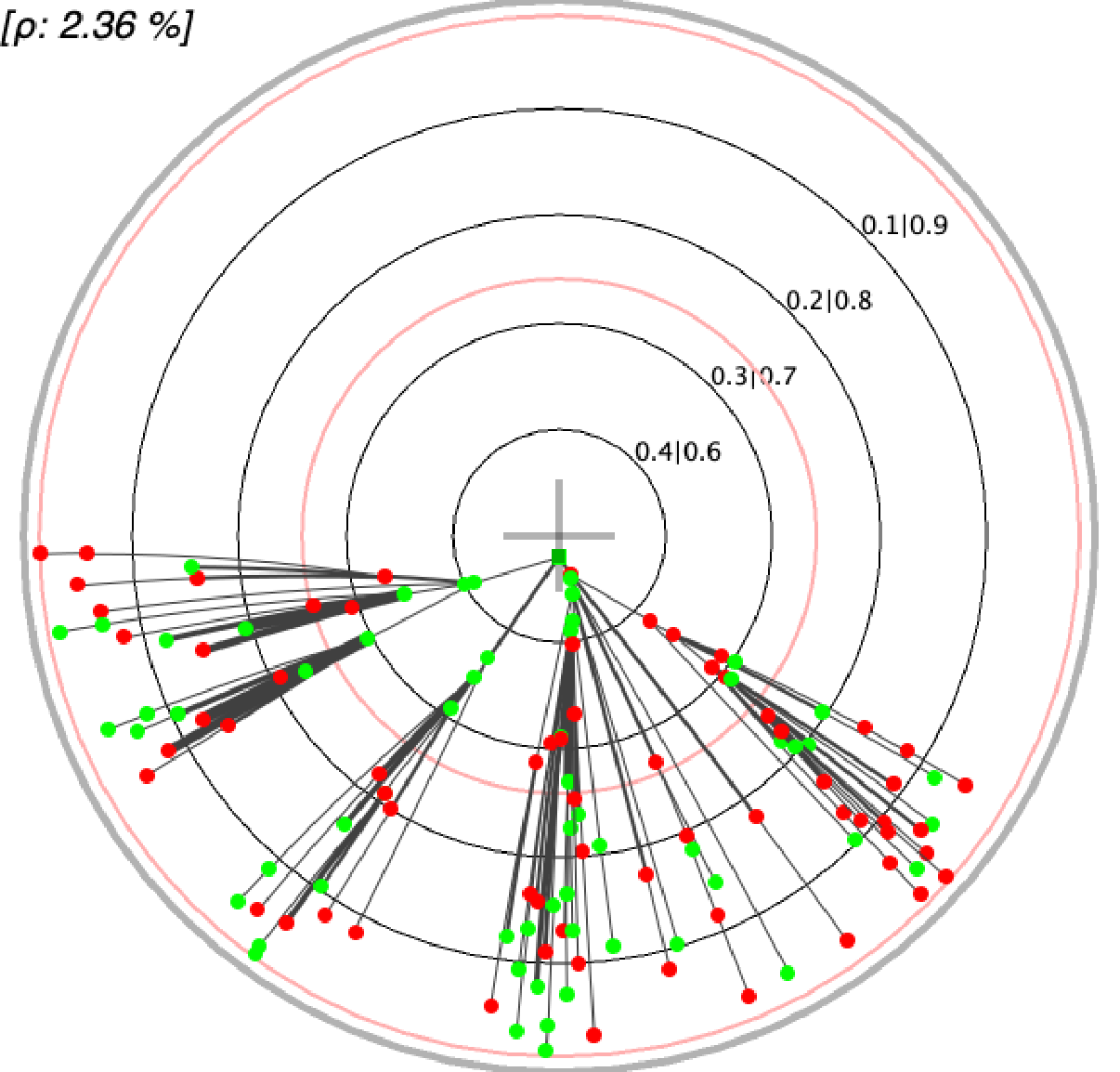} & \includegraphics[trim=0bp 0bp 0bp 0bp,clip,width=0.3\columnwidth]{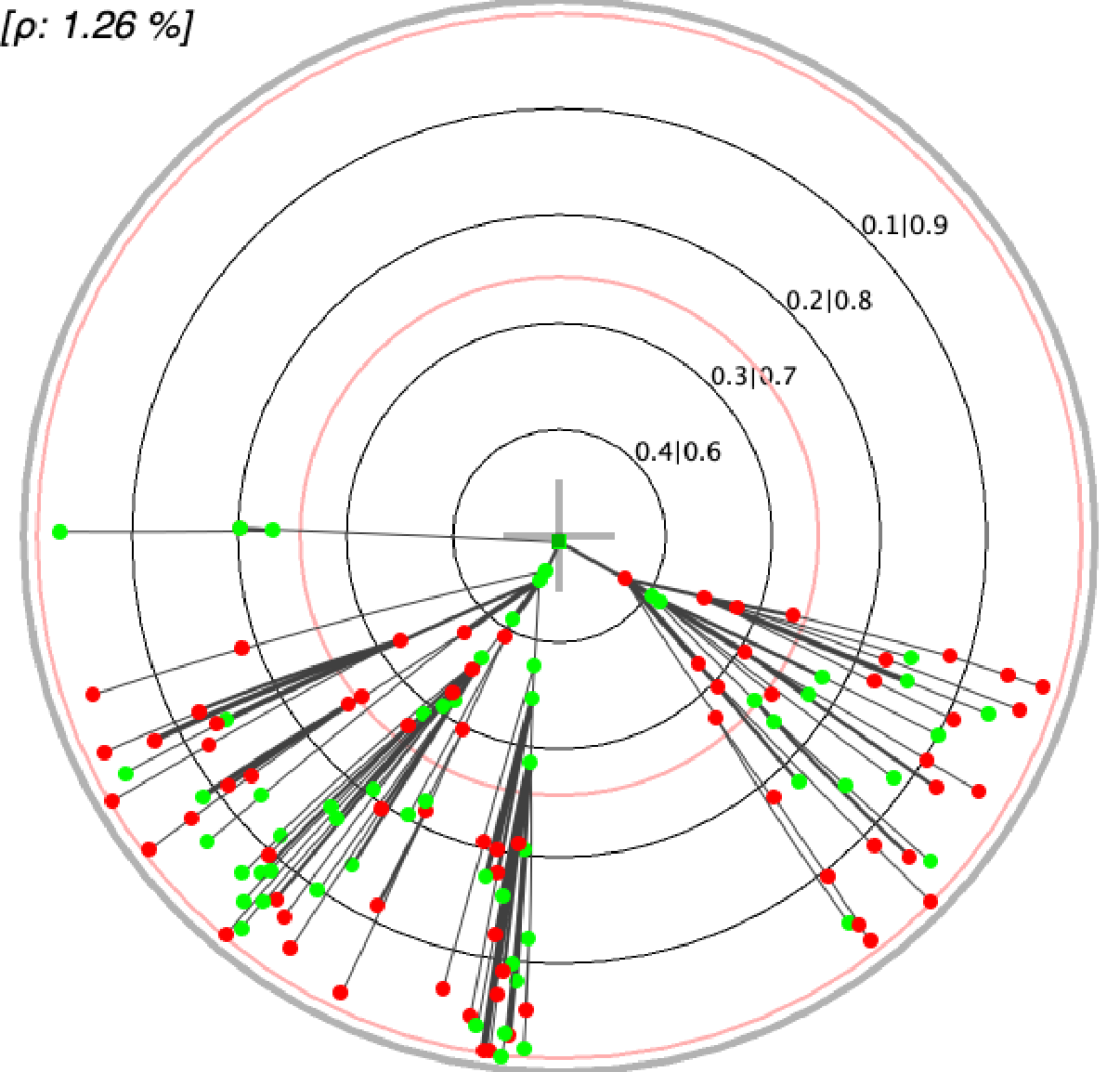} & \includegraphics[trim=0bp 0bp 0bp 0bp,clip,width=0.3\columnwidth]{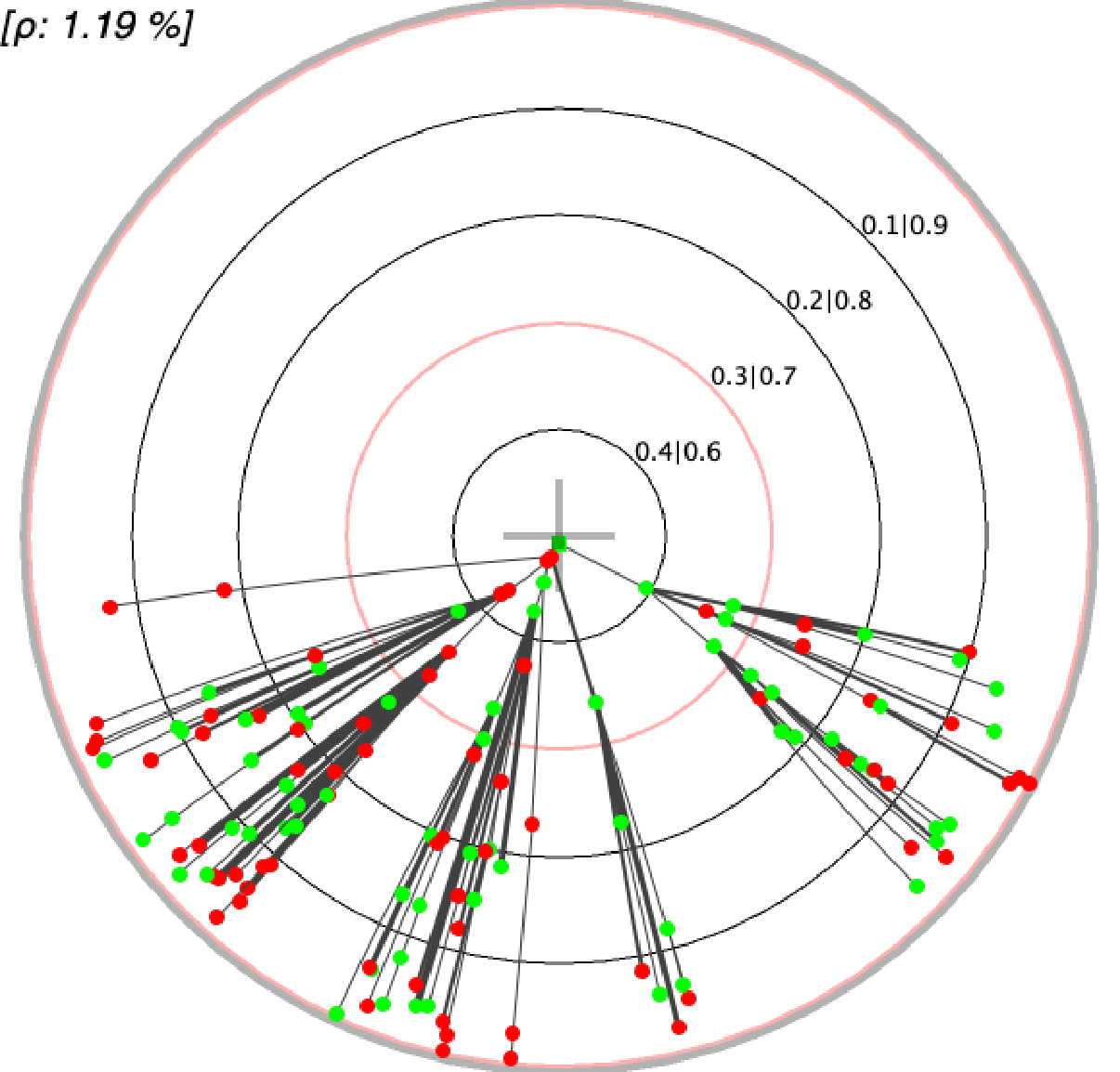}  & \includegraphics[trim=0bp 0bp 0bp 0bp,clip,width=0.3\columnwidth]{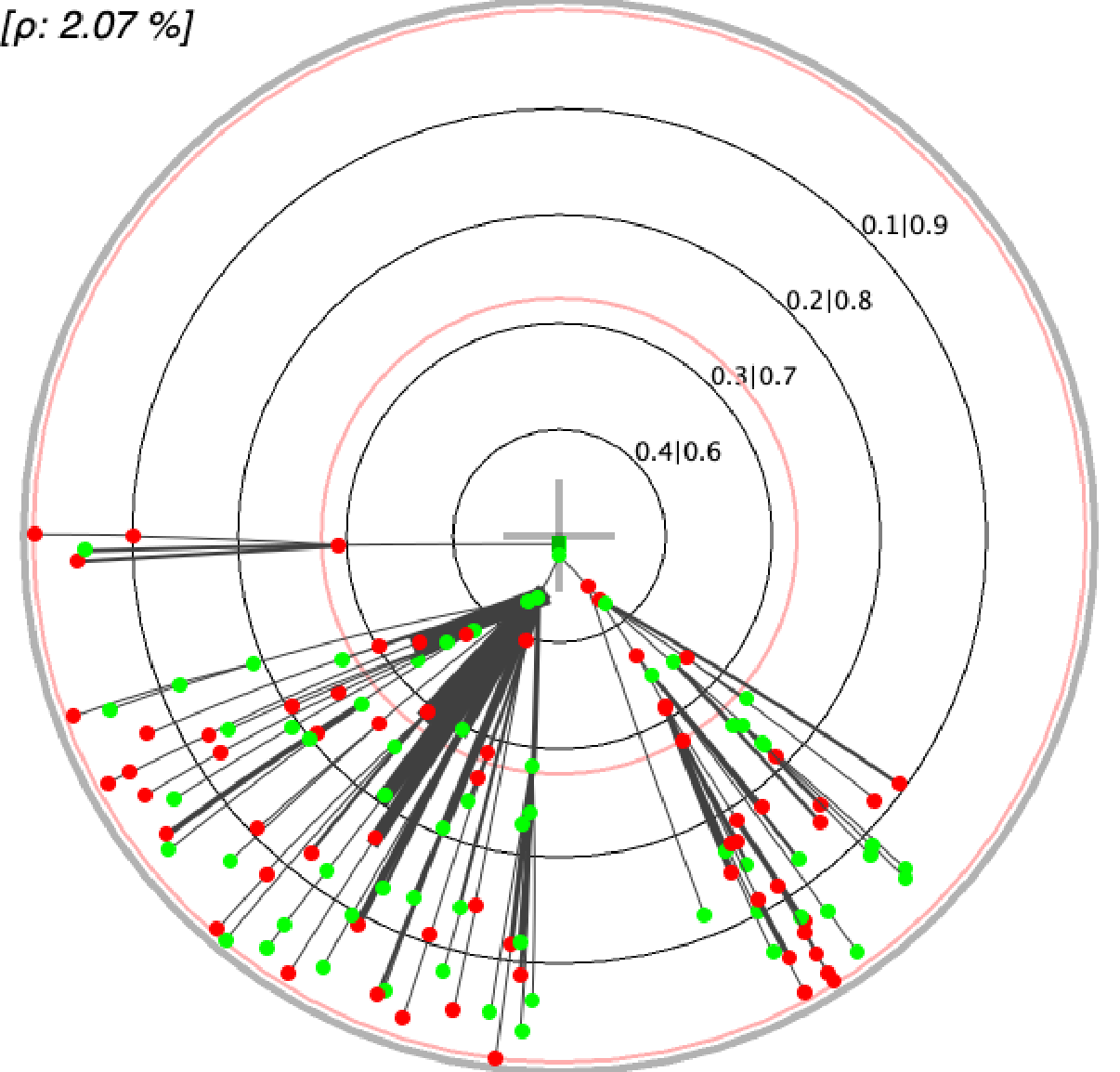}  & \includegraphics[trim=0bp 0bp 0bp 0bp,clip,width=0.3\columnwidth]{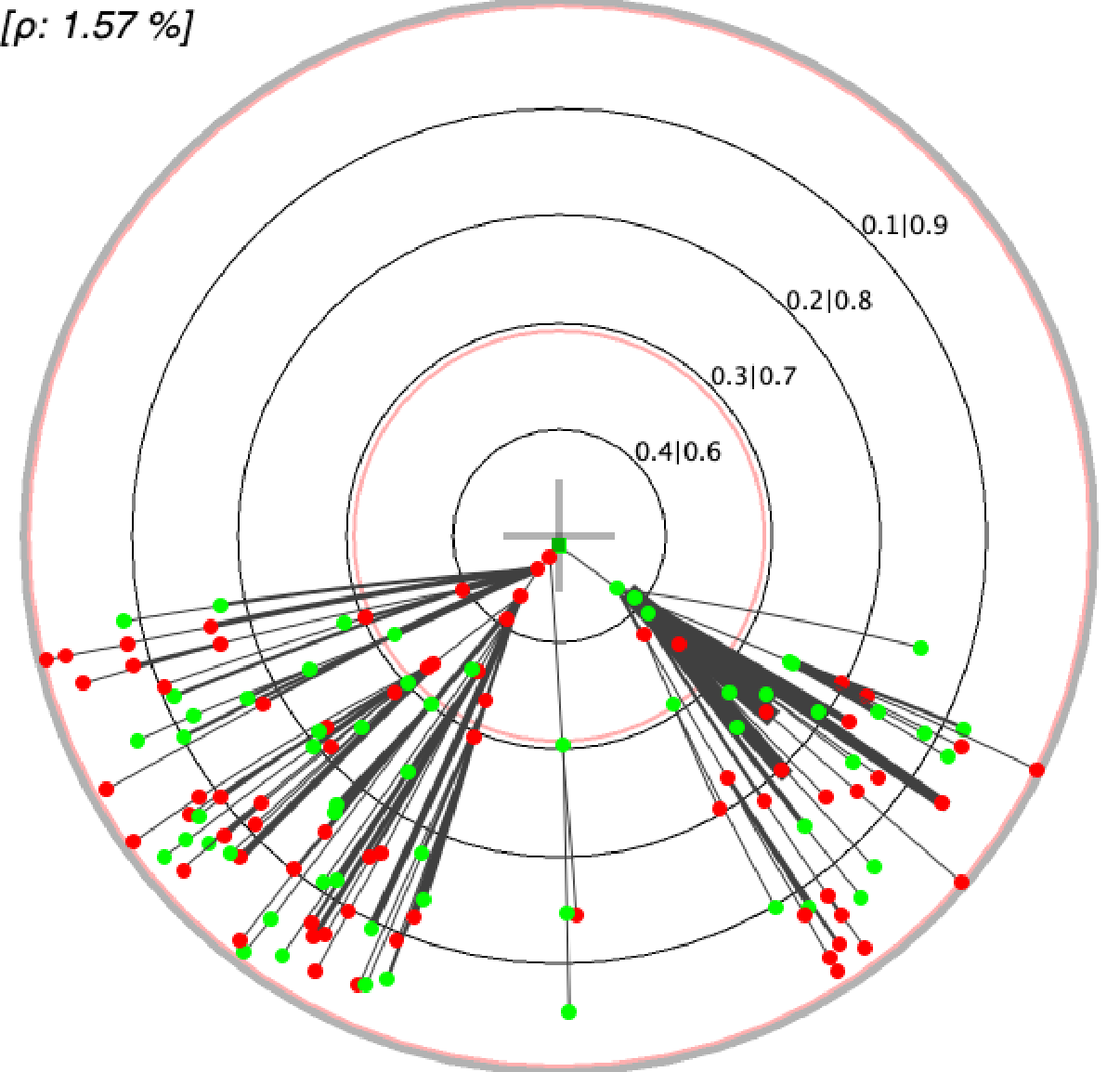} \\
                              MDT $\#6$ & MDT $\#7$& MDT $\#8$& MDT $\#9$& MDT $\#10$\\  \Xhline{2pt} 
  \end{tabular}}
\caption{First 10 MDTs for UCI \domainname{german}. Convention follows Table \ref{tab:online-shopping-intentions-dt-exerpt}.}
    \label{tab:german-dt-exerpt}
  \end{table}

    \begin{table}
  \centering
  \resizebox{\textwidth}{!}{\begin{tabular}{ccccc}\Xhline{2pt}
                              \includegraphics[trim=0bp 0bp 0bp 0bp,clip,width=0.3\columnwidth]{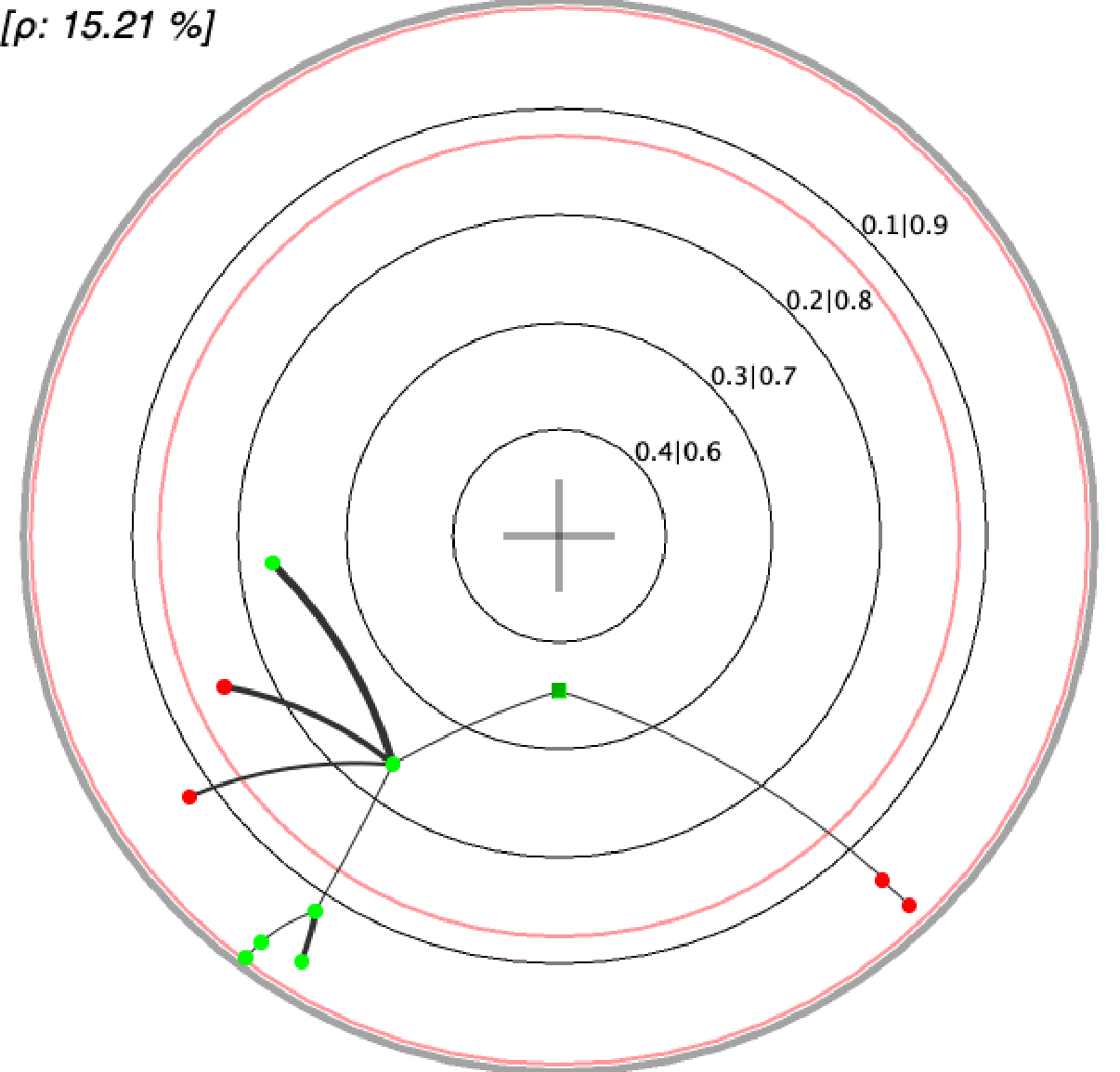} & \includegraphics[trim=0bp 0bp 0bp 0bp,clip,width=0.3\columnwidth]{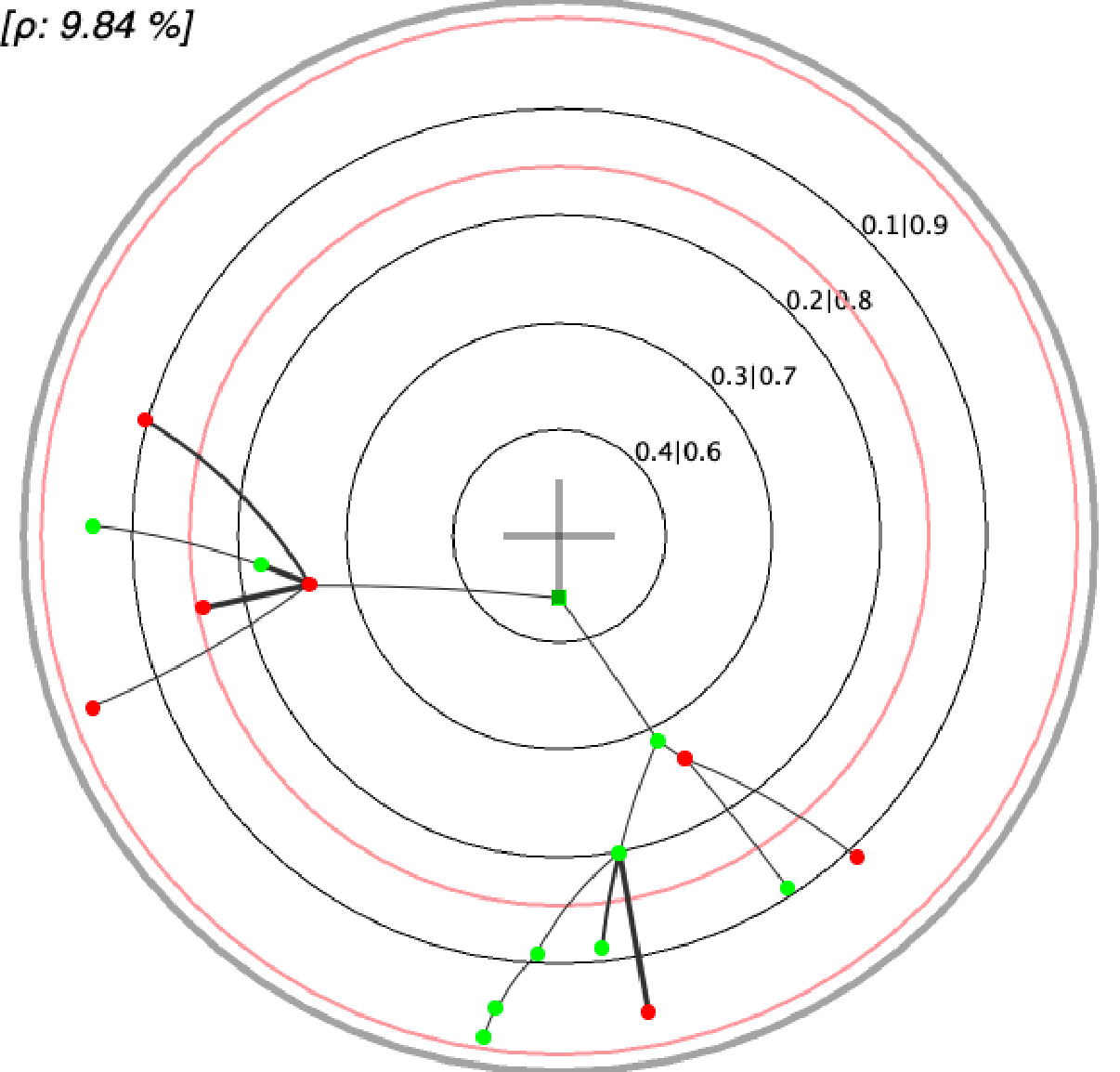} & \includegraphics[trim=0bp 0bp 0bp 0bp,clip,width=0.3\columnwidth]{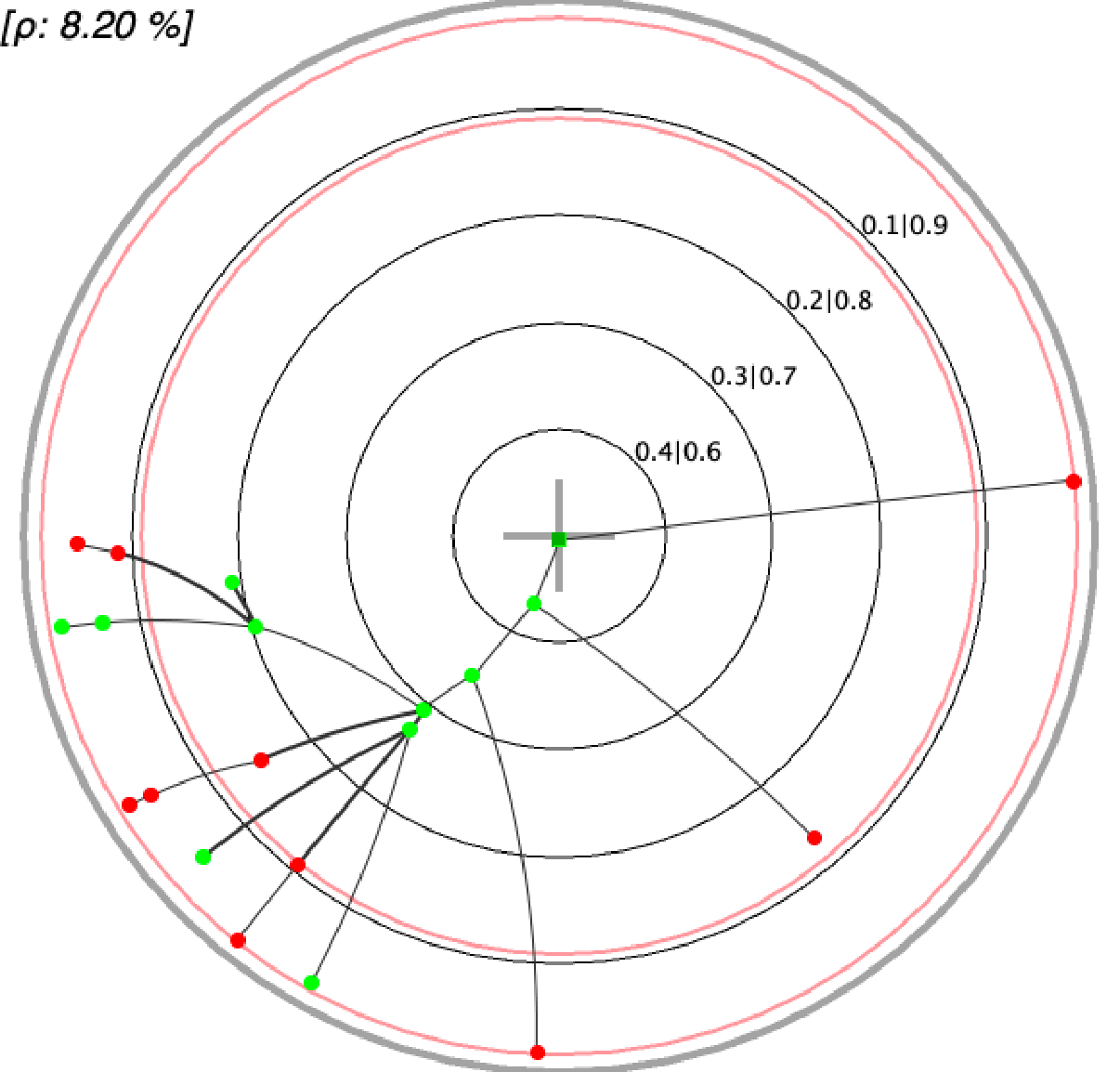}  & \includegraphics[trim=0bp 0bp 0bp 0bp,clip,width=0.3\columnwidth]{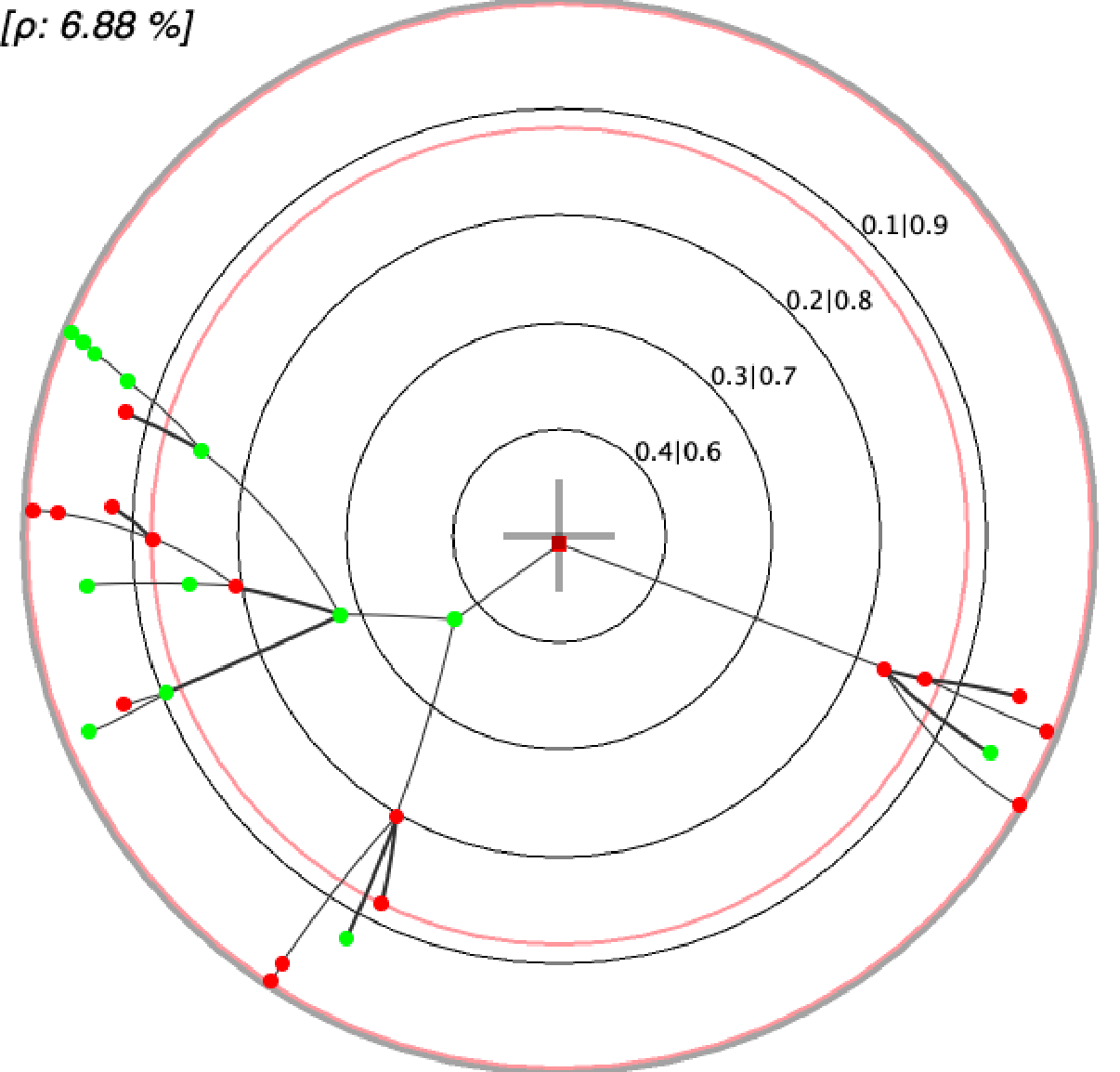}  & \includegraphics[trim=0bp 0bp 0bp 0bp,clip,width=0.3\columnwidth]{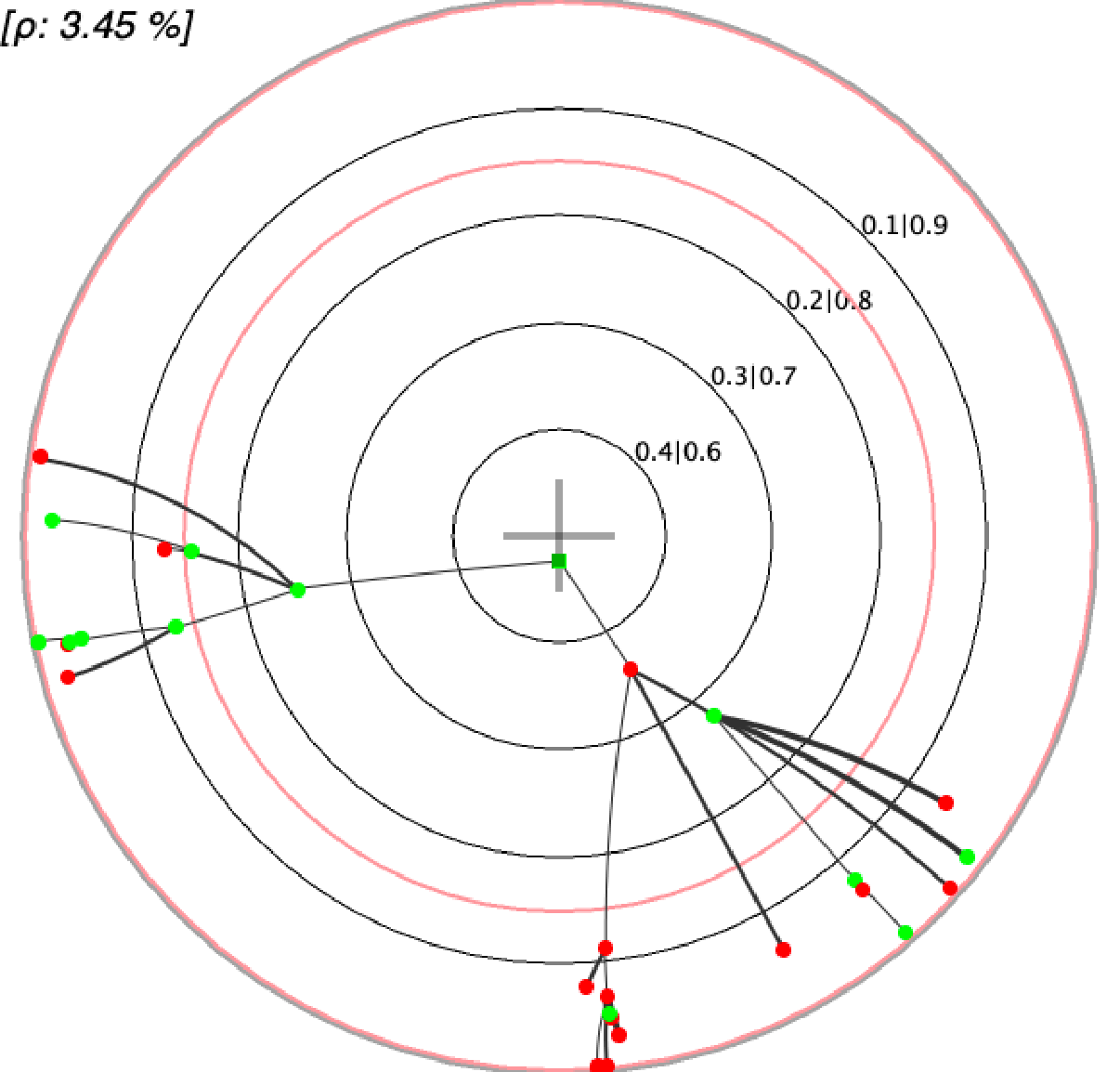} \\
                              MDT $\#1$ & MDT $\#2$& MDT $\#3$& MDT $\#4$& MDT $\#5$\\
                             \includegraphics[trim=0bp 0bp 0bp 0bp,clip,width=0.3\columnwidth]{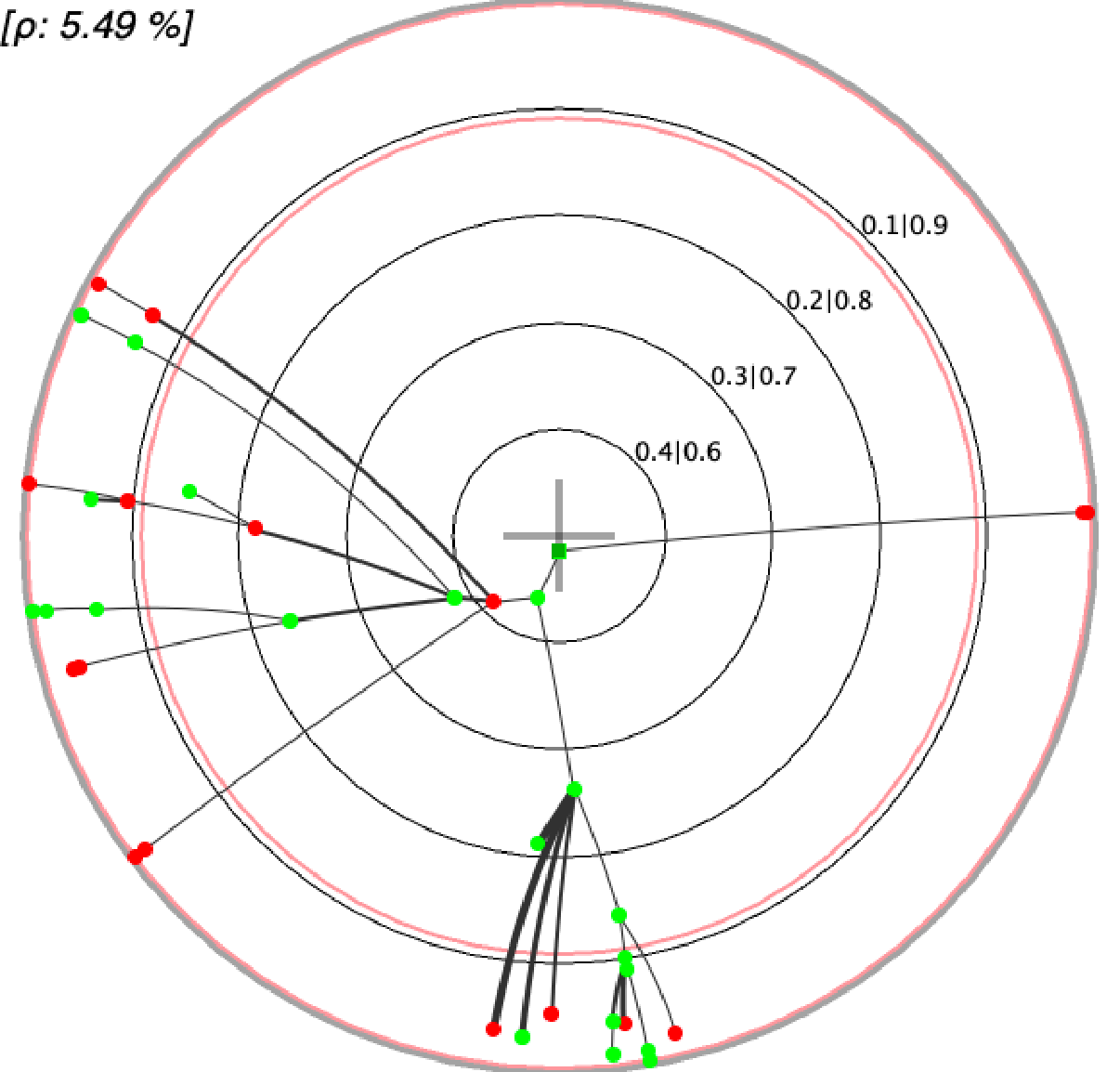} & \includegraphics[trim=0bp 0bp 0bp 0bp,clip,width=0.3\columnwidth]{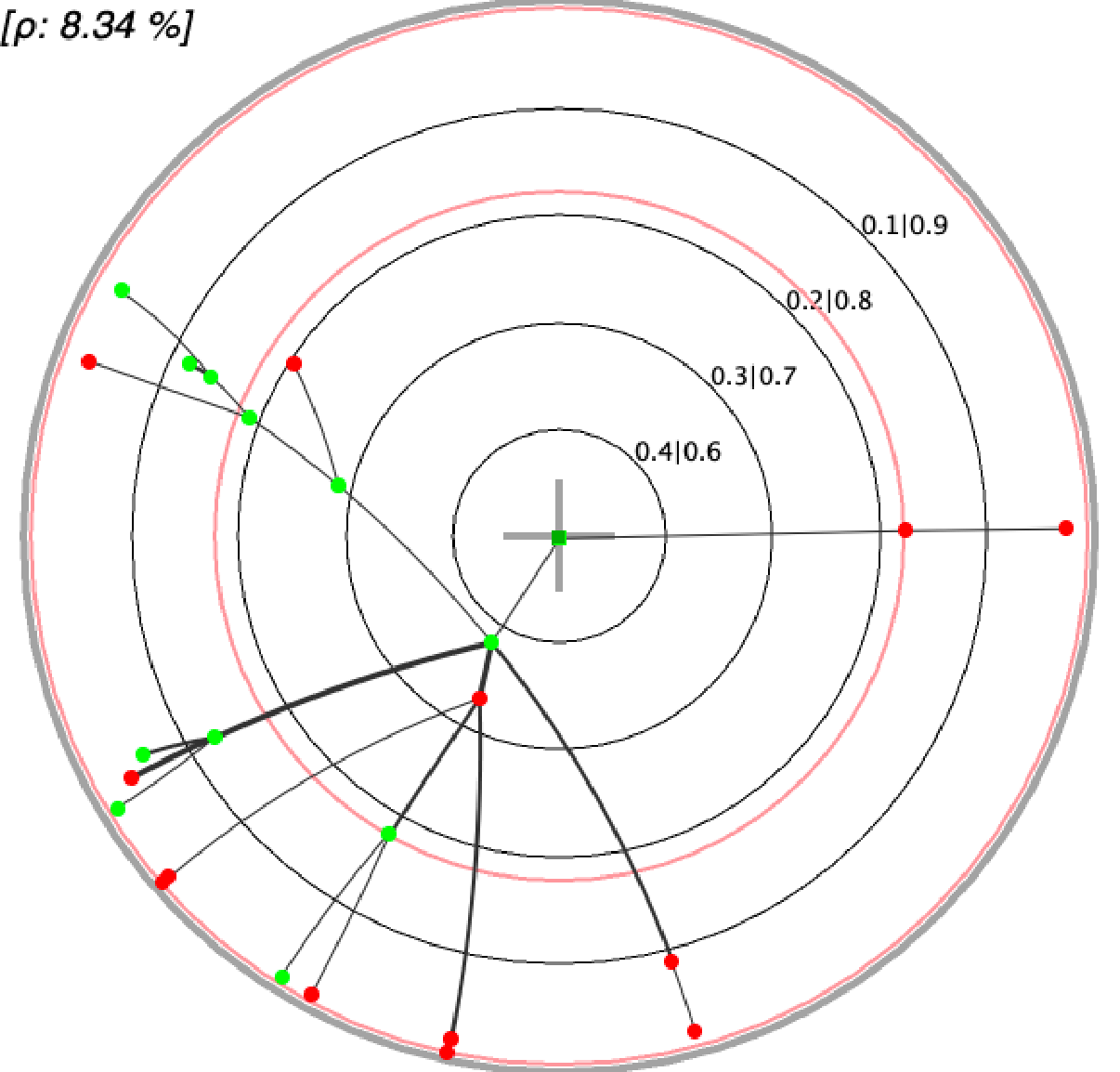} & \includegraphics[trim=0bp 0bp 0bp 0bp,clip,width=0.3\columnwidth]{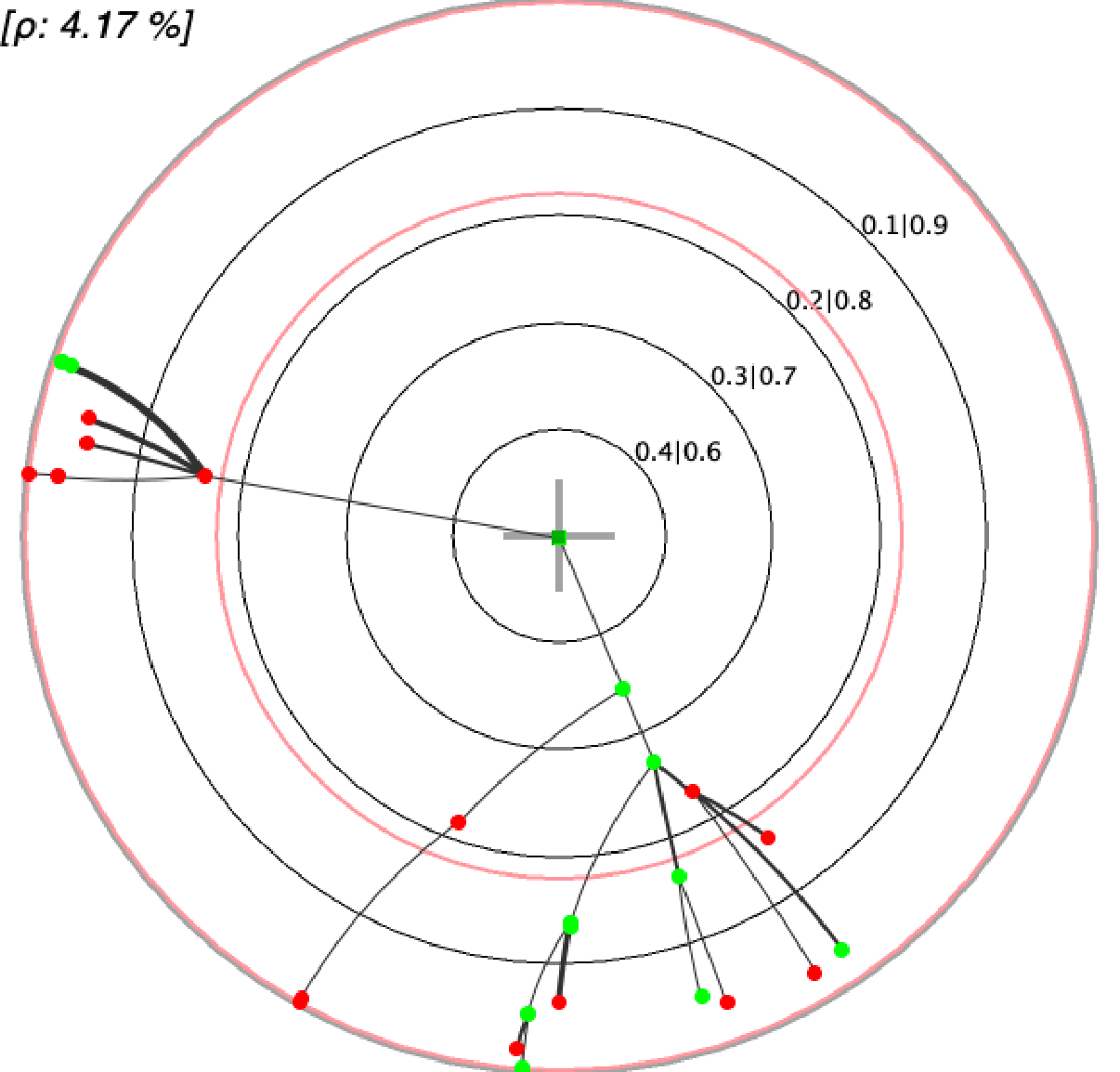}  & \includegraphics[trim=0bp 0bp 0bp 0bp,clip,width=0.3\columnwidth]{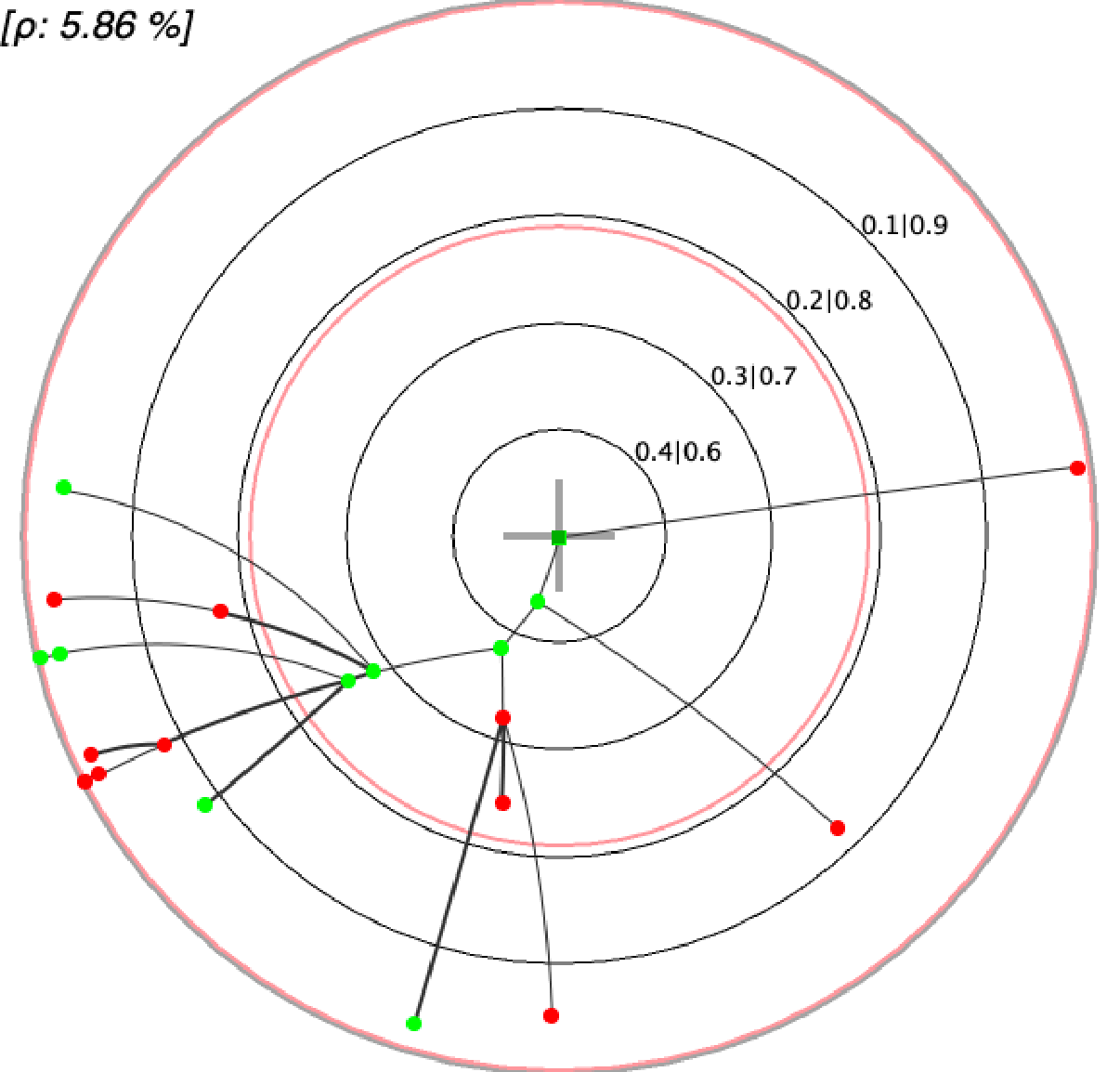}  & \includegraphics[trim=0bp 0bp 0bp 0bp,clip,width=0.3\columnwidth]{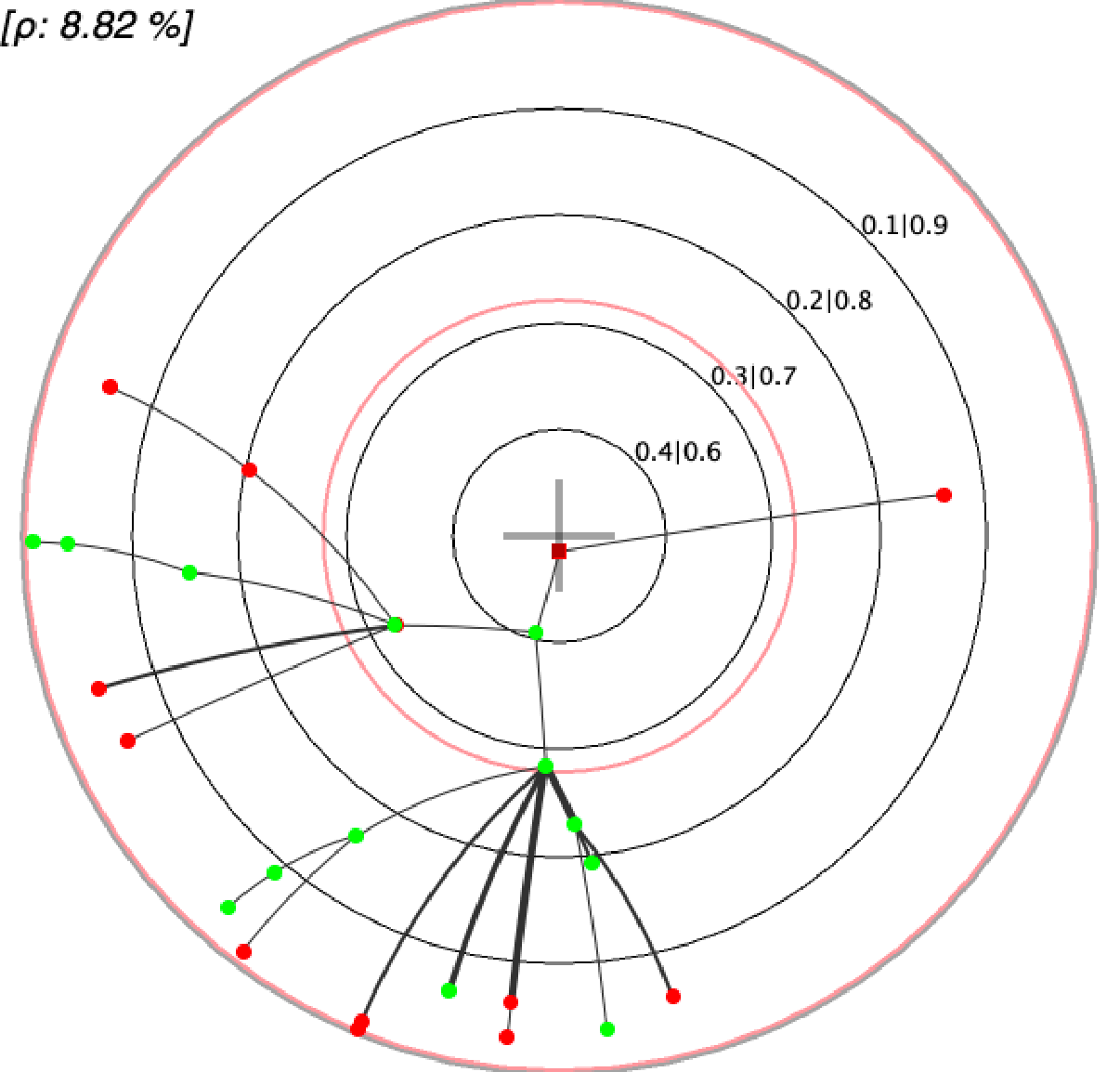} \\
                              MDT $\#6$ & MDT $\#7$& MDT $\#8$& MDT $\#9$& MDT $\#10$\\  \Xhline{2pt} 
  \end{tabular}}
\caption{First 10 MDTs for UCI \domainname{ionosphere}. Convention follows Table \ref{tab:online-shopping-intentions-dt-exerpt}.}
    \label{tab:ionosphere-dt-exerpt}
  \end{table}

    \begin{table}
  \centering
  \resizebox{\textwidth}{!}{\begin{tabular}{ccccc}\Xhline{2pt}
                              \includegraphics[trim=0bp 0bp 0bp 0bp,clip,width=0.3\columnwidth]{Experiments/poincare_DT/hardware/treeplot_Jan_24th__6h_52m_50s_USE_BOOSTING_WEIGHTS_Algo0_SplitCV3_Tree0_T10_10e.eps} & \includegraphics[trim=0bp 0bp 0bp 0bp,clip,width=0.3\columnwidth]{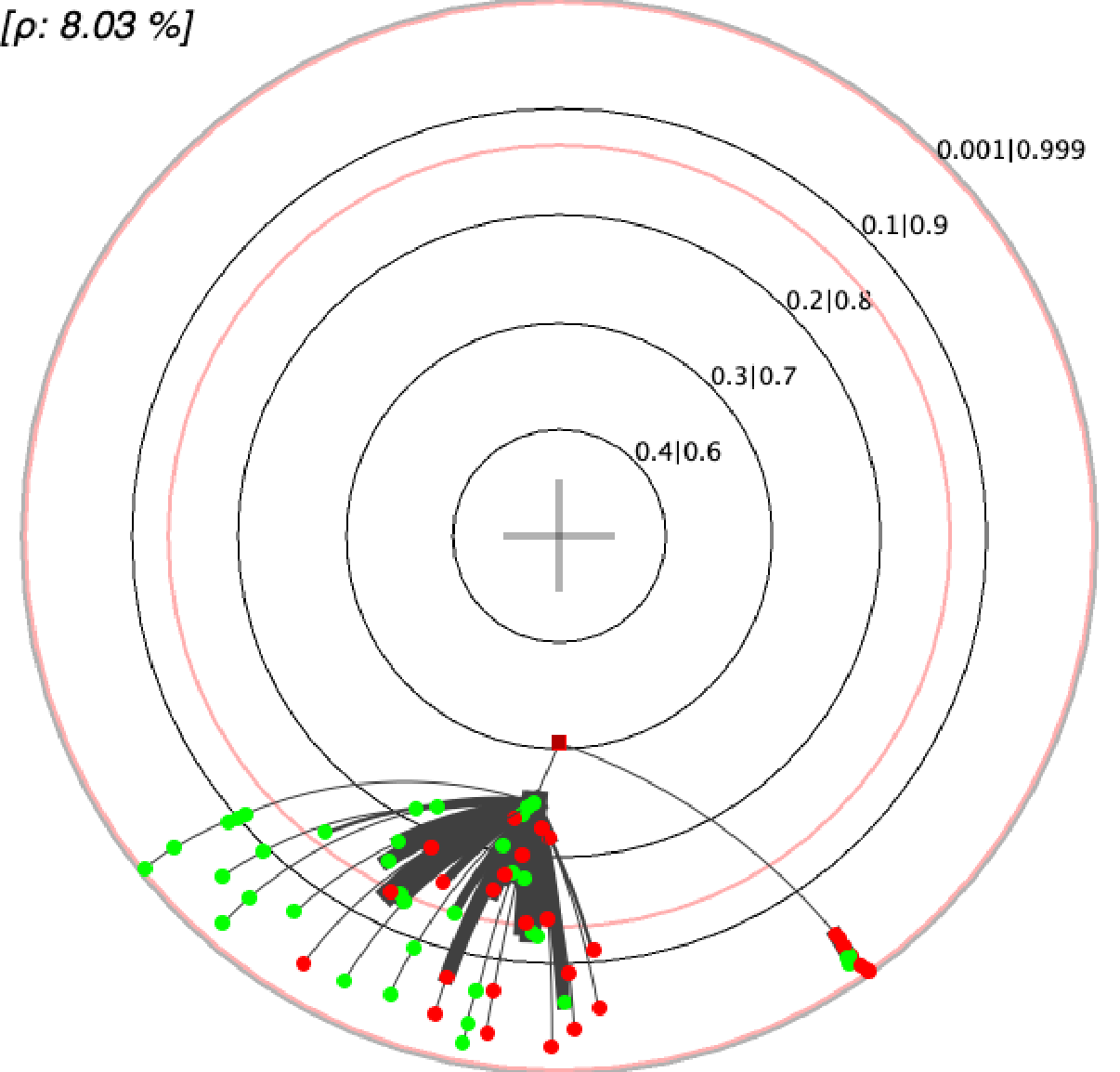} & \includegraphics[trim=0bp 0bp 0bp 0bp,clip,width=0.3\columnwidth]{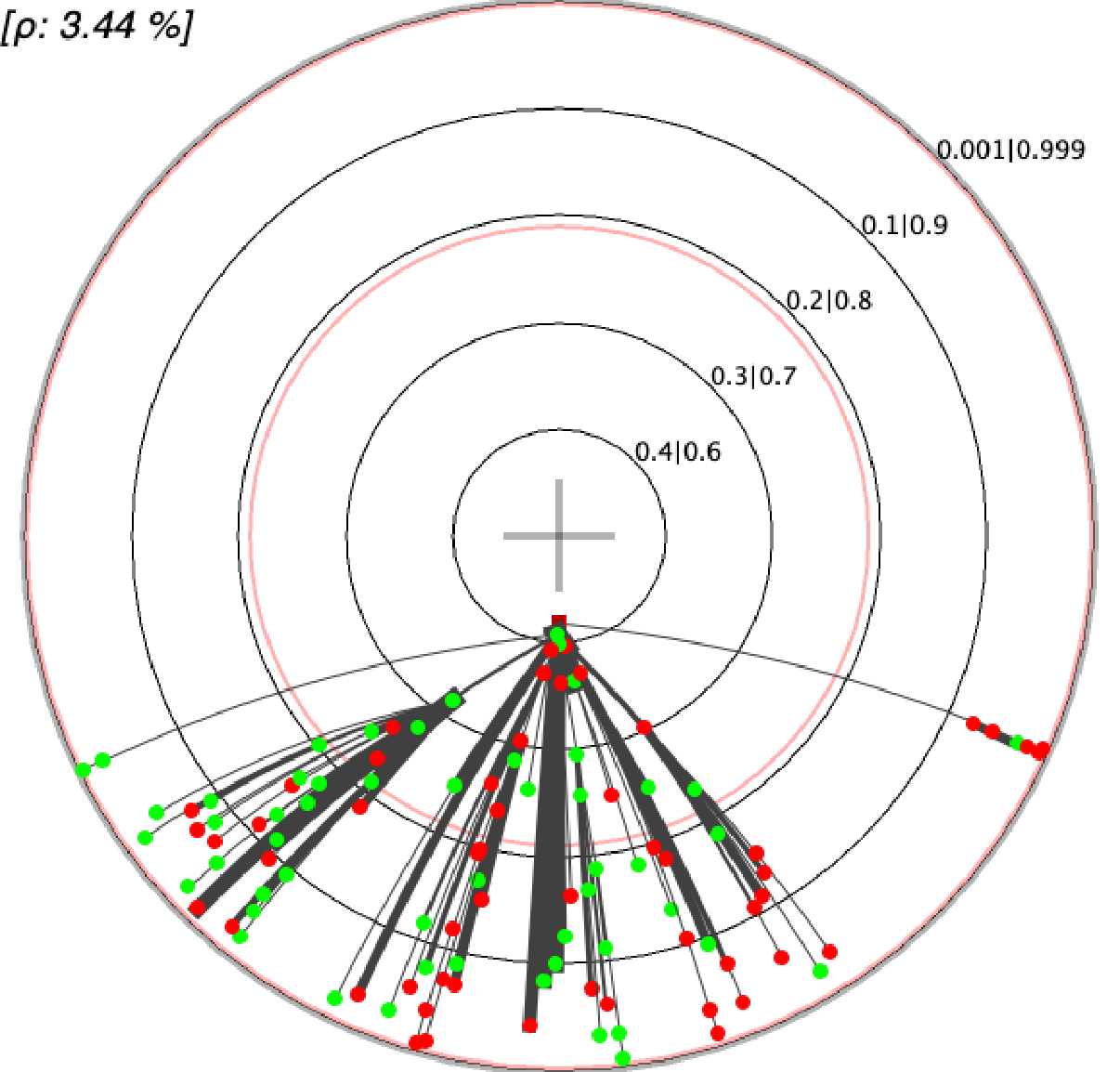}  & \includegraphics[trim=0bp 0bp 0bp 0bp,clip,width=0.3\columnwidth]{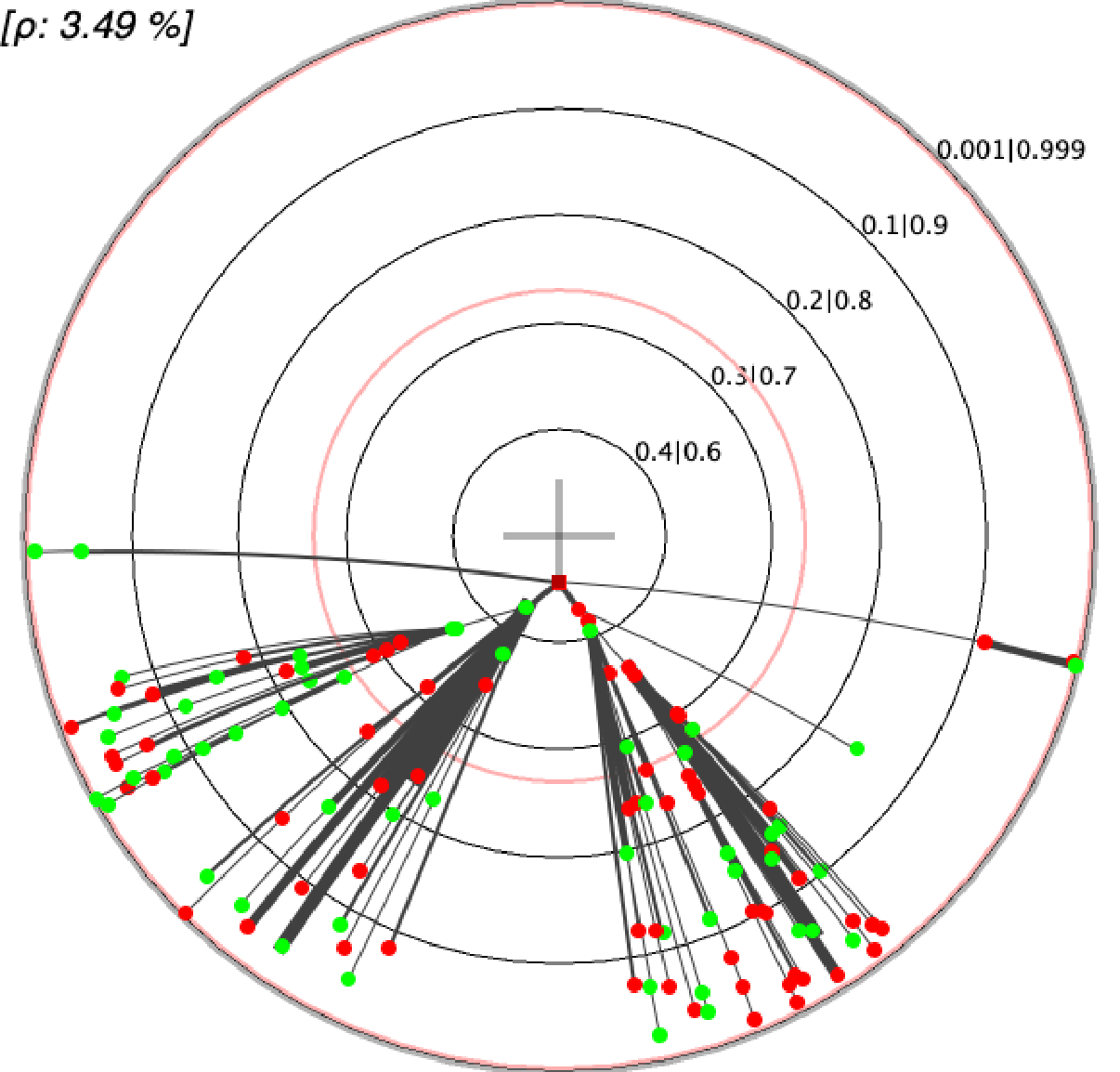}  & \includegraphics[trim=0bp 0bp 0bp 0bp,clip,width=0.3\columnwidth]{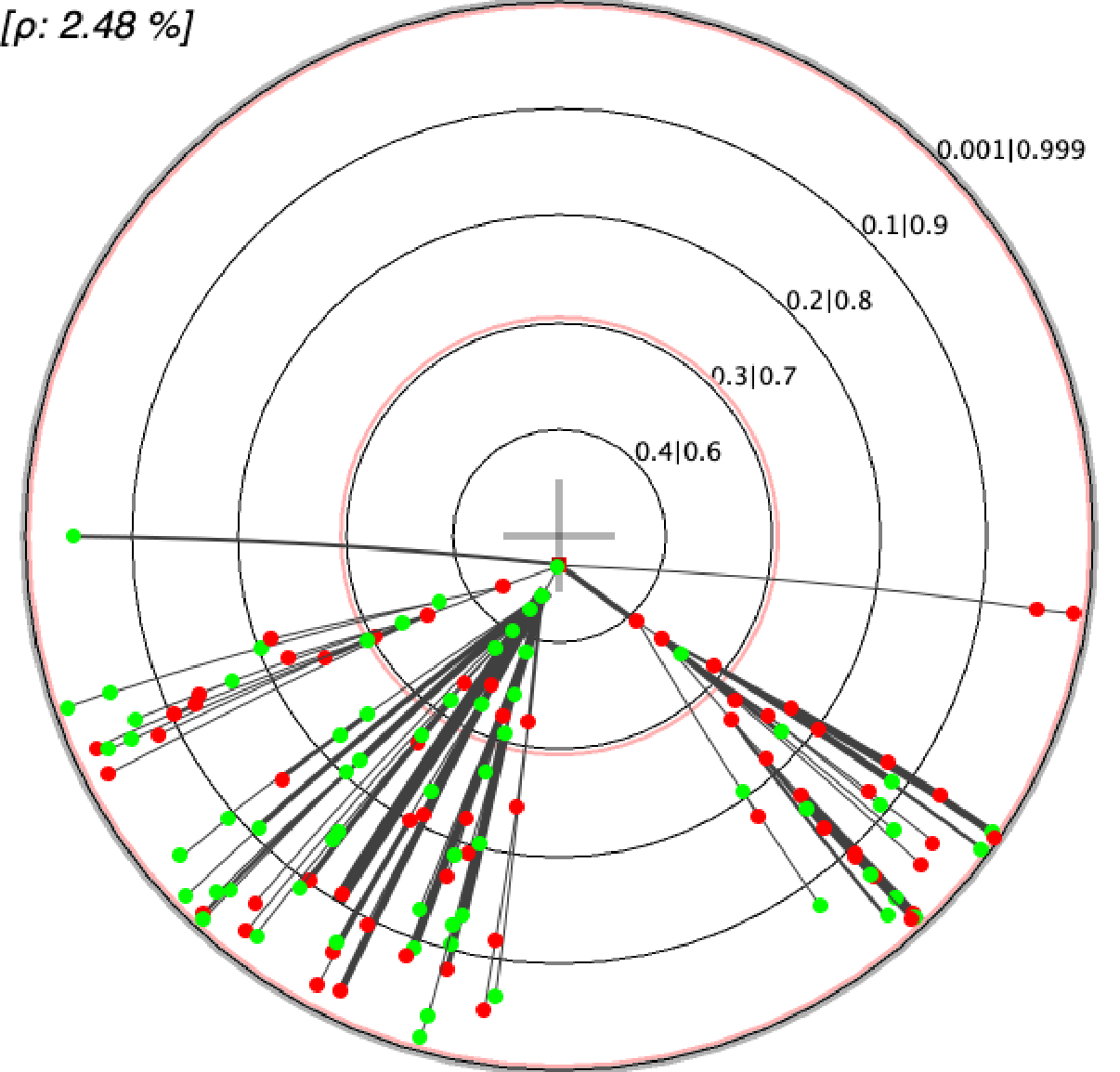} \\
                              MDT $\#1$ & MDT $\#2$& MDT $\#3$& MDT $\#4$& MDT $\#5$\\
                             \includegraphics[trim=0bp 0bp 0bp 0bp,clip,width=0.3\columnwidth]{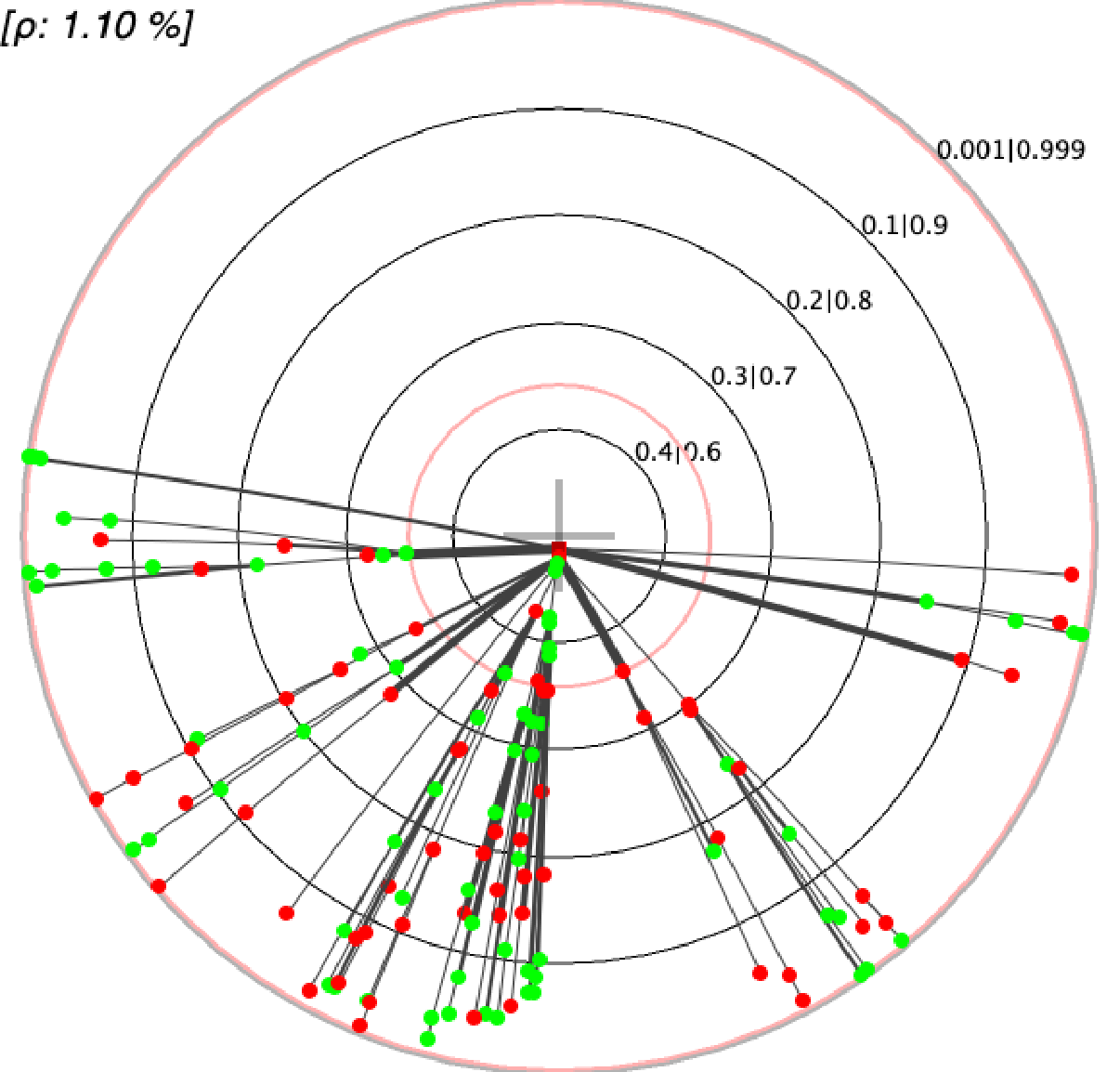} & \includegraphics[trim=0bp 0bp 0bp 0bp,clip,width=0.3\columnwidth]{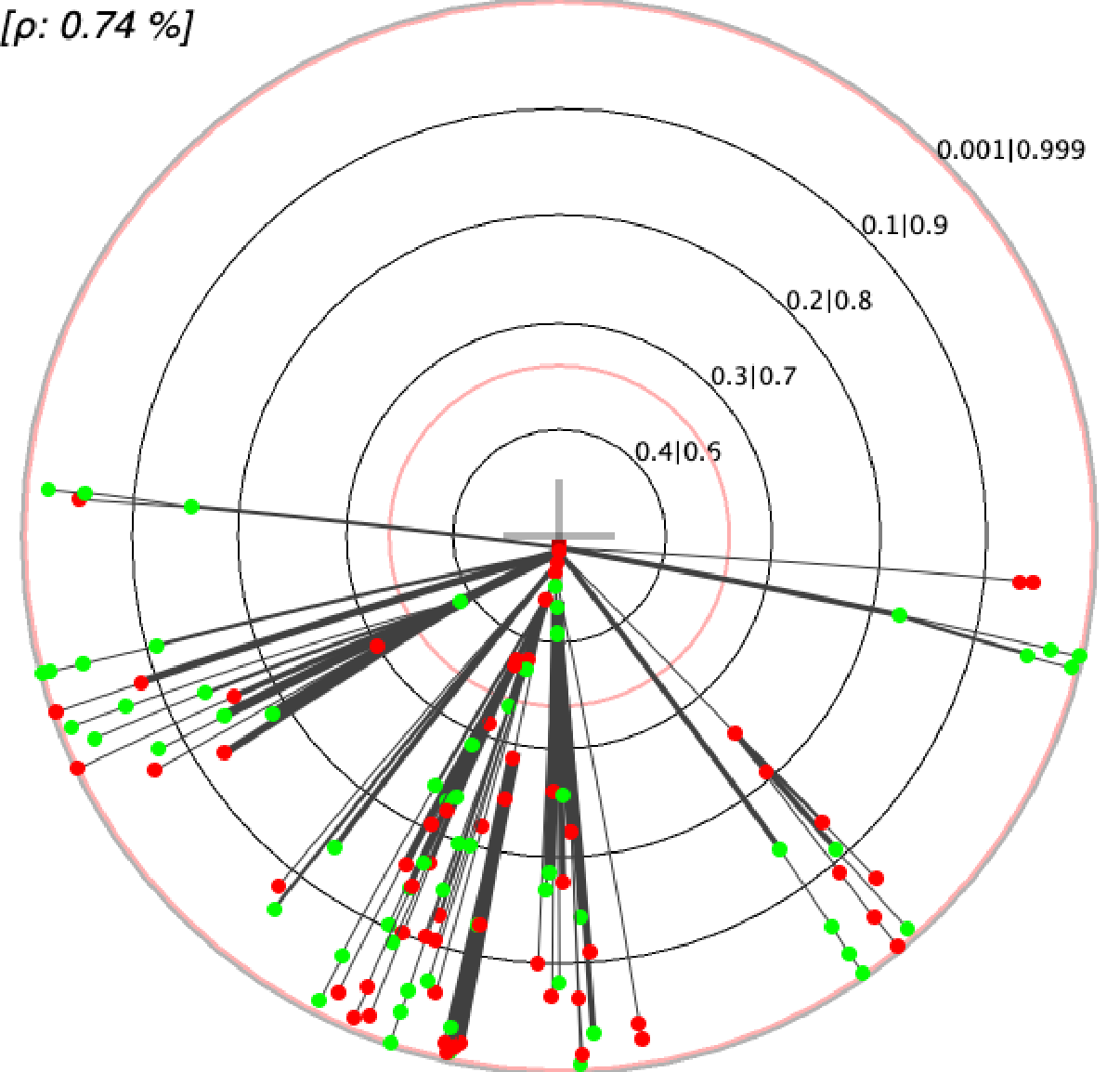} & \includegraphics[trim=0bp 0bp 0bp 0bp,clip,width=0.3\columnwidth]{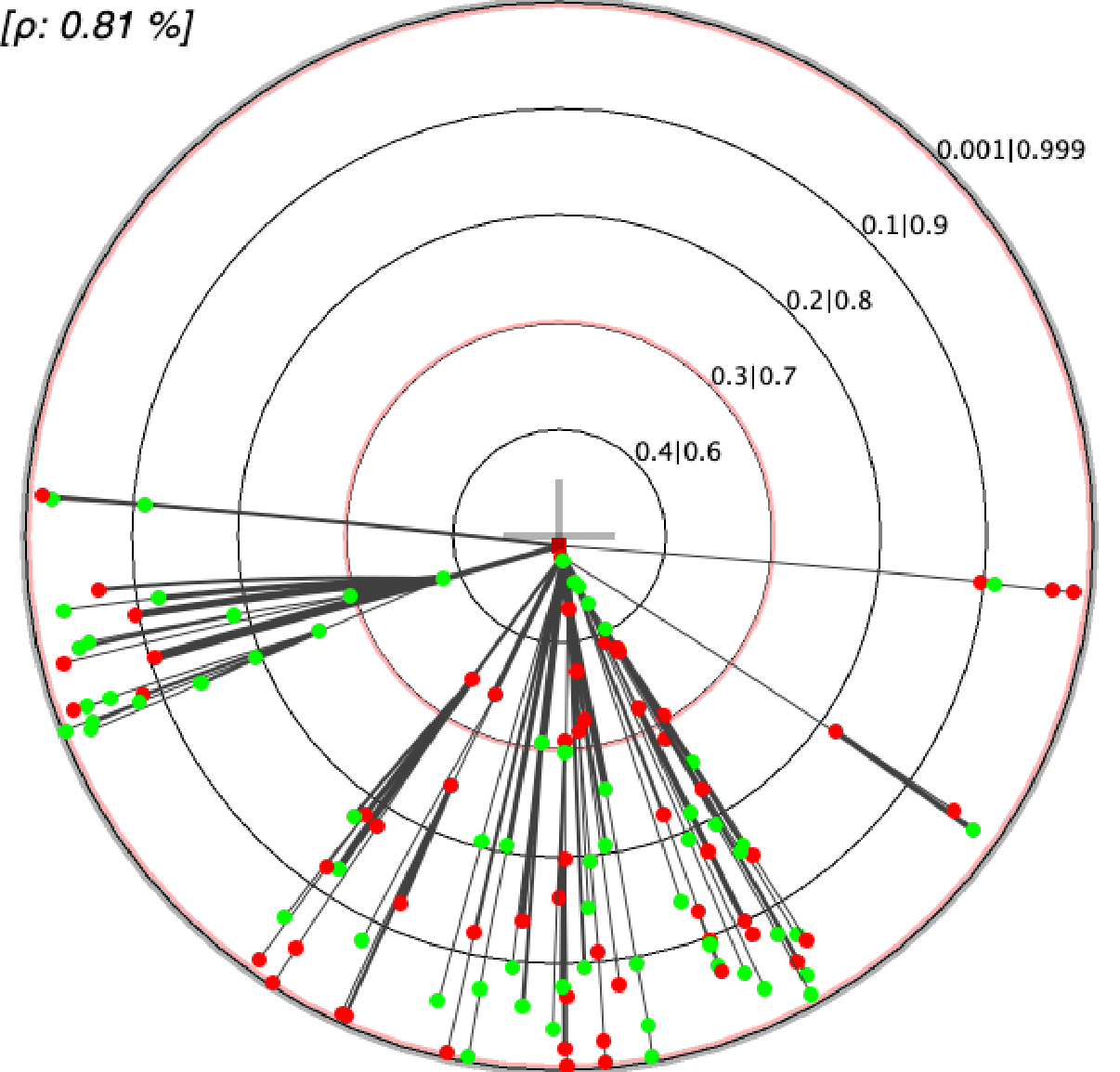}  & \includegraphics[trim=0bp 0bp 0bp 0bp,clip,width=0.3\columnwidth]{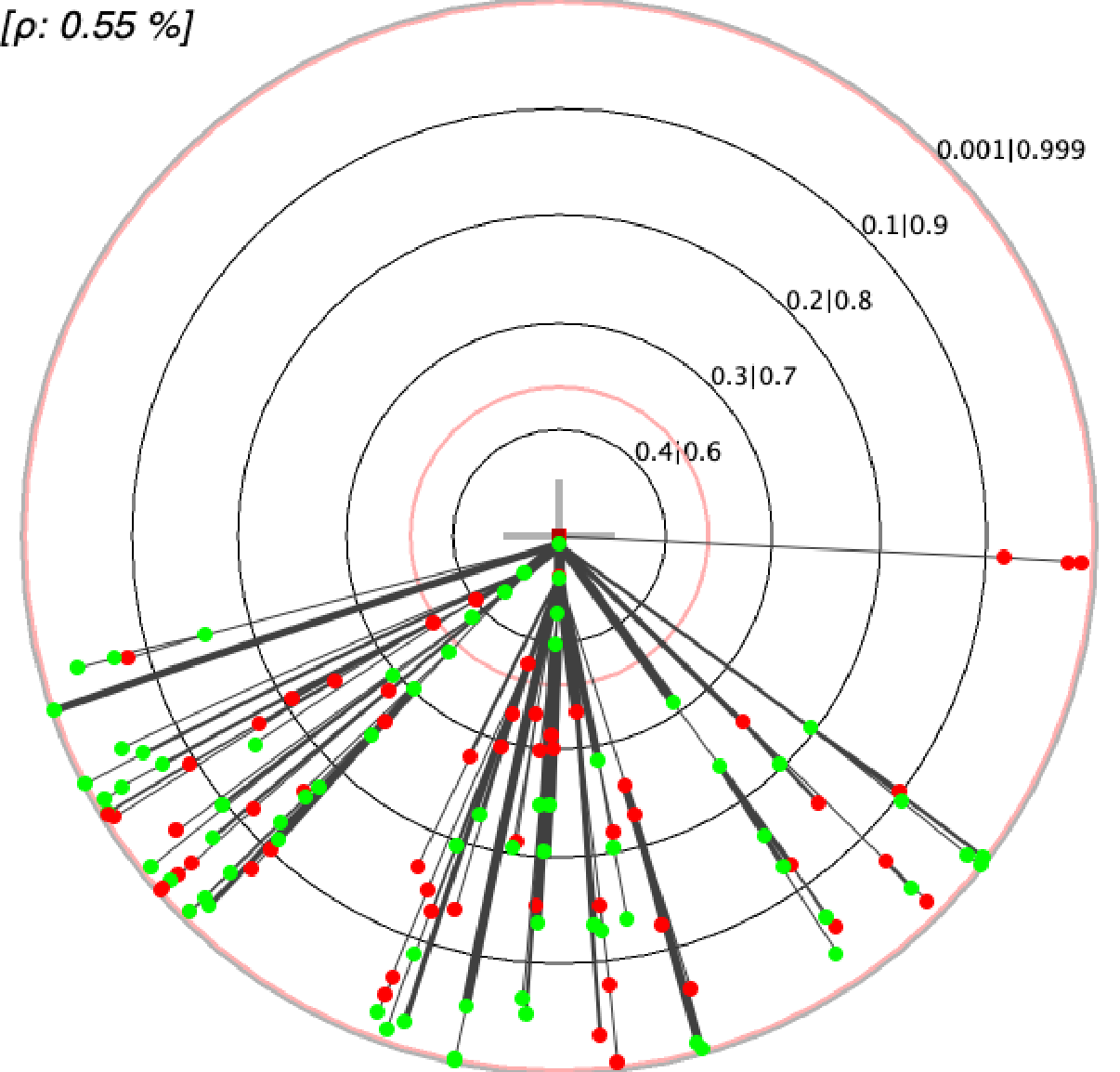}  & \includegraphics[trim=0bp 0bp 0bp 0bp,clip,width=0.3\columnwidth]{Experiments/poincare_DT/hardware/treeplot_Jan_24th__6h_52m_50s_USE_BOOSTING_WEIGHTS_Algo0_SplitCV3_Tree9_T10_10e.eps} \\
                              MDT $\#6$ & MDT $\#7$& MDT $\#8$& MDT $\#9$& MDT $\#10$\\  \Xhline{2pt} 
  \end{tabular}}
\caption{First 10 MDTs for UCI \domainname{hardware}. Convention follows Table \ref{tab:online-shopping-intentions-dt-exerpt}.}
    \label{tab:hardware-dt-exerpt}
  \end{table}

    \begin{table}
  \centering
  \resizebox{\textwidth}{!}{\begin{tabular}{c?ccccc}\Xhline{2pt}
                           \rotatebox[origin=l]{90}{$t=1.0$} & \includegraphics[trim=0bp 0bp 0bp 0bp,clip,width=0.3\columnwidth]{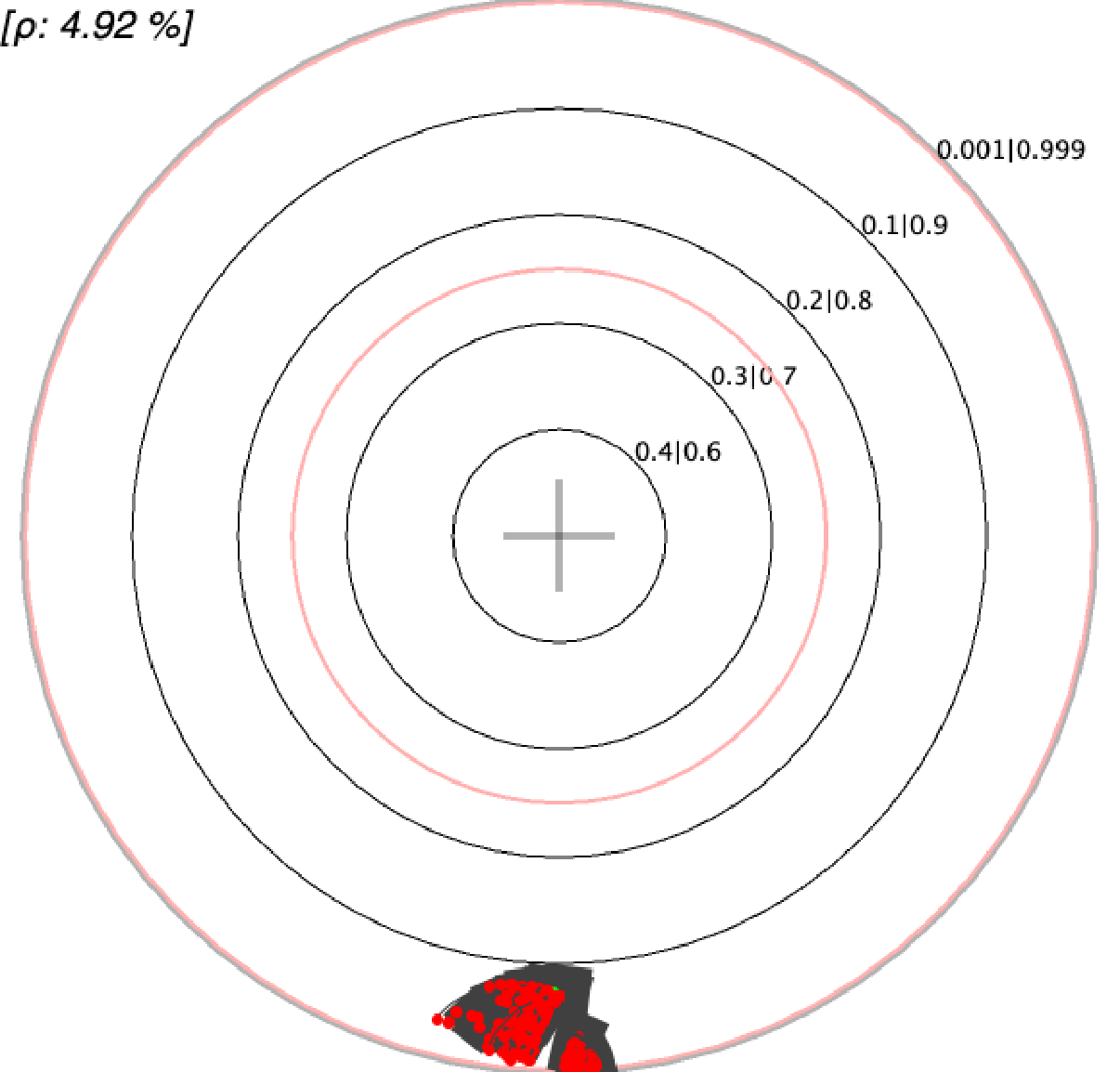} & \includegraphics[trim=0bp 0bp 0bp 0bp,clip,width=0.3\columnwidth]{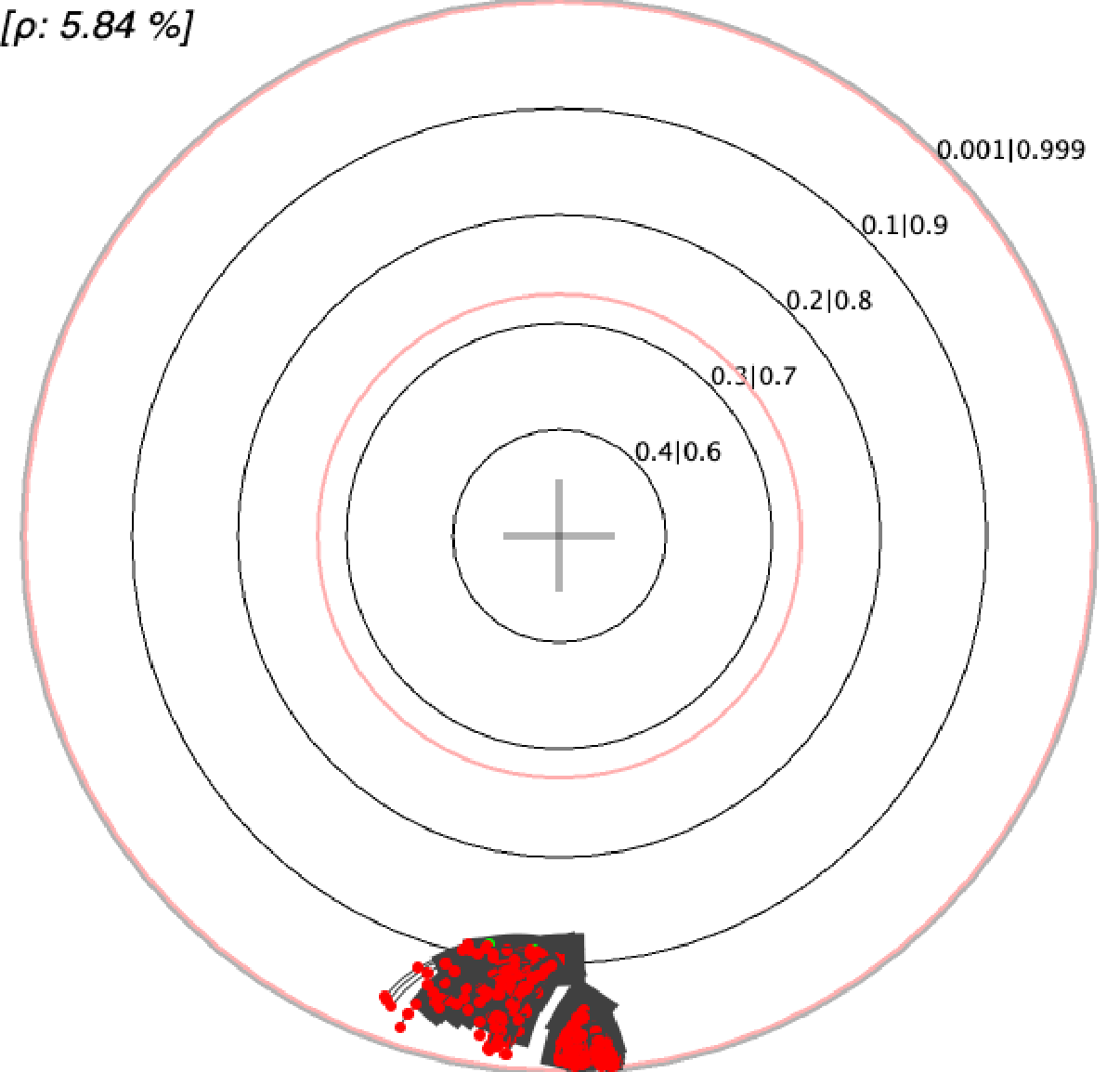} & \includegraphics[trim=0bp 0bp 0bp 0bp,clip,width=0.3\columnwidth]{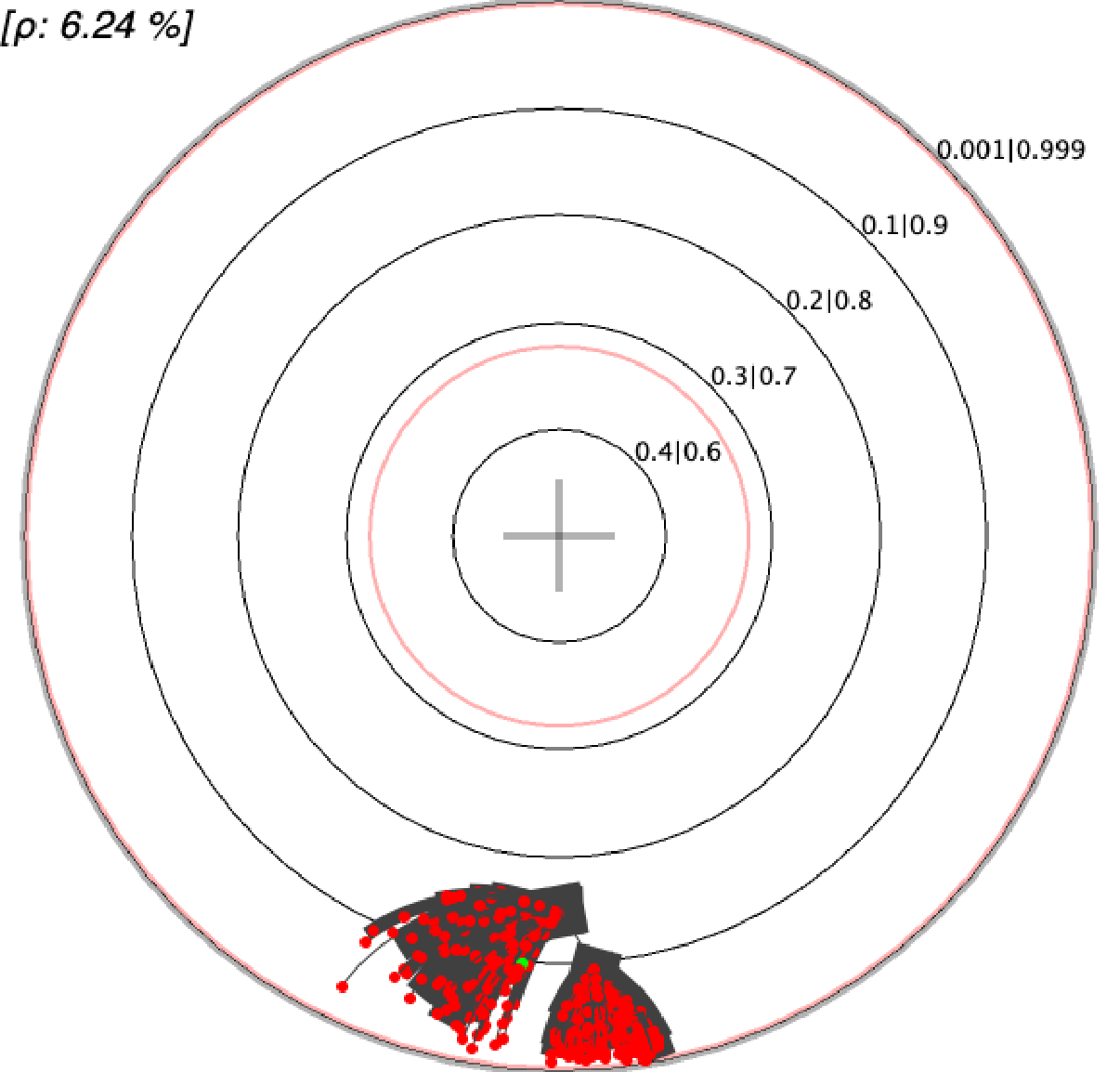} & \includegraphics[trim=0bp 0bp 0bp 0bp,clip,width=0.3\columnwidth]{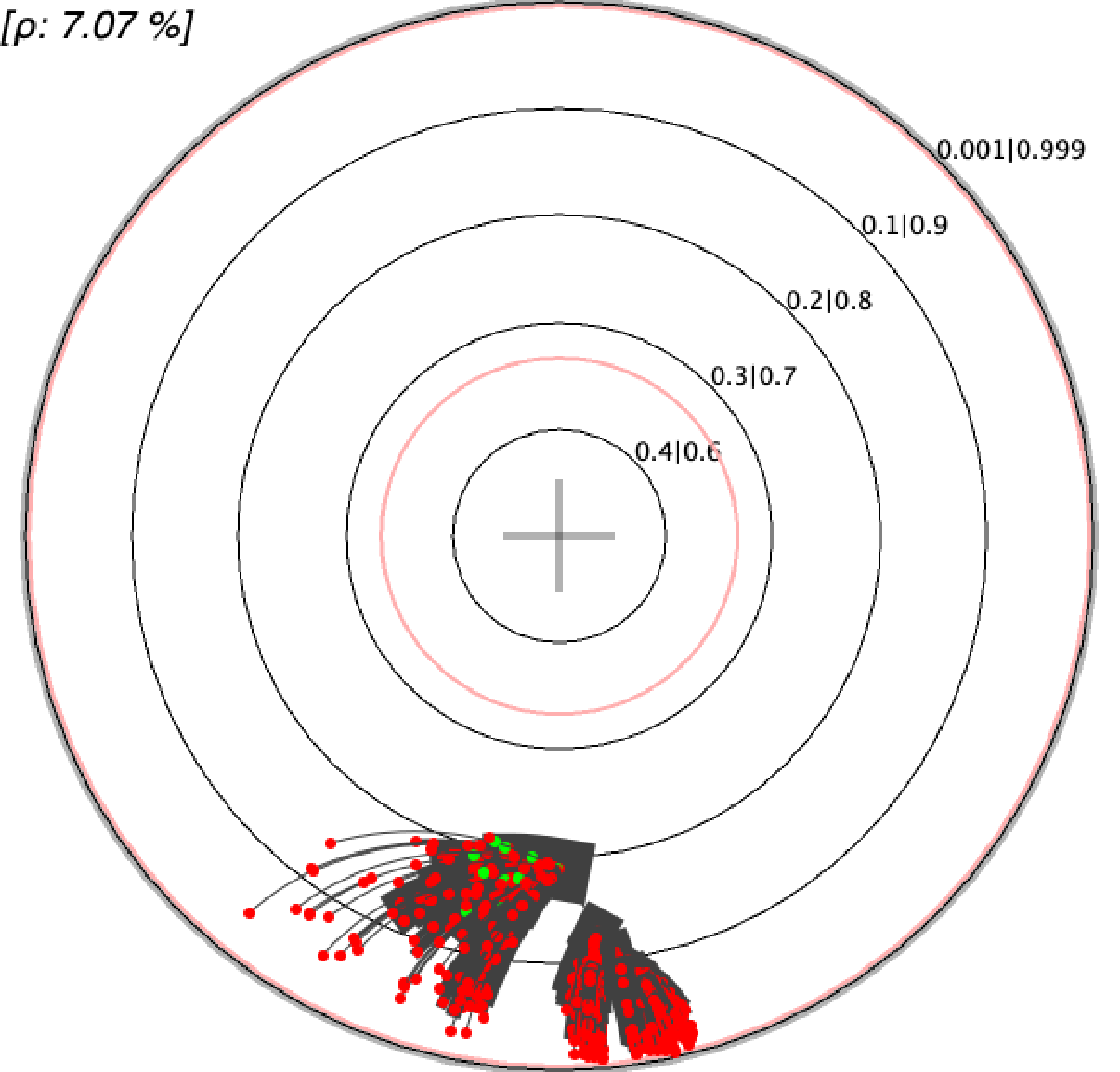} & \includegraphics[trim=0bp 0bp 0bp 0bp,clip,width=0.3\columnwidth]{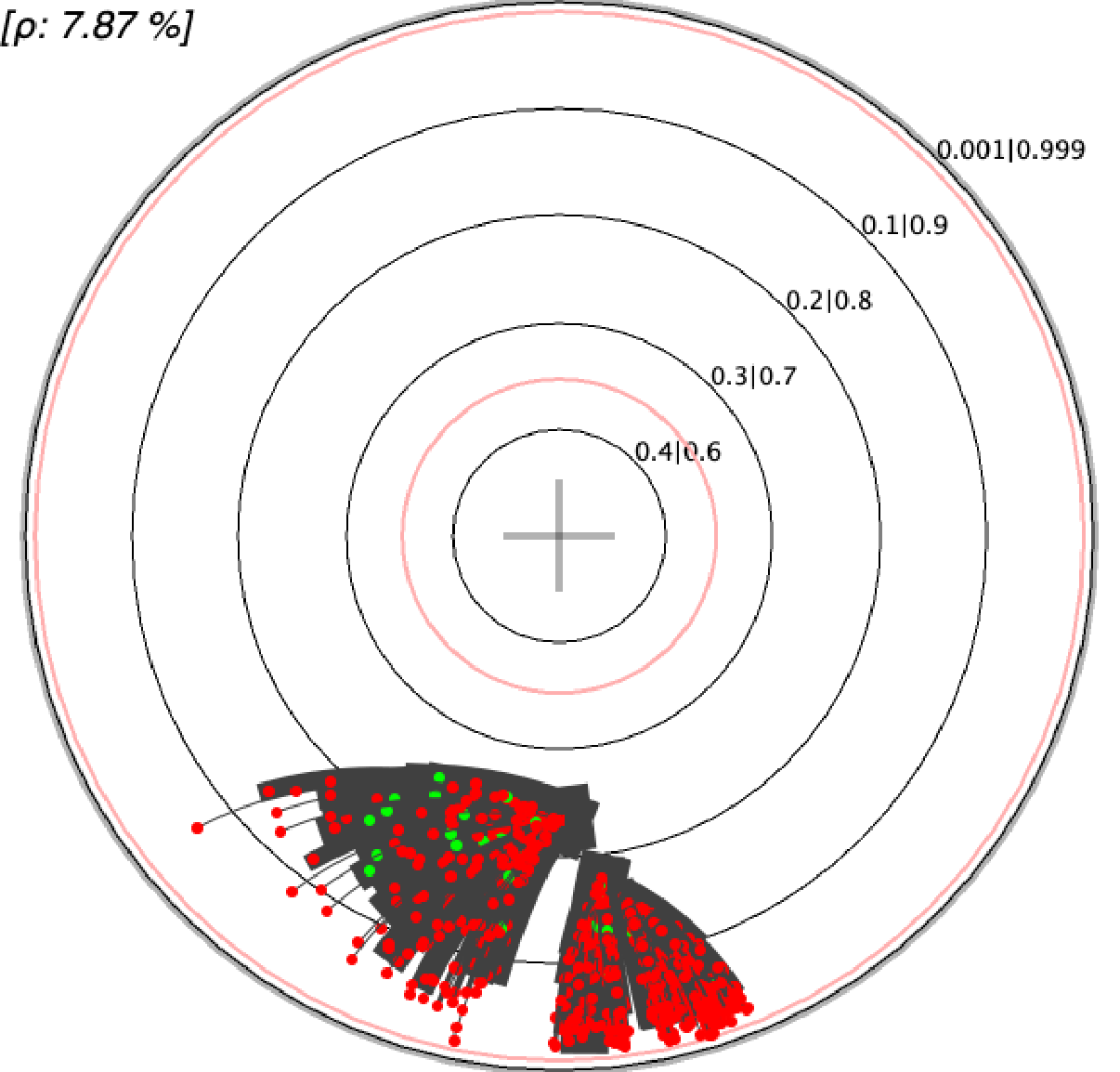}  \\
                             \rotatebox[origin=l]{90}{$t=0.0$} & \includegraphics[trim=0bp 0bp 0bp 0bp,clip,width=0.3\columnwidth]{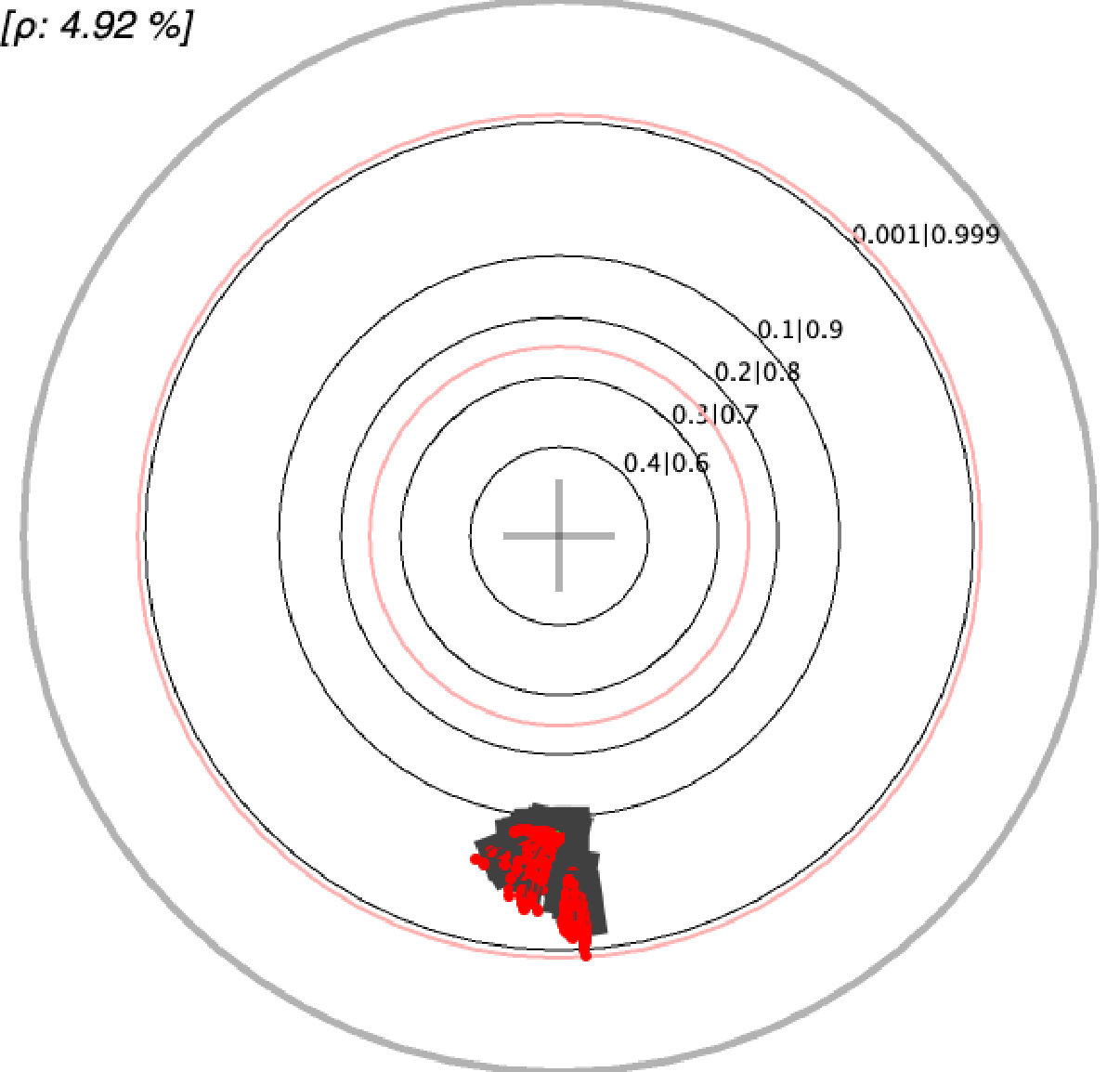} & \includegraphics[trim=0bp 0bp 0bp 0bp,clip,width=0.3\columnwidth]{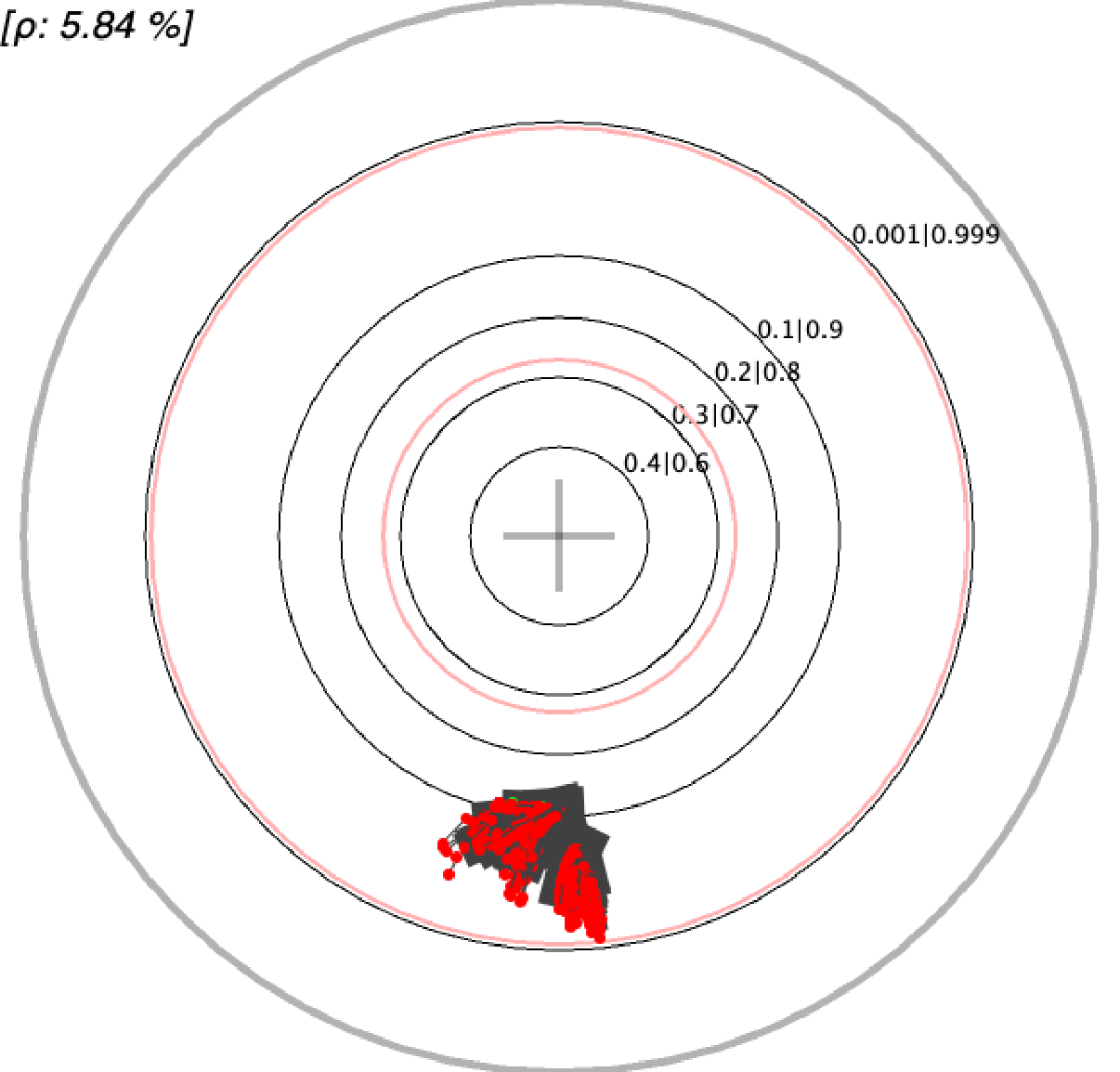} & \includegraphics[trim=0bp 0bp 0bp 0bp,clip,width=0.3\columnwidth]{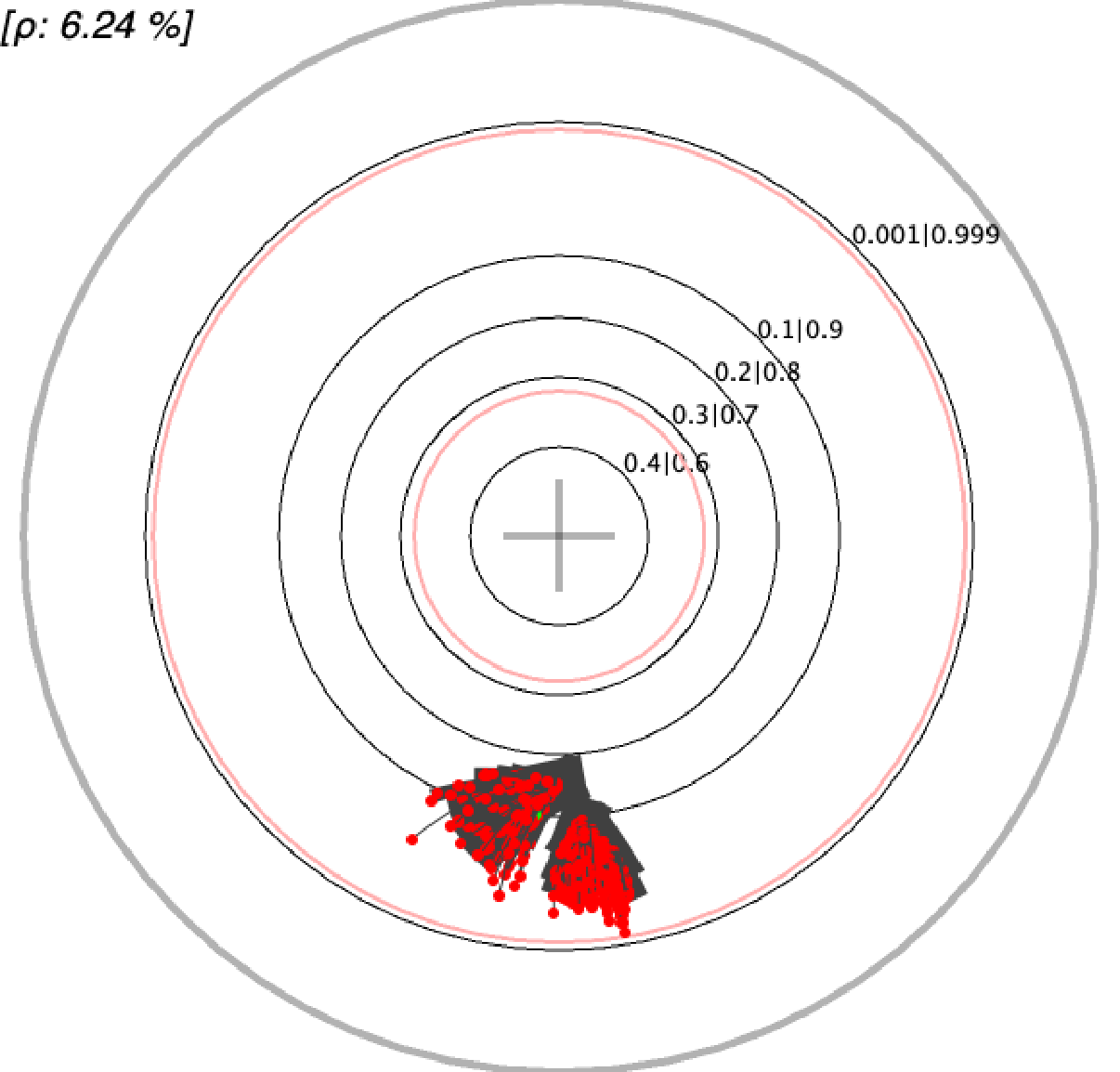} & \includegraphics[trim=0bp 0bp 0bp 0bp,clip,width=0.3\columnwidth]{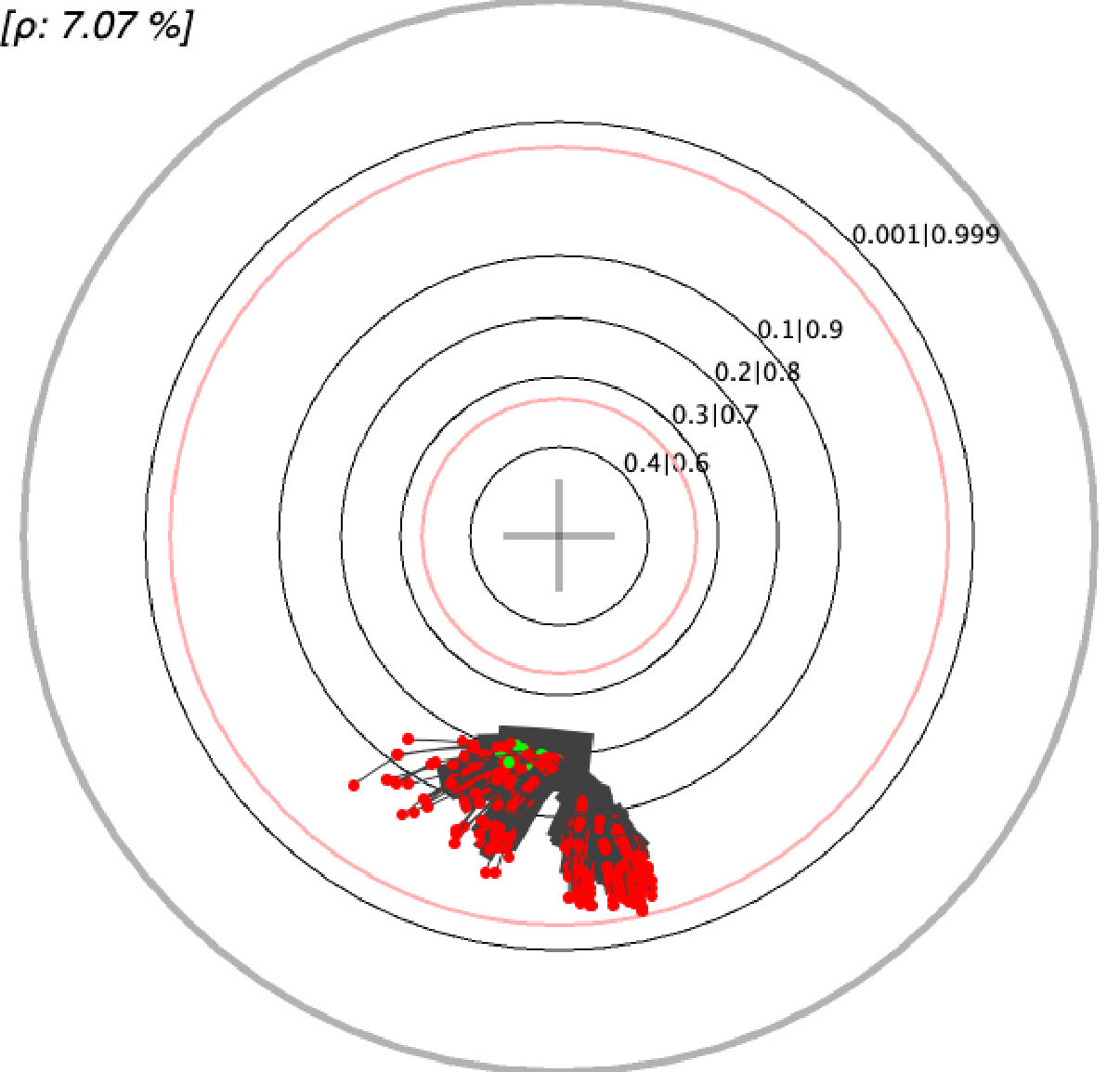} & \includegraphics[trim=0bp 0bp 0bp 0bp,clip,width=0.3\columnwidth]{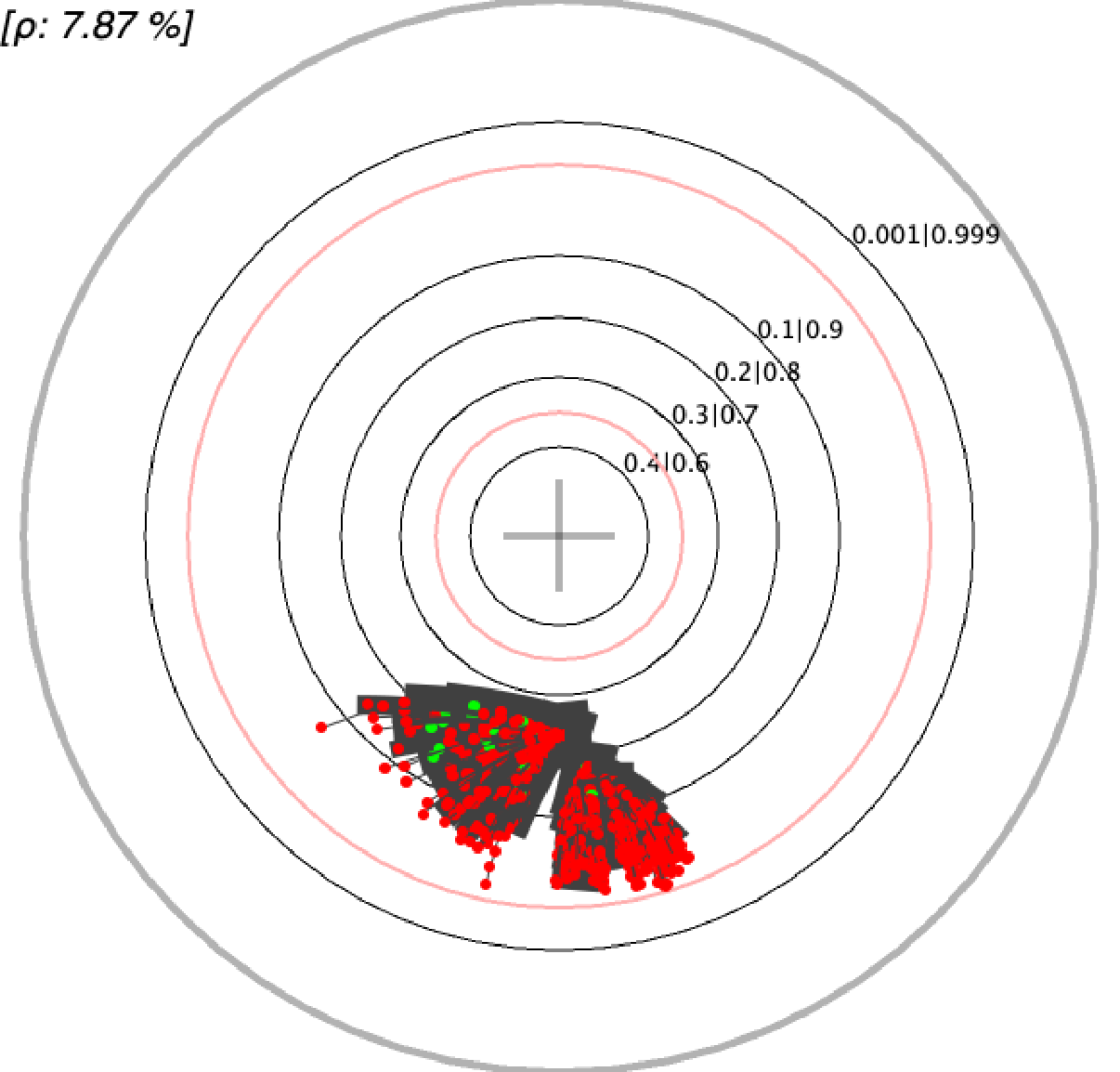} \\
                                                               &     MDT $\#1$ & MDT $\#2$& MDT $\#3$& MDT $\#4$& MDT $\#5$\\  \Xhline{2pt}
                              \rotatebox[origin=l]{90}{$t=1.0$} & \includegraphics[trim=0bp 0bp 0bp 0bp,clip,width=0.3\columnwidth]{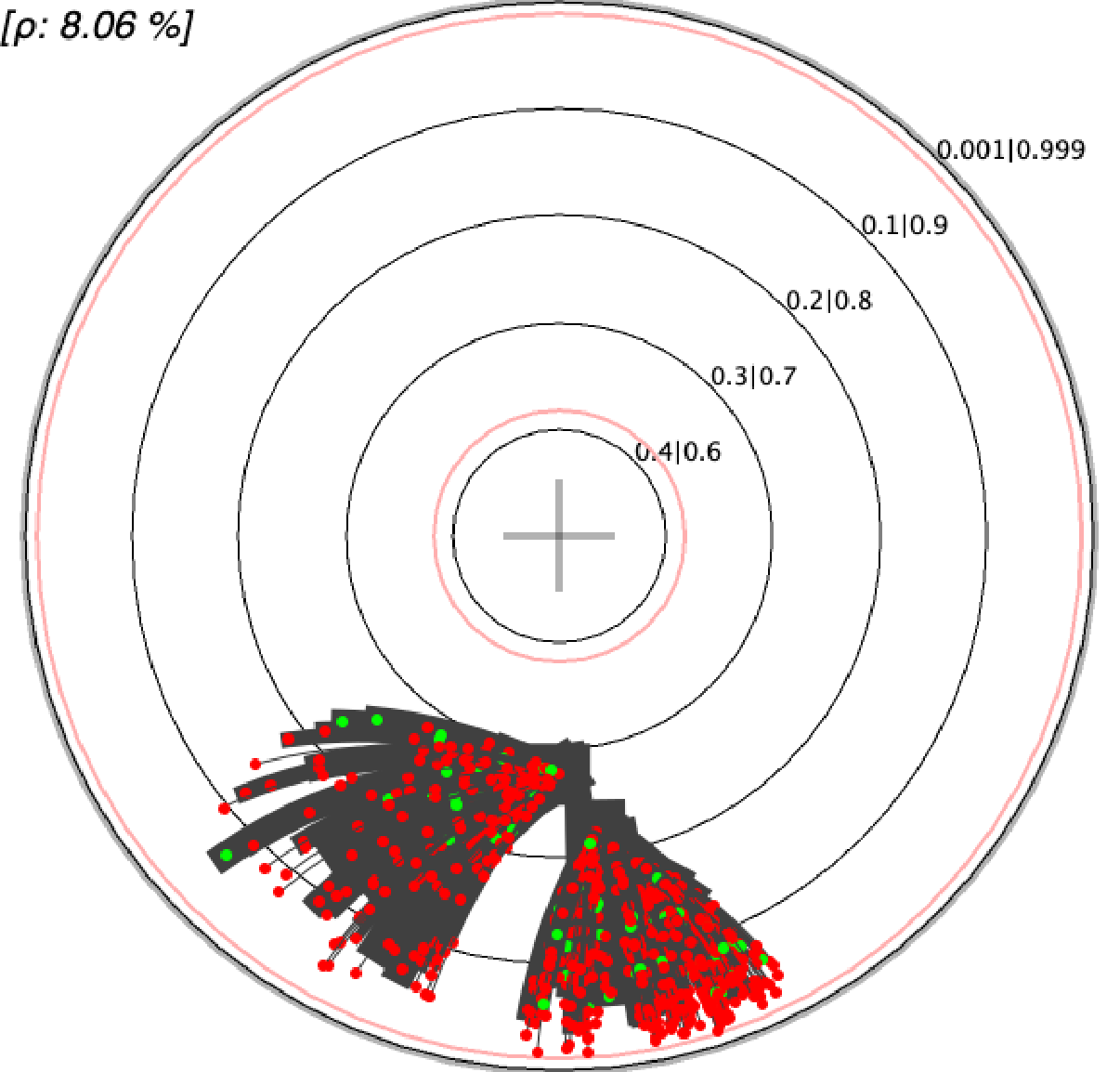} & \includegraphics[trim=0bp 0bp 0bp 0bp,clip,width=0.3\columnwidth]{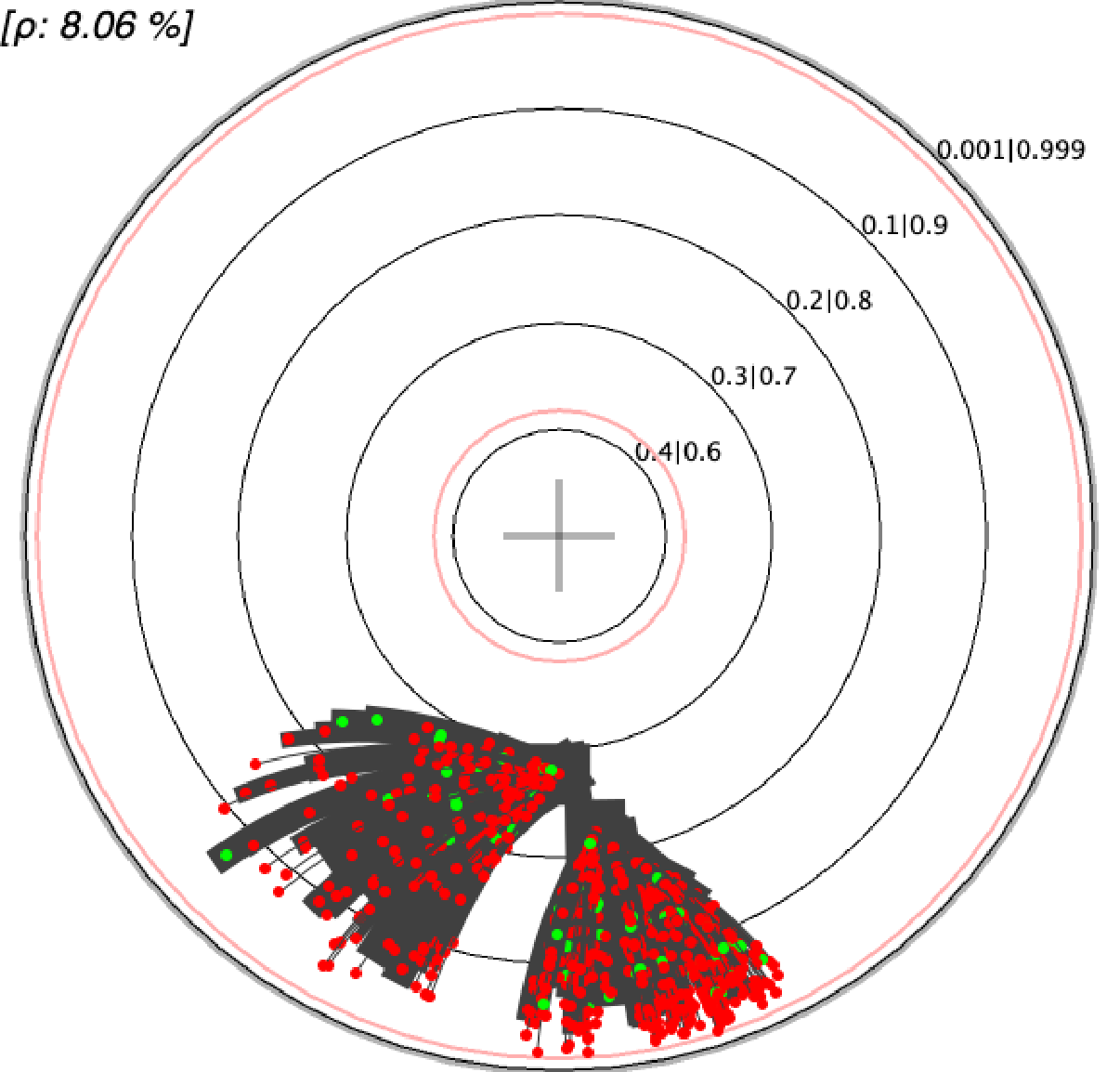} & \includegraphics[trim=0bp 0bp 0bp 0bp,clip,width=0.3\columnwidth]{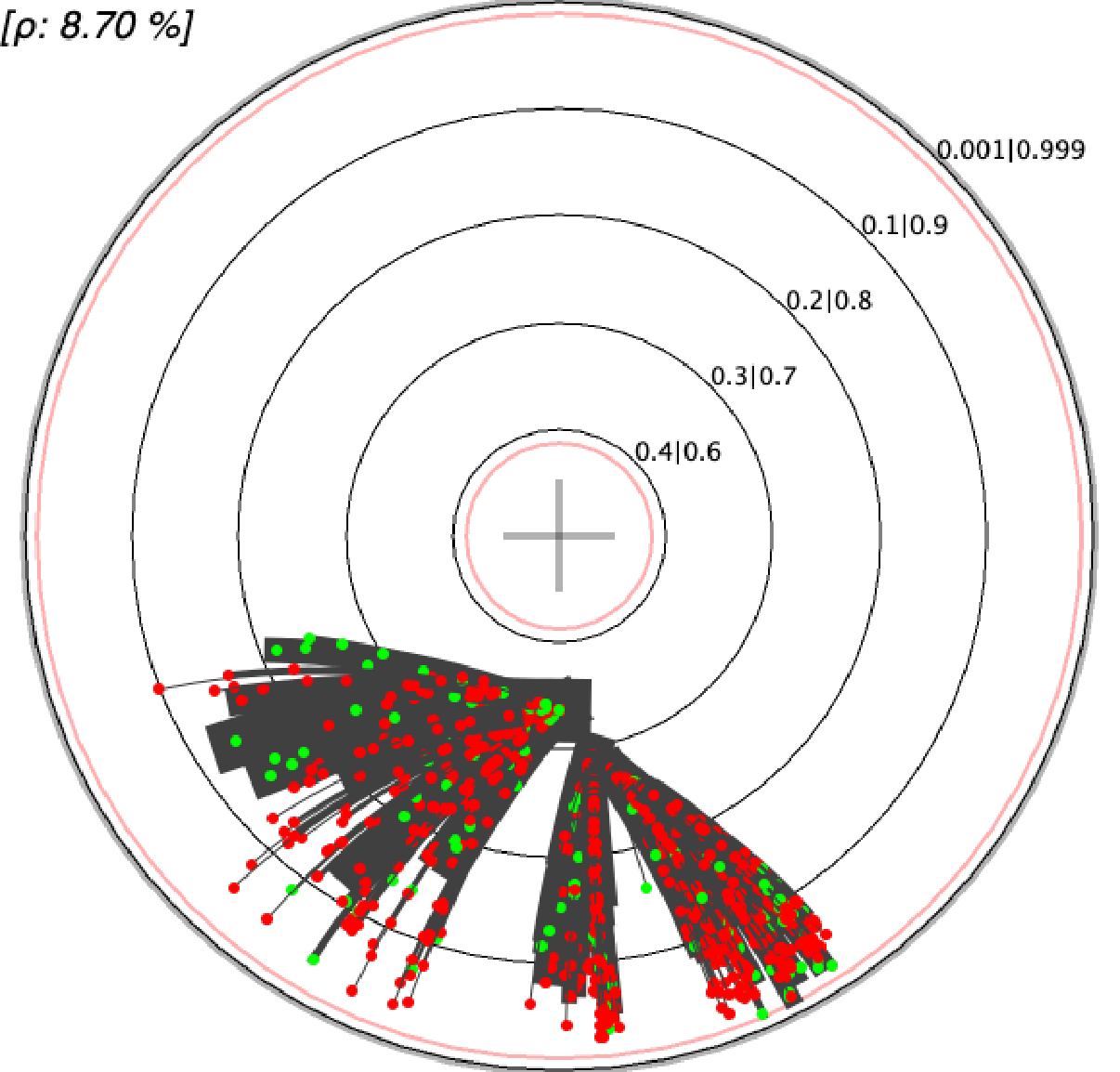} & \includegraphics[trim=0bp 0bp 0bp 0bp,clip,width=0.3\columnwidth]{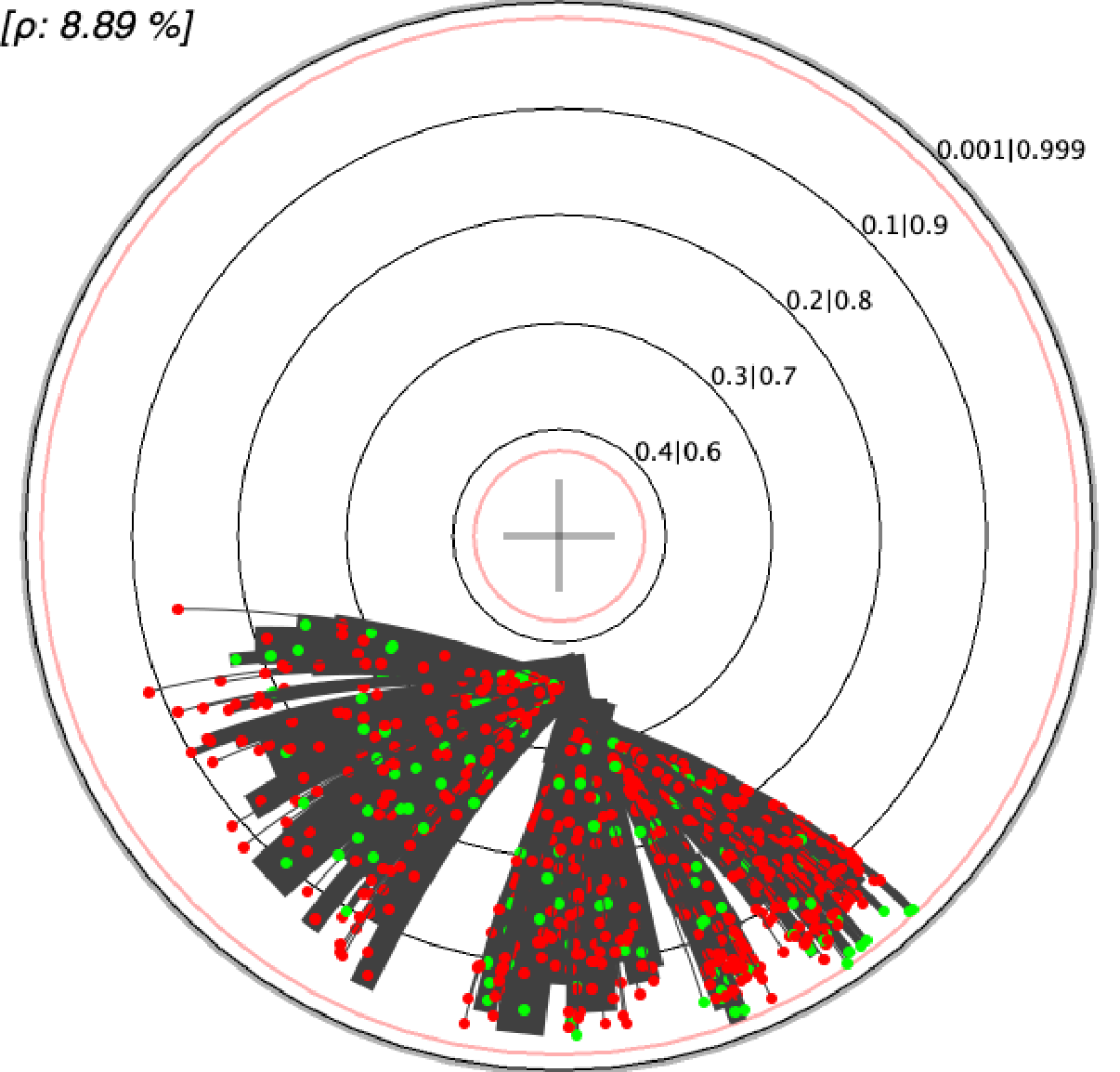} & \includegraphics[trim=0bp 0bp 0bp 0bp,clip,width=0.3\columnwidth]{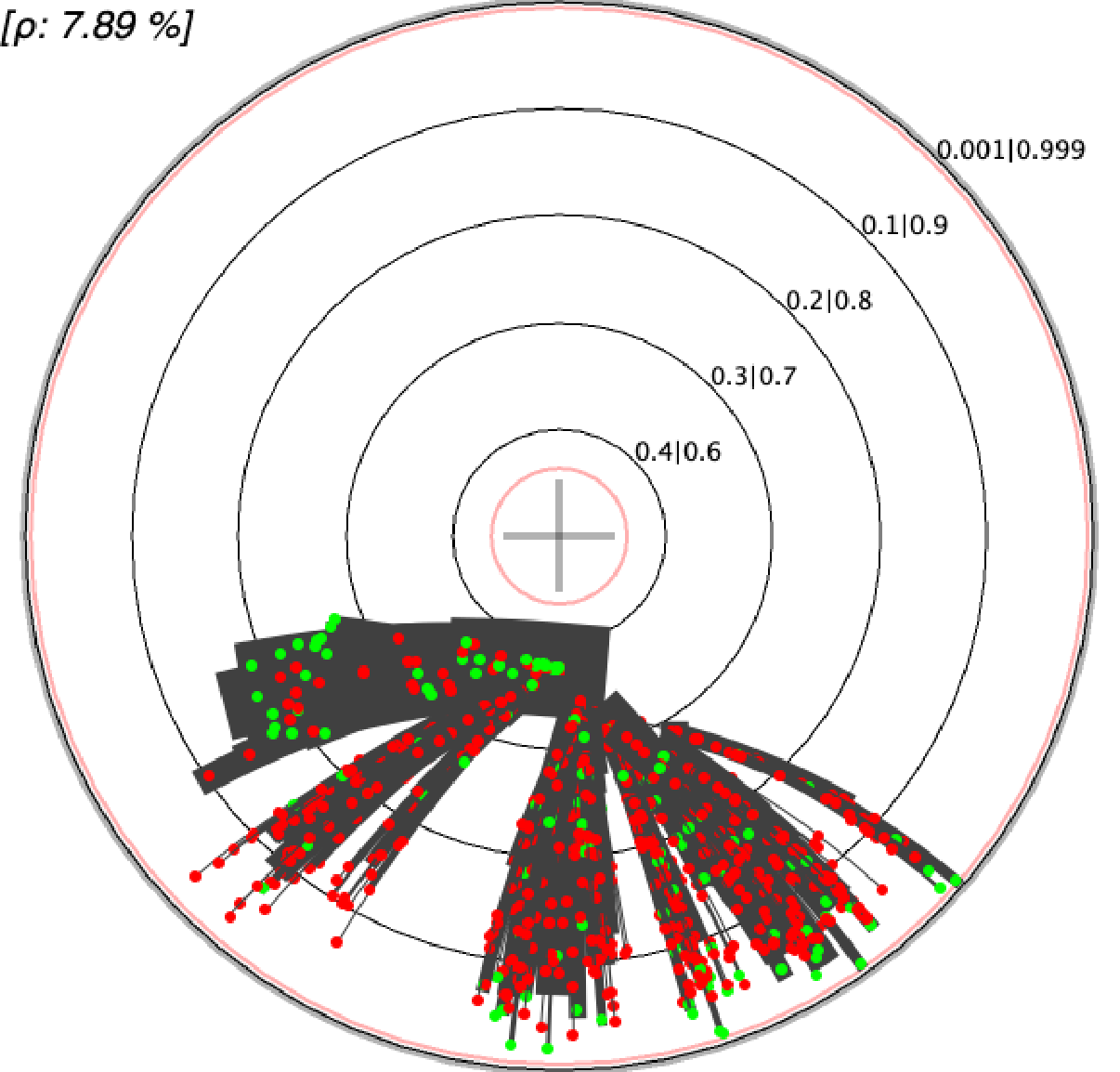}  \\
                              \rotatebox[origin=l]{90}{$t=0.0$} & \includegraphics[trim=0bp 0bp 0bp 0bp,clip,width=0.3\columnwidth]{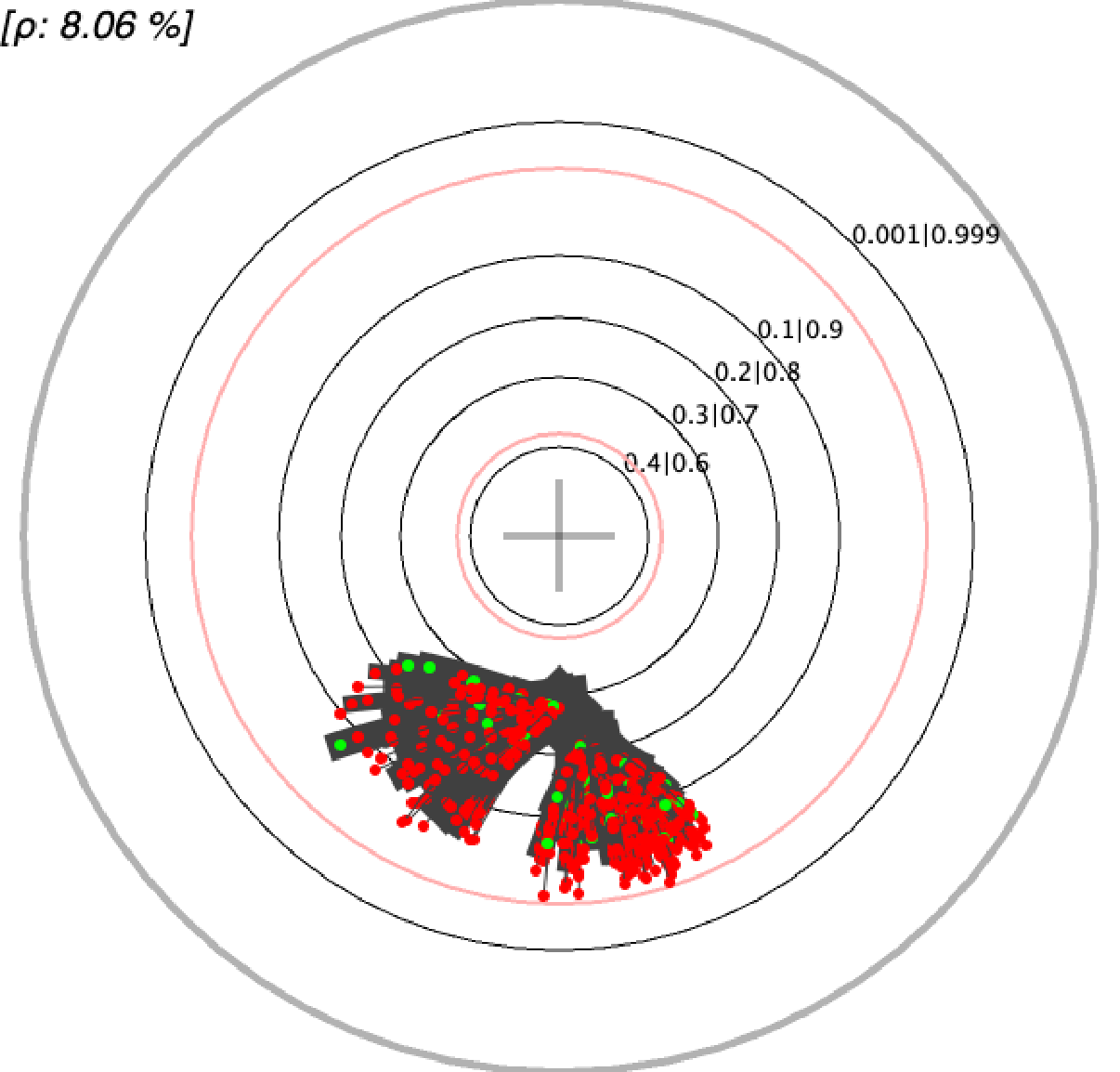} & \includegraphics[trim=0bp 0bp 0bp 0bp,clip,width=0.3\columnwidth]{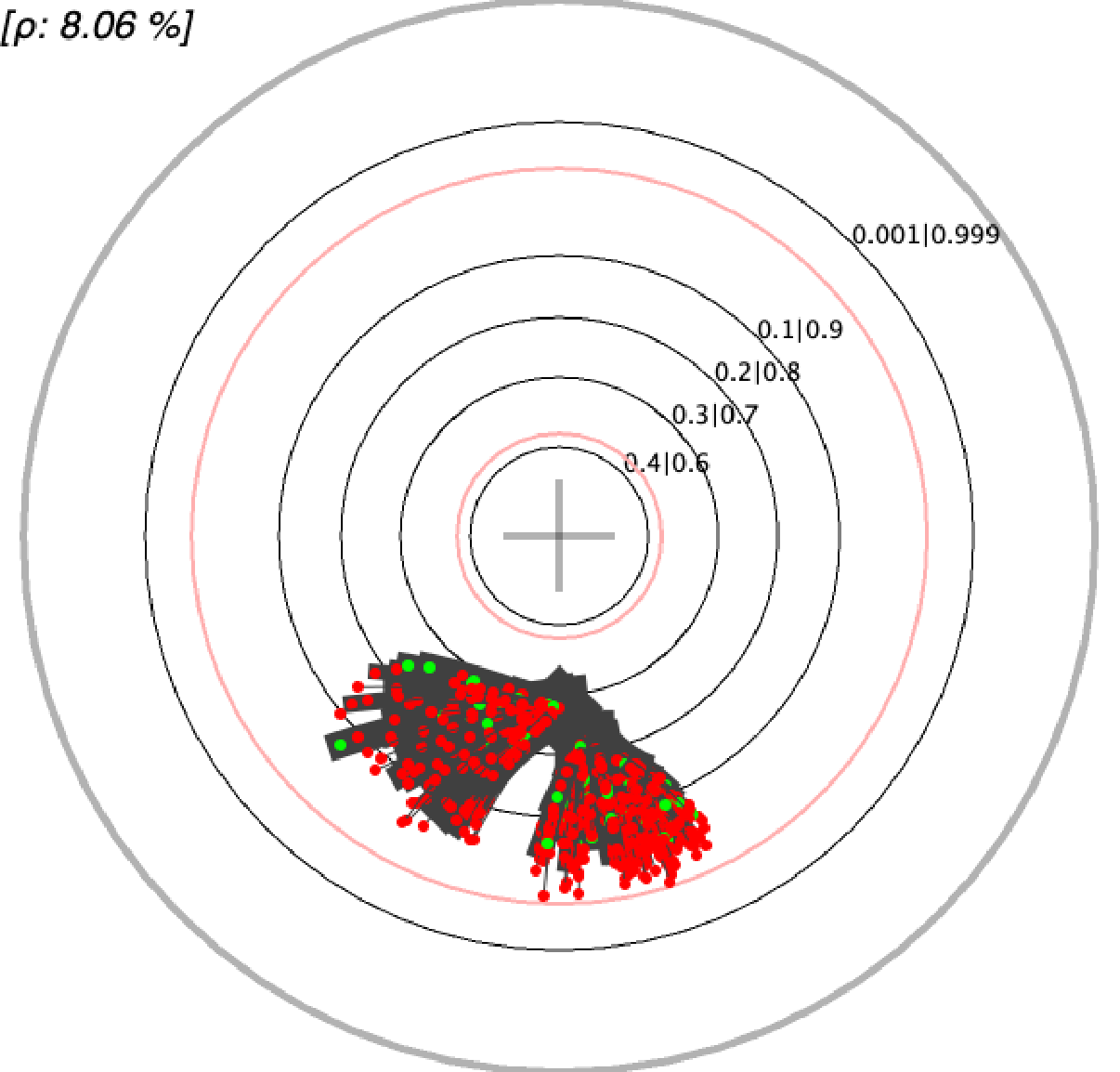} & \includegraphics[trim=0bp 0bp 0bp 0bp,clip,width=0.3\columnwidth]{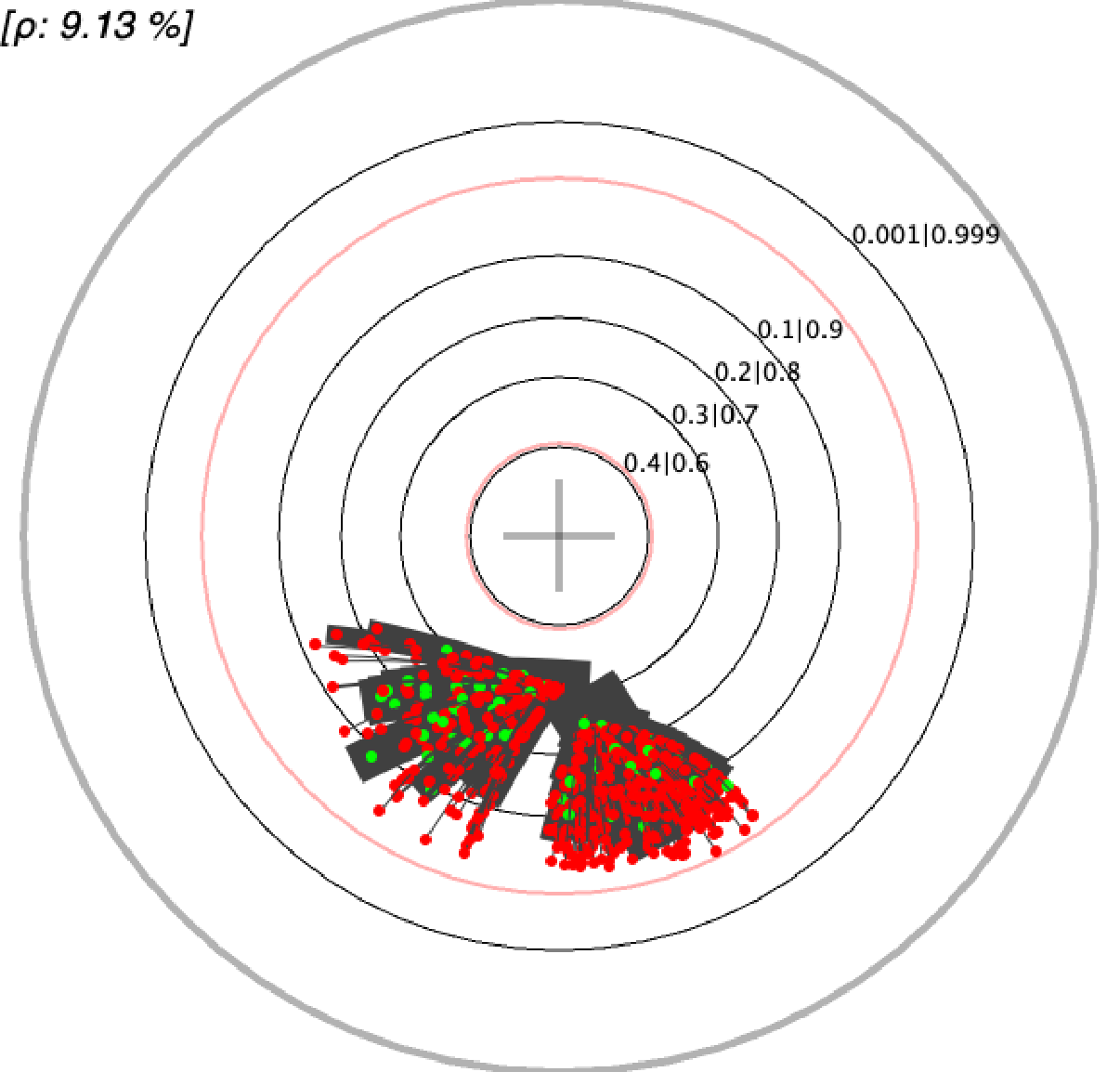} & \includegraphics[trim=0bp 0bp 0bp 0bp,clip,width=0.3\columnwidth]{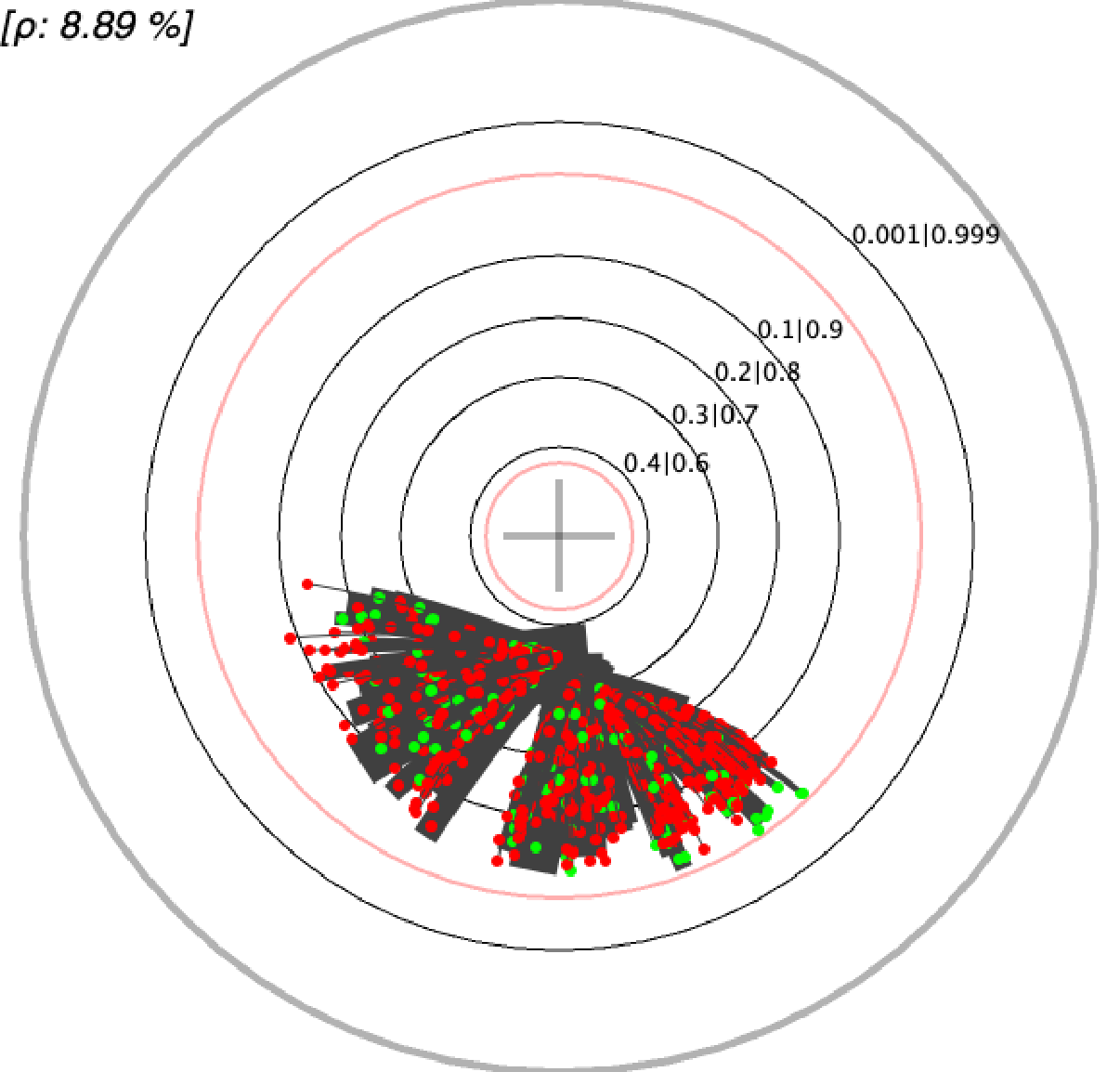} & \includegraphics[trim=0bp 0bp 0bp 0bp,clip,width=0.3\columnwidth]{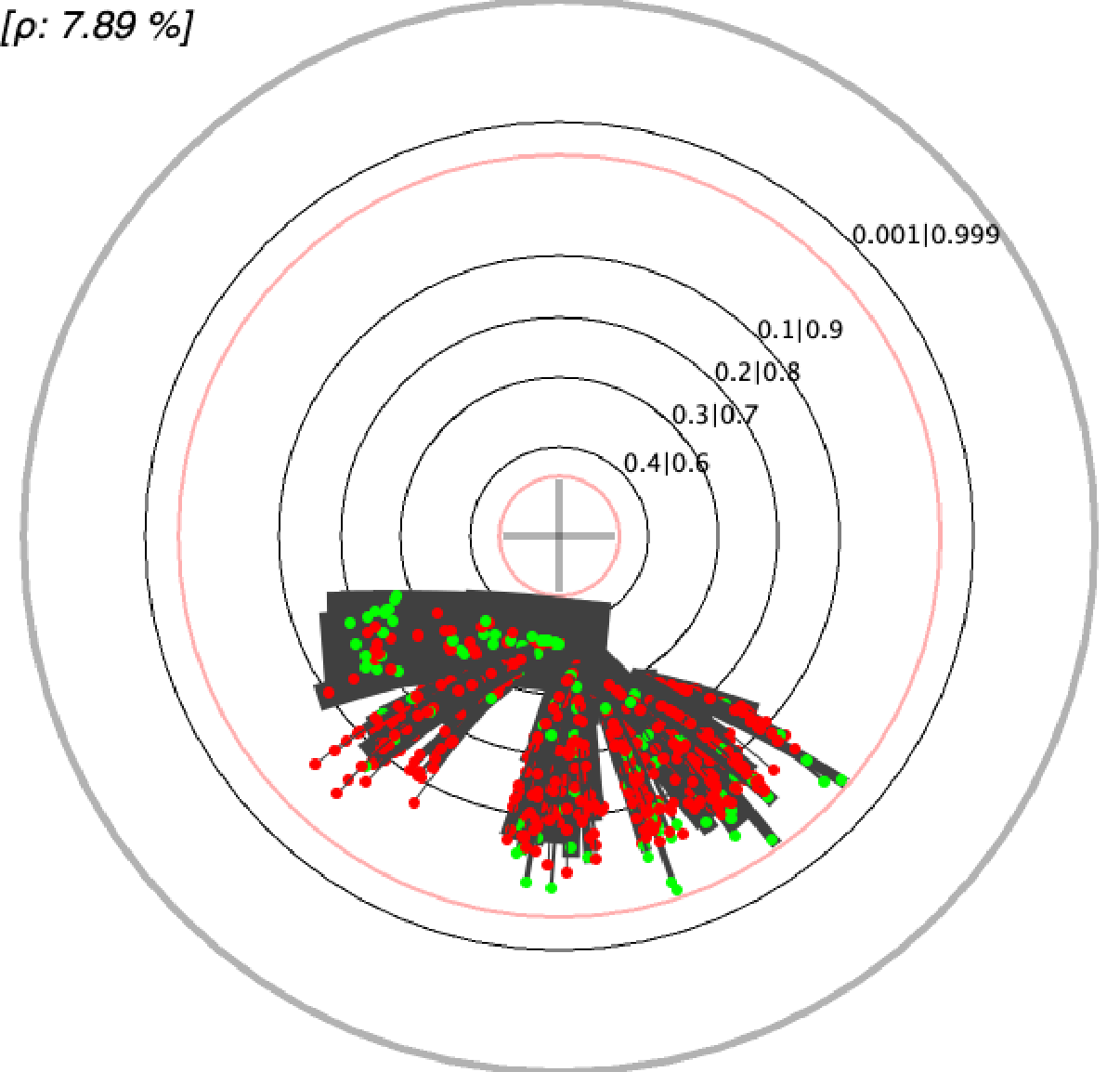} \\
                                                               &     MDT $\#6$ & MDT $\#7$& MDT $\#8$& MDT $\#9$& MDT $\#10$\\  \Xhline{2pt}
  \end{tabular}}
\caption{Plot of the 10 first MDTs learned on kaggle \domainname{give$\_$me$\_$some$\_$credit} (top panel and bottom panel). In each panel, we plot the embedding in $\mathbb{B}_1$ (top row) and the t-self $\mathbb{B}^{(0)}_1$ ($t=0$, bottom row). Remark the ability for the t-self to display a clear difference between the best subtrees, subtrees that otherwise appear quite equivalent in terms of confidence from $\mathbb{B}_1$ alone.}
    \label{tab:kaggle-dt-exerpt}
  \end{table}

\end{document}